\theoremstyle{plain}
\newtheorem{theorem}{Theorem}[section]
\newtheorem{proposition}[theorem]{Proposition}
\newtheorem{lemma}[theorem]{Lemma}
\theoremstyle{definition}
\newtheorem{definition}[theorem]{Definition}
\newtheorem{example}[theorem]{Example}
\newtheorem{remark}[theorem]{Remark}
\definecolor{inputcolor}{HTML}{c43800}
\definecolor{outputcolor}{HTML}{030fff}
\newcommand\DoToC{%
  \startcontents
  \printcontents{}{1}{\textbf{Table of Contents}\vskip3pt\hrule\vskip5pt}
  \vskip3pt\hrule\vskip5pt
}
\title{On Linear Mode Connectivity 
\\ of Mixture-of-Experts Architectures}
\author{%
  Viet-Hoang Tran\thanks{Co-first author}~ \thanks{Corresponding author} \\
  Department of Mathematics\\
  National University of Singapore \\
  \texttt{hoang.tranviet@u.nus.edu}\\
  \And
  Van-Hoan Trinh\footnotemark[1] \\
  Department of Mathematics\\
  Technical University of Munich \\
  \texttt{vanhoan.trinh@tum.de}\\
  \AND
  Khanh Vinh Bui\footnotemark[1] \\
  Independent Researcher \\
  Ho Chi Minh City, Vietnam\\
  \texttt{khanhvinhbui0512@gmail.com}
  \And
  Tan M.~Nguyen \\
  Department of Mathematics\\
  National University of Singapore \\
  \texttt{tanmn@nus.edu.sg}
}
\begin{document}

\maketitle

\begin{abstract}
Linear Mode Connectivity (LMC) is a notable phenomenon in the loss landscapes of neural networks, wherein independently trained models have been observed to be connected--up to permutation symmetries--by linear paths in parameter space along which the loss remains consistently low. This observation challenges classical views of non-convex optimization and has implications for model ensembling, generalization, and our understanding of neural loss geometry. Inspired by recent studies on LMC in standard neural networks, we systematically investigate this phenomenon within Mixture-of-Experts (MoE) architectures--a class of models known for their scalability and computational efficiency, which combine traditional neural networks--referred to as experts--through a learnable gating mechanism. We begin by conducting a comprehensive analysis of both dense and sparse gating regimes, demonstrating that the symmetries inherent to MoE architectures are fully characterized by permutations acting on both the expert components and the gating function. Building on these foundational findings, we propose a matching algorithm that enables alignment between independently trained MoEs, thereby facilitating the discovery of LMC. Finally, we empirically validate the presence of LMC using our proposed algorithm across diverse MoE configurations--including dense, sparse, and shared-expert variants--under a wide range of model settings and datasets of varying scales and modalities. Our results confirm the existence of LMC in MoE architectures and offer fundamental insights into the functional landscape and optimization dynamics of deep learning models. The code is publicly
available at  \url{https://github.com/MLResearchX/lmc-moe}.
\end{abstract}


\section{Introduction}

Despite the high-dimensional and non-convex nature of deep neural network (DNN) training, stochastic optimization methods--such as stochastic gradient descent (SGD) and its variants--consistently find solutions that generalize well. This empirical success contrasts with the theoretical complexity of the loss landscape, which contains many local minima. A growing body of work has uncovered a surprising phenomenon known as \emph{mode connectivity}, wherein minima found by independent SGD runs can often be connected by continuous low-loss paths.

\textbf{Mode Connectivity.} Related work on mode connectivity includes~\citep{goodfellow2014qualitatively,keskar2016large,sagun2017empirical,venturi2019spurious,neyshabur2020being,tatro2020optimizing,yunis2022convexity,zhou2023going}. Empirical studies demonstrate low-loss connections between independently trained models on MNIST and CIFAR-10~\citep{freeman2016topology,garipov2018loss,pmlr-v80-draxler18a}, and have shown that nearly any two solutions can be linked by a low-error curve~\citep{garipov_mode_connectivity_2018}. The implications of mode connectivity extend beyond theoretical curiosity. It provides insight into why weight-space ensembling techniques, such as Stochastic Weight Averaging, yield improved generalization~\citep{izmailov2018averaging,rame2022diverse,wortsman2022model}. Moreover, it has proven useful in studying adversarial robustness~\citep{zhao2020bridging}, generalization theory~\citep{pittorino2022deep,juneja2022linear,lubana2023mechanistic}, and the geometry of loss landscapes~\citep{gotmare2018using,vlaar2022can,lucas2021analyzing}. A more restrictive and analytically convenient variant of this phenomenon is \emph{linear mode connectivity} (LMC), where two models are connected by a linear interpolation in parameter space along which the loss remains low~\citep{frankle2020linear,entezari2021role}.

\textbf{Permutation Invariance.} A central challenge in observing LMC stems from the \emph{permutation invariance} of neural networks: permuting neurons within a hidden layer leaves the network function unchanged~\citep{allen2019convergence, du2019gradient, frankle2018lottery, belkin2019reconciling, neyshabur2018towards}. Consequently, two functionally equivalent models may appear disconnected in parameter space unless one is suitably permuted. To address this, recent work considers LMC \emph{up to permutation}, wherein a low-loss linear path exists after aligning hidden units~\citep{singh2020model,ainsworth2022git,sinkhorn_re_basin_2023}. Optimal Transport (OT) \cite{villani2008optimal, tran2025tree, transpherical, tran2025distancebased, tran2024tree} methods have been proposed to compute soft matchings between neurons~\citep{singh2020model,singh_jaggi_otfusion_2021}, enabling applications such as model fusion in federated learning. Empirically, OT-based alignment achieves near-zero error barrier LMC between independently trained ResNets on CIFAR-10~\citep{ainsworth2022git}, with connectivity improving with width and degrading with depth. Theoretically, dropout-stable networks have been shown to exhibit mode connectivity~\citep{kuditipudi2019explaining,shevchenko2020landscape}, and \citep{entezari2021role} demonstrate that LMC up to permutation can already arise at initialization, especially in the NTK regime~\citep{jacot2018neural}. \citep{ferbach2024proving} further provide formal guarantees for LMC under OT alignment. These insights support the \emph{convexity conjecture}~\citep{entezari_permutation_invariances_2022}, which posits that the SGD solution set is approximately convex once symmetries are accounted for. This view is strengthened by \citep{sharma2024simultaneous}, who propose \emph{simultaneous linear connectivity}, where a single model aligns linearly with multiple others. Additional studies explore the geometry of the solution space~\citep{ainsworth_git_re_basin_2022,xiao2023compact} and identify star-shaped regions conducive to LMC~\citep{sonthalia2024deep}.

\textbf{Functional Equivalence.} Prior work on LMC has primarily focused on feedforward and convolutional architectures, owing to the well-characterized permutation symmetries in these models \citep{brea2019weight, novak2018sensitivity, bui2020functional, tran2024monomial, vo2024equivariant}. In contrast, the symmetry structures of more modern architectures--such as Transformers \citep{vaswani2017attention, devlin2018bert, brown2020language} and Mixture-of-Experts \citep{jacobs1991adaptive, shazeer2017outrageously, lepikhin2020gshard, fedus2022switch}--remain relatively underexplored. The study of these symmetries falls under the broader concept of \emph{functional equivalence}, which aims to characterize when different parameter configurations yield identical input–output behavior \citep{hecht1990algebraic,chen1993geometry, fefferman1993recovering, kuurkova1994functionally, albertini1993neural, albertini1993identifiability}. Recent work has extended this analysis to attention-based models, with \citep{tran2024equivariant, knyazev2024accelerating} examining symmetries in attention mechanisms, particularly permutations of heads and group actions induced by the general linear group. A rigorous understanding of functional equivalence is essential for uncovering and formalizing LMC in such complex, structured architectures.

\textbf{Permutation Alignment Methods.} These methods align parameter permutations to establish LMC. \citep{entezari_permutation_invariances_2022} proposed a simulated annealing-based algorithm. \citep{singh_jaggi_otfusion_2021} employed Optimal Transport, while \citep{akash2022wasserstein} utilized the Wasserstein Barycenter. \citep{ainsworth2022git} introduced three methods: activation matching (using intermediate activations), weight matching (in parameter space), and the Straight-Through Estimator (which minimizes interpolation midpoint loss via gradients); all are based on solving the Linear Assignment Problem~\citep{kuhn1955hungarian, jonker1988shortest, crouse2016implementing}. \citep{sinkhorn_re_basin_2023} developed Sinkhorn re-basin, a differentiable method that improves alignment but struggles with residual connections due to layer-independent optimization.

\textbf{Contribution. } Motivated by this line of work, we extend the investigation of LMC to Mixture-of-Experts (MoE) architectures, which are related to traditional neural networks in that they aggregate outputs of multiple subnetworks via a gating mechanism. The paper is organized as follows:
\begin{enumerate}[leftmargin=24pt]
\item In Section~\ref{main:section{Group Action on Weight Space of Mixture-of-Experts}}, we introduce the concept of the weight space of MoE architectures and define a group action on this space that preserves the functional behavior of MoE models.
\item In Section~\ref{section{Functional Equivalence in Mixture-of-Experts}}, we present two core results concerning functional equivalence in MoE models. We demonstrate that the proposed group action characterizes \textit{all} inherent symmetries of the MoE gating mechanism, with rigorous theoretical justification.
\item In Section~\ref{section{Algorithms for Expert Matching}}, we first observe that permutation invariance alone is sufficient to induce LMC. Building on this insight, we develop a Weight Matching algorithm that enables alignment between independently trained MoEs, thereby facilitating the discovery of LMC. 
\item In Section~\ref{main:section{Experiments}}, we provide empirical evidence of LMC across a wide range of MoE configurations--including dense, sparse, and shared-expert variants--evaluated under diverse model settings and across datasets of varying scales and modalities. Additionally, we assess the effectiveness of our proposed expert-matching algorithms, and conduct ablation studies to analyze LMC at different layers within deep MoE models.
\end{enumerate}
Section~\ref{main:section{Preliminaries}} offers background on LMC and MoE architectures. Table of notation, along with theoretical foundations, experimental details, and additional content, is provided in the Appendix.

\section{Preliminaries}
\label{main:section{Preliminaries}}

This section provides an overview of Linear Mode Connectivity and Mixture-of-Experts architectures.

\subsection{Linear Mode Connectivity}
Let $f(\cdot; \theta)$ be a function parameterized by $\theta \in \Theta$, where $\Theta$ denotes the weight space. Given a non-negative loss function $\mathcal{L}(\theta)$, we seek to minimize $\mathcal{L}(\theta)$ over $\Theta$. The notion of the \emph{loss barrier} was originally introduced by \citep{frankle2020linear} and later formalized by \citep{entezari2021role} as follows: for $\theta_A, \theta_B \in \Theta$, define
\begin{equation}\label{maintext:eq-barrier-loss}
B(\theta_A, \theta_B) = \sup_{t \in [0,1]} \left[ \mathcal{L}(t \theta_A + (1 - t) \theta_B) - \left( t \mathcal{L}(\theta_A) + (1 - t) \mathcal{L}(\theta_B) \right) \right].
\end{equation}
$\theta_A$ and $\theta_B$ are said to be \textit{linearly mode connected} if their loss barrier is negligible, i.e., $B(\theta_A, \theta_B) \approx 0$. In many architectures, the function $f(\cdot; \theta)$ is invariant under certain permutations of the weight space--namely, for a permutation $g$ and all $\theta \in \Theta$, we have $f(\cdot; \theta) = f(\cdot; g\theta)$. Therefore, $\theta_A$ and $\theta_B$ are said to be \textit{linearly mode connected up to permutation} if there exists a permutation $g$ such that $B(\theta_A, g\theta_B) \approx 0$. Any such permutation $g$ is referred to as a \textit{winning permutation} \citep{entezari2021role}.

\subsection{Mixture-of-Experts Architectures}

A Mixture-of-Experts (MoE) is a neural network architecture that combines the outputs of multiple models into a single prediction through a gating mechanism that assigns input-dependent weights. The two most commonly used gating strategies are \textit{dense} and \textit{sparse} gating. While we provide a brief overview of these concepts here, a comprehensive and formal treatment is deferred to Appendix~\ref{appendix:section{Weight Spaces of Mixture-of-Experts, Sparse Mixture-of-Experts, and their Group Actions}}.

\textbf{Expert.} Let $d$ denote the input dimension. We define an \textit{Expert} as a function $ \mathcal{E}(\cdot; \theta) \colon \mathbb{R}^d \to \mathbb{R}^d $, parameterized by $ \theta \in\mathbb{R}^e $, where $ e \in \mathbb{N} $ represents the total number of trainable parameters of $\mathcal{E}$. In this work, we consider each expert $ \mathcal{E}(\cdot; \theta) $ to be a feedforward neural network with ReLU activation functions. Unless stated otherwise, all experts are assumed to share the same architecture.

\textbf{Mixture-of-Experts with dense gating.} Given $n$ denoting the number of experts, we define \emph{Mixture-of-Experts with dense gating} as a function $\mathcal{D} \colon \mathbb{R}^d \to \mathbb{R}^d$ such that
\begin{align} \label{main:eq-3}
    &\mathcal{D}\big(x; \{W_i,b_i,\theta_i\}_{i=1}^{n}\big) = \sum_{i=1}^{n}\text{softmax}_i\big(s_1(x), \ldots, s_n(x)\big) \mathcal{E}(x;\theta_i),
\end{align}
where $s_i(x) = W_i x + b_i$ determines the contribution of each expert to the final output. Here, $\theta_i$ are the parameters of the $i^{\text{th}}$ expert. The function $s = (s_1, \ldots, s_n)$ is called the \textit{gating score}, and is parameterized as $s(\cdot;\{W_i,b_i\}_{i=1}^n)$ with $ (W_i,b_i) \in \mathbb{R}^{d} \times \mathbb{R}$ are the corresponding \textit{gating} parameters. 

\textbf{Mixture-of-Experts with sparse gating. } Given a positive integer $k\le n$ denoting the number of activated experts, define the $\mathrm{Top}\text{-}k$ map by $\text{Top-}k(z) = \{i_1, \ldots, i_k\}$ for $z = (z_1, \ldots, z_n)\in\mathbb{R}^n$, where $i_1, \ldots, i_k$ are the indices corresponding to the $k$ largest components of $x$.  In the event of ties, we select smaller indices first. We define a \textit{Mixture-of-Experts with sparse gating} (SMoE) as a function $\mathcal{S} \colon \mathbb{R}^d \to \mathbb{R}^d$ as follows. For $x \in \mathbb{R}^d$, let $T(x) = \text{Top-}k(s(x))$, then define:
\begin{align}\label{main:eq-1}
    &\mathcal{S}\big(x; \{W_i,b_i,\theta_i\}_{i=1}^{n}\big) = \sum_{i \in T(x)}\operatorname{softmax}_i\Big(\big(s_i(x)\big)_{i \in T(x)} \Big) \cdot \mathcal{E}(x;\theta_i)
\end{align}
In other words, the Top-$k$ selects the $k$ highest-scoring experts used to compute the output.

\section{Group Action on Weight Space of Mixture-of-Experts}
\label{main:section{Group Action on Weight Space of Mixture-of-Experts}}

In this section, we introduce the formal notion of the weight space associated with MoE models. Furthermore, we define a group action on this space that preserves the functionality of MoE. A complete and rigorous exposition of these definitions and their implications is provided in Appendix~\ref{appendix:section{Weight Spaces of Mixture-of-Experts, Sparse Mixture-of-Experts, and their Group Actions}}.

\subsection{Weight Space of Mixture-of-Experts}
The map $\mathcal{D}$ is parameterized by: 
\begin{align} \label{main:appendix:eq-16}
    &\phi = (W_i,b_i,\theta_i)_{i = 1, \ldots, n} \in \Phi(n) \coloneqq (\mathbb{R}^d \times \mathbb{R} \times \Theta)^n = (\mathbb{R}^d \times \mathbb{R} \times \mathbb{R}^e)^n.
\end{align}
Here, $\Phi(n)$ is called the \textit{weight space} of a Mixture-of-$n$-Experts. Varying the number of experts leads to a general Mixture-of-Experts weight space that spans across expert sets of different sizes, denoted by $\Phi =\sqcup_{n>0} \Phi(n)$.
Note that the weight space of $\mathcal{E}$ coincides with that of $\mathcal{S}$, since the map $\text{Top-}k$ does not introduce any new trainable parameters.
\subsection{Group Action on Weight Space of Mixture-of-Experts}
 We define the group $G(n)$ as the direct product $G(n) = \mathbb{R}^d \times \mathbb{R} \times \text{S}_{n}$ of the groups $\mathbb{R}^d$, $\mathbb{R}$ with addition, and the permutation group $\text{S}_{n}$. Each element $g \in G(n)$ is of the form $g = (c_W,c_b,\tau)$, where $c_W \in \mathbb{R}^d, c_b \in \mathbb{R}$ and $\tau \in \text{S}_{n}$. The group $G(n)$ acts on the weight space $\Phi(n)$ as follows. For $g \in G(n)$ and $\phi \in \Phi(n)$ presented as in Equation~\eqref{main:appendix:eq-16}, define:
\begin{align} \label{main:eq-2}
    g\phi \coloneqq \big( W_{\tau(i)}+c_W, b_{\tau(i)}+c_b, \theta_{\tau(i)} \big)_{i=1, \ldots, n} \in \Phi(n).
\end{align}
The result below establishes that this group action preserves the MoE map.
\begin{proposition}[Weight space invariance of Mixture-of-Experts]
\label{main:proposition-Weight space invariance of Mixture of Experts}
    The MoE function $\mathcal{D}$ is $G(n)$-invariant under the action of $G(n)$ on its weight space $\Phi(n)$, i.e., $\mathcal{D}(\cdot ; \phi) = \mathcal{D}(\cdot ; g\phi)$.
\end{proposition}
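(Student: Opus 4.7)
The plan is to exploit the direct-product structure $G(n) = \mathbb{R}^{d} \times \mathbb{R} \times S_{n}$ and decompose an arbitrary group element $g = (c_W, c_b, \tau)$ as the product of a pure translation $(c_W, c_b, e)$ and a pure permutation $(0, 0, \tau)$. Since $G(n)$ acts on $\Phi(n)$ as a group, it then suffices to verify invariance of $\mathcal{D}$ under each of these two simpler classes of elements separately, and compose.

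For the translation part, applying $(c_W, c_b, e)$ to $\phi$ replaces each pair $(W_i, b_i)$ by $(W_i + c_W, b_i + c_b)$ while leaving the expert parameters $\theta_i$ untouched. The new gating scores become $\tilde{s}_i(x) = s_i(x) + (c_W^\top x + c_b)$, where the added term is the \emph{same} for every $i$. I would then invoke the standard translation invariance of softmax, namely $\operatorname{softmax}_i(z_1 + c, \ldots, z_n + c) = \operatorname{softmax}_i(z_1, \ldots, z_n)$ for any scalar $c$, to conclude that the mixing weights in Equation~\eqref{main:eq-3} are unchanged. Since the experts themselves are also unchanged, the entire sum is identical.

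For the permutation part, applying $(0, 0, \tau)$ to $\phi$ produces the tuple $(W_{\tau(i)}, b_{\tau(i)}, \theta_{\tau(i)})_{i=1}^{n}$, so the $i$-th summand of $\mathcal{D}(x; g\phi)$ equals $\operatorname{softmax}_i\!\bigl(s_{\tau(1)}(x), \ldots, s_{\tau(n)}(x)\bigr) \, \mathcal{E}(x; \theta_{\tau(i)})$. Here I would use the elementary identity $\operatorname{softmax}_i(z_{\tau(1)}, \ldots, z_{\tau(n)}) = \operatorname{softmax}_{\tau(i)}(z_1, \ldots, z_n)$, which follows since the denominator is a sum invariant under relabeling. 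The summand thus becomes $\operatorname{softmax}_{\tau(i)}(s_1(x), \ldots, s_n(x)) \, \mathcal{E}(x; \theta_{\tau(i)})$, and because $\tau$ is a bijection on $\{1, \ldots, n\}$, the change of summation index $j = \tau(i)$ yields exactly $\mathcal{D}(x; \phi)$.

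The argument is essentially a routine calculation with two clean softmax symmetries, so there is no serious obstacle; the only care needed is bookkeeping of indices in the permutation step and ensuring that the decomposition $g = (c_W, c_b, e) \cdot (0, 0, \tau)$ is compatible with the definition of the action in Equation~\eqref{main:eq-2} (in particular, that the translations act on the already-permuted weights in the same way as on the originals, which holds since $c_W, c_b$ are added uniformly across all expert indices).
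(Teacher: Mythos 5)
Your proposal is correct and uses exactly the same two ingredients as the paper's proof (Proposition~\ref{appendix:proposition-Weight space invariance of Mixture-of-Experts}): the translation invariance of softmax and the permutation invariance of the sum via re-indexing $j = \tau(i)$. The only difference is organizational—you factor $g$ into a pure translation composed with a pure permutation, while the paper handles both in a single chain of equalities—so the arguments are essentially identical.
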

A proof of Proposition~\ref{main:proposition-Weight space invariance of Mixture of Experts} is presented in Proposition~\ref{appendix:proposition-Weight space invariance of Mixture-of-Experts}. An analogous invariance result holds in the case of the SMoE function $\mathcal{S}$. However, since the Top-$k$ selection map is generally discontinuous--primarily due to tie cases in the gating scores--additional conditions are required to ensure the validity of the invariance result. To address this, we focus on a subset of $\mathbb{R}^d$ where the Top-$k$ scores are unambiguously defined. Specifically, for $\{W_i,b_i\}_{i=1}^{n} \in (\mathbb{R}^d \times \mathbb{R} )^{n}$, we define:
\begin{align}
    \Omega\big(\{W_i,b_i\}_{i=1}^{n}\big) \coloneqq \big\{x \in \mathbb{R}^d ~\colon~ (W_ix + b_i)_{i=1}^{n} \text{ are pairwise distinct} \big\}.
\end{align}
The SMoE function $\mathcal{S}$ exhibits well-behaved properties on this domain. First, consider the case where the pairs $\{W_i, b_i\}_{i=1}^n$ are not pairwise distinct. In this case, the set $\Omega(\{W_i, b_i\}_{i=1}^n)$ is clearly empty. However, when the pairs $\{W_i, b_i\}_{i=1}^n$ are pairwise distinct, the set $\Omega(\{W_i, b_i\}_{i=1}^n)$ becomes an open and dense subset of $\mathbb{R}^d$. Moreover, the SMoE function $\mathcal{S}$ is continuous on $\Omega(\{W_i, b_i\}_{i=1}^n)$. These properties are formally established in Propositions~\ref{appendix:result-1} and~\ref{appendix:result-3}. We now show that the invariance property of the SMoE map holds under restriction to this domain.
\begin{proposition}[Weight space invariance of Sparse Mixture-of-Experts] \label{main:appendix:proposition-Weight space invariance of Sparse Mixture-of-Experts}
    Given the \textup{SMoE} function $\mathcal{S}$, as defined in Equation~\eqref{main:eq-1}.
    Assume that $\{W_i,b_i\}$ are pairwise distinct for $i = 1, \ldots, n$. Then, the set $\Omega(\{W_i,b_i\}_{i=1}^{n})$ is invariant under the group action of $G(n)$, i.e., for $g = (c_W, c_b, \tau) \in G(n)$, we have $\Omega(\{W_i,b_i\}_{i=1}^{n}) = \Omega(\{W_{\tau(i)}+c_W,b_{\tau(i)}+c_b\}_{i=1}^{n})$. Moreover, the function $\mathcal{S}$, restricted to $\Omega(\{W_i,b_i\}_{i=1}^{n})$,
    is invariance under the action of $G(n)$ on its weight space $\Phi(n)$, i.e. $\mathcal{S}(\cdot ; \phi) = \mathcal{S}(\cdot ; g\phi)$ on $\Omega(\{W_i,b_i\}_{i=1}^{n})$.
\end{proposition}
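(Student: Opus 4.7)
The plan is to split the group action of $g = (c_W, c_b, \tau)$ into a common translation of scores and a permutation of expert indices, and to check both invariance claims against each piece separately. The key simplification arising from restricting to $\Omega(\{W_i,b_i\}_{i=1}^n)$ is that at any such $x$ the scores $\{s_i(x)\}_{i=1}^n$ are pairwise distinct, so the tie-breaking rule built into Top-$k$ never triggers; this is precisely what is needed to match the Top-$k$ index sets before and after the action.

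For the first claim (invariance of $\Omega$), I would observe that the transformed scores evaluated at $x$ take the form $\tilde s_i(x) = W_{\tau(i)} x + b_{\tau(i)} + (c_W x + c_b) = s_{\tau(i)}(x) + c$, where $c = c_W x + c_b$ depends on $x$ but not on $i$. Adding a common constant to a finite tuple preserves all pairwise differences, and permuting the entries by $\tau$ preserves pairwise distinctness. Composing these two observations yields the set identity $\Omega(\{W_i,b_i\}_i) = \Omega(\{W_{\tau(i)}+c_W, b_{\tau(i)}+c_b\}_i)$ in $\mathbb{R}^d$.

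For the second claim (invariance of $\mathcal{S}$ on $\Omega$), fix $x \in \Omega(\{W_i,b_i\}_i)$ and let $T(x)$ and $T'(x)$ denote the Top-$k$ index sets for the original and transformed scores, respectively. Since the scores $s_i(x)$ are pairwise distinct and $\tilde s_i(x) = s_{\tau(i)}(x) + c$ is a uniform shift composed with a relabeling by $\tau$, the ranking of $\{\tilde s_i(x)\}_i$ is obtained from that of $\{s_i(x)\}_i$ by the substitution $i \leftrightarrow \tau^{-1}(i)$, giving $T'(x) = \tau^{-1}(T(x))$. Substituting into Equation~\eqref{main:eq-1}, applying softmax's invariance under a uniform additive shift of its arguments, and performing the change of summation index $j = \tau(i)$ (a bijection from $T'(x)$ onto $T(x)$), the sum defining $\mathcal{S}(x; g\phi)$ collapses to $\sum_{j \in T(x)} \mathrm{softmax}_j\bigl((s_\ell(x))_{\ell \in T(x)}\bigr)\,\mathcal{E}(x; \theta_j) = \mathcal{S}(x; \phi)$.

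The main obstacle is the careful bookkeeping in the identity $T'(x) = \tau^{-1}(T(x))$ together with the aligned reindexing inside softmax: one must verify that the $i$-th softmax weight on the transformed scores indexed over $T'(x)$ equals the $\tau(i)$-th softmax weight on the original scores indexed over $T(x)$, which uses both translation invariance of softmax and the fact that softmax is insensitive to a consistent relabeling of its entries. The restriction to $\Omega$ is essential here, since otherwise the smallest-index tie-breaking rule in Top-$k$ would be disrupted by $\tau$ and the identity $T'(x) = \tau^{-1}(T(x))$ could fail; this is precisely why the proposition is stated only on this domain.
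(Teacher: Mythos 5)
Your proposal is correct and follows essentially the same route as the paper: show the set $\Omega(\{W_i,b_i\}_{i=1}^n)$ is preserved because the transformed scores are $s_{\tau(i)}(x)+c$ with $c$ independent of $i$, then identify the transformed Top-$k$ set with a relabeling of $T(x)$ and conclude via softmax translation invariance plus a change of summation index, exactly as in the paper's reduction to the dense-gating argument. Your bookkeeping $T'(x)=\tau^{-1}(T(x))$ is in fact the careful version of the paper's identity $gT(x)=\tau(T(x))$, so no gap remains.
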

A proof of Proposition~\ref{main:appendix:proposition-Weight space invariance of Sparse Mixture-of-Experts} is presented in Proposition~\ref{appendix:proposition-Weight space invariance of Sparse Mixture-of-Experts}.
\begin{remark}
The permutation invariance of the summation operator and the translation invariance of the softmax function are the two primary sources of invariance for the functions $\mathcal{D}$ and $\mathcal{S}$, as stated in Propositions~\ref{main:proposition-Weight space invariance of Mixture of Experts} and~\ref{main:appendix:proposition-Weight space invariance of Sparse Mixture-of-Experts}. In the case of SMoE, these invariance properties are additionally upheld by the permutation and translation invariance of the Top-$k$ operator.
\end{remark}

\section{Symmetries and Functional Equivalence in Mixture-of-Experts}
\label{section{Functional Equivalence in Mixture-of-Experts}}

In the remainder of this section, we let $\phi \in \Phi(n)$ and $\phi' \in \Phi(n')$ denote the parameters of two Mixture-of-Experts models with $n$ and $n'$ experts, respectively:
\begin{align}
    \phi = (W_i, b_i, \theta_i)_{i = 1, \ldots, n} \in \Phi(n), \quad \text{and} \quad
    \phi' = (W'_i, b'_i, \theta'_i)_{i = 1, \ldots, n'} \in \Phi(n').
\end{align}

\subsection{Functional Equivalence in Mixture-of-Experts}
We aim to characterize the conditions under which distinct parameter sets yield functionally equivalent MoE models. The central result of this section is that \textit{the symmetries of the gating mechanism are fully characterized by the group action of $G(n)$}, as defined in Equation~\eqref{main:eq-2}. Given the fundamentally different structural and analytical characteristics of the two gating mechanisms, we treat each case independently in the analysis. In addition, we introduce a set of assumptions, the motivations for which are discussed in detail in the following subsection. We begin with the case of dense gating.

\begin{theorem}[Functional equivalence in Mixture-of-Experts with Dense Gating] \label{maintext:theorem:main}
Suppose $\phi, \phi'$ define the same MoE function, i.e., $\mathcal{D}(\cdot;\phi) = \mathcal{D}(\cdot;\phi')$.
Assume that the following conditions hold:
\begin{enumerate}[leftmargin=22pt, topsep=1pt]
    \item Both $\{\mathcal{E}(\cdot;\theta_i)\}_{i=1}^{n}$ and $\{\mathcal{E}(\cdot;\theta_i')\}_{i=1}^{n'}$ consist of pairwise distinct functions;
    \item Both $\{W_i - W_j\}_{1\le i<j \le n}$ and $\{W_i' - W_j'\}_{1\le i<j \le n'}$ consist of pairwise distinct vectors in $\mathbb{R}^d$.
\end{enumerate} 
Then $n=n'$, and there exists $g=(c_W,c_b,\tau) \in G(n)$ such that for all $i = 1, \ldots, n$, we have
    $W_i'= W_{\tau(i)} + c_W$, $b_i'= b_{\tau(i)} + c_b$, and $\mathcal{E}(\cdot;\theta_i') = \mathcal{E}(\cdot;\theta_{\tau(i)})$ on $\mathbb{R}^d$.
\end{theorem}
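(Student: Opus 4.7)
My plan is to exploit the asymptotic behavior of $\mathcal{D}$ at spatial infinity, where the softmax gating collapses onto a single dominant expert per direction. Clearing denominators in $\mathcal{D}(\cdot;\phi)=\mathcal{D}(\cdot;\phi')$ by multiplying through by $Z(x)Z'(x)=\bigl(\sum_k e^{s_k(x)}\bigr)\bigl(\sum_l e^{s'_l(x)}\bigr)$ produces the functional identity
\[
\sum_{i=1}^{n}\sum_{j=1}^{n'} e^{(W_i+W'_j)\cdot x+(b_i+b'_j)}\bigl[\mathcal{E}(x;\theta_i)-\mathcal{E}(x;\theta'_j)\bigr]=0 \qquad (\forall\, x\in\mathbb{R}^d),
\]
a sum of exponential-rate terms weighted by ReLU-network differences whose distinct rates can be separated by directional asymptotics; this identity drives the entire argument.

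Next, I would fix a generic unit vector $u\in S^{d-1}$ for which the maximizers $i^*(u)=\arg\max_i W_i\cdot u$ and $j^*(u)=\arg\max_j W'_j\cdot u$ are both uniquely attained, and let $t\to\infty$ along the ray $x=tu$. The unique dominant exponential corresponds to the pair $(i^*,j^*)$; after dividing through by it, every remaining term is suppressed by a factor $O(e^{-ct})$ with $c>0$, so the residual identity forces $\mathcal{E}(tu;\theta_{i^*})-\mathcal{E}(tu;\theta'_{j^*})=O(t\,e^{-ct})\to 0$. Because any ReLU network is eventually affine along a ray, the two asymptotic affine parts along $u$ must coincide; letting $u$ vary inside an open chamber where $(i^*,j^*)$ is locally constant upgrades this to agreement on an open cone of $\mathbb{R}^d$, and combined with assumption~1 (pairwise-distinct expert families on each side) to a global identity $\mathcal{E}(\cdot;\theta_{i^*})\equiv\mathcal{E}(\cdot;\theta'_{j^*})$. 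A peeling induction---subtract a matched expert pair, rerun the asymptotic analysis on the residual, and invoke assumption~2 to guarantee that the newly dominant exponential is again uniquely attained---extends this to a bijection $\tau$ between all experts, forcing $n=n'$. Finally, I would read off the gating parameters from the next-order asymptotics: the rate at which $p_{i^*}(tu)\to 1$ is $(W_{i^*}-W_j)\cdot u$ for the runner-up index $j$, and assumption~2 allows me to recover each $W_i-W_j$ and $b_i-b_j$ individually as $u$ varies, pinning $\{(W_i,b_i)\}$ down up to a common translation $(c_W,c_b)\in\mathbb{R}^d\times\mathbb{R}$---precisely the remaining $G(n)$-freedom captured by Proposition~\ref{main:proposition-Weight space invariance of Mixture of Experts}.

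The main obstacle is the upgrade from asymptotic/open-cone agreement of two ReLU expert functions to global functional equality: ReLU networks lack analytic continuation, so this step does not come for free. My plan for overcoming it combines (i) agreement on an \emph{entire} open chamber of directions rather than along a single ray, (ii) the rigidity provided by assumption~1, and (iii) the iterative peeling argument, which exposes non-extremal experts---those whose $W_i$ is not a vertex of the convex hull of $\{W_k\}$---through subdominant exponential terms at later stages. This is the most delicate part of the argument, and it is precisely where the two non-degeneracy hypotheses of the theorem carry their weight.
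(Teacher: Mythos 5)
Your starting point is exactly the paper's: clearing denominators yields the identity $\sum_{i,j} e^{(W_i+W'_j)\cdot x+(b_i+b'_j)}\bigl[\mathcal{E}_i(x)-\mathcal{E}'_j(x)\bigr]=0$, and the idea of using a generic direction to separate the exponents also appears there. But your separation mechanism operates only at spatial infinity, and that opens a genuine gap. Sending $t\to\infty$ along rays $tu$ can only ever tell you that $\mathcal{E}_{i^*}-\mathcal{E}'_{j^*}$ vanishes on a truncated open cone, i.e., on a neighborhood of infinity inside one chamber of directions. For ReLU networks this does not upgrade to global equality --- $\mathrm{ReLU}(x_1)$ and the zero function agree on an entire open half-space --- and neither of your proposed remedies closes the hole. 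Assumption 1 provides no rigidity here: it only asserts that two experts differ \emph{somewhere} in $\mathbb{R}^d$, which is perfectly compatible with their agreeing on every cone your asymptotics can reach. The same defect breaks the peeling induction precisely where you need contradictions: when a ``lonely'' dominant exponent forces something like $\mathcal{E}'_{j^*}=\mathcal{E}'_{j_2}$, your argument only forces this near infinity in one chamber, which does not contradict pairwise distinctness of the primed family; so you cannot rule out spurious matchings, and the deductions $n=n'$ and injectivity of $\tau$ do not go through as written.

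The paper avoids this by making the exponential-separation argument local \emph{everywhere} rather than asymptotic: on each open chart $U$ of a dense open subset of $\mathbb{R}^d$ where all experts are simultaneously affine, the identity becomes a finite combination of exponentials with polynomial coefficients, extends to an identity of entire functions on $\mathbb{C}^d$ by the Identity Theorem, and Lemma~\ref{appendix:lemma-1} (linear independence of $e^{p_1},\ldots,e^{p_N}$ over $\mathbb{C}(x)$) then annihilates the coefficient of every exponent $W_i+W'_j$ that is not repeated. Because this holds on every chart of a dense cover, matched experts agree on a dense set and hence, by continuity, on all of $\mathbb{R}^d$ --- which is simultaneously the theorem's conclusion and the source of the contradictions driving the induction on the $W_i$'s. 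To repair your proof you would need an analogue of this local-at-every-point mechanism; directional asymptotics alone cannot supply it.
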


For the case of sparse gating, we require the notion of the \emph{strongly distinct} property. Specifically, two functions $f$ and $g$ defined on $\mathbb{R}^d$ are said to be \emph{strongly distinct} if the set $\{x \in \mathbb{R}^d \colon f(x) \neq g(x)\}$ is dense in $\mathbb{R}^d$. For instance, distinct polynomials are strongly distinct, whereas distinct ReLU networks are not strongly distinct in general. A formal definition of this property, along with illustrative examples, is provided in Definition~\ref{appendix:def-strongly distinct} and Example~\ref{appendix:example-strongly distinct}. 

We now state a result that parallels Theorem~\ref{maintext:theorem:main}, adapted to the sparse gating regime with $k > 1$, and established under a set of assumptions that are stronger than those previously required.

\begin{theorem}[Functional equivalence in Mixture-of-Experts with Sparse Gating] \label{maintext:theorem:main_smoe}
Suppose $\phi, \phi'$ define the same SMoE function, i.e., $\mathcal{S}(\cdot;\phi) = \mathcal{S}(\cdot;\phi')$.
Assume that the following conditions hold:
\begin{enumerate}[leftmargin=22pt, topsep=1pt]
    \item Both $\{\mathcal{E}(\cdot;\theta_i)\}_{i=1}^{n}$ and $\{\mathcal{E}(\cdot;\theta_i')\}_{i=1}^{n'}$ consist of pairwise strongly distinct functions;
    \item Both $\{W_{i-1} - W_i\}_{i=2}^{n}$ and $\{W_{i-1}' - W'_i\}_{i=2}^{n'}$ are linearly independent subsets of $\mathbb{R}^d$.
\end{enumerate} 
Then $n=n'$, and there exists $g=(c_W,c_b,\tau) \in G(n)$ such that for all $i = 1, \ldots, n$, we have
    $W_i'= W_{\tau(i)} + c_W$, $b_i'= b_{\tau(i)} + c_b$, and $\mathcal{E}(x;\theta_i') = \mathcal{E}(x;\theta_{\tau(i)})$ for all $x \in  \Omega(\{W_i,b_i\}_{i=1}^{n})$ such that $\tau(i) \in \textup{Top-}k(( W_j x + b_j )_{j=1}^{n})$.
\end{theorem}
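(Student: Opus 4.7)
The plan is to reduce the sparse case to the dense case (Theorem~\ref{maintext:theorem:main}) by decomposing $\mathbb{R}^d$ into open \emph{activation regions} on which $\mathcal{S}$ collapses to a dense MoE over a single $k$-subset of experts, and then patching the local expert bijections obtained on each region into a single global permutation.

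\textbf{Step 1: activation regions are nonempty and open.} For every subset $T \subseteq \{1,\ldots,n\}$ with $|T| = k$, let $R_T \coloneqq \{x \in \Omega(\{W_i,b_i\}_{i=1}^{n}) \colon \text{Top-}k(s(x)) = T\}$, and define $R'_{T'}$ analogously for $\phi'$. Because the $n-1$ difference vectors $\{W_{i-1}-W_i\}_{i=2}^{n}$ are linearly independent, the affine map $x \mapsto (s_i(x)-s_{i+1}(x))_{i=1}^{n-1}$ is surjective onto $\mathbb{R}^{n-1}$, so any strict ordering of the gating scores is realized on a nonempty open set. Hence each $R_T$ (similarly each $R'_{T'}$) is open and nonempty.

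\textbf{Step 2: local reduction to dense MoE.} On $R_T$, the SMoE output equals the dense MoE $\mathcal{D}_T(x) \coloneqq \sum_{i \in T} \operatorname{softmax}_i\bigl((s_j(x))_{j\in T}\bigr)\mathcal{E}(x;\theta_i)$, and similarly $\mathcal{S}(\cdot;\phi')|_{R'_{T'}} = \mathcal{D}'_{T'}$. Pick any pair $(T,T')$ with $R_T \cap R'_{T'}$ nonempty; there $\mathcal{D}_T = \mathcal{D}'_{T'}$. Passing to a smaller open ball on which all internal ReLUs of the experts in $T$ and $T'$ lie in fixed linear pieces, both dense MoEs become real-analytic in $x$, and the argument of Theorem~\ref{maintext:theorem:main}---exploiting asymptotics of softmax along directions afforded by the linear-independence hypothesis, together with the pairwise-distinct-experts condition---applies in this local setting and yields a bijection $\tau_{T,T'}\colon T \to T'$ together with shift constants $c_W^{T,T'}, c_b^{T,T'}$ matching the corresponding gating parameters.

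\textbf{Step 3: global patching.} The strongly distinct hypothesis on experts is what makes splicing possible: since any two distinct $\mathcal{E}(\cdot;\theta_i), \mathcal{E}(\cdot;\theta_j)$ differ on a dense subset of $\mathbb{R}^d$, they cannot coincide on any open ball, so the local bijection $\tau_{T,T'}$ and the local constants $(c_W^{T,T'},c_b^{T,T'})$ are uniquely determined by the local functional equality. Consequently, whenever an index $i$ lies in both $T_1$ and $T_2$, the images $\tau_{T_1,T'_1}(i)$ and $\tau_{T_2,T'_2}(i)$ must agree, and the shift constants must agree across all pairs. This promotes the local data to a single permutation $\tau$ (forcing $n=n'$) and a single pair $(c_W,c_b)$. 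The pointwise expert equality is extracted only on the union of $R_T$ with $\tau(i) \in T$, which is exactly the qualifier $\tau(i) \in \textup{Top-}k((W_j x+ b_j)_{j=1}^n)$ that appears in the conclusion.

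\textbf{Main obstacle.} The most delicate aspect is Step 2--3: formulating a \emph{local} version of Theorem~\ref{maintext:theorem:main} valid on an open subset of $\mathbb{R}^d$ (rather than on all of $\mathbb{R}^d$) while handling the piecewise-linear nature of ReLU experts, and then verifying that the locally obtained permutations and constants splice consistently across overlapping regions. The strongly distinct hypothesis, which played no role in the dense case, is precisely the ingredient that prevents region-dependent relabelings and guarantees uniqueness of the matching.
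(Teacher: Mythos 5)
Your overall strategy---decompose $\mathbb{R}^d$ into activation regions on which the Top-$k$ selection is constant and the experts are affine, apply the dense-gating theorem locally, then patch---is exactly the route the paper takes (your Step 1 is its Proposition~\ref{appendix:result-2}, your Step 2 is its Steps 2--3). The genuine gap is in your Step 3. You argue that because the local bijection $\tau_{T,T'}$ and the local constants $(c_W^{T,T'},c_b^{T,T'})$ are \emph{uniquely determined} on each region pair (which does follow from strong distinctness), they must therefore \emph{agree} across overlapping regions. Uniqueness within each region does not imply agreement between regions: if $i \in T_1 \cap T_2$, you obtain $W'_{\tau_1(i)} = W_i + c_W^{(1)}$ on one region and $W'_{\tau_2(i)} = W_i + c_W^{(2)}$ on another, and nothing you have said rules out $\tau_1(i) \neq \tau_2(i)$ with $c_W^{(1)} \neq c_W^{(2)}$; the two candidate experts $\mathcal{E}'_{\tau_1(i)}$ and $\mathcal{E}'_{\tau_2(i)}$ would each coincide with $\mathcal{E}_i$ on \emph{different} open sets, which is perfectly compatible with their being strongly distinct from each other. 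The paper's Remark~\ref{appendix:remark-2} exhibits precisely this failure mode: two SMoEs with $n=4$, $k=2$, $d=1$ and identical experts whose unique per-region matchings require incompatible gating translations, so that no single global $(c_W,c_b)$ exists.

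What closes the gap is a second, essential use of the linear-independence hypothesis, which in your write-up is invoked only for non-emptiness of the activation regions. The paper's Step 4 first fixes the matching on one region to define $c_W$, then for each remaining index $m$ extracts from regions whose selection contains $\{1,m\}$ and $\{2,m\}$ the identities $W_1 - W_m = W'_{t_1} - W'_{s_1}$ and $W_2 - W_m = W'_{t_2} - W'_{s_2}$; subtracting and using $W_1 - W_2 = W'_1 - W'_2$ gives $W'_1 - W'_2 = (W'_{t_1} - W'_{s_1}) - (W'_{t_2} - W'_{s_2})$, and linear independence of the gating-weight differences forces $t_1 = 1$, $t_2 = 2$, $s_1 = s_2 =: \tau(m)$, after which injectivity of $\tau$ and $n = n'$ follow by symmetry. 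Your proof needs this (or an equivalent) splicing argument; strong distinctness alone cannot supply it.
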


The proofs of Theorems~\ref{maintext:theorem:main} and~\ref{maintext:theorem:main_smoe} are provided in Appendix~\ref{appendix:section{Functional Equivalence of Mixture-of-Experts}} and Appendix~\ref{appendix:section-Functional Equivalence of Sparse Mixture-of-Experts}, respectively. Both arguments build upon two essential ingredients: a result establishing the linear independence of exponential functions, as formulated in Lemma~\ref{appendix:lemma-1}, and a key observation on the piecewise affine structure of ReLU networks, discussed in Appendix~\ref{appendix:section-Local affineness of ReLU neural networks}.

\begin{remark}
    While Theorem~\ref{maintext:theorem:main_smoe} shares a conceptual parallel with Theorem~\ref{maintext:theorem:main}, it is important to underscore that \textit{the sparse gating case presents significantly deeper mathematical difficulties}. The core source of complexity lies in the discontinuous behavior of the Top-$k$ operator, which induces nontrivial discontinuities in the gating function, thereby disrupting smoothness and complicating the functional analysis required to establish equivalence. As a result, standard analytical techniques used in the dense setting become insufficient, necessitating more delicate arguments to account for the combinatorial and piecewise structure inherent to sparse gating.
\end{remark}

Theorems~\ref{maintext:theorem:main} and~\ref{maintext:theorem:main_smoe} provide a rigorous characterization of functional equivalence in both dense and sparse MoE models, with particular attention given to the role and structure of the gating mechanism. Nonetheless, it is important to note that these results do not account for all potential symmetries inherent in the MoE and SMoE mappings defined in Equations~\eqref{main:eq-3} and~\eqref{main:eq-1}. Notably, further symmetries may exist within the expert networks, especially when they are implemented as ReLU neural networks. Given that this work is primarily concerned with the MoE architecture itself, our analysis is restricted to the behavior of the gating mechanism. Consequently, we treat the experts as black-box functions and abstract away from their internal parameterizations.

\subsection{Necessity and Implications of Technical Assumptions}

At a conceptual level, symmetry analysis aims to identify \textit{universal invariances}--those that persist regardless of specific parameter values--while explicitly excluding \textit{singular symmetries}, which emerge only under special, degenerate configurations of the model parameters. The assumptions introduced in our results are designed precisely to rule out such singularities, which do not represent inherent structural invariances of the architecture but instead arise from pathological or measure-zero subsets of parameter space. We provide a brief overview of the motivation behind these assumptions here; a more detailed justification, along with illustrative examples, is given in Remarks~\ref{appendix:remark-1} and~\ref{appendix:remark-2}.

\textbf{The case of dense gating.}
Assumption 1 is introduced to eliminate degenerate scenarios in which two experts compute identical functions and receive identical gating scores--situations where permuting the corresponding experts has no effect on the model's output. By excluding linear dependencies among the gating weight vectors, Assumption 2 ensures that expert activations remain distinguishable--thereby preventing the emergence of non-structural, symmetry-like artifacts.

\textbf{The case of sparse gating.}
Theorem~\ref{maintext:theorem:main_smoe} relies on a stronger set of assumptions than those required in Theorem~\ref{maintext:theorem:main}. This added strength is necessary due to a key feature of sparse gating: an expert's behavior on inputs where it is not selected is unconstrained, meaning that the expert can act arbitrarily outside its region of activation. Consequently, different sets of expert functions may result in the same overall model output, provided their outputs agree where they are active.

\textbf{The case of $k = 1$.} In the particular case where $k = 1$, the SMoE architecture employs a Top-$1$ gating mechanism that activates only the expert associated with the highest gating score. Consequently, the softmax output reduces to a one-hot vector, with a single component equal to $1$. Under this regime, the SMoE map exhibits additional nontrivial symmetries, specifically invariance under the multiplicative group $\mathbb{R}_{>0}$. That is, for any positive scalar $c > 0$, the following identity holds:
\begin{align}
    &\mathcal{S}\big(x; \{W_i,b_i,\theta_i\}_{i=1}^{n}\big) = \mathcal{S}\big(x; \{cW_i,cb_i,\theta_i\}_{i=1}^{n}\big).
\end{align}
This invariance holds because the $\text{argmax}$ used for expert selection is unaffected by positive scaling, i.e., $\text{argmax}_{i=1,\ldots,n} \left(W_ix + b_i\right) = \text{argmax}_{i=1,\ldots,n} (cW_ix + cb_i)$,
for all $x \in \Omega(\{W_i, b_i\}_{i=1}^{n})$. Additionally, since only a single expert contributes to the output, the model lacks any form of explicit aggregation across experts. Due to the analytical challenges posed by these additional invariances, we do not consider $k=1$ in our primary results and instead defer its investigation to future work.

\section{Linear Mode Connectivity
of Mixture-of-Experts Architectures}
\label{section{Algorithms for Expert Matching}}

This section outlines the theoretical foundations of LMC in MoE architectures. We first show that permutation invariance suffices to explain the existence of LMC in MoEs, then propose an algorithm to identify such permutations, which we later employ to empirically assess LMC in MoE models.

\subsection{Permutation Invariance Sufficiency in Linear Mode Connectivity of Mixture-of-Experts}

Theorems~\ref{maintext:theorem:main} and~\ref{maintext:theorem:main_smoe} show that the group action as in Equation~\eqref{main:eq-2} suffices to uncover LMC between two MoE models. Each group element $g = (c_W, c_b, \tau) \in G(n)$ consists of two components: a translation term $(c_W, c_b) \in \mathbb{R}^d \times \mathbb{R}$ applied to the gating function, and a permutation $\tau$ that reorders both the experts and their associated gating scores. Observe that the translation component does not affect the barrier loss defined in Equation~\eqref{maintext:eq-barrier-loss}. Indeed, for any $h = (c_W, c_b, \textup{id}_n) \in G(n)$--with $\textup{id}_n$ denoting the identity permutation in $\textup{S}_n$--the loss barrier remains unchanged:
\begin{equation}
B(\phi_A, \phi_B) = B(\phi_A, h \phi_B).
\end{equation}
This invariance, established in Proposition~\ref{appendix:prop-translation-invariant-barrierloss}, indicates that only the permutation component $\tau$ influences LMC behavior. Combined with prior work showing that permutation symmetries are sufficient to induce LMC in standard neural networks~\citep{entezari2021role, ferbach2024proving}, this suggests that analyzing permutations alone suffices to capture LMC in MoE models.

\subsection{Permutation Alignment Algorithm for Mixture-of-Experts}\label{section:permutation-alignment-algo}
We propose the Permutation Alignment Algorithm to align MoEs. Inspired by the data-independent Weight Matching algorithm \citep{ainsworth_git_re_basin_2022}, our method operates in the parameter space of two MoEs, enabling efficient permutation alignment tailored to MoE architectures in large-scale settings.

\textbf{Proposed Algorithm.} In MoEs, equivalent functionality can arise from different expert orderings and internal neuron configurations. To address this, we propose a two-stage alignment method that first reorders experts to match their roles, then aligns the internal weights of corresponding experts. Both steps are formulated as Linear Assignment Problems (LAPs) \cite{bertsekas1998network}. An LAP assigns $N$ tasks to $N$ agents using a cost matrix $C \in \mathbb{R}^{N \times N}$, where each entry $C_{i,j}$ represents the cost of assigning task $i$ to agent $j$. The objective is to find a permutation $\pi$ that minimizes the total cost:
$\min_{\pi \in \text{S}_N} \sum_{i=1}^N C_{i, \pi(i)}$. This is efficiently solved using the Hungarian algorithm in $O(N^3)$ time \citep{kuhn1955hungarian, jonker1988shortest, crouse2016implementing}.

Suppose we have $\phi = (W_i, b_i, \theta_i)_{i = 1, \ldots, n}$ and $\phi' = (W'_i, b'_i, \theta'_i)_{i = 1, \ldots, n}$ representing the parameters of two distinct MoEs with $n$ experts. In practice,  each expert is implemented as an MLP with one hidden layer \citep{fedus2022switch, lepikhin2020gshard, du2022glam}. We denote the parameters of expert $i$ in $\phi$ and expert $j$ in $\phi'$ as $\theta_i = \{A_i, u_i, B_i, v_i\}$ and $\theta'_j = \{A'_j, u'_j, B'_j, v'_j\}$, respectively, where $A_i \in \mathbb{R}^{h \times d}$, $u_i \in \mathbb{R}^{h}$, $B_i \in \mathbb{R}^{d \times h}$, $v_i \in \mathbb{R}^{d}$, with $h$ as the hidden dimension. Denote $\widetilde{A}_i = [A_i, u_i] \in \mathbb{R}^{h \times (d + 1)}, \quad \widetilde{B}_i = [B_i, v_i] \in \mathbb{R}^{d \times (h + 1)},$ and similarly for $\widetilde{A}'_j$ and $\widetilde{B}'_j$, as concatenated matrices.

\begin{table}[t]
    \caption{Configurations for experiments analyzing LMC in Transformers with dense MoE replacing the FFN in the \textit{first} layer (refer to Table \ref{tab:lmc_experimental_first_full} for SMoE and DeepSeekMoE). For each dataset, number of layers, and number of experts, we visualize LMC curves for the \textit{three} model pairs among \textit{three} models in the referenced figures. Datasets denoted as $A \rightarrow B$ indicate pretraining on dataset $A$, fine-tuning on dataset $B$, and evaluating on $B$.}  
    \vspace{6pt}
    \label{tab:lmc_experimental_first_moe}
    \medskip
    \centering
    \renewcommand*{\arraystretch}{1.3}
\begin{adjustbox}{width=\textwidth}
\begin{tabular}{lclllcll}
    \toprule
    Dataset & Layers & Experts & Figure 
    & Dataset & Layers & Experts & Figure \\
    \cmidrule(r){1-4} \cmidrule(r){5-8}
    MNIST & 1 & [2, 4] & [\ref{fig:moe-mnist-1-2}, \ref{fig:moe-mnist-1-4}] 
    & ImageNet-21k$\rightarrow$CIFAR-10 & 12 & [2, 4, 6] & [\ref{fig:moe-imagenet21k-cifar10-12-2}, \ref{fig:moe-imagenet21k-cifar10-12-4}, \ref{fig:moe-imagenet21k-cifar10-12-6}] \\
     & 2 & [2, 4] & [\ref{fig:moe-mnist-2-2}, \ref{fig:moe-mnist-2-4}] 
    & ImageNet-21k$\rightarrow$CIFAR-100 & 12 & [2, 4, 6] & [\ref{fig:moe-imagenet21k-cifar100-12-2}, \ref{fig:moe-imagenet21k-cifar100-12-4}, \ref{fig:moe-imagenet21k-cifar100-12-6}] \\
    CIFAR-10 & 2 & [2, 4, 6] & [\ref{fig:moe-cifar10-2-2}, \ref{fig:moe-cifar10-2-4}, \ref{fig:moe-cifar10-2-6}] 
    & ImageNet-1k & 12 & [2, 4, 6, 8] & [\ref{fig:imagenet-moe-2}, \ref{fig:imagenet-moe-4}, \ref{fig:imagenet-moe-6}, \ref{fig:imagenet-moe-8}] \\
     & 6 & [2, 4, 6] & [\ref{fig:moe-cifar10-6-2}, \ref{fig:moe-cifar10-6-4}, \ref{fig:moe-cifar10-6-6}] 
    & WikiText103 & 12 & [2, 4, 6, 8] & [\ref{fig:wikitext103-moe-2}, \ref{fig:wikitext103-moe-4}, \ref{fig:wikitext103-moe-6}, \ref{fig:wikitext103-moe-8}] \\
    CIFAR-100 & 6 & [2, 4, 6] & [\ref{fig:moe-cifar100-6-2}, \ref{fig:moe-cifar100-6-4}, \ref{fig:moe-cifar100-6-6}] 
    & One Billion Word & 12 & [2, 4, 6, 8] & [\ref{fig:lm1b-moe-2}, \ref{fig:lm1b-moe-4}, \ref{fig:lm1b-moe-6}, \ref{fig:lm1b-moe-8}] \\
    \bottomrule
\end{tabular}
\end{adjustbox}
\end{table}

\begin{figure}[t]
    \centering
    \includegraphics[width=1.0\linewidth]{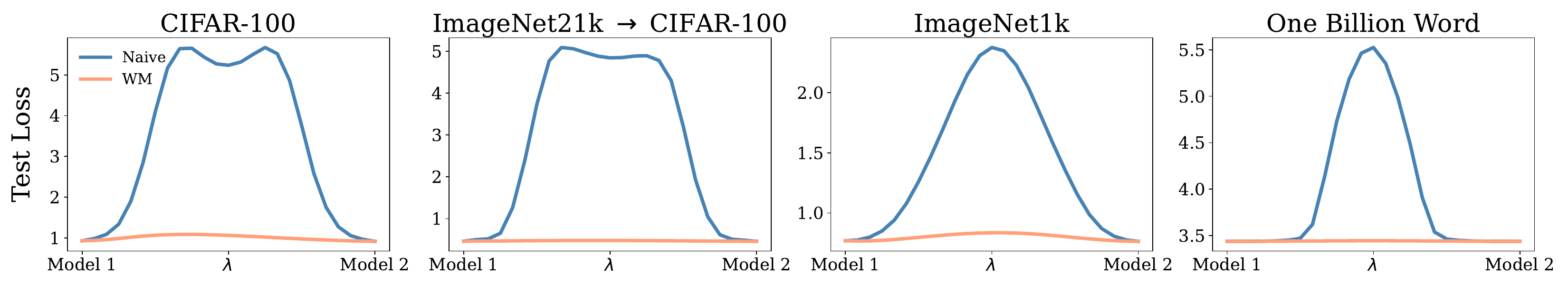}
    \caption{LMC curves for ViT (subplots 1-3) and GPT-2 (subplot 4) with a 4-expert MoE replacement at the \textit{first} Transformer layer, on CIFAR-100, ImageNet21k$\rightarrow$CIFAR-100, ImageNet-1k, and One Billion Word datasets, respectively. Plots show consistent low-loss linear interpolation paths between fine-tuned models, indicating strong linear mode connectivity.}
    \label{fig:LMC_main}
\end{figure}

\textbf{Matching Order of Experts.} In MoEs, experts and their gating can be reordered without affecting functionality. We propose two methods to align the expert order between $\phi$ and $\phi'$, using LAP.

\textit{Method 1 - Gating Weights.} To address the translation-invariance of the softmax function in the gating, we center the weights and biases of the gating. For $\phi$, these are $\widehat{W}_i = W_i - \frac{1}{n} \sum_{m=1}^n W_m$ and $\hat{b}_i = b_i - \frac{1}{n} \sum_{m=1}^n b_m$ per expert $i$, and similarly for $\phi'$. The \textit{cost matrix with respect to gating weights} is defined as $C = \{C_{i,j}\}_{1 \le i \le n, 1 \le j \le n} \in \mathbb{R}^{n \times n}$, where
\begin{equation}
 C_{i,j} = \left(\| \widehat{W}_i - \widehat{W}'_j \|_2^2 + (\hat{b}_i - \hat{b}'_j)^2\right)^{\frac{1}{2}},
\end{equation}
measuring dissimilarity between centered gating parameters of expert $i$ in $\phi$ and expert $j$ in $\phi'$.

\textit{Method 2 - Internal Weights.} Experts exhibit permutation invariance in hidden neuron orderings. We use Gram matrices for a permutation-invariant representation (see Appendix \ref{appendix:gram-permutation-invariant}). The \textit{cost matrix with respect to expert internal weights} is defined as $C = \{C_{i,j}\}_{1 \le i \le n, 1 \le j \le n}  \in \mathbb{R}^{n \times n}$, where
\begin{equation}    
\label{eq:gram}
C_{i,j} = \left( \left\| (\widetilde{A}_i)^\top \widetilde{A}_i - (\widetilde{A}'_j)^\top \widetilde{A}'_j \right\|_F^2 + \left\| \widetilde{B}_i (\widetilde{B}_i)^\top - \widetilde{B}'_j (\widetilde{B}'_j)^\top \right\|_F^2 \right)^{\frac{1}{2}},
\end{equation}
quantifying dissimilarity between weight representations of expert $i$ in $\phi$ and expert $j$ in $\phi'$.

For both methods, the optimal expert ordering $\tau$ is determined by solving the LAP: $\tau = \arg\min_{\pi \in \text{S}_n} \sum_{i=1}^n C_{i, \pi(i)}$ in $O(n^3)$ time.

\textbf{Aligning Expert Internal Weights.} After finding the permutation \(\tau\) to align experts between \(\phi\) and \(\phi'\), we align the internal weights of each matched expert pair \((i, \tau(i))\) using established MLP alignment methods \citep{ainsworth2022git, sinkhorn_re_basin_2023, entezari_permutation_invariances_2022, singh_jaggi_otfusion_2021}, specifically the Weight Matching algorithm \citep{ainsworth_git_re_basin_2022}. This LAP-based algorithm aligns two MLPs with one hidden layer of size \(h\) in \(O(h^3)\) time, ensuring efficient scaling. Applying this to each pair \((i, \tau(i))\) guarantees consistent neuron ordering across aligned experts.
\subsection{Weight Matching Algorithm}
We propose a Weight Matching Algorithm designed to align MoEs by addressing permutation invariance in expert ordering and internal parameters, as detailed in Algorithm~\ref{alg:moe_weight_matching}. The algorithm employs gate-based and expert-based methods to optimally permute experts, with both performing comparably well in loss barrier evaluations during model interpolation (see Section~\ref{sec:matching_method}). Operating solely in parameter space, the algorithm avoids data-dependent computational overhead, with a complexity of $O(n^3 + n h^3)$, ensuring scalability for large-scale MoE alignment tasks.

\begin{algorithm}[H]
\begin{algorithmic}[0]
\caption{Weight Matching for Mixture-of-Experts}\label{alg:moe_weight_matching}
\STATE \textbf{Input:} MoE model weights $\phi = (W_i, b_i, \theta_i)_{i = 1, \ldots, n}$, $\phi' = (W'_i, b'_i, \theta'_i)_{i = 1, \ldots, n}$
\STATE \textbf{Output:} Permutation $\tau$ for experts, and permutations $\{P_i\}_{i=1}^n$ for hidden units
\STATE \% Step 1: Match experts' order using two methods
\FOR{method in \{gate, expert\}}
    \STATE Compute cost matrix $C$
    \STATE Solve LAP to obtain expert permutation $\tau_{\text{method}}$
\ENDFOR
\STATE \% The two candidate expert orderings $\tau_{\text{gate}}$ and $\tau_{\text{expert}}$ are obtained
\STATE \% Step 2: Align internal weights of matched expert pairs
\FOR{method in \{gate, expert\}}
    \FOR{$i = 1$ to $n$}
        \STATE Compute $P_i$ by applying Weight Matching to $\theta_i$ and $\theta'_{\tau_{\text{method}}(i)}$
    \ENDFOR
\ENDFOR
\RETURN{$\left(\tau_{\text{gate}}, \left(\{P_i\}_{i=1}^n\right)_{\text{gate}}\right)$, $\left(\tau_{\text{expert}}, \left(\{P_i\}_{i=1}^n\right)_{\text{expert}}\right)$}
\end{algorithmic}
\end{algorithm}

\section{Experiments}
\label{main:section{Experiments}}

This section provides empirical validation of LMC across \textit{three} MoE variants, including dense MoE, sparse MoE (SMoE), and DeepSeekMoE, detailed in Section~\ref{subsection:Experimental_Setup}. By analyzing interpolation paths between independently fine-tuned models--each subject to permutation invariance--low-loss connections are consistently observed, indicating shared solution basins in the optimization landscape.

\subsection{Experimental Setup}\label{subsection:Experimental_Setup}

\textbf{Experimental Design.} We investigate LMC by replacing the Feedforward Network (FFN) in the Transformer \cite{vaswani2017attention} layer with a randomly initialized MoE, based on empirical evidence from Section~\ref{sec:ablation}, Appendices~\ref{appendix:ffn_reinit},~\ref{appendix:LMC_last},  and ~\ref{appendix:LMC_all}  indicating lower perturbation sensitivity when replacing deeper FFN layers with MoEs. Only the MoE parameters are fine-tuned using multiple random seeds for each experiment. LMC is evaluated by linearly interpolating between all checkpoint pairs, measuring model performance on the \textit{test} set at 25 evenly spaced points along the interpolation path. We examine three MoE variants: dense MoE \citep{jacobs1991adaptive,jordan1994hierarchical}, SMoE with top-2 routing ($k=2$) \citep{fedus2022switch}, and DeepSeekMoE with top-2 and one shared expert ($k=2, s=1$) (see Appendix~\ref{appendix:deepseek} for formal formulations) \citep{liu2024deepseek}.

\textbf{Datasets and Models.}  We use ViT \citep{dosovitskiy2020image} for image classification (MNIST \citep{lecun1998gradient}, CIFAR-10/100 \citep{krizhevsky2009learning}, ImageNet \citep{deng2009imagenet}) and GPT-2 \citep{radford2019language} for language modeling (WikiText103 \citep{merity2016pointer} and One Billion Word \citep{chelba2013one}). Hyperparameters such as batch size, optimizer, number of experts, and hidden size are fixed, while the learning rate is tuned per setting. Full details are provided in Appendix~\ref{appendix:hyperparams}.
\subsection{Linear Mode Connectivity Verification}
\textbf{Experimental Objective.}
We analyze LMC in pretrained Transformer models where the FFN of a Transformer layer is replaced with an MoE. Our study focuses on two configurations: replacing the FFN in the \textit{first}, \textit{last} and \textit{all} Transformer layers. For each configuration, we fine-tune \textit{three} independently initialized models. We vary the number of experts (e.g., 2, 4, 6, 8, 16) and the number of Transformer layers to assess LMC under different architectural scales. LMC is evaluated by linearly interpolating between all model pairs using our proposed algorithm (Algorithm~\ref{alg:moe_weight_matching}). 

\textbf{Results.} Table~\ref{tab:lmc_experimental_first_moe} shows results for first-layer MoE replacement, with full details for three MoE variants in Table~\ref{tab:lmc_experimental_first_full}. The results for both the last-layer and all-layer configurations are provided in Appendix~\ref{appendix:LMC_last} and~\ref{appendix:LMC_all}. LMC persists in all cases, but is less pronounced in the last layer due to greater stability and reduced influence on convergence. Early layers strongly shape the solution basin, making first-layer changes more revealing of connectivity. This depth-dependent effect is also demonstrated in Section~\ref{sec:ablation} and Appendix~\ref{appendix:ffn_reinit}. Figure~\ref{fig:LMC_main} presents representative LMC curves across our testing cases.

\begin{table}[t]
    \centering
    \renewcommand*{\arraystretch}{1.3}
    \caption{Comparison of the two expert matching methods for 12-layer ViT-MoE models with 4 experts on CIFAR-10 and CIFAR-100, showing Rank (out of 24 permutations) and \(\hat{L} = \frac{L_{\text{method}} - L_{\text{top1}}}{L_{\text{naive}} - L_{\text{top1}}} \times 10^2\) across layer placements, averaged over 10 checkpoint pairs. Please refer to Tables~\ref{tab:expert_matching_full_loss} and~\ref{tab:expert_matching_full_acc} for SMoE and DeepSeekMoE variants.}
    \vspace{6pt}
    \label{tab:expert_matching_moe}
    \begin{adjustbox}{width=\textwidth}
    \begin{tabular}{lccccclccccc}
        \toprule
        \multirow{2}{*}{\makecell{Dataset}} & \multirow{2}{*}{\makecell{Layer\\replaced}} & \multicolumn{2}{c}{{Expert Weight Matching}} & \multicolumn{2}{c}{{Gate Weight Matching}} 
        & \multirow{2}{*}{\makecell{Dataset}} & \multirow{2}{*}{\makecell{Layer\\replaced}} & \multicolumn{2}{c}{{Expert Weight Matching}} & \multicolumn{2}{c}{{Gate Weight Matching}} \\
        \cmidrule(lr){3-4}\cmidrule(lr){5-6}\cmidrule(lr){9-10}\cmidrule(lr){11-12}
        & & Rank $\downarrow$ & $\hat{L}$ $\downarrow$ & Rank $\downarrow$ & $\hat{L}$ $\downarrow$ 
        & & & Rank $\downarrow$ & $\hat{L}$ $\downarrow$ & Rank $\downarrow$ & $\hat{L}$ $\downarrow$ \\
        \cmidrule(r){1-6} \cmidrule(r){7-12}
        CIFAR-10 & 1  & 2.50 $\pm$ 1.50 & 2.12 $\pm$ 0.42 & 3.00 $\pm$ 1.00 & 3.42 $\pm$ 0.55 
        & CIFAR-100 & 1  & 2.80 $\pm$ 0.40 & 3.17 $\pm$ 0.25 & 2.90 $\pm$ 0.70 & 2.73 $\pm$ 1.03 \\
         & 4  & 2.10 $\pm$ 0.50 & 1.04 $\pm$ 0.32 & 2.60 $\pm$ 0.92 & 1.54 $\pm$ 0.44 
        &  & 4  & 3.60 $\pm$ 1.20 & 1.15 $\pm$ 0.55 & 2.70 $\pm$ 1.00 & 2.03 $\pm$ 0.93 \\
         & 8  & 2.70 $\pm$ 0.46 & 0.60 $\pm$ 0.27 & 2.90 $\pm$ 0.30 & 0.74 $\pm$ 0.17 
        &  & 8  & 3.30 $\pm$ 0.78 & 0.67 $\pm$ 0.14 & 3.20 $\pm$ 0.87 & 1.13 $\pm$ 0.55 \\
         & 12 & 4.60 $\pm$ 2.00 & 0.13 $\pm$ 0.05 & 3.80 $\pm$ 1.66 & 0.09 $\pm$ 0.03 
        &  & 12 & 3.40 $\pm$ 0.92 & 0.07 $\pm$ 0.03 & 4.20 $\pm$ 0.89 & 0.11 $\pm$ 0.04 \\
        \bottomrule
    \end{tabular}
    \end{adjustbox}
\end{table}
\begin{figure}[t]
    \centering
    \includegraphics[width=1.0\linewidth]{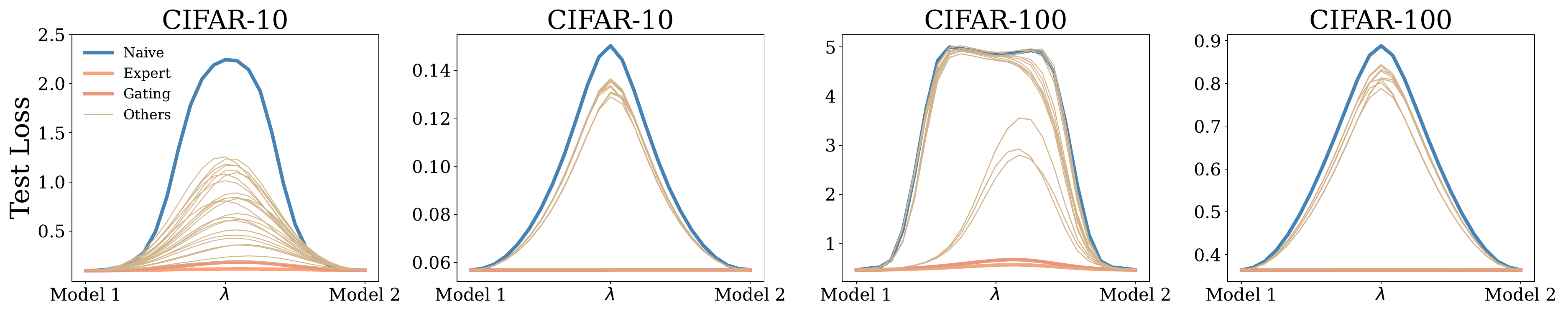}
    \caption{Loss curves for 12-layer ViT-MoE models with a 4-expert MoE replacement in either the first layer (subplots 1, 3) or the last layer (subplots 2, 4), on CIFAR-10 and CIFAR-100. The curves compare two Expert Order Matching methods across 24 permutations, with Weight Matching applied post-reordering to all permutations. Corresponding accuracy metrics are presented in Figure~\ref{fig:rank_acc}.}
    \label{fig:rank_loss}
\end{figure}
\subsection{Expert Order Matching Method}\label{sec:matching_method}

\textbf{Experimental Objective.} We evaluate two Expert Order Matching methods (Step 1, Algorithm~\ref{alg:moe_weight_matching})--Expert Weight Matching and Gate Weight Matching--using 12-layer ViT models with 4-expert MoE replacements. The methods are assessed by ranking the selected permutation among all 24 possible expert permutations after Weight Matching (Step 2), with MoE integrations at layers 1, 4, 8, or 12 on CIFAR-10 and CIFAR-100, repeated five times. We report the rank and a scaled metric \(\hat{L} = \frac{L_{\text{method}} - L_{\text{top1}}}{L_{\text{naive}} - L_{\text{top1}}} \times 10^2\), averaged over ten checkpoint pairs, where \(L_{\text{method}}\), \(L_{\text{top1}}\), and \(L_{\text{naive}}\) denote loss barriers of our methods, the best permutation, and the naive interpolation, respectively.

\textbf{Results.} Table~\ref{tab:expert_matching_moe} shows both methods achieve low ranks and near-zero \(\hat{L}\), indicating near-optimal matching. Full results are presented in Tables~\ref{tab:expert_matching_full_loss} (loss) and \ref{tab:expert_matching_full_acc} (accuracy). Figure~\ref{fig:rank_loss} visualizes the representative performance of our methods across all 24 permutations, highlighting the importance of correct expert order matching, as poor ordering can yield performance close to the naive interpolation. To further validate the importance of expert alignment, we extend our analysis to the large-scale One Billion Word dataset. 
As shown in Table~\ref{tab:permute_lm1b_loss_barrier}, the proposed Total Weight Matching consistently yields lower loss barriers compared to the Skipping-Expert-Order Matching baseline across all MoE variants, confirming that incorrect expert ordering substantially hampers interpolation smoothness. These results demonstrate that accurate Expert-Order Matching is crucial for revealing true Linear Mode Connectivity between MoE models, as even minor permutation mismatches can disrupt the underlying shared representation space.

\subsection{Ablation Study on Number of Layers}\label{sec:ablation}
\begin{table}[t]
\centering
\renewcommand*{\arraystretch}{1.3}
\caption{Ablation Study on varying the number of Transformer layers and MoE layer replacement in ViT for CIFAR-10. The ratios of metrics (loss barrier and AUC, accuracy barrier and AUC) compare Algorithm \ref{alg:moe_weight_matching} to naive interpolation. For all MoE variants, kindly refer to Tables \ref{tab:ablation_num_layers_full}, \ref{tab:ablation_loss_full}, and \ref{tab:ablation_acc_full}.}
\label{tab:ablation_num_layers}
\vspace{6pt}

\begin{adjustbox}{width=\textwidth}
\begin{tabular}{ccrrrrccrrrr}
\toprule
\multirow{2}{*}{Layers} & \multirow{2}{*}{\makecell[c]{Layer\\replaced}}  & \multirow{2}{*}{\makecell[c]{Loss ~~~\\Barrier $\downarrow$ ~}}  & \multirow{2}{*}{\makecell[c]{Loss ~~~~~\\AUC $\downarrow$ ~~~}} & \multirow{2}{*}{\makecell[c]{Accuracy ~~\\Barrier $\downarrow$ }} & \multirow{2}{*}{\makecell[c]{Accuracy ~~\\AUC $\downarrow$ }} 
& \multirow{2}{*}{Layers} & \multirow{2}{*}{\makecell[c]{Layer\\replaced}} & \multirow{2}{*}{\makecell[c]{Loss ~~~\\Barrier $\downarrow$ ~}}  & \multirow{2}{*}{\makecell[c]{Loss ~~~~~\\AUC $\downarrow$ ~~~}} & \multirow{2}{*}{\makecell[c]{Accuracy ~~\\Barrier $\downarrow$ }} & \multirow{2}{*}{\makecell[c]{Accuracy ~~\\AUC $\downarrow$ }}\\
&  &  &  &   & 
& &  & & & & \\
\cmidrule(lr){1-6}\cmidrule(lr){7-12}
2 & 1 & 8.54 $\pm$ 1.53 & 8.73 $\pm$ 1.68 & 8.39 $\pm$ 1.42 & 8.01 $\pm$ 1.39
& 6 & 1 & 4.16 $\pm$ 1.47 & 4.96 $\pm$ 2.31 & 3.64 $\pm$ 1.00 & 4.13 $\pm$ 1.55 \\
 & 2 & 9.33 $\pm$ 2.04 & 8.94 $\pm$ 1.92 & 9.10 $\pm$ 2.19 & 8.83 $\pm$ 2.22
&  & 2 & 6.78 $\pm$ 4.80 & 7.71 $\pm$ 3.09 & 7.27 $\pm$ 5.73 & 8.98 $\pm$ 2.94 \\
4 & 1 & 8.29 $\pm$ 1.63 & 8.08 $\pm$ 1.59 & 7.41 $\pm$ 1.45 & 6.43 $\pm$ 1.09
&  & 3 & 9.69 $\pm$ 4.92 & 8.62 $\pm$ 5.46 & 9.03 $\pm$ 3.31 & 8.14 $\pm$ 3.57 \\
 & 2 & 10.19 $\pm$ 3.43 & 10.19 $\pm$ 3.16 & 10.21 $\pm$ 5.26 & 9.08 $\pm$ 5.33
&  & 4 & 10.83 $\pm$ 5.61 & 12.01 $\pm$ 6.42 & 8.92 $\pm$ 2.21 & 10.00 $\pm$ 7.30 \\
 & 3 & 8.50 $\pm$ 1.82 & 7.83 $\pm$ 1.89 & 9.82 $\pm$ 2.58 & 8.70 $\pm$ 2.58
&  & 5 & 9.56 $\pm$ 4.52 & 11.72 $\pm$ 5.59 & 8.72 $\pm$ 2.89 & 11.01 $\pm$ 3.83 \\
 & 4 & 5.12 $\pm$ 0.67 & 4.60 $\pm$ 0.65 & 4.17 $\pm$ 1.68 & 4.58 $\pm$ 1.01
&  & 6 & 6.90 $\pm$ 3.54 & 9.67 $\pm$ 6.59 & 7.16 $\pm$ 3.98 & 9.69 $\pm$ 3.11 \\
\bottomrule
\end{tabular}
\end{adjustbox}
\end{table}

This study investigates the impact of the number of Transformer layers on LMC in MoE models on CIFAR-10. We evaluate models with 2, 4, and 6 layers, incorporating MoE at all possible layer positions. Employing Algorithm~\ref{alg:moe_weight_matching}, we compute ratios for four metrics--loss barrier, loss Area Under the Curve (AUC, relative to the straight line connecting the metrics of the two endpoint models), accuracy barrier, and accuracy AUC, relative to linear interpolation--across five repeated experiments, evaluating ten model pairs per configuration. Table~\ref{tab:ablation_num_layers} reports results for MoE, showing significant reduction ratios across all configurations. Corresponding results for SMoE and DeepSeekMoE are provided in Table~\ref{tab:ablation_num_layers_full}, with detailed loss and accuracy metrics presented in Tables~\ref{tab:ablation_loss_full} and~\ref{tab:ablation_acc_full}.

\section{Conclusion}
\label{maintext:section{Conclusion}}
This paper presents a systematic analysis of LMC in MoE models. We show that functional equivalence via permutation symmetries is sufficient to construct low-loss linear paths--when they exist--between independently trained models. To enable this, we introduce an algorithm for identifying such permutations. Empirical evaluations confirm the presence of LMC across a wide range of MoE configurations--including dense, sparse, and shared-expert variants--under diverse hyperparameter settings and across datasets of varying scales and modalities. While our method does not provide theoretical bounds on the loss barrier, this remains a common limitation across prior LMC studies. Overall, our work offers a principled foundation for extending LMC analysis to other architectures, such as Transformers and State Space Models.

\clearpage
\begin{ack}

This research / project is supported by the National Research Foundation Singapore under the AI
Singapore Programme (AISG Award No: AISG2-TC-2023-012-SGIL). This research / project is
supported by the Ministry of Education, Singapore, under the Academic Research Fund Tier 1
(FY2023) (A-8002040-00-00, A-8002039-00-00). This research / project is also supported by the
NUS Presidential Young Professorship Award (A-0009807-01-00) and the NUS Artificial Intelligence Institute--Seed Funding (A-8003062-00-00).
\end{ack}


\bibliography{neurips_2025}

\begin{thebibliography}{10}

\bibitem{ahlfors1979complex}
Lars~V. Ahlfors.
\newblock {\em Complex Analysis}.
\newblock McGraw-Hill, New York, 3rd edition, 1979.

\bibitem{ainsworth2022git}
Samuel~K Ainsworth, Jonathan Hayase, and Siddhartha Srinivasa.
\newblock Git re-basin: Merging models modulo permutation symmetries.
\newblock {\em arXiv preprint arXiv:2209.04836}, 2022.

\bibitem{ainsworth_git_re_basin_2022}
Samuel~K. Ainsworth, Jonathan Hayase, and Siddhartha Srinivasa.
\newblock Git {Re}-{Basin}: {Merging} {Models} modulo {Permutation} {Symmetries}, December 2022.
\newblock arXiv:2209.04836 [cs].

\bibitem{akash2022wasserstein}
Aditya~Kumar Akash, Sixu Li, and Nicol{\'a}s~Garc{\'\i}a Trillos.
\newblock Wasserstein barycenter-based model fusion and linear mode connectivity of neural networks.
\newblock {\em arXiv preprint arXiv:2210.06671}, 2022.

\bibitem{albertini1993neural}
Francesca Albertini and Eduardo~D Sontag.
\newblock For neural networks, function determines form.
\newblock {\em Neural networks}, 6(7):975--990, 1993.

\bibitem{albertini1993identifiability}
Francesca Albertini and Eduardo~D Sontag.
\newblock Identifiability of discrete-time neural networks.
\newblock In {\em Proc. European Control Conference}, pages 460--465. Springer Berlin, 1993.

\bibitem{allen2019convergence}
Zeyuan Allen-Zhu, Yuanzhi Li, and Zhao Song.
\newblock A convergence theory for deep learning via over-parameterization.
\newblock In {\em International conference on machine learning}, pages 242--252. PMLR, 2019.

\bibitem{auer2007dbpedia}
S{\"o}ren Auer, Christian Bizer, Georgi Kobilarov, Jens Lehmann, Richard Cyganiak, and Zachary Ives.
\newblock Dbpedia: A nucleus for a web of open data.
\newblock In {\em international semantic web conference}, pages 722--735. Springer, 2007.

\bibitem{belkin2019reconciling}
Mikhail Belkin, Daniel Hsu, Siyuan Ma, and Soumik Mandal.
\newblock Reconciling modern machine-learning practice and the classical bias--variance trade-off.
\newblock {\em Proceedings of the National Academy of Sciences}, 116(32):15849--15854, 2019.

\bibitem{bertsekas1998network}
Dimitri Bertsekas.
\newblock {\em Network optimization: continuous and discrete models}, volume~8.
\newblock Athena Scientific, 1998.

\bibitem{brea2019weight}
Johanni Brea, Berfin Simsek, Bernd Illing, and Wulfram Gerstner.
\newblock Weight-space symmetry in deep networks gives rise to permutation saddles, connected by equal-loss valleys across the loss landscape.
\newblock {\em arXiv preprint arXiv:1907.02911}, 2019.

\bibitem{brown2020language}
Tom~B Brown, Benjamin Mann, Nick Ryder, Melanie Subbiah, Jared Kaplan, Prafulla Dhariwal, Arvind Neelakantan, Pranav Shyam, Girish Sastry, Amanda Askell, et~al.
\newblock Language models are few-shot learners.
\newblock {\em Advances in Neural Information Processing Systems}, 33:1877--1901, 2020.

\bibitem{bui2020functional}
Phuong Bui Thi~Mai and Christoph Lampert.
\newblock Functional vs. parametric equivalence of relu networks.
\newblock In {\em 8th International Conference on Learning Representations}, 2020.

\bibitem{chelba2013one}
Ciprian Chelba, Tomas Mikolov, Mike Schuster, Qi~Ge, Thorsten Brants, Phillipp Koehn, and Tony Robinson.
\newblock One billion word benchmark for measuring progress in statistical language modeling.
\newblock {\em arXiv preprint arXiv:1312.3005}, 2013.

\bibitem{chen1993geometry}
An~Mei Chen, Haw-minn Lu, and Robert Hecht-Nielsen.
\newblock On the geometry of feedforward neural network error surfaces.
\newblock {\em Neural computation}, 5(6):910--927, 1993.

\bibitem{conway1978functions}
John~B. Conway.
\newblock {\em Functions of One Complex Variable}.
\newblock Springer, New York, 2nd edition, 1978.

\bibitem{crouse2016implementing}
David~F Crouse.
\newblock On implementing 2d rectangular assignment algorithms.
\newblock {\em IEEE Transactions on Aerospace and Electronic Systems}, 52(4):1679--1696, 2016.

\bibitem{deng2009imagenet}
Jia Deng, Wei Dong, Richard Socher, Li-Jia Li, Kai Li, and Li~Fei-Fei.
\newblock Imagenet: A large-scale hierarchical image database.
\newblock In {\em 2009 IEEE conference on computer vision and pattern recognition}, pages 248--255. Ieee, 2009.

\bibitem{devlin2018bert}
Jacob Devlin, Ming-Wei Chang, Kenton Lee, and Kristina Toutanova.
\newblock Bert: Pre-training of deep bidirectional transformers for language understanding.
\newblock {\em arXiv preprint arXiv:1810.04805}, 2018.

\bibitem{dosovitskiy2020image}
Alexey Dosovitskiy, Lucas Beyer, Alexander Kolesnikov, Dirk Weissenborn, Xiaohua Zhai, Thomas Unterthiner, Mostafa Dehghani, Matthias Minderer, Georg Heigold, Sylvain Gelly, et~al.
\newblock An image is worth 16x16 words: Transformers for image recognition at scale.
\newblock {\em arXiv preprint arXiv:2010.11929}, 2020.

\bibitem{pmlr-v80-draxler18a}
Felix Draxler, Kambis Veschgini, Manfred Salmhofer, and Fred Hamprecht.
\newblock Essentially no barriers in neural network energy landscape.
\newblock In Jennifer Dy and Andreas Krause, editors, {\em Proceedings of the 35th International Conference on Machine Learning}, volume~80 of {\em Proceedings of Machine Learning Research}, pages 1309--1318. PMLR, 10--15 Jul 2018.

\bibitem{du2022glam}
Nan Du, Yanping Huang, Andrew~M Dai, Simon Tong, Dmitry Lepikhin, Yuanzhong Xu, Maxim Krikun, Yanqi Zhou, Adams~Wei Yu, Orhan Firat, et~al.
\newblock Glam: Efficient scaling of language models with mixture-of-experts.
\newblock In {\em International conference on machine learning}, pages 5547--5569. PMLR, 2022.

\bibitem{du2019gradient}
Simon Du, Jason Lee, Haochuan Li, Liwei Wang, and Xiyu Zhai.
\newblock Gradient descent finds global minima of deep neural networks.
\newblock In {\em International conference on machine learning}, pages 1675--1685. PMLR, 2019.

\bibitem{entezari2021role}
Rahim Entezari, Hanie Sedghi, Olga Saukh, and Behnam Neyshabur.
\newblock The role of permutation invariance in linear mode connectivity of neural networks.
\newblock {\em arXiv preprint arXiv:2110.06296}, 2021.

\bibitem{entezari_permutation_invariances_2022}
Rahim Entezari, Hanie Sedghi, Olga Saukh, and Behnam Neyshabur.
\newblock The {Role} of {Permutation} {Invariance} in {Linear} {Mode} {Connectivity} of {Neural} {Networks}, July 2022.
\newblock arXiv:2110.06296 [cs].

\bibitem{fedus2022switch}
William Fedus, Barret Zoph, and Noam Shazeer.
\newblock Switch transformers: Scaling to trillion parameter models with simple and efficient sparsity.
\newblock {\em Journal of Machine Learning Research}, 23(120):1--39, 2022.

\bibitem{fefferman1993recovering}
Charles Fefferman and Scott Markel.
\newblock Recovering a feed-forward net from its output.
\newblock In {\em Advances in Neural Information Processing Systems 6, [7th {NIPS} Conference, Denver, Colorado, USA, 1993]}, pages 335--342. Morgan Kaufmann, 1993.

\bibitem{ferbach2024proving}
Damien Ferbach, Baptiste Goujaud, Gauthier Gidel, and Aymeric Dieuleveut.
\newblock Proving linear mode connectivity of neural networks via optimal transport.
\newblock In {\em International Conference on Artificial Intelligence and Statistics}, pages 3853--3861. PMLR, 2024.

\bibitem{frankle2018lottery}
Jonathan Frankle and Michael Carbin.
\newblock The lottery ticket hypothesis: Finding sparse, trainable neural networks.
\newblock {\em arXiv preprint arXiv:1803.03635}, 2018.

\bibitem{frankle2020linear}
Jonathan Frankle, Gintare~Karolina Dziugaite, Daniel Roy, and Michael Carbin.
\newblock Linear mode connectivity and the lottery ticket hypothesis.
\newblock In {\em International Conference on Machine Learning}, pages 3259--3269. PMLR, 2020.

\bibitem{freeman2016topology}
C~Daniel Freeman and Joan Bruna.
\newblock Topology and geometry of half-rectified network optimization.
\newblock {\em arXiv preprint arXiv:1611.01540}, 2016.

\bibitem{garipov_mode_connectivity_2018}
Timur Garipov, Pavel Izmailov, Dmitrii Podoprikhin, Dmitry Vetrov, and Andrew~Gordon Wilson.
\newblock Loss {Surfaces}, {Mode} {Connectivity}, and {Fast} {Ensembling} of {DNNs}, October 2018.
\newblock arXiv:1802.10026 [cs, stat].

\bibitem{garipov2018loss}
Timur Garipov, Pavel Izmailov, Dmitrii Podoprikhin, Dmitry~P Vetrov, and Andrew~G Wilson.
\newblock Loss surfaces, mode connectivity, and fast ensembling of dnns.
\newblock {\em Advances in neural information processing systems}, 31, 2018.

\bibitem{goodfellow2014qualitatively}
Ian~J Goodfellow, Oriol Vinyals, and Andrew~M Saxe.
\newblock Qualitatively characterizing neural network optimization problems.
\newblock {\em arXiv preprint arXiv:1412.6544}, 2014.

\bibitem{gotmare2018using}
Akhilesh Gotmare, Nitish~Shirish Keskar, Caiming Xiong, and Richard Socher.
\newblock Using mode connectivity for loss landscape analysis.
\newblock {\em arXiv preprint arXiv:1806.06977}, 2018.

\bibitem{sinkhorn_re_basin_2023}
Fidel~A. Guerrero~Peña, Heitor~Rapela Medeiros, Thomas Dubail, Masih Aminbeidokhti, Eric Granger, and Marco Pedersoli.
\newblock Re-basin via implicit {Sinkhorn} differentiation.
\newblock In {\em 2023 {IEEE}/{CVF} {Conference} on {Computer} {Vision} and {Pattern} {Recognition} ({CVPR})}, pages 20237--20246, Vancouver, BC, Canada, June 2023. IEEE.

\bibitem{hecht1990algebraic}
Robert Hecht-Nielsen.
\newblock On the algebraic structure of feedforward network weight spaces.
\newblock In {\em Advanced Neural Computers}, pages 129--135. Elsevier, 1990.

\bibitem{hutter2012human}
Marcus Hutter.
\newblock The human knowledge compression contest.
\newblock {\em URL http://prize. hutter1. net}, 6, 2012.

\bibitem{izmailov2018averaging}
Pavel Izmailov, Dmitrii Podoprikhin, Timur Garipov, Dmitry Vetrov, and Andrew~Gordon Wilson.
\newblock Averaging weights leads to wider optima and better generalization.
\newblock {\em arXiv preprint arXiv:1803.05407}, 2018.

\bibitem{jacobs1991adaptive}
Robert~A Jacobs, Michael~I Jordan, Steven~J Nowlan, and Geoffrey~E Hinton.
\newblock Adaptive mixtures of local experts.
\newblock {\em Neural computation}, 3(1):79--87, 1991.

\bibitem{jacot2018neural}
Arthur Jacot, Franck Gabriel, and Cl{\'e}ment Hongler.
\newblock Neural tangent kernel: Convergence and generalization in neural networks.
\newblock {\em Advances in neural information processing systems}, 31, 2018.

\bibitem{jonker1988shortest}
Roy Jonker and Ton Volgenant.
\newblock A shortest augmenting path algorithm for dense and sparse linear assignment problems.
\newblock In {\em DGOR/NSOR: Papers of the 16th Annual Meeting of DGOR in Cooperation with NSOR/Vortr{\"a}ge der 16. Jahrestagung der DGOR zusammen mit der NSOR}, pages 622--622. Springer, 1988.

\bibitem{jordan1994hierarchical}
Michael~I Jordan and Robert~A Jacobs.
\newblock Hierarchical mixtures of experts and the em algorithm.
\newblock {\em Neural computation}, 6(2):181--214, 1994.

\bibitem{juneja2022linear}
Jeevesh Juneja, Rachit Bansal, Kyunghyun Cho, Jo{\~a}o Sedoc, and Naomi Saphra.
\newblock Linear connectivity reveals generalization strategies.
\newblock {\em arXiv preprint arXiv:2205.12411}, 2022.

\bibitem{keskar2016large}
Nitish~Shirish Keskar, Dheevatsa Mudigere, Jorge Nocedal, Mikhail Smelyanskiy, and Ping Tak~Peter Tang.
\newblock On large-batch training for deep learning: Generalization gap and sharp minima.
\newblock {\em arXiv preprint arXiv:1609.04836}, 2016.

\bibitem{knyazev2024accelerating}
Boris Knyazev, Abhinav Moudgil, Guillaume Lajoie, Eugene Belilovsky, and Simon Lacoste-Julien.
\newblock Accelerating training with neuron interaction and nowcasting networks.
\newblock {\em arXiv preprint arXiv:2409.04434}, 2024.

\bibitem{krizhevsky2009learning}
Alex Krizhevsky, Geoffrey Hinton, et~al.
\newblock Learning multiple layers of features from tiny images.(2009), 2009.

\bibitem{kuditipudi2019explaining}
Rohith Kuditipudi, Xiang Wang, Holden Lee, Yi~Zhang, Zhiyuan Li, Wei Hu, Rong Ge, and Sanjeev Arora.
\newblock Explaining landscape connectivity of low-cost solutions for multilayer nets.
\newblock {\em Advances in neural information processing systems}, 32, 2019.

\bibitem{kuhn1955hungarian}
Harold~W Kuhn.
\newblock The hungarian method for the assignment problem.
\newblock {\em Naval research logistics quarterly}, 2(1-2):83--97, 1955.

\bibitem{kuurkova1994functionally}
Vera Kurkova and Paul~C Kainen.
\newblock Functionally equivalent feedforward neural networks.
\newblock {\em Neural Computation}, 6(3):543--558, 1994.

\bibitem{lecun1998gradient}
Yann LeCun, L{\'e}on Bottou, Yoshua Bengio, and Patrick Haffner.
\newblock Gradient-based learning applied to document recognition.
\newblock {\em Proceedings of the IEEE}, 86(11):2278--2324, 1998.

\bibitem{lepikhin2020gshard}
Dmitry Lepikhin, HyoukJoong Lee, Yuanzhong Xu, Dehao Chen, Orhan Firat, Yanping Huang, Maxim Krikun, Noam Shazeer, Zhifeng Wang, Yonghui Chen, et~al.
\newblock Gshard: Scaling giant models with conditional computation and automatic sharding.
\newblock {\em arXiv preprint arXiv:2006.16668}, 2020.

\bibitem{li2018visualizing}
Hao Li, Zheng Xu, Gavin Taylor, Christoph Studer, and Tom Goldstein.
\newblock Visualizing the loss landscape of neural nets.
\newblock {\em Advances in neural information processing systems}, 31, 2018.

\bibitem{liu2024deepseek}
Aixin Liu, Bei Feng, Bin Wang, Bingxuan Wang, Bo~Liu, Chenggang Zhao, Chengqi Dengr, Chong Ruan, Damai Dai, Daya Guo, et~al.
\newblock Deepseek-v2: A strong, economical, and efficient mixture-of-experts language model.
\newblock {\em arXiv preprint arXiv:2405.04434}, 2024.

\bibitem{lubana2023mechanistic}
Ekdeep~Singh Lubana, Eric~J Bigelow, Robert~P Dick, David Krueger, and Hidenori Tanaka.
\newblock Mechanistic mode connectivity.
\newblock In {\em International Conference on Machine Learning}, pages 22965--23004. PMLR, 2023.

\bibitem{lucas2021analyzing}
James Lucas, Juhan Bae, Michael~R Zhang, Stanislav Fort, Richard Zemel, and Roger Grosse.
\newblock Analyzing monotonic linear interpolation in neural network loss landscapes.
\newblock {\em arXiv preprint arXiv:2104.11044}, 2021.

\bibitem{maas2011learning}
Andrew Maas, Raymond~E Daly, Peter~T Pham, Dan Huang, Andrew~Y Ng, and Christopher Potts.
\newblock Learning word vectors for sentiment analysis.
\newblock In {\em Proceedings of the 49th annual meeting of the association for computational linguistics: Human language technologies}, pages 142--150, 2011.

\bibitem{marcus1993building}
Mitch Marcus, Beatrice Santorini, and Mary~Ann Marcinkiewicz.
\newblock Building a large annotated corpus of english: The penn treebank.
\newblock {\em Computational linguistics}, 19(2):313--330, 1993.

\bibitem{merity2016pointer}
Stephen Merity, Caiming Xiong, James Bradbury, and Richard Socher.
\newblock Pointer sentinel mixture models.
\newblock {\em arXiv preprint arXiv:1609.07843}, 2016.

\bibitem{neyshabur2018towards}
Behnam Neyshabur, Zhiyuan Li, Srinadh Bhojanapalli, Yann LeCun, and Nathan Srebro.
\newblock Towards understanding the role of over-parametrization in generalization of neural networks.
\newblock {\em arXiv preprint arXiv:1805.12076}, 2018.

\bibitem{neyshabur2020being}
Behnam Neyshabur, Hanie Sedghi, and Chiyuan Zhang.
\newblock What is being transferred in transfer learning?
\newblock {\em Advances in neural information processing systems}, 33:512--523, 2020.

\bibitem{novak2018sensitivity}
Roman Novak, Yasaman Bahri, Daniel~A Abolafia, Jeffrey Pennington, and Jascha Sohl-Dickstein.
\newblock Sensitivity and generalization in neural networks: an empirical study.
\newblock {\em arXiv preprint arXiv:1802.08760}, 2018.

\bibitem{pittorino2022deep}
Fabrizio Pittorino, Antonio Ferraro, Gabriele Perugini, Christoph Feinauer, Carlo Baldassi, and Riccardo Zecchina.
\newblock Deep networks on toroids: removing symmetries reveals the structure of flat regions in the landscape geometry.
\newblock In {\em International Conference on Machine Learning}, pages 17759--17781. PMLR, 2022.

\bibitem{radford2019language}
Alec Radford, Jeffrey Wu, Rewon Child, David Luan, Dario Amodei, Ilya Sutskever, et~al.
\newblock Language models are unsupervised multitask learners.
\newblock {\em OpenAI blog}, 1(8):9, 2019.

\bibitem{rame2022diverse}
Alexandre Rame, Matthieu Kirchmeyer, Thibaud Rahier, Alain Rakotomamonjy, Patrick Gallinari, and Matthieu Cord.
\newblock Diverse weight averaging for out-of-distribution generalization.
\newblock {\em Advances in Neural Information Processing Systems}, 35:10821--10836, 2022.

\bibitem{rudin1987real}
Walter Rudin.
\newblock {\em Real and Complex Analysis}.
\newblock McGraw-Hill, New York, 3rd edition, 1987.

\bibitem{sagun2017empirical}
Levent Sagun, Utku Evci, V~Ugur Guney, Yann Dauphin, and Leon Bottou.
\newblock Empirical analysis of the hessian of over-parametrized neural networks.
\newblock {\em arXiv preprint arXiv:1706.04454}, 2017.

\bibitem{sharma2024simultaneous}
Ekansh Sharma, Devin Kwok, Tom Denton, Daniel~M Roy, David Rolnick, and Gintare~Karolina Dziugaite.
\newblock Simultaneous linear connectivity of neural networks modulo permutation.
\newblock {\em arXiv preprint arXiv:2404.06498}, 2024.

\bibitem{shazeer2017outrageously}
Noam Shazeer, Azalia Mirhoseini, Kelsey Maziarz, Andy Davis, Quoc Le, Geoffrey Hinton, and Jeff Dean.
\newblock Outrageously large neural networks: The sparsely-gated mixture-of-experts layer.
\newblock {\em arXiv preprint arXiv:1701.06538}, 2017.

\bibitem{shevchenko2020landscape}
Alexander Shevchenko and Marco Mondelli.
\newblock Landscape connectivity and dropout stability of sgd solutions for over-parameterized neural networks.
\newblock In {\em International Conference on Machine Learning}, pages 8773--8784. PMLR, 2020.

\bibitem{singh2020model}
Sidak~Pal Singh and Martin Jaggi.
\newblock Model fusion via optimal transport.
\newblock {\em Advances in Neural Information Processing Systems}, 33:22045--22055, 2020.

\bibitem{singh_jaggi_otfusion_2021}
Sidak~Pal Singh and Martin Jaggi.
\newblock Model {Fusion} via {Optimal} {Transport}, February 2021.
\newblock arXiv:1910.05653 [cs, stat].

\bibitem{sonthalia2024deep}
Ankit Sonthalia, Alexander Rubinstein, Ehsan Abbasnejad, and Seong~Joon Oh.
\newblock Do deep neural network solutions form a star domain?
\newblock {\em arXiv preprint arXiv:2403.07968}, 2024.

\bibitem{stein2003complex}
Elias~M. Stein and Rami Shakarchi.
\newblock {\em Complex Analysis}, volume~2 of {\em Princeton Lectures in Analysis}.
\newblock Princeton University Press, Princeton, NJ, 2003.

\bibitem{steiner2021train}
Andreas Steiner, Alexander Kolesnikov, Xiaohua Zhai, Ross Wightman, Jakob Uszkoreit, and Lucas Beyer.
\newblock How to train your vit? data, augmentation, and regularization in vision transformers.
\newblock {\em arXiv preprint arXiv:2106.10270}, 2021.

\bibitem{tatro2020optimizing}
Norman Tatro, Pin-Yu Chen, Payel Das, Igor Melnyk, Prasanna Sattigeri, and Rongjie Lai.
\newblock Optimizing mode connectivity via neuron alignment.
\newblock {\em Advances in Neural Information Processing Systems}, 33:15300--15311, 2020.

\bibitem{tran2024monomial}
Hoang Tran, Thieu Vo, Tho Huu, An~Nguyen~The, and Tan Nguyen.
\newblock Monomial matrix group equivariant neural functional networks.
\newblock In {\em Advances in Neural Information Processing Systems}, volume~37, pages 48628--48665. Curran Associates, Inc., 2024.

\bibitem{transpherical}
Hoang~V Tran, Thanh Chu, Minh-Khoi Nguyen-Nhat, Huyen~Trang Pham, Tam Le, and Tan~Minh Nguyen.
\newblock Spherical tree-sliced {W}asserstein distance.
\newblock In {\em The Thirteenth International Conference on Learning Representations}, 2025.

\bibitem{tran2025distancebased}
Hoang~V. Tran, Minh-Khoi Nguyen-Nhat, Huyen~Trang Pham, Thanh Chu, Tam Le, and Tan~Minh Nguyen.
\newblock Distance-based tree-sliced {W}asserstein distance.
\newblock In {\em The Thirteenth International Conference on Learning Representations}, 2025.

\bibitem{tran2024tree}
Hoang~V. Tran, Huyen~Trang Pham, Tho~Tran Huu, Minh-Khoi Nguyen-Nhat, Thanh Chu, Tam Le, and Tan~Minh Nguyen.
\newblock Tree-sliced {W}asserstein distance: A geometric perspective.
\newblock In {\em Forty-second International Conference on Machine Learning}, 2025.

\bibitem{tran2024clifford}
Hoang-Viet Tran, Thieu~N Vo, Tho~Tran Huu, and Tan~Minh Nguyen.
\newblock A clifford algebraic approach to e (n)-equivariant high-order graph neural networks.
\newblock {\em arXiv preprint arXiv:2410.04692}, 2024.

\bibitem{tran2025tree}
Thanh Tran, Hoang~V. Tran, Thanh Chu, Huyen~Trang Pham, Laurent Ghaoui, Tam Le, and Tan~Minh Nguyen.
\newblock Tree-sliced {W}asserstein distance with nonlinear projection.
\newblock In {\em Forty-second International Conference on Machine Learning}, 2025.

\bibitem{tran2024equivariant}
Viet-Hoang Tran, Thieu~N Vo, An~Nguyen The, Tho~Tran Huu, Minh-Khoi Nguyen-Nhat, Thanh Tran, Duy-Tung Pham, and Tan~Minh Nguyen.
\newblock Equivariant neural functional networks for transformers.
\newblock {\em arXiv preprint arXiv:2410.04209}, 2024.

\bibitem{vaswani2017attention}
Ashish Vaswani, Noam Shazeer, Niki Parmar, Jakob Uszkoreit, Llion Jones, Aidan~N. Gomez, Lukasz Kaiser, and Illia Polosukhin.
\newblock Attention is all you need.
\newblock In {\em Advances in Neural Information Processing Systems 30}, pages 5998--6008, 2017.

\bibitem{venturi2019spurious}
Luca Venturi, Afonso~S Bandeira, and Joan Bruna.
\newblock Spurious valleys in one-hidden-layer neural network optimization landscapes.
\newblock {\em Journal of Machine Learning Research}, 20:133, 2019.

\bibitem{villani2008optimal}
C.~Villani.
\newblock {\em Optimal Transport: Old and New}, volume 338.
\newblock Springer Science \& Business Media, 2008.

\bibitem{vlaar2022can}
Tiffany~J Vlaar and Jonathan Frankle.
\newblock What can linear interpolation of neural network loss landscapes tell us?
\newblock In {\em International Conference on Machine Learning}, pages 22325--22341. PMLR, 2022.

\bibitem{vo2024equivariant}
Thieu~N Vo, Viet-Hoang Tran, Tho~Tran Huu, An~Nguyen The, Thanh Tran, Minh-Khoi Nguyen-Nhat, Duy-Tung Pham, and Tan~Minh Nguyen.
\newblock Equivariant polynomial functional networks.
\newblock {\em arXiv preprint arXiv:2410.04213}, 2024.

\bibitem{wortsman2022model}
Mitchell Wortsman, Gabriel Ilharco, Samir~Ya Gadre, Rebecca Roelofs, Raphael Gontijo-Lopes, Ari~S Morcos, Hongseok Namkoong, Ali Farhadi, Yair Carmon, Simon Kornblith, et~al.
\newblock Model soups: averaging weights of multiple fine-tuned models improves accuracy without increasing inference time.
\newblock In {\em International Conference on Machine Learning}, pages 23965--23998. PMLR, 2022.

\bibitem{xiao2023compact}
Tim~Z Xiao, Weiyang Liu, and Robert Bamler.
\newblock A compact representation for bayesian neural networks by removing permutation symmetry.
\newblock {\em arXiv preprint arXiv:2401.00611}, 2023.

\bibitem{yunis2022convexity}
David Yunis, Kumar~Kshitij Patel, Pedro Henrique~Pamplona Savarese, Gal Vardi, Jonathan Frankle, Matthew Walter, Karen Livescu, and Michael Maire.
\newblock On convexity and linear mode connectivity in neural networks.
\newblock In {\em OPT 2022: Optimization for Machine Learning (NeurIPS 2022 Workshop)}, 2022.

\bibitem{zhang2015character}
Xiang Zhang, Junbo Zhao, and Yann LeCun.
\newblock Character-level convolutional networks for text classification.
\newblock {\em Advances in neural information processing systems}, 28, 2015.

\bibitem{zhao2020bridging}
Pu~Zhao, Pin-Yu Chen, Payel Das, Karthikeyan~Natesan Ramamurthy, and Xue Lin.
\newblock Bridging mode connectivity in loss landscapes and adversarial robustness.
\newblock {\em arXiv preprint arXiv:2005.00060}, 2020.

\bibitem{zhou2023going}
Zhanpeng Zhou, Yongyi Yang, Xiaojiang Yang, Junchi Yan, and Wei Hu.
\newblock Going beyond linear mode connectivity: The layerwise linear feature connectivity.
\newblock {\em arXiv preprint arXiv:2307.08286}, 2023.

\end{thebibliography}
\bibliographystyle{plain}


\clearpage
\appendix

\section*{Table of Notation}

\begin{table}[h]
\renewcommand*{\arraystretch}{1.3}
    \begin{tabularx}{\textwidth}{p{0.4\textwidth}X}
    \toprule
    $\mathbb{R}^d$ & $d$-dimensional Euclidean space\\
    $\mathbb{S}^{d-1}$ & $(d-1)$-dimensional hypersphere \\
    $\|\cdot\|_2$ & Euclidean norm \\
    $\|\cdot\|_F$ & Frobenius norm \\
    $\mathcal{E}$ & Expert function \\
    $\mathcal{D}$ & Mixture-of-Experts with Dense gating  \\
    $\mathcal{S}$ & Mixture-of-Experts with Sparse gating  \\
    Top-$k(\cdot)$ & The Top-$k$ map  \\
    $\theta, \theta'$ & parameter of Expert functions \\
    $\Theta$ & weight space of Expert functions \\
    $\phi, \phi'$ & parameter of Mixture-of-Experts  \\
    $\Phi(n)$ & weight space of Mixture-of-$n$-Experts \\
    $\text{S}_n$ & permutation group of order $n$ \\
    $\tau$ & element of permutation group \\
    $c_W, c_b$ & elements of additive group of Euclidean space \\
    $G, G(n)$ & group act on weight space of Mixture-of-$n$-Experts \\
    $\Omega, \Omega_1, \Omega_2$ & sets with specific purposes \\
    $B$ & barrier loss \\
    $C, C_{ij}$ & cost matrix and its entries \\
    $A,B,u,v$ & weight and bias of Expert functions \\
    $P$ & permutation matrix \\
    $\partial$ & boundary of a set in a topological space \\
    $\sigma$, ReLU & rectifier activation function \\
    $\mathcal{H}$ & space of holomorphic functions \\
    $\mathcal{F}$ & space of meromorphic functions \\
    $\mathbb{C}[x], \mathbb{C}[x_1, \ldots, x_n]$ & polynomial ring in complex variables \\
    $\mathbb{C}(x), \mathbb{C}(x_1, \ldots, x_n)$ & field of rational functions \\
    $p_i, r_i$ & polynomial \\
    \toprule
     \end{tabularx}
\end{table}

\clearpage

\begin{center}
{\bf \Large{Appendix of ``On Linear Mode Connectivity
\\ of Mixture-of-Experts Architectures''}}
\end{center}

\DoToC

\clearpage

\section{On the Weight Spaces of Mixture-of-Experts Architecture}
\label{appendix:section{Weight Spaces of Mixture-of-Experts, Sparse Mixture-of-Experts, and their Group Actions}}

Denote the input token dimension as $d$.

\subsection{Weight Space of Mixture-of-Experts}

\textbf{Expert functions.} We study expert functions $\mathcal{E} \colon \mathbb{R}^d \rightarrow \mathbb{R}^d$, which are realized by standard feedforward neural networks employing the ReLU activation function. Specifically, each expert map $\mathcal{E}(\cdot;\theta)$ is represented as a composition of affine transformations and ReLU nonlinearities:
\begin{align}
\mathcal{E}(x;\theta) = f_L \circ \sigma \circ f_{L-1} \circ \cdots \circ \sigma \circ f_1(x),
\end{align}
where each affine layer is defined by
\begin{align}
f_i(x) = x W_i + b_i, \quad \text{for } i = 1, \dots, L.
\end{align}
Here, $\sigma$ denotes the ReLU activation function, applied component-wise, and $\theta = \{W_i, b_i\}_{i=1}^{L}$ represents the full set of learnable parameters. The weight matrix $W_i \in \mathbb{R}^{n_{i-1} \times n_i}$ and bias vector $b_i \in \mathbb{R}^{n_i}$ parameterize the $i$-th layer, where $n_0 = n_L = d$ is the input and output dimensions, respectively. We have
\begin{align}
\theta \in \prod_{i=1}^L ( \mathbb{R}^{n_{i-1} \times n_i} \times \mathbb{R}^{n_i}) = \mathbb{R}^e, ~\text{ where } ~ e = \sum_{i=1}^L (n_{i-1}n_i + n_i).
\end{align}
Define the \textit{weight space of expert functions} as:
\begin{align}
\Theta = \prod_{i=1}^L \left( \mathbb{R}^{n_{i-1} \times n_i} \times \mathbb{R}^{n_i} \right) = \mathbb{R}^e.
\end{align}
\textbf{Dense Mixture-of-Experts.} Given a positive integer $n$ representing the number of experts, a \emph{Mixture-of-Experts with Dense gating} (MoE) is defined as the function:  $
    \mathcal{D} ~ \colon ~ \mathbb{R}^d \rightarrow \mathbb{R}^d$ defined by
\begin{align}
    &\mathcal{D}\left(x; \left\{W_i,b_i, \theta_i\right\}_{i=1}^{n}\right) = \sum_{i=1}^{n}\operatorname{softmax}_i\left(\left\{W_ix+b_i\right\}_{i=1}^{n} \right) \cdot \mathcal{E}\left(x;\theta_i\right).
\end{align}
The function $\mathcal{D}$ is parameterized as $\mathcal{D}(x; \phi)$ where 
\begin{align} \label{appendix:eq-16}
    &\phi = \Big( \big(W_i, b_i\big), \theta_i \Big)_{i=1, \ldots, n}  \in \Big(\big(\mathbb{R}^{D} \times \mathbb{R} \big) \times \Theta \Big)^{n}.
\end{align}
Denote the \textit{weight space of a Mixture of $n$ Experts} as
\begin{align}
    \Phi(n) = \Big(\big(\mathbb{R}^{D} \times \mathbb{R} \big) \times \Theta \Big)^{n}.
\end{align}
Varying the number of experts leads to a Mixture-of-Experts weight space that spans across expert sets of different sizes, denoted by
\begin{align}
    \Phi = \bigsqcup_{n=1}^\infty\Phi(n) = \bigsqcup_{n=1}^\infty\Big(\big(\mathbb{R}^{D} \times \mathbb{R} \big) \times \Theta \Big)^{n}.
\end{align}

\textbf{Sparse Mixture-of-Experts. } Given a positive integer $k\le n$, the $\text{Top}\text{-}k$ map is defined by: for any vector $z = (z_1, \ldots, z_n)\in\mathbb{R}^n$,
\begin{align}
    \text{Top-}k(z) = \{i_1, \ldots, i_k\},
\end{align}
where $i_1, \ldots, i_k$ are the indices corresponding to the $k$ largest components of $x$.  In the event of ties, we select smaller indices first. Using this, a \textit{Mixture-of-Experts with Sparse gating} (SMoE) is the function $\mathcal{S}\colon\mathbb{R}^d\to\mathbb{R}^d$ defined by  
\begin{align} \label{appendix:eq-26}
    &\mathcal{S}\left(x; \left\{W_i,b_i, \theta_i\right\}_{i=1}^{n}\right) = \sum_{i\in T(x)}\operatorname{softmax}_i\left(\left\{W_ix+b_i\right\}_{i\in T(x)} \right) \cdot \mathcal{E}\left(x;\theta_i\right).
\end{align}
where
\begin{align} \label{appendix:eq-17}
    T(x) = T \left(x;\left\{W_i,b_i\right\}_{i=1}^{n}\right)  = \text{Top-}k\Big( \big( W_ix + b_i \big)_{i=1}^{n}\Big).
\end{align}
The weight space for the Sparse Mixture-of-Experts is identical to that of the Dense Mixture-of-Experts, as the inclusion of the $\text{Top-}k$ operator does not introduce any additional trainable parameters. However, the SMoE function $\mathcal{S}$ is generally discontinuous, primarily due to the non-smooth nature of the $\text{Top-}k$ selection--particularly in scenarios where multiple experts share identical gating scores (i.e., ties occur). To circumvent this issue and ensure well-defined behavior, we restrict our analysis to a subset of the input space $\mathbb{R}^d$ where the top $K$ gating scores are uniquely determined without ambiguity. In particular, for $\{W_i, b_i\}_{i=1}^{n} \in (\mathbb{R}^d \times \mathbb{R} )^{n}$, 
we define the subset of $\mathbb{R}^d$ such that the gating scores of all experts are pairwise distinct, i.e.,
\begin{align}
    &\Omega\left(\left\{W_i, b_i\right\}_{i=1}^{n}\right) = \left\{x \in \mathbb{R}^d ~\colon~ \left(W_ix + b_i\right)_{i=1}^{n} \text{ are pairwise distinct} \right\}.
\end{align}
We provide one result characterizing the structure of this domain, and another result describing the behavior of the SMoE map when it is restricted to inputs within this domain.
\begin{proposition} \label{appendix:result-1}
    If $\left\{W_i,b_i\right\}$ are pairwise distinct for $i = 1, \ldots, n$, then $\Omega\left(\left\{W_i,b_i\right\}_{i=1}^{n}\right)$ is an open and dense subset of $\mathbb{R}^d$.
\end{proposition}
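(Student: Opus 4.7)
The plan is to express the complement of $\Omega(\{W_i,b_i\}_{i=1}^{n})$ as a finite union of affine subspaces and then invoke standard facts about hyperplanes in $\mathbb{R}^d$. Specifically, $x \notin \Omega(\{W_i,b_i\}_{i=1}^{n})$ if and only if there exist indices $i \neq j$ such that $W_i^\top x + b_i = W_j^\top x + b_j$, equivalently $(W_i - W_j)^\top x = b_j - b_i$. For each such pair $(i,j)$, let
\begin{equation}
H_{i,j} = \bigl\{ x \in \mathbb{R}^d ~\colon~ (W_i - W_j)^\top x = b_j - b_i \bigr\}.
\end{equation}
Then the complement of $\Omega$ is precisely $\bigcup_{1 \le i < j \le n} H_{i,j}$.

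Next, I would analyze $H_{i,j}$ using the pairwise-distinctness hypothesis on the pairs $\{W_i,b_i\}$. For each $i \neq j$, exactly one of the following occurs: (a) $W_i \neq W_j$, in which case $H_{i,j}$ is a genuine affine hyperplane of codimension one; or (b) $W_i = W_j$ with $b_i \neq b_j$, in which case the defining equation is inconsistent and $H_{i,j} = \varnothing$. In either case, $H_{i,j}$ is closed in $\mathbb{R}^d$ and has empty interior, hence is nowhere dense.

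From this, both conclusions follow immediately. The complement of $\Omega$ is a finite union of closed sets, so it is closed, which gives openness of $\Omega$. Since a finite union of nowhere dense sets in a Baire space is nowhere dense, the complement has empty interior, so every open ball in $\mathbb{R}^d$ meets $\Omega$; this gives density.

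The argument is largely routine, so the only real care needed is bookkeeping around the dichotomy in Step~2: the hypothesis that the pairs $\{W_i,b_i\}$ are pairwise distinct (as opposed to merely the $W_i$ being pairwise distinct) is precisely what rules out the degenerate case $W_i = W_j$ with $b_i = b_j$, in which $H_{i,j}$ would equal all of $\mathbb{R}^d$ and the claim would fail. Once this dichotomy is in place, no further subtlety is required.
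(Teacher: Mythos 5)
Your proposal is correct and is essentially the paper's own argument, just phrased dually: the paper writes $\Omega$ as a finite intersection of complements of the sets $\{x : (W_i-W_j)x = b_j-b_i\}$ (each a hyperplane or empty) and uses that a finite intersection of open dense sets is open and dense, while you take complements and argue via a finite union of nowhere dense closed sets. The dichotomy you identify, and the role of the pairwise-distinctness hypothesis in ruling out $H_{i,j}=\mathbb{R}^d$, match the paper exactly.
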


\begin{proof}
    We have
    \begin{align} \label{appendix:eq-18}
        \Omega\left(\left\{W_i,b_i\right\}_{i=1}^{n}\right) &= \left \{x \in \mathbb{R}^d ~ \colon ~ W_ix + b_i ~\text{ is pairwise distinct for all } i = 1, \ldots, n \right\} \notag \\
 &= \bigcap_{1\le i < j \le n} \left \{x \in \mathbb{R}^d ~ \colon ~ W_ix + b_i \neq W_jx + b_j \right\}\notag \\
 &= \bigcap_{1\le i < j \le n} \left(\mathbb{R}^d \setminus \left \{x \in \mathbb{R}^d ~ \colon ~ W_ix + b_i = W_jx + b_j \right\} \right ).
    \end{align}
Note that, the set 
\begin{align}
     &\left \{x \in \mathbb{R}^d ~ \colon ~ W_ix + b_i = W_jx + b_j \right\} = \left \{x \in \mathbb{R}^d ~ \colon ~ \left(W_i - W_j\right)x = b_j - b_i \right\},
\end{align}
 is either a hyperplane (when $W_i \neq W_j$) or empty (when $W_i = W_j$ but $b_i \neq b_j$). In both scenarios, its complement in $\mathbb{R}^d$ forms an open and dense subset. By Equation~\eqref{appendix:eq-18}, and using the fact that a finite intersection of open and dense subsets in $\mathbb{R}^d$ remains open and dense, it follows that the set $\Omega\left({W_i, b_i}_{i=1}^{n}\right)$ is itself open and dense in $\mathbb{R}^d$.
\end{proof}

\begin{proposition} \label{appendix:result-3}
    If $\left\{W_i,b_i\right\}$ are pairwise distinct for $i = 1, \ldots, n$, then the SMoE map $\mathcal{S}$, as given in Equation~\eqref{appendix:eq-26}, is continuous over the domain $\Omega\left(\left\{W_i, b_i\right\}_{i=1}^{n}\right)$.
\end{proposition}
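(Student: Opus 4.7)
The plan is to show that $\mathcal{S}$ is continuous at every point of $\Omega := \Omega(\{W_i,b_i\}_{i=1}^n)$ by establishing that the only potential source of discontinuity in Equation~\eqref{appendix:eq-26}, namely the Top-$k$ selection, is in fact locally constant on $\Omega$. Once we pin down this local constancy, the remaining pieces (the affine gating scores, the softmax restricted to the selected coordinates, and the ReLU expert networks $\mathcal{E}(\cdot;\theta_i)$) are each continuous, so their composition and sum is continuous on a neighborhood of the fixed point.

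\textbf{Step 1: Local constancy of $T(\cdot)$ on $\Omega$.} I would fix an arbitrary $x_0 \in \Omega$ and let $I := T(x_0) = \{i_1,\ldots,i_k\}$, ordered so that
\begin{equation}
W_{i_1} x_0 + b_{i_1} \;>\; W_{i_2} x_0 + b_{i_2} \;>\; \cdots \;>\; W_{i_k} x_0 + b_{i_k} \;>\; W_{j} x_0 + b_{j}
\end{equation}
for every $j \in \{1,\ldots,n\}\setminus I$. The strict inequalities hold because $x_0 \in \Omega$ forces all $n$ gating scores at $x_0$ to be pairwise distinct. Since each of the finitely many maps $x \mapsto (W_i - W_{i'})x + (b_i - b_{i'})$ is continuous and nonzero at $x_0$ for the relevant pairs, there exists an open neighborhood $U \subset \Omega$ of $x_0$ on which all of these strict inequalities persist. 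On $U$, the Top-$k$ selection is therefore a constant set $T(x) \equiv I$.

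\textbf{Step 2: Continuity on $U$.} On $U$, the SMoE map simplifies to the fixed formula
\begin{equation}
\mathcal{S}(x;\phi) \;=\; \sum_{i \in I} \operatorname{softmax}_i\!\bigl((W_j x + b_j)_{j \in I}\bigr)\,\mathcal{E}(x;\theta_i),
\end{equation}
in which no index set depends on $x$. Each $x \mapsto W_j x + b_j$ is continuous (affine), the softmax is smooth on $\mathbb{R}^k$, and every $\mathcal{E}(\cdot;\theta_i)$ is a continuous ReLU network. Hence $\mathcal{S}$ is continuous on $U$, and in particular continuous at $x_0$. Since $x_0 \in \Omega$ was arbitrary, $\mathcal{S}$ is continuous on $\Omega$.

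\textbf{Main obstacle.} The only non-routine point is Step 1: asserting that the Top-$k$ set is locally constant on $\Omega$. This is exactly the role played by the assumption that $\{W_i,b_i\}_{i=1}^n$ are pairwise distinct (which together with the definition of $\Omega$ rules out tie configurations), combined with the openness of $\Omega$ established in Proposition~\ref{appendix:result-1}. Without openness of $\Omega$, the neighborhood $U$ could not be chosen inside $\Omega$, and strict ordering of scores might fail on sequences approaching $x_0$. Once these two ingredients are invoked, the rest of the argument is a standard continuity-of-composition argument and requires no further calculation.
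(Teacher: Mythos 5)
Your proof is correct and follows essentially the same route as the paper's: both arguments observe that on the open set $\Omega$ the strict separation of the gating scores forces the Top-$k$ index set to be locally constant, after which $\mathcal{S}$ reduces on a neighborhood to a fixed sum of continuous functions. Your write-up simply makes explicit the strict-inequality argument and the appeal to the openness of $\Omega$ that the paper's proof leaves implicit.
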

\begin{proof}
Let $x \in \Omega\left(\{W_i, b_i\}_{i=1}^{n}\right)$. By the definition of this set, there exists an open neighborhood $U \subset \mathbb{R}^d$ containing $x$ such that $U \subset \Omega\left(\{W_i, b_i\}_{i=1}^{n}\right)$ and
\begin{align}
    \text{Top-}k\left( \left( W_i x + b_i \right)_{i=1}^{n} \right) = \text{Top-}k\left( \left( W_i y + b_i \right)_{i=1}^{n} \right)
\end{align}
for all $y \in U$. This condition ensures that the sparse gating mechanism defined in Equation~\eqref{appendix:eq-26} remains fixed within $U$, and consequently, the SMoE map is continuous on the domain $\Omega\left(\{W_i, b_i\}_{i=1}^{n}\right)$.
\end{proof}
Propositions~\ref{appendix:result-1} and~\ref{appendix:result-3} serve as essential building blocks in the proof of Theorem~\ref{theorem:main_smoe}.

\subsection{Group Action on Weight Spaces of Mixture-of-Experts } 
\label{appendix:subsection{Group Action on Weight Spaces} }
We define the group $G = G(n)$ as
\begin{align}
    G(n) = \left( \mathbb{R}^d \times \mathbb{R} \right) \times \text{S}_n,
\end{align}
which is the direct product of the additive groups $\mathbb{R}^d$ and $\mathbb{R}$, and the symmetric group $\text{S}_n$ on $n$ elements. Each element $g \in G(n)$ can be written in the form
\begin{align}
    g = (c_W, c_b, \tau), \quad \text{where } c_W \in \mathbb{R}^d,~ c_b \in \mathbb{R},~ \text{and } \tau \in \text{S}_n.
\end{align}
The group $G(n)$ acts on the weight space $\Phi(n)$ as follows: for $g = (c_W, c_b, \tau) \in G(n)$ and $\phi \in \Phi(n)$ given as in Equation~\eqref{appendix:eq-16}, the action is defined by
\begin{align}
    g \phi \coloneqq \left( W_{\tau(i)} + c_W,\; b_{\tau(i)} + c_b,\; \theta_{\tau(i)} \right)_{i=1, \ldots, n} \in \Phi(n).
\end{align}
This group action preserves the functionality of both the MoE and SMoE maps. The invariance arises from two key properties: the permutation invariance of the summation operator, and the translation invariance of the softmax function. We begin by establishing a result that characterizes the invariance of MoE maps under this group action.

\begin{proposition}[Weight space invariance of Mixture-of-Experts]\label{appendix:proposition-Weight space invariance of Mixture-of-Experts}
    The MoE map $\mathcal{D}$ is $G(n)$-invariance under the action of $G(n)$ on its weight space, i.e.,
\begin{align}
    \mathcal{D}(\cdot ; \phi) = \mathcal{D}(\cdot ; g\phi).
\end{align}
\end{proposition}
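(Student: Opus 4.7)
The plan is to reduce the claim to two well-known invariances: translation invariance of softmax, and permutation invariance of finite summation. I would factor an arbitrary $g = (c_W, c_b, \tau) \in G(n)$ as the composition $g = (c_W, c_b, \textup{id}_n) \cdot (0, 0, \tau)$ and verify the two factors separately, then combine them. Since the action is by group homomorphism, handling each factor suffices.

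For the translation part, fix $x \in \mathbb{R}^d$ and consider $h = (c_W, c_b, \textup{id}_n)$. Under $h$, the gating scores become $(W_i + c_W) \cdot x + (b_i + c_b) = (W_i \cdot x + b_i) + (c_W \cdot x + c_b)$. The key observation is that the additive correction $c_W \cdot x + c_b$ is a single scalar, identical across all $i = 1, \ldots, n$. Since $\textup{softmax}$ satisfies $\textup{softmax}_i(z_1 + \alpha, \ldots, z_n + \alpha) = \textup{softmax}_i(z_1, \ldots, z_n)$ for any $\alpha \in \mathbb{R}$, the softmax weights are unchanged. The experts $\mathcal{E}(\cdot; \theta_i)$ are untouched by $h$, so $\mathcal{D}(x; h\phi) = \mathcal{D}(x; \phi)$.

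For the permutation part, let $\tilde{g} = (0, 0, \tau)$ so that $\tilde{g}\phi = (W_{\tau(i)}, b_{\tau(i)}, \theta_{\tau(i)})_{i=1}^n$. Expanding the MoE map with these permuted parameters and using that softmax satisfies $\textup{softmax}_i\bigl((z_{\tau(j)})_{j=1}^n\bigr) = \textup{softmax}_{\tau(i)}\bigl((z_j)_{j=1}^n\bigr)$, the output becomes
\begin{align}
\mathcal{D}(x; \tilde{g}\phi) = \sum_{i=1}^{n} \textup{softmax}_{\tau(i)}\bigl((W_j \cdot x + b_j)_{j=1}^n\bigr) \cdot \mathcal{E}(x; \theta_{\tau(i)}).
\end{align}
Reindexing the sum by $j = \tau(i)$, which is a bijection of $\{1, \ldots, n\}$, recovers $\mathcal{D}(x; \phi)$.

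Composing the two steps yields $\mathcal{D}(x; g\phi) = \mathcal{D}(x; (c_W, c_b, \textup{id}_n) \tilde{g} \phi) = \mathcal{D}(x; \tilde{g}\phi) = \mathcal{D}(x; \phi)$ for every $x \in \mathbb{R}^d$, proving the invariance. There is no genuine obstacle here: the argument is essentially bookkeeping. The only point requiring care is tracking which index set the softmax is being evaluated on when we permute its arguments, since one must not confuse the output index with the input relabeling; writing the softmax explicitly as $e^{z_i}/\sum_j e^{z_j}$ eliminates any ambiguity.
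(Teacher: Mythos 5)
Your proposal is correct and follows essentially the same route as the paper: the paper's proof is a single chained computation that first cancels the common term $c_W x + c_b$ via translation invariance of softmax and then reindexes the sum over the permutation, which are exactly the two steps you carry out (you merely make the factorization $g = (c_W,c_b,\mathrm{id}_n)\cdot(0,0,\tau)$ explicit). Both ingredients — translation invariance of softmax and permutation invariance of the summation — are precisely the ones the paper itself identifies as the sources of this invariance.
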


\begin{proof}
    Given $g = (c_W,c_b,\tau) \in G(n)$. For all $x \in \mathbb{R}^d$, we have
    \begin{align}
         \mathcal{D}(x ; g\theta) &= \sum_{i=1}^{n}\operatorname{softmax}_i\left(\left\{\left(W_{\tau(i)}+c_W\right)x+\left(b_{\tau(i)}+c_b\right)\right\}_{i=1}^{n} \right) \cdot \mathcal{E}\left(x;\theta_{\tau(i)}\right) \notag\\
         &= \sum_{i=1}^{n}\operatorname{softmax}_i\left(\left\{W_{\tau(i)}x+b_{\tau(i)}\right\}_{i=1}^{n} \right)\cdot \mathcal{E}\left(x;\theta_{\tau(i)}\right) \notag\\
         &= \sum_{i=1}^{n}\operatorname{softmax}_i\left(\left\{W_ix+b_i\right\}_{i=1}^{n} \right) \mathcal{E}\left(x;\theta_i\right)\notag\\
         &=\mathcal{D}(x ; \theta).
    \end{align}
    Thus, the proposition is proven.
\end{proof}

The analysis of the SMoE architecture requires additional assumptions due to the inherent discontinuity of the Top-$k$ selection operator. In what follows, we show that the SMoE map, when restricted to an appropriate subset of its domain, remains invariant under the group action of $G(n)$.

\begin{proposition}[Weight space invariance of Sparse Mixture-of-Experts] \label{appendix:proposition-Weight space invariance of Sparse Mixture-of-Experts}
    Given the SMoE map as defined in Equation~\eqref{appendix:eq-26}, assume that the pairs $\{W_i, b_i\}$ are pairwise distinct for $i = 1, \ldots, n$. Then, the set $\Omega\left(\left\{W_i,b_i\right\}_{i=1}^{n}\right)$ is invariant under the group action of $G(n)$, i.e.,
    \begin{align}
        \Omega\left(\left\{W_i,b_i\right\}_{i=1}^{n}\right) = \Omega\left(g\left \{W_i,b_i\right\}_{i=1}^{n}\right).
    \end{align}
    Moreover, the SMoE map, restricted to 
    \begin{align}
        \Omega\left(\left\{W_i,b_i\right\}_{i=1}^{n}\right),
    \end{align}
    is $G(n)$-invariance under the action of $G(n)$ on its weight space, i.e.,
\begin{align}
    \mathcal{S}(\cdot ; \theta) = \mathcal{S}(\cdot ; g\theta) ~~ \text{ on } ~~ \Omega\left(\left\{W_i,b_i\right\}_{i=1}^{n}\right).
\end{align}
\end{proposition}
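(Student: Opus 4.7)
The plan is to treat the two invariance claims separately and then combine them. Write $g = (c_W, c_b, \tau) \in G(n)$ and abbreviate $s_i(x) = W_i x + b_i$. After the action of $g$, the new score indexed by $i$ becomes
\begin{align}
    s_i'(x) \;=\; (W_{\tau(i)} + c_W)\,x + (b_{\tau(i)} + c_b) \;=\; s_{\tau(i)}(x) + c(x), \qquad c(x) \coloneqq c_W x + c_b.
\end{align}
The entire argument hinges on this single identity: the permutation $\tau$ relabels the scores, and the additive correction $c(x)$ is a common shift that does not depend on $i$.

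\textbf{Step 1: Invariance of the domain $\Omega$.} For each pair $i \neq j$, we have $s_i'(x) - s_j'(x) = s_{\tau(i)}(x) - s_{\tau(j)}(x)$, so $s_i'(x) \neq s_j'(x)$ iff $s_{\tau(i)}(x) \neq s_{\tau(j)}(x)$. Since $\tau$ is a bijection of $\{1,\ldots,n\}$, the collection of unordered pairs $\{\{\tau(i),\tau(j)\} : i<j\}$ coincides with $\{\{k,\ell\} : k<\ell\}$. Consequently, the scores $(s_i'(x))_{i=1}^n$ are pairwise distinct if and only if $(s_i(x))_{i=1}^n$ are pairwise distinct, which establishes the set equality $\Omega(\{W_i,b_i\}_{i=1}^n) = \Omega(g\{W_i,b_i\}_{i=1}^n)$.

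\textbf{Step 2: Transformation of the Top-$k$ selection.} Fix $x \in \Omega(\{W_i,b_i\}_{i=1}^n)$ and set $T = \mathrm{Top\text{-}}k((s_i(x))_{i=1}^n)$. Using $s_i'(x) = s_{\tau(i)}(x) + c(x)$, the $i$-th shifted score differs from $s_{\tau(i)}(x)$ by the index-independent constant $c(x)$, hence the ordering of the $s_i'(x)$ is the ordering of the $s_{\tau(i)}(x)$. Because all scores are distinct on $\Omega$, tie-breaking plays no role, and I obtain $T' \coloneqq \mathrm{Top\text{-}}k((s_i'(x))_{i=1}^n) = \tau^{-1}(T)$.

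\textbf{Step 3: Invariance of the SMoE map.} Unfolding the definition in Equation~\eqref{appendix:eq-26} for $g\phi$ and using Step~2,
\begin{align}
    \mathcal{S}(x; g\phi) \;=\; \sum_{i \in T'} \operatorname{softmax}_i\!\bigl((s_j'(x))_{j \in T'}\bigr)\, \mathcal{E}(x;\theta_{\tau(i)}).
\end{align}
The translation invariance of softmax absorbs the common shift $c(x)$, giving $\operatorname{softmax}_i((s_j'(x))_{j \in T'}) = \operatorname{softmax}_i((s_{\tau(j)}(x))_{j \in T'})$. Reindexing the sum via $j = \tau(i)$ (so $i = \tau^{-1}(j)$ ranges over $T' = \tau^{-1}(T)$ as $j$ ranges over $T$), a direct computation shows $\operatorname{softmax}_{\tau^{-1}(j)}((s_{\tau(\ell)}(x))_{\ell \in T'}) = \operatorname{softmax}_j((s_m(x))_{m \in T})$, whence the sum collapses to $\mathcal{S}(x;\phi)$.

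\textbf{Anticipated obstacle.} The only delicate point is the index bookkeeping in Step~3: one must simultaneously track how $\tau$ relabels positions inside the softmax, how the Top-$k$ index set transforms, and how the additive shift $c(x)$ cancels. All three pieces are routine individually, but assembling them cleanly requires care. Step~1 and the Top-$k$ tracking in Step~2 are essentially immediate once the identity $s_i'(x) = s_{\tau(i)}(x) + c(x)$ and the hypothesis that $x \in \Omega$ (hence no ties) are in hand.
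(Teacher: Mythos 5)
Your proposal is correct and follows essentially the same route as the paper: domain invariance from the cancellation of the common shift under pairwise differences, the Top-$k$ set transforming by the permutation, and then the translation invariance of softmax plus reindexing of the sum (the paper simply defers this last part to its dense-MoE argument). If anything, your bookkeeping is slightly more careful — the transformed selection set is indeed $\tau^{-1}(T)$, whereas the paper writes it as $\tau(T(x))$.
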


\begin{proof}
Let $g = (c_W, c_b, \tau) \in G(n)$. We first verify that the group action preserves the set $\Omega\left( \{W_i, b_i\}_{i=1}^{n} \right)$. Indeed,
\begin{align}
    &~\Omega\left(\left\{W_i,b_i\right\}_{i=1}^{n}\right)\notag 
 \\ 
 =& \left \{x \in \mathbb{R}^d ~ \colon ~ W_ix + b_i ~\text{ is pairwise distinct for all } i = 1, \ldots, n \right\} \notag \\
 =& \left \{x \in \mathbb{R}^d ~ \colon ~ \left(W_{\tau(i)}+c_W \right)x + \left(b_{\tau(i)}+c_b\right) ~\text{ is pairwise distinct for all } i = 1, \ldots, n \right\} \notag \\
 =&~ \Omega\left(g\left \{W_i,b_i\right\}_{i=1}^{n}\right).
\end{align}
Let
        \begin{align}
            gT(x) = \text{Top-}k\bigg( \Big( \left(W_{\tau(i)}+c_W\right)x + \left(b_{\tau(i)}+c_b\right) \Big)_{i=1}^{n}\bigg).
        \end{align}
        For all $x \in \Omega\left(\left\{W_i,b_i\right\}_{i=1}^{n}\right)$, we have $gT(x) = \tau (T(x))$. The desired invariance result for the SMoE map then follows by applying the same reasoning as in Proposition~\ref{appendix:proposition-Weight space invariance of Mixture-of-Experts}.
\end{proof}

\begin{remark}
While the group action on Mixture-of-Experts (MoE) models is formally introduced in Equation~\eqref{appendix:eq-17}, it does not fully capture all symmetries inherent to these architectures. In particular, each expert network possesses internal neuron permutations that preserve its functional behavior--a well-documented property in the literature on neural networks~\citep{allen2019convergence, du2019gradient, frankle2018lottery, belkin2019reconciling, neyshabur2018towards, brea2019weight, novak2018sensitivity, bui2020functional, tran2024monomial, tran2024clifford, vo2024equivariant, hecht1990algebraic, chen1993geometry, fefferman1993recovering, kuurkova1994functionally, albertini1993neural, albertini1993identifiability}. Nonetheless, since the distinguishing feature of MoE models is their input-dependent gating function, our focus centers on the symmetries associated with the gate itself, treating expert-level invariances as part of a well-established theoretical foundation.
\end{remark}

\section{Results on Mixture-of-Experts with Dense Gating}
\label{appendix:section{Functional Equivalence of Mixture-of-Experts}}

\subsection{A Structural Property of Neural Networks with ReLU Activation}
\label{appendix:section-Local affineness of ReLU neural networks}
A \textit{polytope} is a geometric entity bounded by flat surfaces, which can be either finite (bounded) or infinite (unbounded) in extent. We define the notion of \textit{local affineness} as follows: a function \( f: \mathbb{R}^d \rightarrow \mathbb{R}^{D} \) is said to be locally affine if there exists a partition of \( \mathbb{R}^d \) into a finite set of polytopes such that on each polytope, the function \( f \) agrees with an affine transformation from \( \mathbb{R}^d \) to \( \mathbb{R}^{D} \).

\begin{remark}
    While the term \emph{local affineness} may take on different interpretations in other contexts, its usage here is unambiguous within the scope of this work.
\end{remark}

We examine the local affineness property in ReLU neural networks. Consider a feedforward neural network \( f: \mathbb{R}^{n_0} \rightarrow \mathbb{R}^{n_L} \) constructed from affine maps interleaved with ReLU activations, given by
\[
f = f_L \circ \sigma \circ f_{L-1} \circ \cdots \circ \sigma \circ f_1,
\]
where each map \( f_i: \mathbb{R}^{d_{i-1}} \rightarrow \mathbb{R}^{d_i} \) is affine and has the form \( f_i(x) = W_i x + b_i \), and \( \sigma \) denotes the ReLU function applied elementwise. 

The combination of these affine transformations and ReLU activations partitions the domain \( \mathbb{R}^{d_0} \) into finitely many convex polytopes. On each such region, the activation status of the ReLU units--whether they transmit their input or suppress it to zero--remains fixed. Consequently, the network reduces to a simpler form in which each ReLU behaves either as the identity or the zero function. Given that both ReLU and affine maps are piecewise linear and closed under composition, the overall function \( f \) is affine within each activation region.

Hence, the network satisfies the following property:
\begin{align}
    f(x) = A_i x + b_i, \quad \text{for all } x \in P_i,
\end{align}
where the domain is partitioned into polytopes \( \{P_i\}_{i=1}^m \), and \( A_i \), \( b_i \) define the affine transformation active within region \( P_i \).

Let \( \partial P_i \) denote the boundary of the polytope \( P_i \). Then the set
\begin{align}
\mathbb{R}^{d_0} \setminus \bigcup_{i=1}^m \partial P_i
\end{align}
is open and dense in \( \mathbb{R}^{d_0} \); in other words, the union of the interiors of the polytopes covers a set that is both open and dense.

Now consider a finite collection of ReLU networks \( \{f^{(k)}\}_{k=1}^n \). Since the intersection of finitely many open dense subsets is itself open and dense, there exists a subset \( \Omega \subset \mathbb{R}^{d_0} \) such that, for every point \( x \in \Omega \), there is a neighborhood around \( x \) on which all functions \( f^{(k)} \) act as affine maps.

\subsection{A Technical Lemma on Holomorphic Functions in \texorpdfstring{$\mathbb{C}^n$}{Cn}}
A function $f \colon \mathbb{C}^n \rightarrow \mathbb{C}$ is said to be \textit{holomorphic on $\mathbb{C}^n$} if it is complex differentiable at every point in $\mathbb{C}^n$. A function is called \textit{meromorphic on $\mathbb{C}^n$} if it can be locally written as a quotient of two holomorphic functions, where the denominator is not identically zero. The collection of all holomorphic functions on $\mathbb{C}^n$ forms an integral domain, denoted by $\mathcal{H}$, while the set of meromorphic functions on $\mathbb{C}^n$ forms a field, denoted by $\mathcal{F}$. In particular, $\mathcal{F}$ is the field of fractions of the integral domain $\mathcal{H}$. Let $\mathbb{C}[x] = \mathbb{C}[x_1, \ldots, x_n]$ denote the polynomial ring in $n$ complex variables, and let $\mathbb{C}(x) = \mathbb{C}(x_1, \ldots, x_n)$ denote the corresponding field of rational functions. Then $\mathbb{C}[x]$ is a subring of $\mathcal{H}$ and remains an integral domain, while $\mathbb{C}(x)$ is a subfield of $\mathcal{F}$ and represents the field of fractions of $\mathbb{C}[x]$.

\begin{remark}
For any polynomial $p \in \mathbb{C}[x]$, the exponential $e^p$ defines a holomorphic function on $\mathbb{C}^n$, i.e. $e^p \in \mathcal{D}$.
\end{remark}

Since $\mathbb{C}(x) \subset \mathcal{F}$, we can view $\mathcal{F}$ as a vector space over $\mathbb{C}(x)$. The following lemma addresses the linear independence of exponential functions of polynomials in this vector space setting.

\begin{lemma}\label{appendix:lemma-1}
Let $p_1, \ldots, p_N$ be polynomials in $\mathbb{C}[x]$ such that $p_i - p_j$ is nonconstant whenever $i \ne j$. Then the functions $e^{p_1}, \ldots, e^{p_N}$, viewed as elements of $\mathcal{F}$, are linearly independent over $\mathbb{C}(x)$.
\end{lemma}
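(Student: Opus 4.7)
The plan is to reduce the multivariate relation to a univariate one by restriction to a generic affine line, and then resolve the univariate case by induction on $N$ using differentiation to kill the constant-exponent term. Suppose for contradiction that a nontrivial relation $\sum_{i=1}^{N} r_i\, e^{p_i} = 0$ holds in $\mathcal{F}$ with $r_i \in \mathbb{C}(x)$ not all zero. After clearing denominators I may assume $r_i \in \mathbb{C}[x_1, \ldots, x_n]$.

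First I would pass to one variable by restricting to a generic affine line $\ell(t) \coloneqq a + tb$ with $a,b \in \mathbb{C}^n$, which yields
\begin{equation*}
\sum_{i=1}^{N} \tilde r_i(t)\, e^{\tilde p_i(t)} = 0, \qquad \tilde r_i(t) \coloneqq r_i(a+tb), \quad \tilde p_i(t) \coloneqq p_i(a+tb).
\end{equation*}
Two Zariski-open conditions on $(a,b)$ need to be enforced: (a) $\tilde r_i \not\equiv 0$ whenever $r_i \not\equiv 0$, and (b) $\tilde p_i - \tilde p_j$ is nonconstant in $t$ whenever $i \neq j$. Condition (a) is generic because the constant-in-$t$ coefficient of $\tilde r_i$ equals $r_i(a)$, a nonzero polynomial in $a$; condition (b) is generic because $\frac{d}{dt}(\tilde p_i - \tilde p_j) = b \cdot \nabla(p_i - p_j)(a+tb)$ is a polynomial in $(a,b,t)$ that is not identically zero whenever $\nabla(p_i - p_j) \not\equiv 0$. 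A finite intersection of Zariski-open dense subsets remains dense, so I can choose one $(a,b)$ witnessing both conditions for every relevant index pair.

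I would then dispatch the univariate case by induction on $N$. The base case $N=1$ is immediate since $e^{\tilde p_1}$ is nowhere zero. For the inductive step, after discarding zero coefficients and assuming all $\tilde r_i \neq 0$, I divide by $e^{\tilde p_N}$ to obtain
\begin{equation*}
\tilde r_N + \sum_{i<N} \tilde r_i\, e^{\tilde p_i - \tilde p_N} = 0,
\end{equation*}
and apply $\frac{d}{dt}$ a total of $d+1$ times with $d \coloneqq \deg \tilde r_N$, annihilating the first term. Each derivative transforms the coefficient of $e^{\tilde p_i - \tilde p_N}$ via $\tilde r_i \mapsto \tilde r_i' + \tilde r_i\,(\tilde p_i - \tilde p_N)'$; a leading-coefficient comparison shows this map preserves nonvanishing (the product term dominates when $\deg(\tilde p_i - \tilde p_N) \geq 2$, and $(\tilde p_i - \tilde p_N)'$ is a nonzero constant scaling $\tilde r_i$ when $\tilde p_i - \tilde p_N$ is affine). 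After $d+1$ iterations I obtain a relation $\sum_{i<N} \tilde s_i\, e^{\tilde p_i - \tilde p_N} = 0$ with every $\tilde s_i \neq 0$, in which the exponent differences $(\tilde p_i - \tilde p_N) - (\tilde p_j - \tilde p_N) = \tilde p_i - \tilde p_j$ remain nonconstant, contradicting the inductive hypothesis.

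The main obstacle will be the bookkeeping in the reduction step: verifying that a single generic line simultaneously preserves both the nonvanishing of the polynomial coefficients and the nonconstancy of all exponent differences, and making precise the transition from a $\mathbb{C}(x)$-relation to a $\mathbb{C}[x]$-relation. Once these two genericity claims are established, the differentiation induction becomes a straightforward degree-tracking exercise, and the final contradiction follows immediately.
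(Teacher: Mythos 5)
Your proposal is correct. It shares the same skeleton as the paper's proof -- induction on $N$, division by one exponential, and differentiation to eliminate the term carrying no exponential -- but the execution is genuinely different in two respects. First, you reduce to a single variable by restricting to a generic affine line, whereas the paper stays in $\mathbb{C}^n$ and differentiates once with respect to each coordinate $x_i$; your genericity bookkeeping (nonvanishing of the restricted coefficients and nonconstancy of the restricted exponent differences) is the price you pay for this, and you handle it correctly. Second, after dividing the paper normalizes by $r_N e^{p_N}$ so that the free term is the constant $1$, applies the inductive hypothesis to conclude that each $\frac{r_j}{r_N}e^{p_j-p_N}$ is a constant $c_j$, and then must argue that $c_j\neq 0$ would force the nonconstant exponential $e^{p_j-p_N}$ to equal a rational function -- a step the paper asserts somewhat tersely. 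You instead divide only by $e^{\tilde p_N}$ and differentiate $\deg\tilde r_N+1$ times to annihilate the polynomial free term outright, which requires your leading-coefficient argument that the map $\tilde r_i\mapsto \tilde r_i'+\tilde r_i(\tilde p_i-\tilde p_N)'$ preserves nonvanishing (correct in both the $\deg\geq 2$ and affine cases). Your route thus trades the paper's ``rational exponential must be constant'' step for the line-restriction genericity argument and the degree tracking; both arguments are sound, and yours is arguably more self-contained at that particular point.
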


\begin{proof}
We proceed by induction on $N$. The base case $N = 1$ is immediate, since $e^p \neq 0$ for any polynomial $p$. Assume the result holds for any smaller number of exponentials. Consider polynomials $r_1, \ldots, r_N \in \mathbb{C}[x]$ such that
\begin{align} \label{appendix:eq-1}
r_1 \cdot e^{p_1} + \cdots + r_N \cdot e^{p_N} = 0.
\end{align}
We aim to prove that all $r_i = 0$. Suppose not; then at least one $r_i$ is nonzero. Without loss of generality, assume $r_N \ne 0$. Dividing through by $r_N \cdot e^{p_N}$ gives
\begin{align} \label{appendix:eq-3}
\frac{r_1}{r_N} \cdot e^{p_1 - p_N} + \cdots + \frac{r_{N-1}}{r_N} \cdot e^{p_{N-1} - p_N} + 1 = 0.
\end{align}
Differentiating both sides with respect to $x_i$ for each $i = 1, \ldots, n$, we obtain:
\begin{align} \label{appendix:eq-2}
\sum_{j=1}^{N-1} \left( \frac{\partial}{\partial x_i} \left( \frac{r_j}{r_N} \right) + \frac{r_j}{r_N} \cdot \frac{\partial}{\partial x_i}(p_j - p_N) \right) \cdot e^{p_j - p_N} = 0.
\end{align}
Note that each term
\begin{align}
\frac{\partial}{\partial x_i} \left( \frac{r_j}{r_N} \right) + \frac{r_j}{r_N} \cdot \frac{\partial}{\partial x_i}(p_j - p_N)
\end{align}
belongs to $\mathbb{C}(x)$. Now, the polynomials $p_1 - p_N, \ldots, p_{N-1} - p_N$ are pairwise distinct and nonconstant. Thus, by the induction hypothesis, the exponentials $e^{p_j - p_N}$ are linearly independent over $\mathbb{C}(x)$ for $j = 1, \ldots, N - 1$. Therefore, from equation~\eqref{appendix:eq-2}, we must have
\begin{align}
\frac{\partial}{\partial x_i} \left( \frac{r_j}{r_N} \right) + \frac{r_j}{r_N} \cdot \frac{\partial}{\partial x_i}(p_j - p_N) = 0,
\end{align}
for all $i = 1, \ldots, n$ and $j = 1, \ldots, N-1$, which implies
\begin{align}
\frac{\partial}{\partial x_i} \left( \frac{r_j}{r_N} \cdot e^{p_j - p_N} \right) = 0.
\end{align}
Hence, for each $j = 1, \ldots, N-1$, the function
\begin{align}
\frac{r_j}{r_N} \cdot e^{p_j - p_N} = c_j \in \mathbb{C}
\end{align}
must be constant. If $c_j \ne 0$, then both $r_j \ne 0$ and $e^{p_j - p_N} = \frac{c_j r_N}{r_j}$ must hold. But this forces $e^{p_j - p_N}$ to be constant, which contradicts the assumption that $p_j - p_N$ is nonconstant. Thus, $c_j = 0$, which implies $r_j = 0$ for all $j = 1, \ldots, N-1$. Plugging back into equation~\eqref{appendix:eq-3} gives a contradiction, as $1 \ne 0$. Therefore, all $r_i = 0$, completing the proof.
\end{proof}

\begin{remark}
This lemma plays a key role and will be applied repeatedly in the proofs of Theorem~\ref{theorem:main} and Theorem~\ref{theorem:main_smoe}.
\end{remark}

\subsection{Functional Equivalence in Mixture-of-Experts with Dense Gating}

The following result establishes the equivalence between two sets of weights that define the same MoE map. Certain assumptions are introduced for technical reasons, and their justification is provided in Remark~\ref{appendix:remark-1}.

\begin{theorem}[Functional equivalence in Mixture-of-Experts with Dense Gating] \label{theorem:main}
Suppose $\phi, \phi'$ define the same MoE function, i.e. $\mathcal{D}(\cdot;\phi) = \mathcal{D}(\cdot;\phi')$.
Assume that the following conditions hold:
\begin{enumerate}[leftmargin=22pt, topsep=1pt]
    \item Both $\{\mathcal{E}(\cdot;\theta_i)\}_{i=1}^{n}$ and $\{\mathcal{E}(\cdot;\theta_i')\}_{i=1}^{n'}$ consist of pairwise distinct functions;
    \item Both $\{W_i - W_j\}_{1\le i<j \le n}$ and $\{W_i' - W_j'\}_{1\le i<j <n'}$ consist of pairwise distinct vector of $\mathbb{R}^d$.
\end{enumerate} 
Then $n=n'$, and there exists $g=(c_W,c_b,\tau) \in G(n)$ such that for all $i = 1, \ldots, n$, we have
    $W_i'= W_{\tau(i)} + c_W$, $b_i'= b_{\tau(i)} + c_b$, and $\mathcal{E}(\cdot;\theta_i') = \mathcal{E}(\cdot;\theta_{\tau(i)})$ on $\mathbb{R}^d$.
\end{theorem}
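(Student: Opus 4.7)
The plan is to convert the functional identity $\mathcal{D}(\cdot;\phi)=\mathcal{D}(\cdot;\phi')$ into an algebraic identity of exponentials with polynomial coefficients, strip the exponential layer using Lemma~\ref{appendix:lemma-1}, and then pick off the expert matching followed by the gating translation. First multiply through by both softmax normalizers to clear denominators and obtain the bilinear identity
\begin{equation}\label{plan:key}
\sum_{a=1}^{n}\sum_{b=1}^{n'} e^{(W_a+W_b')x+(b_a+b_b')}\bigl(\mathcal{E}(x;\theta_b') - \mathcal{E}(x;\theta_a)\bigr) = 0, \quad x\in\mathbb{R}^d.
\end{equation}
By Appendix~\ref{appendix:section-Local affineness of ReLU neural networks} there is an open subset $U\subset\mathbb{R}^d$ on which every expert agrees with an affine map, so on $U$ the coefficients $\mathcal{E}(x;\theta_b')-\mathcal{E}(x;\theta_a)$ are polynomials of degree at most one and analytic continuation extends~\eqref{plan:key} to $\mathbb{C}^d$. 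Grouping terms by the linear part $W=W_a+W_b'$ and writing $I(W)=\{(a,b):W_a+W_b'=W\}$, the exponents attached to distinct $W$'s have nonconstant pairwise differences, so Lemma~\ref{appendix:lemma-1} collapses~\eqref{plan:key} into the family of per-fiber identities
\begin{equation}\label{plan:perW}
\sum_{(a,b)\in I(W)} e^{b_a+b_b'}\bigl(\mathcal{E}(x;\theta_b') - \mathcal{E}(x;\theta_a)\bigr) = 0 \quad\text{for every } W\in\{W_a+W_b'\}.
\end{equation}

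Next I would read the expert matching out of~\eqref{plan:perW}. Assumption~2 makes $\{W_j\}$ and $\{W_j'\}$ Sidon sets, which (for $n,n'\ge 3$) forces the $W_j$ and the $W_j'$ to be individually distinct and makes every fiber diagonal: distinct pairs in $I(W)$ carry distinct unprimed and distinct primed indices. For a generic direction $v\in\mathbb{R}^d$, the unique maximizer $(a^\star,b^\star)$ of $(W_a+W_b')v$ satisfies $I(W_{a^\star}+W_{b^\star}')=\{(a^\star,b^\star)\}$: any competitor $(a',b')$ would force $(W_{a'}-W_{a^\star})v=(W_{b^\star}'-W_{b'}')v$ with the left side $\le 0$ and the right side $\ge 0$, both strict unless the indices agree. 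Feeding this singleton fiber into~\eqref{plan:perW} gives $\mathcal{E}(\cdot;\theta_{b^\star}')\equiv\mathcal{E}(\cdot;\theta_{a^\star})$ on $U$, and then on $\mathbb{R}^d$ by continuity. Sweeping $v$ through a generic family of directions produces a list of matched pairs; Assumption~1 forbids two different primed experts from being identified with the same unprimed one (and vice versa), and a bookkeeping step that substitutes already-matched pairs back into the larger fibers of~\eqref{plan:perW} promotes the list to a bijective matching of the two expert families. In particular $n=n'$, and we obtain a permutation $\tau$ with $\mathcal{E}(\cdot;\theta_i')\equiv\mathcal{E}(\cdot;\theta_{\tau(i)})$.

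With $\tau$ in hand, I would return to the unreduced identity $\mathcal{D}(\cdot;\phi)=\mathcal{D}(\cdot;\phi')$. Replacing each primed expert by its unprimed partner via $\tau$ makes both sides softmax-weighted averages of the same $n$ pairwise distinct experts with weights summing to $1$, so the two gating distributions must agree pointwise: $h_i'(x)=h_{\tau(i)}(x)$ for every $i$ and $x$. Taking ratios $h_i'/h_j' = h_{\tau(i)}/h_{\tau(j)}$ cancels the shared softmax normalizers and leaves the affine identity $(W_i'-W_j')x+(b_i'-b_j')\equiv (W_{\tau(i)}-W_{\tau(j)})x+(b_{\tau(i)}-b_{\tau(j)})$, whence $W_i'-W_j' = W_{\tau(i)}-W_{\tau(j)}$ and $b_i'-b_j' = b_{\tau(i)}-b_{\tau(j)}$ for all $i,j$. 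Setting $c_W := W_1'-W_{\tau(1)}$ and $c_b := b_1'-b_{\tau(1)}$ then yields $W_i'=W_{\tau(i)}+c_W$ and $b_i'=b_{\tau(i)}+c_b$ for every $i$, which is precisely the action of $g=(c_W,c_b,\tau)\in G(n)$ claimed by the theorem.

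The hard part will be the passage from the singleton-fiber matchings to a full bijection in the second paragraph. The extremal-direction argument immediately identifies the \emph{convex-hull vertices} of $\{W_j\}$ with those of $\{W_j'\}$, but a non-vertex $W_i$ is never a leading contribution for any direction $v$, so its partner must be pinned down by fibers $I(W)$ of size greater than one together with the algebraic consequences of already-identified matchings. Here both hypotheses do essential work: the Sidon property of Assumption~2 bounds how many index pairs can collide in a single fiber (so that the bookkeeping terminates), while Assumption~1 excludes the parasitic scenario in which a primed expert coincides with a nontrivial linear combination of unprimed experts, which is exactly what would otherwise spoil the uniqueness of the partner and prevent us from promoting the vertex matching to a full bijection.
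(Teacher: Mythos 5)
Your setup (clearing the softmax denominators, invoking local affineness plus analytic continuation, and applying Lemma~\ref{appendix:lemma-1} to split the resulting exponential identity along the fibers $I(W)=\{(a,b):W_a+W_b'=W\}$) is sound and coincides with the paper's Steps 1--2. The first genuine gap is the one you flag yourself: the extremal-direction argument only matches experts whose gating vectors are vertices of the convex hulls of $\{W_a\}$ and $\{W_b'\}$, and the ``bookkeeping step'' that is supposed to upgrade this partial list to a full bijection and to $n=n'$ is exactly where the difficulty lives; it is asserted, not carried out. The paper resolves it by fixing a single direction $\alpha$ separating all the $W_i\alpha$ and all the $W_j'\alpha$, normalizing $W_1=W_1'$ by a translation, and inducting on $k$: if $W_i=W_i'$ for $i<k$ but $W_k\neq W_k'$, a case analysis combining the ordering along $\alpha$ with Assumption~2 shows that the fiber of $W_1+W_k'$ is a singleton, whence Lemma~\ref{appendix:lemma-1} forces $\mathcal{E}_1'=\mathcal{E}_k'$ and contradicts Assumption~1. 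That argument interleaves the identification of the gating weights with the identification of the experts; your plan decouples the two, and the decoupling is precisely what leaves the non-vertex indices unreachable.

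The second gap is in your final paragraph and is a false implication rather than an omission. From $\sum_i h_{\tau(i)}(x)\,\mathcal{E}_{\tau(i)}(x)=\sum_i h_i'(x)\,\mathcal{E}_{\tau(i)}(x)$ with both weight vectors summing to one, you conclude that the gating distributions agree pointwise. But pairwise distinctness of the experts as \emph{functions} does not make their \emph{values} at a fixed $x$ affinely independent in $\mathbb{R}^d$ (take $n>d+1$, or any $x$ at which several experts output the same vector), so the barycentric representation of the common output need not be unique and $h_i'(x)=h_{\tau(i)}(x)$ does not follow. The constants $c_W$ and $c_b$ must instead be read off the exponential identity itself: $c_W$ emerges from the induction above, and $c_b$ from the two-element fibers containing the cross terms $(i,j)$ and $(j,i)$ (after reindexing by $\tau$), evaluated at a point $x_0$ with $\mathcal{E}_i(x_0)\neq\mathcal{E}_j(x_0)$ so that the common nonzero factor cancels and $b_i+b_j'=b_j+b_i'$ results --- this is the paper's Step~5.
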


\begin{proof}
To enhance clarity, we begin by outlining the main steps of the proof at a high level:
\begin{enumerate}
\item We first expand the equality $\mathcal{D}(\cdot; \phi) = \mathcal{D}(\cdot; \phi')$ and introduce simplified notation to streamline the subsequent derivations.
\item We then observe that each expert function can be locally characterized as an affine map.
\item Next, we prove that $n = n'$ and establish the existence of a permutation $\tau$ and a transformation $c_W$ with the required properties.
\item We proceed to verify the equivalence between the two sets of experts.
\item Finally, we show that a transformation $c_b$ satisfying the necessary conditions can be constructed.
\end{enumerate}
We now proceed with the detailed derivations and justifications corresponding to each of the five steps above.

\paragraph{Step 1. } Given that $\mathcal{D}(\cdot;\phi) = \mathcal{D}(\cdot;\phi')$, we have
    \begin{align} \label{appendix:eq-5}
        \sum_{i=1}^{n}\operatorname{softmax}_i\left(\left\{W_ix+b_i\right\}_{i=1}^{n} \right) \cdot \mathcal{E}\left(x;\theta_i\right) 
        = \sum_{i=1}^{n'}\operatorname{softmax}_i\left(\left\{W'_ix+b'_i\right\}_{i=1}^{n'} \right) \cdot \mathcal{E}\left(x;\theta'_i\right),
    \end{align}
for all $x \in \mathbb{R}^d$. Define
    \begin{align}
        \mathcal{E}_i(\cdot) = \mathcal{E}\left(\cdot;\theta_i\right), ~~~\text{ and }~~~
        \mathcal{E}'_i(\cdot) = \mathcal{E}\left(\cdot;\theta'_i\right).
    \end{align}
By expanding the softmax terms in Equation~\eqref{appendix:eq-5}, we obtain
    \begin{align}
        \sum_{i=1}^{n} \dfrac{e^{W_ix+b_i}}{\sum_{j=1}^{n} e^{W_jx+b_j}}\cdot \mathcal{E}_i(x) 
        = \sum_{i=1}^{n'} \dfrac{e^{W'_ix+b'_i}}{\sum_{j=1}^{n'} e^{W'_jx+b'_j}}  \cdot \mathcal{E}'_i(x).
    \end{align}
Multiplying both sides by the respective denominators yields
    \begin{align}
        &\left(\sum_{j=1}^{n'}  e^{W'_jx+b'_j}\right) \cdot \left(\sum_{i=1}^{n} e^{W_ix+b_i}\cdot \mathcal{E}_i(x) 
       \right) = \left( \sum_{j=1}^{n} e^{W_jx+b_j}\right) \cdot \left( \sum_{i=1}^{n'} e^{W'_ix+b'_i} \cdot \mathcal{E}'_i(x) \right),
    \end{align}
which can be rewritten as
\begin{align}  \label{appendix:eq-6}
    \sum_{i=1}^{n}\sum_{j=1}^{n'} e^{\left(W_i+W'_j \right)x+\left(b_i+b'_j\right)} \cdot \left(\mathcal{E}_i(x) -\mathcal{E}'_j(x) \right) = 0.
\end{align}

\paragraph{Step 2. } Since each function $\mathcal{E}_i$ and $\mathcal{E}'_j$ is locally affine, it follows from the result in Appendix~\ref{appendix:section-Local affineness of ReLU neural networks} that there exists a dense open subset $\Omega \subset \mathbb{R}^d$ such that: for any point $a \in \Omega$, one can find an open neighborhood $U \subset \Omega$ containing $a$ on which all functions $\mathcal{E}_i$ and $\mathcal{E}'_j$ are affine. Consequently, each of these functions agrees with a polynomial on $U$. That is, there exists a family of open sets $\{U_k\}_{k \in I}$ covering $\Omega$, so that
\begin{align}
\Omega = \bigcup_{k \in I} U_k,
\end{align}
and for each set $U = U_k$ in this cover, there exist polynomials $p_{U,i}, p'_{U,j} \in \mathbb{R}[x]$ such that
\begin{align}
    \mathcal{E}_i(x) = p_{U,i}(x), ~~~~ \text{  and  } ~~~~ \mathcal{E}'_j(x) =  p'_{U,j}(x) ~~\text{ for all } x \in U.
\end{align}
Substituting into Equation~\eqref{appendix:eq-6} yields:
\begin{align}
    \sum_{i=1}^{n}\sum_{j=1}^{n'} e^{\left(W_i+W'_j \right)x+\left(b_i+b'_j\right)} \cdot \left(p_{U,i}(x) -p'_{U,j}(x) \right) = 0 ~~\text{ for all } x \in U.
\end{align}
Observe that the expression on the left-hand side above defines a holomorphic function. By the Identity Theorem for Holomorphic Functions (see \cite{ahlfors1979complex, rudin1987real, conway1978functions, stein2003complex}), we conclude that:
\begin{align}\label{appendix:eq-7}
    \sum_{i=1}^{n}\sum_{j=1}^{n'} e^{\left(W_i+W'_j \right)x+\left(b_i+b'_j\right)} \cdot \left(p_{U,i}(x) -p'_{U,j}(x) \right) = 0 ~~\text{ for all } x \in \mathbb{C}^d.
\end{align}

\paragraph{Step 3. } According to Assumption 2, the sets $\{W_i\}_{i=1}^{n}$ and $\{W'_j\}_{j=1}^{n'}$ are composed of mutually distinct elements. As a result, there exists a direction 
\begin{align}
    \alpha \in \mathbb{S}^{d-1} = \{ x \in \mathbb{R}^d ~ \colon ~ \|x\|_2 = 1\},
\end{align}
such that the projected values $\{W_i\alpha\}_{i=1}^{n}$ and $\{W'_j\alpha\}_{j=1}^{n'}$ are comprised of $n$ and $n'$ distinct real numbers, respectively. Without loss of generality, we may relabel the indices so that:
\begin{align}
    W_1\alpha < W_2\alpha < \ldots < W_n\alpha ~~ \text{ and } ~~ W'_1\alpha < W'_2\alpha < \ldots < W'_{n'}\alpha.
\end{align}
Furthermore, observe that the problem setting and the preceding equations are invariant under translations of the form $W'_j \mapsto W'_j + c_W$ for a constant vector $c_W \in \mathbb{R}^d$. Hence, we can assume without loss of generality that $W_1 = W'_1$. Under this assumption, we aim to demonstrate that $n = n'$ and $W_i = W'_i$ for all $i = 1, \ldots, n$. Toward that goal, we begin by showing that $W_i = W'_i$ for each $i = 1, \ldots, \min\{n,n'\}$ using mathematical induction.

\textit{Base case. } By assumption, we have $W_1 = W'_1$, so the base case holds trivially.

\textit{Auxiliary step for induction. } For every index pair $(i, j) \neq (1,1)$, we have the inequality:
\begin{align}
    W_1\alpha+ W'_1\alpha < W_i\alpha+ W'_j\alpha.
\end{align}
Thus, the sum $W_1 + W'_1$ differs from $W_i + W'_j$ for all $(i,j) \neq (1,1)$. Applying Equation~\eqref{appendix:eq-7} in conjunction with Lemma~\ref{appendix:lemma-1}, we conclude that
\begin{align} \label{appendix:eq-10}
    p_{U,1} = p'_{U,1}.
\end{align}


\textit{Inductive step. } Assume that $W_i = W'_i$ holds for all $1 \le i < k$, where $k$ is an integer such that $1 < k \le \min\{n, n'\}$. Suppose, for contradiction, that $W_k \neq W'_k$. We analyze the terms $W_1 + W'_k$ and $W_k + W'_1$. Under our assumption, these two quantities must differ. Without loss of generality, we may assume that
\begin{align}
    W_1\alpha + W'_{k}\alpha \le W_k\alpha + W'_1\alpha.
\end{align}

\begin{itemize}
    \item For all index pairs $(i, j)$ with $i \ge k$, we have
    \begin{align}
        W_1\alpha + W'_{k}\alpha \le W_k\alpha + W'_1\alpha \le W_i\alpha + W'_j\alpha.
    \end{align}
    Equality holds if and only if $(i, j) = (k, 1)$. Moreover, since $W_1 + W'_k$ and $W_k + W'_1$ are distinct, it follows that $W_1 + W'_k$ differs from $W_i + W'_j$ for all $(i, j)$ with $i \ge k$.

    \item For all $(i, j)$ with $j \ge k$, we have
    \begin{align}
        W_1\alpha + W'_{k}\alpha \le W_i\alpha + W'_j\alpha.
    \end{align}
    Equality occurs only when $(i, j) = (1, k)$. Therefore, $W_1 + W'_k$ is distinct from $W_i + W'_j$ for all $(i, j) \neq (1, k)$ with $j \ge k$.

    \item For all $(i, j)$ such that $i, j < k$, we claim that $W_1 + W'_k$ does not equal $W_i + W'_j$. Suppose, for contradiction, that
    \begin{align}
        W_1 + W'_k = W_i + W'_j
    \end{align}
    for some pair $(i, j)$ with $i, j < k$. Then by the induction hypothesis,
    \begin{align}
        W'_1 + W'_k = W'_i + W'_j.
    \end{align}
    Rearranging terms, we obtain
    \begin{align}
        W'_1 - W'_j = W'_i - W'_k,
    \end{align}
    which contradicts the assumption that all such differences are pairwise distinct, since $(1, j) \neq (i, k)$.
\end{itemize}

These observations collectively show that $W_1 + W'_k$ differs from $W_i + W'_j$ for all $(i,j) \neq (1,k)$. Applying Equation~\eqref{appendix:eq-7} together with Lemma~\ref{appendix:lemma-1}, we conclude that
\begin{align}
    p_{U,1} = p'_{U,k}.
\end{align}
Additionally, from Equation~\eqref{appendix:eq-10}, we already have
\begin{align}
    p'_{U,1} = p'_{U,k}.
\end{align}
This implies that $\mathcal{E}'_1 = \mathcal{E}'_k$ on $U$. Since this holds for every open set $U$ in the covering $\{U_k\}_{k \in I}$, it follows that $\mathcal{E}'_1 = \mathcal{E}'_k$ on $\Omega$. Because $\Omega$ is dense in $\mathbb{R}^d$ and each $\mathcal{E}'_j$ is continuous, we conclude that $\mathcal{E}'_1 = \mathcal{E}'_k$ on all of $\mathbb{R}^d$. This contradicts the assumption that the $\mathcal{E}'_j$ functions are pairwise distinct. Therefore, our initial assumption must be false, and we conclude that $W_1 + W'_k = W_k + W'_1$, which implies $W_k = W'_k$.

\textit{Conclusion.} By induction, we have established that $W_i = W'_i$ for all $i = 1, \ldots, \min\{n, n'\}$. It remains to show that $n = n'$. Suppose, for contradiction, that $n < n'$. Consider the sum $W_1 + W'_{n'}$. We claim that this quantity is distinct from every $W_i + W'_j$ with $(i, j) \neq (1, n')$. Assume otherwise, that
\begin{align}
    W_1 + W'_{n'} = W_i + W'_j
\end{align}
for some pair $(i, j) \neq (1, n')$. Using the inductive assumption that $W_i = W'_i$ for all $i \le n$, we deduce
\begin{align}
    W'_1 + W'_{n'} = W'_i + W'_j,
\end{align}
which implies
\begin{align}
    W'_1 - W'_j = W'_i - W'_{n'}.
\end{align}
This leads to a contradiction, as it violates the assumption that the differences $W'_i - W'_j$ are pairwise distinct. Hence, $W_1 + W'_{n'}$ must be distinct from all other sums $W_i + W'_j$ where $(i, j) \neq (1, n')$. Then, by Equation~\eqref{appendix:eq-7} and Lemma~\ref{appendix:lemma-1}, it follows that
\begin{align}
    p_{U,1} = p'_{U,n'}.
\end{align}
In addition, from Equation~\eqref{appendix:eq-10}, we already know that
\begin{align}
    p'_{U,1} = p'_{U,n'}.
\end{align}
Consequently, we obtain $\mathcal{E}'_1 = \mathcal{E}'_{n'}$ on $U$. Since this holds on every open set $U$ in the covering $\{U_k\}_{k \in I}$, it extends to $\Omega$, and by continuity, to all of $\mathbb{R}^d$. This contradicts the assumption that the expert functions $\mathcal{E}'_j$ are pairwise distinct. Therefore, our assumption must be false, and we conclude that $n = n'$. Finally, the reordering of indices and the translation applied to the set $\{W'_j\}_{j=1}^{n'}$ throughout the argument confirm the existence of a permutation $\tau \in \textup{S}_n$ and a translation vector $c_W \in \mathbb{R}^d$.

\paragraph{Step 4.} We now establish that $\mathcal{E}_i = \mathcal{E}'_i$ on $\mathbb{R}^d$ for each $i = 1, \ldots, n$. From \textbf{Step 3}, we already have that $n = n'$ and $W_i = W'_i$ for all indices in this range. Consider any pair $(i, j)$. If it holds that $W_i + W'_j = W_{i'} + W'_{j'}$, then $(i', j')$ must be either $(i, j)$ or $(j, i)$. In particular, this implies that $W_i + W'_i$ is distinct from $W_j + W'_k$ for all $(j, k) \ne (i, i)$. Invoking Equation~\eqref{appendix:eq-7} together with Lemma~\ref{appendix:lemma-1}, we obtain
\begin{align}
    p_{U,i} = p'_{U,i}.
\end{align}
This argument parallels that used in \textbf{Step 3}, and applying the same reasoning, we conclude that $\mathcal{E}_i = \mathcal{E}'_i$ on $\mathbb{R}^d$. Since this holds for all $i = 1, \ldots, n$, the result follows.

\paragraph{Step 5.} It remains to prove the existence of a constant $c_b \in \mathbb{R}$ such that
\begin{align}
    b'_i = b_i + c_b \quad \text{for all } i = 1, \ldots, n.
\end{align}
Recall from \textbf{Step 4} that if $W_i + W'_j = W_{i'} + W'_{j'}$, then it must be that $(i', j') = (i, j)$ or $(j, i)$. Using this structural property, along with Equation~\eqref{appendix:eq-6}, Lemma~\ref{appendix:lemma-1}, and the identity $\mathcal{E}_i = \mathcal{E}'_i$ established in \textbf{Step 4}, we derive the following equality:
\begin{align} \label{appendix:eq-11}
    e^{\left(W_i + W'_j\right)x + \left(b_i + b'_j\right)} \cdot \left(\mathcal{E}_i(x) - \mathcal{E}_j(x)\right)  + e^{\left(W_j + W'_i\right)x + \left(b_j + b'_i\right)} \cdot \left(\mathcal{E}_j(x) - \mathcal{E}_i(x)\right) = 0,
\end{align}
valid for all index pairs $(i, j)$. Since $\mathcal{E}_i \ne \mathcal{E}_j$ whenever $i \ne j$, there exists a point $x_0 \in \mathbb{R}^d$ for which $\mathcal{E}_i(x_0) \ne \mathcal{E}_j(x_0)$. Plugging $x = x_0$ into Equation~\eqref{appendix:eq-11} and simplifying by factoring out the common nonzero difference, we find:
\begin{align}
    e^{b_i + b'_j} = e^{b_j + b'_i},
\end{align}
which implies
\begin{align}
    b_i + b'_j = b_j + b'_i,
\end{align}
and hence
\begin{align}
    b_i - b'_i = b_j - b'_j.
\end{align}
This shows that the difference $b_i - b'_i$ remains constant for all $i$. Letting $c_b := b'_1 - b_1$, we obtain
\begin{align}
    b'_i = b_i + c_b \quad \text{for all } i = 1, \ldots, n.
\end{align}
This completes the proof of Theorem~\ref{theorem:main}.

\end{proof}

\begin{remark}[Rationale behind the assumptions in Theorem~\ref{theorem:main}] \label{appendix:remark-1}
    In modeling architectures, it is important that the symmetry group arises from the structure of the model itself rather than from specific, possibly degenerate, parameter choices. That is, the symmetry group should act globally and consistently across the entire weight space. This requirement motivates the following conditions in Theorem~\ref{theorem:main}:
\begin{enumerate}[leftmargin=22pt, topsep=1pt]
    \item Both $\{\mathcal{E}(\cdot;\theta_i)\}_{i=1}^{n}$ and $\{\mathcal{E}(\cdot;\theta_i')\}_{i=1}^{n'}$ consist of pairwise distinct functions;
    \item Both $\{W_i - W_j\}_{1\le i<j \le n}$ and $\{W_i' - W_j'\}_{1\le i<j <n'}$ consist of pairwise distinct vector of $\mathbb{R}^d$.
\end{enumerate} 
    
We now elaborate on the purpose of these assumptions.

\textit{Assumption 1. } If this condition is violated--for instance, if two experts compute the same function and are assigned identical gating values--then permuting these experts leaves the model output unchanged. Such permutations introduce additional, non-essential elements into the symmetry group, which we refer to as spurious symmetries. These do not correspond to genuine structural invariances but rather arise from degenerate parameter settings that represent singular points in the model's parameter space.

\textit{Assumption 2. } Assumption 2 addresses a more nuanced issue: it rules out cases where linear dependencies among gating weight vectors may cause multiple experts to exhibit indistinguishable gating behavior. Although such situations may not be as immediately intuitive as those excluded by Assumption 1, they too can artificially enlarge the symmetry group beyond its intended form. To illustrate this, we present an explicit example. Let $d = 1$ and $n = n' = 3$, and consider parameter configurations $\phi$ and $\phi'$ such that: 
\begin{itemize}
    \item $W_1 = W'_1 = -1$, $W_2 = W'_2 = 0$, and $W_3 = W'_3 = 1$,
    \item The parameters $\theta_1, \theta_2, \theta_3, \theta_1', \theta_2', \theta_3'$ are chosen so that all six experts are constant functions. Let us define:
    \begin{align}
        \renewcommand{\arraystretch}{1.7}
        \begin{array}{llll}
        \mathcal{E}(\cdot;\theta_1) &= A_1, & \mathcal{E}(\cdot;\theta_1') &= A_2, \\
        \mathcal{E}(\cdot;\theta_2) &= B_1, & \mathcal{E}(\cdot;\theta_2') &= B_2, \\
        \mathcal{E}(\cdot;\theta_3) &= C_1, & \mathcal{E}(\cdot;\theta_3') &= C_2.
        \end{array}
    \end{align}
\end{itemize}

We now select the biases $b_i$, $b'_i$ and constants $A_j, B_j, C_j$ such that the models satisfy $\mathcal{D}(\cdot; \phi) = \mathcal{D}(\cdot; \phi')$, even though there exists no transformation as described in Theorem~\ref{theorem:main} that maps $\phi$ to $\phi'$. Our goal is to ensure that
\begin{align} \label{appendix:eq-12}
    &\dfrac{e^{-x+b_1}}{ e^{-x+b_1} + e^{b_2} + e^{x+b_3}}\cdot A_1 + \dfrac{e^{b_2}}{e^{-x+b_1} + e^{b_2} + e^{x+b_3}}\cdot B_1 + \dfrac{e^{x+b_3}}{e^{-x+b_1} + e^{b_2} + e^{x+b_3}}\cdot C_1 \notag \\
    =~& \dfrac{e^{-x+b'_1}}{ e^{-x+b'_1} + e^{b'_2} + e^{x+b'_3}}\cdot A_2 + \dfrac{e^{b'_2}}{e^{-x+b'_1} + e^{b'_2} + e^{x+b'_3}}\cdot B_2 + \dfrac{e^{x+b'_3}}{e^{-x+b'_1} + e^{b'_2} + e^{x+b'_3}}\cdot C_2.
\end{align}

For convenience, we introduce the following shorthand:
\begin{align}
    \renewcommand{\arraystretch}{1.7}
    \begin{array}{llll}
    e^{b_1} &= X_1, \hspace{40pt} &  e^{b'_1} &= X_2, \\
    e^{b_2} &= Y_1, & e^{b'_2} &= Y_2, \\
    e^{b_3} &= Z_1, & e^{b'_3} &= Z_2.
    \end{array}
\end{align}
With this notation, Equation~\eqref{appendix:eq-12} becomes:
\begin{align} 
    &\hspace{10pt}\dfrac{e^{-x}X_1}{e^{-x}X_1 + Y_1 + e^{x}Z_1}\cdot A_1
    + \dfrac{Y_1}{e^{-x}X_1 + Y_1 + e^{x}Z_1}\cdot B_1 
    + \dfrac{e^{x}Z_1}{e^{-x}X_1 + Y_1 + e^{x}Z_1}\cdot C_1 \notag \\
    =&\hspace{10pt}\dfrac{e^{-x}X_2}{e^{-x}X_2 + Y_2 + e^{x}Z_2}\cdot A_2
    + \dfrac{Y_2}{e^{-x}X_2 + Y_2 + e^{x}Z_2}\cdot B_2 
    + \dfrac{e^{x}Z_2}{e^{-x}X_2 + Y_2 + e^{x}Z_2}\cdot C_2,
\end{align}
which is equivalent to:
\begin{align}
    &\left(e^{-x}X_1A_1 + Y_1B_1 + e^xZ_1C_1 \right)\left(e^{-x}X_2 + Y_2 + e^{x}Z_2\right) \notag \\
    &\hspace{100pt} = \left(e^{-x}X_2A_2 + Y_2B_2 + e^xZ_2C_2 \right)\left(e^{-x}X_1 + Y_1 + e^{x}Z_1\right).
\end{align}
By equating the coefficients of $e^{-2x}, e^{-x}, 1, e^{x}, e^{2x}$, we derive the following system:
\begin{align} \label{appendix:eq-13}
    \renewcommand{\arraystretch}{1.7}
    \begin{array}{lllll}
    e^{-2x}  &\colon\quad \quad \quad & X_1X_2A_1 &=& X_1X_2A_2, \\
    e^{2x}  &\colon & Z_1Z_2C_1 &=& Z_1Z_2C_2, \\
    e^x & \colon &  Y_1Z_2B_1 +Z_1Y_2C_1 &=& Y_1Z_2C_2 + Z_1Y_2B_2, \\
    e^{-x} & \colon &  Y_1X_2B_1 +X_1Y_2A_1 &=& Y_1X_2A_2 + X_1Y_2B_2, \\
    1 & \colon & X_1Z_2A_1 + Z_1X_2C_1 + Y_1Y_2B_1 &=&   X_1Z_2C_2 + Z_1X_2A_2 + Y_1Y_2B_2.
    \end{array}
\end{align}
Setting $A_1 = A_2 = A$ and $C_1 = C_2 = C$ satisfies the equations involving $e^{-2x}$ and $e^{2x}$ automatically. Removing those, we simplify the system to:
\begin{align} \label{appendix:eq-14}
    \renewcommand{\arraystretch}{1.7}
    \begin{array}{lllll}
    e^x & \colon &  Y_1Z_2B_1 +Z_1Y_2C &=& Y_1Z_2C + Z_1Y_2B_2, \\
    e^{-x} & \colon &  Y_1X_2B_1 +X_1Y_2A &=& Y_1X_2A + X_1Y_2B_2, \\
    1 & \colon & X_1Z_2A + Z_1X_2C + Y_1Y_2B_1 &=&   X_1Z_2C + Z_1X_2A + Y_1Y_2B_2.
    \end{array}
\end{align}
Solving the equations for $A$ and $C$, assuming $Z_1Y_2 \ne Y_1Z_2$ and $X_1Y_2 \ne Y_1X_2$, gives:
\begin{align}
    \renewcommand{\arraystretch}{2.5}
    \begin{array}{lll}
    A &=& \dfrac{X_1Y_2B_2-Y_1X_2B_1}{X_1Y_2-Y_1X_2}, \\
    C &=& \dfrac{Z_1Y_2B_2-Y_1Z_2B_1}{Z_1Y_2-Y_1Z_2}.
    \end{array}
\end{align}
Substituting into the constant term equation in~\eqref{appendix:eq-14} gives:
\begin{align}\label{appendix:eq-15}
    Y_1Y_2(B_1-B_2) = (C-A)(X_1Z_2-Z_1X_2).
\end{align}
Now computing $A - C$:
\allowdisplaybreaks
\begin{align}
    A - C &= \dfrac{X_1Y_2B_2-Y_1X_2B_1}{X_1Y_2-Y_1X_2} - \dfrac{Z_1Y_2B_2-Y_1Z_2B_1}{Z_1Y_2-Y_1Z_2} \notag \\
    &= \dfrac{Y_1Y_2(B_1-B_2)(X_1Z_2-Z_1X_2)}{(X_1Y_2-Y_1X_2)(Z_1Y_2-Y_1Z_2)}.
\end{align}
Plugging into Equation~\eqref{appendix:eq-15}, we find:
\begin{align}
    Y_1Y_2(B_1-B_2) = - \dfrac{Y_1Y_2(B_1-B_2)(X_1Z_2-Z_1X_2)}{(X_1Y_2-Y_1X_2)(Z_1Y_2-Y_1Z_2)}(X_1Z_2-Z_1X_2).
\end{align}
Assuming $B_1 \ne B_2$ and $Y_1Y_2 \ne 0$, we divide both sides to obtain:
\begin{align}
    (X_1Y_2-Y_1X_2)(Y_1Z_2-Z_1Y_2) = (X_1Z_2-Z_1X_2)^2.
\end{align}
While this equation can be solved in general, it suffices to exhibit a concrete solution. Consider:
\begin{align}
    \renewcommand{\arraystretch}{1.7}
    \begin{array}{lll}
    (X_1,X_2) &=& (1,2), \\
    (Y_1,Y_2) &=& (3,5), \\
    (Z_1,Z_2) &=& (2,3).
    \end{array}
\end{align}
With these values, $B_1$ and $B_2$ may be freely chosen. These assignments define parameter configurations $\phi$ and $\phi'$ for which $\mathcal{D}(\cdot;\phi) = \mathcal{D}(\cdot;\phi')$, yet no transformation described in Theorem~\ref{theorem:main} maps $\phi$ to $\phi'$.

\end{remark}

\section{Results on Mixture-of-Experts with Sparse Gating}
\label{appendix:section-Functional Equivalence of Sparse Mixture-of-Experts}

\subsection{Strongly Distinctness Property}
The following definition formalizes the notion of \textit{strong distinctness}, which will be used in the statement of Theorem~\ref{theorem:main_smoe}.

\begin{definition}[Strongly distinct] \label{appendix:def-strongly distinct}
    Two functions $f, g \colon X \to Y$ are said to be \textit{strongly distinct} if the set $\{x \in X ~ \colon ~ f(x) \ne g(x)\}$ is dense in $X$.
\end{definition}

\begin{example} \label{appendix:example-strongly distinct}
We present several examples to illustrate the concept of strong distinctness.
    \begin{itemize}
        \item Two distinct polynomials on $\mathbb{R}^n$ or $\mathbb{C}^n$ are strongly distinct.
        \item Two distinct holomorphic functions are strongly distinct.
        \item In contrast, two distinct locally affine functions are not necessarily strongly distinct. For instance:
        \begin{itemize}
            \item Let $f_1, f_2 \colon \mathbb{R} \to \mathbb{R}$ be defined as:
            \begin{align}
                f_1(x) = \begin{cases}
                    0 & \text{if } x < 0, \\
                    x & \text{if } x \ge 0,
                \end{cases} \hspace{50pt} f_2(x) = 1.
            \end{align}
            Then $f_1$ and $f_2$ are strongly distinct.

            \item Let $g_1, g_2 \colon \mathbb{R} \to \mathbb{R}$ be defined as:
            \begin{align}
                g_1(x) = \begin{cases}
                    0 & \text{if } x < 0, \\
                    x & \text{if } x \ge 0,
                \end{cases} \hspace{50pt} g_2(x) = 0.
            \end{align}
            Here, $g_1$ and $g_2$ are distinct but not strongly distinct, since they coincide on $(-\infty, 0)$.
        \end{itemize}
    \end{itemize}
\end{example}

We now define a certain class of subsets of $\mathbb{R}^d$. Given
\begin{align}
    \left\{W_i, b_i\right\}_{i=1}^{n} \in \left(\mathbb{R}^d \times \mathbb{R} \right)^n,
\end{align}
we define the set
\begin{align}
    \Omega\left(\left\{W_i, b_i\right\}_{i=1}^{n}\right) 
    \coloneqq \left\{ x \in \mathbb{R}^d ~ \colon ~ \{W_ix + b_i\}_{i=1}^n \text{ are pairwise distinct} \right\}.
\end{align}

The following result provides a sufficient condition on the gating weights under which the Top-$k$ operator is capable of selecting any subset of $k$ experts.

\begin{proposition} \label{appendix:result-2}
    Suppose that the collection $\{W_i\}_{i=1}^{n}$ satisfies the condition that the set $\{W^{(G,i-1)} - W^{(G,i)}\}_{i=2}^{n}$ is linearly independent in $\mathbb{R}^d$. Then, for every subset $\mathcal{A} \subseteq \{1, \ldots, n\}$ of size $k$, there exists a point $x \in \Omega\left(\left\{W_i, b_i\right\}_{i=1}^{n}\right)$ such that
    \begin{align}
        \textup{Top-}k\left( \left( W_i x + b_i \right)_{i=1}^{n} \right) = \mathcal{A}.
    \end{align}
\end{proposition}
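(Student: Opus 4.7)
The plan is to exploit the linear independence assumption to construct a direction $v \in \mathbb{R}^d$ whose inner products with the $W_i$ realize any prescribed ranking of experts, and then to take $x = tv$ with $t$ sufficiently large so that the biases $b_i$ become asymptotically negligible. This reduces the combinatorial Top-$k$ statement to a purely linear algebraic question about the rank of the gating weight differences.

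First, I would observe that the hypothesis ``$\{W_{i-1} - W_i\}_{i=2}^{n}$ linearly independent'' is equivalent to ``$\{W_i - W_n\}_{i=1}^{n-1}$ linearly independent.'' Indeed, since
\begin{align}
W_i - W_n = \sum_{j=i}^{n-1}(W_j - W_{j+1}),
\end{align}
the change of basis between the two sets is upper triangular with unit diagonal, hence invertible, and both sets span the same $(n-1)$-dimensional subspace of $\mathbb{R}^d$. In particular, the linear map $v \mapsto \big((W_i - W_n)v\big)_{i=1}^{n-1}$ from $\mathbb{R}^d$ to $\mathbb{R}^{n-1}$ is surjective.

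Next, I would fix target scalars $c_1, \ldots, c_n \in \mathbb{R}$ that are pairwise distinct and satisfy $c_i > c_j$ whenever $i \in \mathcal{A}$ and $j \notin \mathcal{A}$. By the surjectivity established above, there exists $v \in \mathbb{R}^d$ such that $(W_i - W_n)v = c_i - c_n$ for all $i = 1, \ldots, n-1$, so that $W_i v - c_i$ is constant in $i$. Consequently the scalars $(W_i v)_{i=1}^n$ are pairwise distinct and share the same ordering as $(c_i)_{i=1}^n$; in particular $\textup{Top-}k\big((W_i v)_{i=1}^n\big) = \mathcal{A}$.

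Finally, I would set $x_t = tv$ and write $W_i x_t + b_i = t(W_i v) + b_i$. Since the numbers $W_i v$ are pairwise distinct, a standard dominance argument shows that for all sufficiently large $t > 0$ the values $(W_i x_t + b_i)_{i=1}^n$ remain pairwise distinct and their ordering agrees with that of $(W_i v)_{i=1}^n$; hence $x_t \in \Omega(\{W_i, b_i\}_{i=1}^{n})$ and $\textup{Top-}k\big((W_i x_t + b_i)_{i=1}^n\big) = \mathcal{A}$, as required. The only mild subtlety is the equivalence of the two linear independence conditions at the start; the remainder is routine linear algebra together with a perturbation argument, so I do not anticipate a substantial obstacle.
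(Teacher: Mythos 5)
Your proof is correct and rests on the same key fact as the paper's: linear independence of the consecutive gating-weight differences $\{W_{i-1}-W_i\}_{i=2}^{n}$ makes the linear system prescribing a desired ordering of the gating scores solvable. The paper's proof is marginally more direct---it solves $(W_{i-1}-W_i)x = 1-(b_{i-1}-b_i)$ so that the biases are absorbed into the system and no large-$t$ dominance argument is needed---but your two-step version (first realize the ordering of $(W_i v)_{i=1}^{n}$, then scale $x_t = tv$ to make the biases negligible) is an equally valid execution of the same idea.
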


\begin{proof}
    Without loss of generality, assume $\mathcal{A} = \{1, \ldots, k\}$. To prove the existence of $x \in \Omega$ such that
    \begin{align}
        \textup{Top-}k\left( \left( W_i x + b_i \right)_{i=1}^{n} \right) = \{1, \ldots, k\},
    \end{align}
    it suffices to find $x \in \mathbb{R}^d$ such that
    \begin{align}
        W_1x + b_1 > W_2x + b_2 > \cdots > W_nx + b_n.
    \end{align}
    We can strengthen this to require:
    \begin{align}
        \renewcommand{\arraystretch}{1.7}
        \begin{array}{lcl}
        (W_1x + b_1) - (W_2x + b_2) &=& 1, \\
        (W_2x + b_2) - (W_3x + b_3) &=& 1, \\
        & \vdots & \\
        (W_{n-1}x + b_{n-1}) - (W_nx + b_n) &=& 1.
        \end{array}
    \end{align}
    This is equivalent to the following system of linear equations:
    \begin{align} \label{appendix:eq-22}
        \renewcommand{\arraystretch}{1.7}
        \begin{array}{lcl}
        (W_1 - W_2)x &=& 1 - (b_1 - b_2), \\
        (W_2 - W_3)x &=& 1 - (b_2 - b_3), \\
        & \vdots & \\
        (W_{n-1} - W_n)x &=& 1 - (b_{n-1} - b_n).
        \end{array}
    \end{align}
    Since the vectors $\{W_{i-1} - W_i\}_{i=2}^{n}$ are linearly independent by assumption, this system has a unique solution. Hence, there exists $x \in \mathbb{R}^d$ satisfying Equation~\eqref{appendix:eq-22}.
\end{proof}

    Proposition~\ref{appendix:result-2} will be invoked in the proof of Theorem~\ref{theorem:main_smoe}. A justification for the linear independence assumption is provided in Remark~\ref{appendix:remark-2}.

\subsection{Functional Equivalence in Mixture-of-Experts with Sparse Gating}
We establish a functional equivalence theorem for the Sparse Mixture-of-Experts (SMoE) architecture, in parallel with the result for MoE given in Theorem~\ref{theorem:main}. However, our analysis is limited to the case $k > 1$, as the $k = 1$ setting leads to singularities that disrupt the general structure of equivalence. A detailed explanation for this exclusion is provided in Remark~\ref{appendix:remark-3}.

\begin{theorem}[Functional equivalence in Mixture-of-Experts with Sparse Gating] \label{theorem:main_smoe}
Suppose $\phi, \phi'$ define the same SMoE function, i.e. $\mathcal{S}(\cdot;\phi) = \mathcal{S}(\cdot;\phi')$.
Assume that the following conditions hold:
\begin{enumerate}[leftmargin=22pt, topsep=1pt]
    \item Both $\{\mathcal{E}(\cdot;\theta_i)\}_{i=1}^{n}$ and $\{\mathcal{E}(\cdot;\theta_i')\}_{i=1}^{n'}$ consist of pairwise strongly distinct functions;
    \item Both $\{W_{i-1} - W_i\}_{i=2}^{n}$ and $\{W_{i-1}' - W'_i\}_{i=2}^{n'}$ are linear independent subsets of $\mathbb{R}^d$.
\end{enumerate} 
Then $n=n'$, and there exists $g=(c_W,c_b,\tau) \in G(n)$ such that for all $i = 1, \ldots, n$, we have
    $W_i'= W_{\tau(i)} + c_W$, $b_i'= b_{\tau(i)} + c_b$, and $\mathcal{E}(x;\theta_i') = \mathcal{E}(x;\theta_{\tau(i)})$ for all $x \in  \Omega(\{W_i,b_i\}_{i=1}^{n})$ such that $\tau(i) \in \textup{Top-}k(( W_ix + b_i )_{i=1}^{n})$.
\end{theorem}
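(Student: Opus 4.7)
The plan is to adapt the dense-gating argument from Theorem~\ref{theorem:main} by localizing to open regions on which the two $\text{Top-}k$ selections are constant, and then patching the conclusions obtained on such regions. Write $\Omega_{\cap} := \Omega(\{W_i,b_i\}_{i=1}^{n}) \cap \Omega(\{W'_i,b'_i\}_{i=1}^{n'})$. By Propositions~\ref{appendix:result-1} and~\ref{appendix:result-3}, $\Omega_{\cap}$ is an open dense subset of $\mathbb{R}^d$ on which both $\mathcal{S}(\cdot;\phi)$ and $\mathcal{S}(\cdot;\phi')$ are continuous and both $T(x)$ and $T'(x)$ are locally constant. Hence $\Omega_{\cap}$ decomposes into open pieces $U_{A,A'}$ indexed by pairs of $k$-subsets $(A,A')$ with $T \equiv A$ and $T' \equiv A'$ on $U_{A,A'}$. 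Moreover, Proposition~\ref{appendix:result-2} (using Assumption~2) guarantees that every $k$-subset $A \subseteq \{1,\ldots,n\}$ is actually realised by some $x$, and similarly on the $\phi'$ side, so the collection of nonempty pieces is rich enough to compare every index of $\phi$ with every index of $\phi'$.

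On any nonempty piece $U_{A,A'}$ the equation $\mathcal{S}(\cdot;\phi) = \mathcal{S}(\cdot;\phi')$ reduces to a dense-gating equality over the selected experts. Clearing denominators as in Step~1 of Theorem~\ref{theorem:main} gives, on $U_{A,A'}$,
\begin{equation*}
\sum_{i \in A,\,j \in A'} e^{(W_i+W'_j)x + (b_i+b'_j)}\bigl(\mathcal{E}(x;\theta_i) - \mathcal{E}(x;\theta'_j)\bigr) = 0.
\end{equation*}
Using the local affineness of ReLU networks (Appendix~\ref{appendix:section-Local affineness of ReLU neural networks}), I shrink $U_{A,A'}$ to a smaller open set on which every $\mathcal{E}(\cdot;\theta_i)$ and $\mathcal{E}(\cdot;\theta'_j)$ coincides with a polynomial. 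The resulting identity is then holomorphic, and the Identity Theorem extends it to all of $\mathbb{C}^d$. Lemma~\ref{appendix:lemma-1} converts this into relations between the exponent vectors $W_i+W'_j$ and their attached polynomials, in exactly the form used in the dense proof.

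With this localisation in hand, the inductive argument of Steps~3--5 of Theorem~\ref{theorem:main} is run inside appropriately chosen pieces. Pick $\alpha \in \mathbb{S}^{d-1}$ so that $\{W_i\alpha\}$ and $\{W'_j\alpha\}$ are strictly ordered, choose an initial translation aligning $W_1$ with $W'_1$, and then propagate the equalities $W_i = W'_i$ index-by-index. Whenever a putative equality would force two experts to agree, Assumption~1 (strong distinctness) is invoked: any expert equality on a dense subset extends by continuity to all of $\mathbb{R}^d$, contradicting strong distinctness unless the experts are genuinely equal. This yields $n = n'$, the permutation $\tau$, and the translations $c_W, c_b$. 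Finally, for each $i$ one chooses a pair $(A,A')$ with $i \in A$ and $\tau(i) \in A'$ (possible by Proposition~\ref{appendix:result-2}), and the identity on $U_{A,A'}$ forces $\mathcal{E}(x;\theta_i) = \mathcal{E}(x;\theta'_{\tau(i)})$ throughout that piece; taking the union over all such pieces gives the claimed equality on $\{x \in \Omega(\{W_i,b_i\}) : \tau(i) \in \text{Top-}k((W_jx+b_j)_{j=1}^n)\}$.

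\textbf{Main obstacle.} Unlike in the dense case, the localised equation on $U_{A,A'}$ only constrains the experts indexed by $A \cup A'$; experts that are never jointly selected can differ arbitrarily where they are inactive. This is precisely why Assumption~1 must be strengthened from ``pairwise distinct'' to ``pairwise strongly distinct'' (locally-affine ReLU networks can disagree on only a measure-zero set, which would otherwise let degenerate permutations sneak in), and why the conclusion about $\mathcal{E}(\cdot;\theta'_i)$ is restricted to the activation region of $\tau(i)$. The delicate point is coordinating the local arguments across the different pieces $U_{A,A'}$: each induction step in Theorem~\ref{theorem:main} must be carried out inside a piece where the relevant indices are simultaneously active, and the resulting polynomial identities must be reconciled globally through the holomorphic extension. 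Keeping track of which exponent pairs $W_i + W'_j$ are separable by the projection $\alpha$ within the right piece, and using Proposition~\ref{appendix:result-2} to certify that such a piece exists, is where the sparse argument becomes substantially more technical than its dense counterpart.
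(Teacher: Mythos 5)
Your plan is essentially the paper's proof: restrict to the open dense set where both Top-$k$ selections are locally constant and the experts are locally affine, apply the dense-gating machinery (clearing denominators, holomorphic extension, Lemma~\ref{appendix:lemma-1}) on each piece, and use Proposition~\ref{appendix:result-2} together with the linear-independence assumption and strong distinctness to assemble the global permutation and translations. The only part you leave at the level of a sketch is the cross-piece gluing, which the paper carries out explicitly in its Step 4 by comparing the difference identities $W_1-W_m=W'_{t_1}-W'_{s_1}$ and $W_2-W_m=W'_{t_2}-W'_{s_2}$ obtained from two pieces and using linear independence of the consecutive gating differences to pin down $t_1=1$, $t_2=2$, $s_1=s_2=\tau(m)$.
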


Before presenting the proof of Theorem~\ref{theorem:main_smoe}, we begin with two remarks.

\begin{remark}
    Suppose $n = n'$ and there exist $\tau \in \textup{S}_{n}$, $c_W \in \mathbb{R}^d$, and $c_b \in \mathbb{R}$ such that for every $i = 1, \ldots, n$, we have
\begin{align}
    W'_i = W_{\tau(i)} + c_W , \quad b'_i = b_{\tau(i)} + c_b.
\end{align}
Then the two sets $\Omega\left(\left\{W_i, b_i\right\}_{i=1}^{n}\right)$ and $\Omega\left(\left\{W'_i, b'_i\right\}_{i=1}^{n}\right)$ coincide. Furthermore, for any $x$ in this set, the condition $\tau(i) \in \textup{Top-}k(( W_ix + b_i )_{i=1}^{n})$ holds if and only if $i \in \textup{Top-}k(( W'_ix + b'_i )_{i=1}^{n})$.
\end{remark}

\begin{remark}
    It is easy to verify that Assumption 2 in Theorem~\ref{theorem:main_smoe} implies Assumption 2 in Theorem~\ref{theorem:main}.
\end{remark}

\begin{proof} To enhance clarity, we begin by outlining the main steps of the proof at a high level:
\begin{enumerate}
    \item Formulate the identity $\mathcal{S}(\cdot; \phi) = \mathcal{S}(\cdot; \phi')$ explicitly and introduce simplified notation to streamline the presentation.
    \item Partition the input space into regions where the $\text{Top-}k$ operator selects the same index set, and within which all expert functions are affine.
    \item Demonstrate the desired equivalence for a fixed subset of experts. The core technique is to invoke the MoE equivalence result given in Theorem~\ref{theorem:main}.
    \item Generalize the argument to establish the equivalence across all experts.
\end{enumerate}

We now proceed with the detailed derivation and proofs for each of the outlined steps.

\paragraph{Step 1. } Given that $\mathcal{S}(\cdot;\phi) = \mathcal{S}(\cdot;\phi')$, it follows that
\begin{align} \label{appendix:eq-19}
    &\sum_{i \in \textup{Top-}k(( W_ix + b_i )_{i=1}^{n})}\operatorname{softmax}_i\left(\left\{W_ix+b_i\right\}_{i \in \textup{Top-}k(( W_ix + b_i )_{i=1}^{n})} \right) \cdot \mathcal{E}\left(x;\theta_i\right) \notag \\
    = ~&\sum_{i \in \textup{Top-}k(( W'_ix + b'_i )_{i=1}^{n'})}\operatorname{softmax}_i\left(\left\{W'_ix+b'_i\right\}_{i \in \textup{Top-}k(( W'_ix + b'_i )_{i=1}^{n'})} \right) \cdot \mathcal{E}\left(x;\theta'_i\right),
\end{align}
for all $x \in \mathbb{R}^d$. To simplify notation, define
\begin{align}
    \mathcal{E}_i(\cdot) = \mathcal{E}\left(\cdot;\theta_i\right), ~~ \text{ and } ~~
    \mathcal{E}'_i(\cdot) = \mathcal{E}\left(\cdot;\theta'_i\right).
\end{align}
Using this notation, we can rewrite Equation~\eqref{appendix:eq-19} more compactly as:
\begin{align} \label{appendix:eq-20}
    &\sum_{i \in \textup{Top-}k(( W_ix + b_i )_{i=1}^{n})}\operatorname{softmax}_i\left(\left\{W_ix+b_i\right\}_{i \in \textup{Top-}k(( W_ix + b_i )_{i=1}^{n})} \right) \cdot \mathcal{E}_i(x)\notag \\
    =~&\sum_{i \in \textup{Top-}k(( W'_ix + b'_i )_{i=1}^{n'})}\operatorname{softmax}_i\left(\left\{W'_ix+b'_i\right\}_{i \in \textup{Top-}k(( W'_ix + b'_i )_{i=1}^{n'})} \right) \cdot \mathcal{E}'_i(x).
\end{align}

\paragraph{Step 2. }

We begin by highlighting two key observations:

\begin{itemize}
    \item Assumption 2 guarantees that the parameter pairs $\left\{W_i, b_i\right\}$ are pairwise distinct for $i = 1, \ldots, n$, and likewise, $\left\{W'_i, b'_i\right\}$ are pairwise distinct for $i = 1, \ldots, n'$. By Proposition~\ref{appendix:result-1}, the set
    \begin{align}
        \Omega_1 = \Omega\left(\left\{W_i, b_i\right\}_{i=1}^{n}\right) \cap \Omega\left(\left\{W'_i, b'_i\right\}_{i=1}^{n'}\right),
    \end{align}
    is open and dense in $\mathbb{R}^d$. For any $x \in \Omega_1$, the values $\{W_ix + b_i\}$ and $\{W'_ix + b'_i\}$ are pairwise distinct. Moreover, for each $x \in \Omega_1$, there exists an open neighborhood around $x$ in which both Top-$k$ selections remain fixed.

    \item From the analysis in Appendix~\ref{appendix:section-Local affineness of ReLU neural networks}, there exists a set $\Omega_2 \subset \mathbb{R}^d$, also open and dense, such that for every $x \in \Omega_2$, all expert functions $\mathcal{E}_i$ and $\mathcal{E}'_j$ are affine in a neighborhood of $x$.
\end{itemize}

Taking the intersection $\Omega = \Omega_1 \cap \Omega_2$, we obtain a set that is still open and dense. Within $\Omega$, both the Top-$k$ selections and the expert functions remain locally constant and affine, respectively. Consequently, there exists a collection of open sets $\{U_i\}_{i \in I}$ that cover $\Omega$, such that
\begin{align} \label{appendix:eq-23}
    \Omega = \bigcup_{i \in I} U_i,
\end{align}
and on each $U_i$, the expert functions $\mathcal{E}_i$ and $\mathcal{E}'_j$ are affine, and the Top-$k$ selections do not vary.

\paragraph{Step 3. } Let $U$ be an arbitrary open set from the cover described in Equation~\eqref{appendix:eq-23}. Without loss of generality, we may relabel the indices such that both Top-$k$ maps are constantly equal to $\{1, \ldots, k\}$ throughout $U$. Under this reindexing, Equation~\eqref{appendix:eq-20} simplifies to
\begin{align} \label{appendix:eq-21}
    &\sum_{i=1}^k \operatorname{softmax}_i\left(\left\{W_ix + b_i\right\}_{i=1}^k \right) \cdot \mathcal{E}_i(x) \notag \\
    &\hspace{80pt} = \sum_{i=1}^k \operatorname{softmax}_i\left(\left\{W'_ix + b'_i\right\}_{i=1}^k \right) \cdot \mathcal{E}'_i(x) \quad \text{for all } x \in U.
\end{align}

According to Assumption 1, the expert functions $\mathcal{E}_i$ are strongly distinct, which implies that they remain distinct on the open set $U$. The same conclusion applies to the functions $\mathcal{E}'_i$. Therefore, the assumptions of Theorem~\ref{theorem:main} are satisfied on $U$, and Equation~\eqref{appendix:eq-21} falls within its scope. As a consequence, there exist constants $c_W \in \mathbb{R}^d$ and $c_b \in \mathbb{R}$ such that, up to reindexing, we have for all $i = 1, \ldots, k$,
\begin{align}
    W'_i = W_i + c_W, \quad b'_i = b_i + c_b,
\end{align}
and furthermore, $\mathcal{E}_i = \mathcal{E}'_i$ on $U$.

\paragraph{Step 4. } For any index $m \in \{3, 4, \ldots, n\}$, we invoke Proposition~\ref{appendix:result-2} to select an open set $V_1$ from the cover in Equation~\eqref{appendix:eq-23} such that both indices $1$ and $k$ appear in $T(V_1)$. Restricting Equation~\eqref{appendix:eq-20} to $V_1$ and applying Theorem~\ref{theorem:main_smoe}, we conclude that there exist indices $1 \le t_1, s_1 \le n'$ such that
\begin{align}\label{appendix:eq-24}
    W_1 - W_m = W'_{t_1} - W'_{s_1}.
\end{align}

Applying the same reasoning with indices $2$ and $m$, we obtain $1 \le t_2, s_2 \le n'$ such that
\begin{align} \label{appendix:eq-25}
    W_2 - W_m = W'_{t_2} - W'_{s_2}.
\end{align}

Subtracting Equation~\eqref{appendix:eq-25} from Equation~\eqref{appendix:eq-24} yields
\begin{align}
    W'_1 - W'_2 = W_1 - W_2 = (W_1 - W_m) - (W_2 - W_m) = (W'_{t_1} - W'_{s_1}) - (W'_{t_2} - W'_{s_2}).
\end{align}

Due to the linear independence guaranteed by Assumption~2, this identity can only hold if $t_1 = 1$, $t_2 = 2$, and $s_1 = s_2$. Denoting this shared index by $\tau(m)$, i.e., $\tau(m) = s_1 = s_2$, we then have
\begin{align}
    W_1 - W_m = W'_1 - W'_{\tau(m)},
\end{align}
which implies
\begin{align}
    W'_{\tau(m)} - W_m = W'_1 - W_1 = c_W.
\end{align}
Similarly, we deduce
\begin{align}
    b'_{\tau(m)} - b_m = b'_1 - b_1 = c_b.
\end{align}

Since $m$ ranges over $\{3, 4, \ldots, n\}$, the corresponding values of $\tau(m)$ must be distinct. If not, suppose that $\tau(m) = \tau(m')$ for some $m \ne m'$, which would imply
\begin{align}
    W_m - W_{m'} = W'_{\tau(m)} - W'_{\tau(m')} = 0,
\end{align}
contradicting Assumption~3.

Applying a symmetric argument to the parameters of $\mathcal{S}(\cdot;\phi')$, we conclude that $n = n'$. Therefore, there exists a permutation $\tau$ of $\{1, \ldots, n\}$ such that
\begin{align}
    W'_i = W_{\tau(i)} + c_W, \quad b'_i = b_{\tau(i)} + c_b.
\end{align}

Finally, the analysis above shows that for any $x \in \Omega\left(\left\{W_i, b_i\right\}_{i=1}^{n}\right)$ with $\tau(i) \in \textup{Top-}k(( W_ix + b_i )_{i=1}^{n})$--i.e., when index $i$ is selected by the Top-$k$ operator in $\mathcal{S}$--we have
\begin{align}
    \mathcal{E}_{\tau(i)}(x) = \mathcal{E}'_i(x).
\end{align}
This completes the proof of Theorem~\ref{theorem:main_smoe}.
\end{proof}

\begin{remark}
While Theorem~\ref{theorem:main_smoe} is conceptually analogous to Theorem~\ref{theorem:main}, it is crucial to recognize that establishing the result for SMoE involves substantially greater technical complexity. The main challenge arises from the Top-$k$ operator, which introduces discontinuities by dynamically changing the subset of active experts in a manner that depends intricately on the input. This input-dependent behavior complicates the analysis and makes the theoretical treatment significantly more delicate.
\end{remark}

\begin{remark}[Rationale behind the assumptions in Theorem~\ref{theorem:main_smoe}]\label{appendix:remark-2}
We begin by restating the two assumptions made in Theorem~\ref{theorem:main_smoe}:

\begin{enumerate}[leftmargin=22pt, topsep=1pt]
    \item Both $\{\mathcal{E}(\cdot;\theta_i)\}_{i=1}^{n}$ and $\{\mathcal{E}(\cdot;\theta_i')\}_{i=1}^{n'}$ consist of pairwise strongly distinct functions;
    \item Both $\{W_{i-1} - W_i\}_{i=2}^{n}$ and $\{W_{i-1}' - W'_i\}_{i=2}^{n'}$ are linear independent subsets of $\mathbb{R}^d$.
\end{enumerate} 

These assumptions are strictly stronger than those required in Theorem~\ref{theorem:main}. We now discuss their necessity and implications in greater detail. 

\textit{Assumption 1.} This condition arises primarily from the behavior of the Top-$k$ operator, which induces input-dependent expert selection. As a result, the functional contribution of an expert is restricted to regions where it is actively selected by the gating mechanism. Outside these regions, the expert can behave arbitrarily without influencing the output. Therefore, if the experts are merely pairwise distinct--rather than pairwise strongly distinct--it becomes possible for different sets of expert functions to yield identical overall behavior when restricted to their respective activation domains. This ambiguity underscores the necessity of strong distinctness to ensure functional identifiability in the SMoE setting.

\textit{Assumption 2.} In practice, the number of experts $n$ is typically much smaller than the input (token) dimension $D$. As a result, the collections $\{W^{(G,i-1)} - W_i\}_{i=2}^{n}$ and $\{W'^{(G,i-1)} - W'_i\}_{i=2}^{n'}$ are generically linearly independent. However, when linear dependence arises, it can prevent certain expert pairs from ever being simultaneously selected by the gating mechanism across all possible inputs. This limitation introduces singular symmetries: different parameter configurations that yield functionally identical outputs but cannot be related via the equivalence structure defined in Theorem~\ref{theorem:main_smoe}.

To illustrate this phenomenon concretely, consider an example with $n = 4$ and $k = 2$, and let $\mathcal{E}_1, \mathcal{E}_2, \mathcal{E}_3, \mathcal{E}_4$ denote arbitrary expert functions. Define two SMoE models $\mathcal{S}_1$ and $\mathcal{S}_2$, whose gating logits are $(-2x, -x, x, 2x)$ and $(-3x, -2x, 2x, 3x)$, respectively. The resulting functions take the form:
\begin{align}
    \mathcal{S}_1(x) = \begin{cases}
        \operatorname{softmax}_1(-2x,-x) \cdot \mathcal{E}_1(x) + \operatorname{softmax}_2(-2x,-x) \cdot \mathcal{E}_2(x) & \text{if } x < 0,\\
        \operatorname{softmax}_1(x,2x) \cdot \mathcal{E}_3(x) + \operatorname{softmax}_2(x,2x) \cdot \mathcal{E}_4(x) & \text{if } x > 0,
    \end{cases}
\end{align}
and
\begin{align}
    \mathcal{S}_2(x) = \begin{cases}
        \operatorname{softmax}_1(-3x,-2x) \cdot \mathcal{E}_1(x) + \operatorname{softmax}_2(-3x,-2x) \cdot \mathcal{E}_2(x) & \text{if } x < 0,\\
        \operatorname{softmax}_1(2x,3x) \cdot \mathcal{E}_3(x) + \operatorname{softmax}_2(2x,3x) \cdot \mathcal{E}_4(x) & \text{if } x > 0.
    \end{cases}
\end{align}

It is easy to verify that $\mathcal{S}_1(x) = \mathcal{S}_2(x)$ for all $x \in \mathbb{R} \setminus \{0\}$, where the gating logits are pairwise distinct and the Top-$k$ selections remain constant. However, no transformation of the form specified in Theorem~\ref{theorem:main_smoe} maps one configuration to the other. This demonstrates how singular symmetries can arise in the SMoE architecture, even when functional outputs coincide on an open dense subset of the input domain.
\end{remark}

\begin{remark}[The case of $k = 1$] \label{appendix:remark-3}
In the special case where $k = 1$, the SMoE function in Equation~\eqref{appendix:eq-26} simplifies to
\begin{align}
    \mathcal{S}\left(x; \left\{W_i, b_i, \theta_i\right\}_{i=1}^{n}\right) = \mathcal{E}\left(x; \theta_i\right),
\end{align}
where the index $i$ is determined by
\begin{align}
    i = \underset{i=1,\ldots,n}{\text{argmax}} \left(W_ix + b_i\right).
\end{align}
In this setting, the Top-$1$ gating mechanism selects only the expert with the highest score, and the softmax reduces to a one-hot distribution with a single nonzero entry equal to $1$.

Beyond the standard $G(n)$ symmetry acting on the expert parameters, the SMoE architecture with $k = 1$ exhibits an additional, nontrivial invariance under the action of the multiplicative group $\mathbb{R}_{>0}$. Specifically, for any scalar $c > 0$, we have
\begin{align}
    \mathcal{S}\left(x; \left\{W_i, b_i, \theta_i\right\}_{i=1}^{n}\right) = \mathcal{S}\left(x; \left\{cW_i, cb_i, \theta_i\right\}_{i=1}^{n}\right),
\end{align}
as the $\text{argmax}$ used to select the active expert remains invariant under uniform positive scaling:
\begin{align}
    \underset{i=1,\ldots,n}{\text{argmax}} \left(W_ix + b_i\right) = \underset{i=1,\ldots,n}{\text{argmax}} \left(cW_ix + cb_i\right),
\end{align}
for all $x \in \Omega\left(\left\{W_i, b_i\right\}_{i=1}^{n}\right)$. Furthermore, since only one expert is active at any given input, the architecture does not involve explicit interactions between experts. This leads to a more complex symmetry structure, including hidden and continuous transformations, which complicates theoretical analysis. For this reason, we exclude the case $k = 1$ from our main results and leave its investigation to future work.
\end{remark}

\section{Technical Details for Sections~\ref{section{Algorithms for Expert Matching}} and \ref{main:section{Experiments}}} \label{appendix:technical details for section algorithms}
\subsection{Proof for the Sufficiency of Permutation Invariance in LMC of MoE}

The following result establishes that translation transformations do not affect the barrier loss between two parameter configurations.

\begin{proposition}\label{appendix:prop-translation-invariant-barrierloss}
Let $h = (c_W, c_b, \textup{id}_n) \in G(n)$, where $\textup{id}_n \in \textup{S}_n$ denotes the identity permutation. For any $\phi_A, \phi_B \in \Phi(n)$, the barrier loss remains invariant under translation:
\begin{equation}
B(\phi_A, \phi_B) = B(\phi_A, h \phi_B).
\end{equation}
That is, the translation components do not contribute to the barrier loss as defined in Equation~\eqref{maintext:eq-barrier-loss}.
\end{proposition}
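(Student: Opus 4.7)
The plan is to reduce the claim directly to Proposition~\ref{main:proposition-Weight space invariance of Mixture of Experts}. Write
\begin{align}
\phi_A &= (W_i^A, b_i^A, \theta_i^A)_{i=1}^n, &
\phi_B &= (W_i^B, b_i^B, \theta_i^B)_{i=1}^n,
\end{align}
so that, since $h = (c_W, c_b, \textup{id}_n)$ with identity permutation,
\begin{equation}
h\phi_B = (W_i^B + c_W,\ b_i^B + c_b,\ \theta_i^B)_{i=1}^n.
\end{equation}
First, I would evaluate the barrier at the two endpoints. At $t = 1$ both interpolants collapse to $\phi_A$, while at $t = 0$ they yield $\phi_B$ and $h\phi_B$, which define the same MoE function by Proposition~\ref{main:proposition-Weight space invariance of Mixture of Experts}, giving $\mathcal{L}(\phi_B) = \mathcal{L}(h\phi_B)$.

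The main observation for the interior of $[0,1]$ is the following algebraic identity: for every $t \in [0,1]$,
\begin{equation}
t\phi_A + (1-t)h\phi_B \;=\; h_t\bigl(t\phi_A + (1-t)\phi_B\bigr),
\end{equation}
where $h_t := \bigl((1-t)c_W,\ (1-t)c_b,\ \textup{id}_n\bigr) \in G(n)$. In words, linearly interpolating toward the translated endpoint $h\phi_B$ is equivalent to first interpolating to $\phi_B$ and then applying the smaller translation $h_t$, which still sits inside the group $G(n)$ acting on $\Phi(n)$.

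Given this identity, Proposition~\ref{main:proposition-Weight space invariance of Mixture of Experts} immediately yields
\begin{equation}
\mathcal{L}\bigl(t\phi_A + (1-t)h\phi_B\bigr) \;=\; \mathcal{L}\bigl(t\phi_A + (1-t)\phi_B\bigr) \quad \text{for all } t \in [0,1].
\end{equation}
Substituting this equality, together with $\mathcal{L}(h\phi_B) = \mathcal{L}(\phi_B)$, into the definition of the barrier in Equation~\eqref{maintext:eq-barrier-loss} shows that the entire expression inside the supremum coincides for the pairs $(\phi_A, \phi_B)$ and $(\phi_A, h\phi_B)$, so $B(\phi_A, \phi_B) = B(\phi_A, h\phi_B)$. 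There is no real obstacle here; the only point worth flagging is that linear interpolation commutes with a uniform shift of the gating parameters, so the entire argument lives inside $G(n)$ and reduces to an already-proven invariance. For the SMoE variant, one would additionally restrict to $\Omega(\{W_i, b_i\}_{i=1}^n)$ as in Proposition~\ref{main:appendix:proposition-Weight space invariance of Sparse Mixture-of-Experts}, but since the loss is computed as an expectation that is unaffected by values on the measure-zero complement, the same conclusion carries over.
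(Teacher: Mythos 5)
Your proposal is correct and follows essentially the same route as the paper's proof: the central step in both is the identity $t\phi_A + (1-t)h\phi_B = \bigl((1-t)c_W,(1-t)c_b,\textup{id}_n\bigr)\bigl(t\phi_A + (1-t)\phi_B\bigr)$, followed by an appeal to the $G(n)$-invariance of the MoE map to equate the losses pointwise in $t$ before taking the supremum. The only difference is cosmetic (you index the auxiliary translation by $t$ where the paper applies $h_{1-t}$), so no further comment is needed.
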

\begin{proof}
Recall that 
\begin{equation}
B(\phi_A, \phi_B) = \sup_{t \in [0,1]} \left[ \mathcal{L}(t \phi_A + (1 - t) \phi_B) - \left( t \mathcal{L}(\phi_A) + (1 - t) \mathcal{L}(\phi_B) \right) \right].
\end{equation}
For $t \in [0,1]$, denote $h_t = (tc_W,tc_b,\text{id}_n) \in G(n)$. For $\phi_A, \phi_B \in \Phi(n)$ such that
\begin{align}
    \phi_A &= (W^A_i,b^A_i,\theta^A_i)_{i = 1, \ldots, n}  \\
    \phi_B &= (W^B_i,b^B_i,\theta^B_i)_{i = 1, \ldots, n},
\end{align}
we have:
\begin{align}
    &t\phi_A + (1 - t) (h\phi_B) \notag \\
    =~& t(W^A_i,b^A_i,\theta^A_i)_{i = 1, \ldots, n} + (1-t)(W^B_i+c_W,b^B_i+c_b,\theta^B_i)_{i = 1, \ldots, n} \notag \\
    =~& \big(tW^A_i+(1-t)W^B_i +(1-t)c_W,tb^A_i+(1-t)b^B_i+(1-t)c_b,t\theta^A_i+(1-t)\theta^B_i\big)_{i = 1, \ldots, n} \notag \\
    =~& \big(h_{1-t}\big)\big(t\phi_A + (1 - t) \phi_B\big).
\end{align}
Since the group action of $G(n)$ on $\Phi(n)$ preserves the functionality of the MoE function $\mathcal{D}$, we have
\begin{align}
     \mathcal{D}\Big(\cdot; \phi_B \Big) =  \mathcal{D}\Big(\cdot; h\phi_B \Big),
\end{align}
and
\begin{align}
    &\mathcal{D}\Big(\cdot; t\phi_A + (1 - t) (h\phi_B)\Big) \notag \\
    &\hspace{70pt}= \mathcal{D}\Big(\cdot; \big(h_{1-t}\big)\big(t\phi_A + (1 - t) \phi_B\big)\Big) \notag \\
    &\hspace{180pt}= \mathcal{D}\Big(\cdot; t\phi_A + (1 - t) \phi_B\Big).
\end{align}
These observations lead to
\begin{align}
    \mathcal{L}(\phi_B) &= \mathcal{L}(h\phi_B), \\
    \mathcal{L}\Big(t\phi_A + (1 - t) (h\phi_B)\Big) &= \mathcal{L}\Big(t\phi_A + (1 - t) \phi_B\Big).
\end{align}
Thus, for $t \in [0,1]$, we have
\begin{align}
    &~B(\phi_A, h\phi_B) \notag\\ 
    =& \sup_{t \in [0,1]} \Big[ \mathcal{L}\big(t \phi_A + (1 - t) (h\phi_B)\big) - \big( t \mathcal{L}(\phi_A) + (1 - t) \mathcal{L}(h\phi_B) \big) \Big] \notag \\
    =&  \sup_{t \in [0,1]} \Big[ \mathcal{L}\big(t \phi_A + (1 - t) \phi_B\big) - \big( t \mathcal{L}(\phi_A) + (1 - t) \mathcal{L}(\phi_B) \big) \Big] \notag\\
    =&~  B(\phi_A, \phi_B).
\end{align}
Therefore, the proof is finished.
\end{proof}

\subsection{Proof of the Permutation-Invariant Property for Equation \texorpdfstring{\eqref{eq:gram}}{(Gram Equation)}}
\label{appendix:gram-permutation-invariant}

\begin{proposition}
The cost function defined in Method 2 of Section~\ref{section:permutation-alignment-algo} is permutation-invariant with respect to the hidden units of the experts in the MoE models.
\end{proposition}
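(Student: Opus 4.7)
The plan is to track how the concatenated matrices $\widetilde{A}_i$ and $\widetilde{B}_i$ transform under the hidden-unit permutation symmetry of a one-hidden-layer MLP, and then observe that the two Gram-like quantities $\widetilde{A}_i^\top \widetilde{A}_i$ and $\widetilde{B}_i \widetilde{B}_i^\top$ absorb that permutation. First, I would recall the standard fact that the expert $x \mapsto B_i \sigma(A_i x + u_i) + v_i$ is unchanged if one replaces $(A_i, u_i, B_i, v_i)$ by $(P A_i, P u_i, B_i P^\top, v_i)$ for any $h \times h$ permutation matrix $P$; this is because $\sigma$ acts entrywise and $P^\top P = I$.

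Next, I would compute how the two augmented matrices transform. Since $\widetilde{A}_i = [A_i, u_i]$ is a column concatenation with $A_i \in \mathbb{R}^{h \times d}$ and $u_i \in \mathbb{R}^h$, the transformation gives
\begin{equation}
\widetilde{A}_i \;\longmapsto\; [P A_i, P u_i] \;=\; P [A_i, u_i] \;=\; P \widetilde{A}_i.
\end{equation}
For $\widetilde{B}_i = [B_i, v_i]$ with $B_i \in \mathbb{R}^{d \times h}$ and $v_i \in \mathbb{R}^d$, the bias $v_i$ is untouched while the columns of $B_i$ are permuted. Setting $P' = \begin{pmatrix} P & 0 \\ 0 & 1 \end{pmatrix} \in \mathbb{R}^{(h+1) \times (h+1)}$, one checks
\begin{equation}
\widetilde{B}_i \;\longmapsto\; [B_i P^\top, v_i] \;=\; \widetilde{B}_i \, P'^{\top},
\end{equation}
and $P'$ is again a permutation matrix, so $P'^{\top} P' = I$.

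The main (and only nontrivial) step is then a direct computation on the Gram-like objects:
\begin{equation}
(P \widetilde{A}_i)^\top (P \widetilde{A}_i) = \widetilde{A}_i^\top P^\top P \widetilde{A}_i = \widetilde{A}_i^\top \widetilde{A}_i,
\end{equation}
\begin{equation}
(\widetilde{B}_i P'^{\top})(\widetilde{B}_i P'^{\top})^\top = \widetilde{B}_i P'^{\top} P' \widetilde{B}_i^\top = \widetilde{B}_i \widetilde{B}_i^\top.
\end{equation}
The same identities hold for $\widetilde{A}'_j, \widetilde{B}'_j$ under an independent permutation $Q$ of the hidden units of expert $j$ in $\phi'$. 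Substituting into Equation~\eqref{eq:gram}, every Frobenius-norm term is left unchanged, so $C_{i,j}$ is invariant under arbitrary hidden-unit permutations applied to any subset of experts in either MoE.

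I do not anticipate a real obstacle here; the argument is essentially bookkeeping. The only subtlety worth stating explicitly in the write-up is the need to augment $P$ to $P'$ when handling $\widetilde{B}_i$, since the bias column $v_i$ must remain fixed while only the hidden columns are permuted. This is why the design of \eqref{eq:gram} uses $\widetilde{A}^\top \widetilde{A}$ on the input side (the hidden index is the row index, so left multiplication by $P$ cancels) and $\widetilde{B} \widetilde{B}^\top$ on the output side (the hidden index is the column index, so right multiplication by $P'^{\top}$ cancels), and this asymmetry should be highlighted at the end of the proof.
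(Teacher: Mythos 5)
Your proposal is correct and follows essentially the same route as the paper's own proof: you track the hidden-unit permutation as $\widetilde{A}_i \mapsto P\widetilde{A}_i$ and $\widetilde{B}_i \mapsto \widetilde{B}_i \bigl(\begin{smallmatrix} P^\top & 0 \\ 0 & 1 \end{smallmatrix}\bigr)$ and cancel $P^\top P = I$ inside each Gram matrix, exactly as the paper does. The only additions are cosmetic (recalling why the expert function itself is invariant, and remarking on the $\widetilde{A}^\top\widetilde{A}$ versus $\widetilde{B}\widetilde{B}^\top$ asymmetry), so no further changes are needed.
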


\begin{proof}
For each expert $i$ in the first Mixture-of-Experts (MoE) model $\phi$, let $A_i \in \mathbb{R}^{h \times d}$ and $u_i \in \mathbb{R}^h$ be the weight matrix and bias for the first layer, and $B_i \in \mathbb{R}^{d \times h}$ and $v_i \in \mathbb{R}^d$ be the weight matrix and bias for the second layer. Define the augmented matrices $\widetilde{A}_i = [A_i, u_i] \in \mathbb{R}^{h \times (d + 1)}$ and $\widetilde{B}_i = [B_i, v_i] \in \mathbb{R}^{d \times (h + 1)}$. Similarly, for each expert $j$ in the second MoE model $\phi'$, define $A'_j$, $u'_j$, $B'_j$, $v'_j$, and the augmented matrices $\widetilde{A}'_j$, $\widetilde{B}'_j$ accordingly. We define the Gram matrices $(\widetilde{A}_i)^\top \widetilde{A}_i \in \mathbb{R}^{(d + 1) \times (d + 1)}$ and $\widetilde{B}_i (\widetilde{B}_i)^\top \in \mathbb{R}^{d \times d}$ for each expert $i$ in $\phi$, and similarly for $\phi'$.

To establish permutation invariance, consider a permutation matrix $P \in \mathbb{R}^{h \times h}$ applied to the hidden units of expert $i$ in $\phi$. For the first layer, the permuted augmented weight matrix is $P \widetilde{A}_i$. For the second layer, the permutation affects the columns of $B_i$ within $\widetilde{B}_i = [B_i, v_i]$, so the transformed matrix is $\widetilde{B}_i \begin{bmatrix} P^\top & 0 \\ 0 & 1 \end{bmatrix}$, leaving the bias $v_i$ unchanged. We now verify the invariance of the Gram matrices under this permutation. For the first layer:
\begin{equation}
(P \widetilde{A}_i)^\top (P \widetilde{A}_i) = (\widetilde{A}_i)^\top P^\top P \widetilde{A}_i = (\widetilde{A}_i)^\top I \widetilde{A}_i = (\widetilde{A}_i)^\top \widetilde{A}_i,
\end{equation}
since $P^\top P = I$, the $h \times h$ identity matrix. For the second layer:
\begin{align}
\left( \widetilde{B}_i \begin{bmatrix} P^\top & 0 \\ 0 & 1 \end{bmatrix} \right) \left( \widetilde{B}_i \begin{bmatrix} P^\top & 0 \\ 0 & 1 \end{bmatrix} \right)^\top
&= \widetilde{B}_i \begin{bmatrix} P^\top & 0 \\ 0 & 1 \end{bmatrix} \begin{bmatrix} P & 0 \\ 0 & 1 \end{bmatrix} \widetilde{B}_i^\top \notag \\
&= \widetilde{B}_i \begin{bmatrix} P^\top P & 0 \\ 0 & 1 \end{bmatrix} \widetilde{B}_i^\top \notag \\
&= \widetilde{B}_i \begin{bmatrix} I & 0 \\ 0 & 1 \end{bmatrix} \widetilde{B}_i^\top \notag \\
&= \widetilde{B}_i \widetilde{B}_i^\top,
\end{align}
where $\begin{bmatrix} I & 0 \\ 0 & 1 \end{bmatrix}$ is the $(h + 1) \times (h + 1)$ identity matrix. This confirms that both ($\widetilde{A}_i)^\top \widetilde{A}_i$ and $\widetilde{B}_i (\widetilde{B}_i)^\top$ are invariant under permutations of the hidden units.

The cost matrix $C \in \mathbb{R}^{n \times n}$ for the Linear Assignment Problem (LAP) is $C = \{C_{i,j}\}_{1 \le i \le n, 1 \le j \le n}$, where
\begin{equation}
C_{i,j} = \left( \left\| (\widetilde{A}_i)^\top \widetilde{A}_i - (\widetilde{A}'_j)^\top \widetilde{A}'_j \right\|_F^2 + \left\| \widetilde{B}_i (\widetilde{B}_i)^\top - \widetilde{B}'_j (\widetilde{B}'_j)^\top \right\|_F^2 \right)^{\frac{1}{2}}.
\end{equation}
Since the Gram matrices $(\widetilde{A}_i)^\top \widetilde{A}_i$ and $\widetilde{B}_i (\widetilde{B}_i)^\top$ (and their counterparts in $\phi'$) are permutation-invariant, their differences and the Frobenius norms of these differences are also invariant. Consequently, $C_{ij}$ remains unchanged under permutations of the hidden units in either $\phi$ or $\phi'$.

Thus, the cost function is permutation-invariant with respect to the hidden units of the experts, completing the proof.
\end{proof}


\subsection{Formal formulation of DeepSeekMoE}
\label{appendix:deepseek}

For completeness, we present the notation and formulations of three variants of MoE models: Dense MoE, Sparse MoE (SMoE), and DeepSeekMoE.

\textbf{Expert.} Given $d$ denoting the input dimension. We define an \textit{Expert} as a function $ \mathcal{E}(\cdot; \theta) \colon \mathbb{R}^d \to \mathbb{R}^d $, parameterized by $ \theta \in\mathbb{R}^e $ with $ e \in \mathbb{N} $ is the total number of trainable parameters of $\mathcal{E}$. In this work, we consider each expert $ \mathcal{E}(\cdot; \theta) $ to be a feedforward neural network with ReLU activation functions. Unless stated otherwise, all experts are assumed to share the same architecture.

\textbf{Mixture of Expert with dense gating.} Given $n$ denoting the number of experts, we define \emph{Mixture-of-Experts with dense gating} as a function $\mathcal{D} \colon \mathbb{R}^d \to \mathbb{R}^d$ such that
\begin{align} 
    &\mathcal{D}\big(x; \{W_i,b_i,\theta_i\}_{i=1}^{n}\big) = \sum_{i=1}^{n}\text{softmax}_i\big(s_1(x), \ldots, s_n(x)\big) \mathcal{E}(x;\theta_i),
\end{align}
where $s_i(x) = W_ix + b_i$ determines the contribution of each expert to the final output. Here, $\theta_i$ are the parameters of the $i^{\text{th}}$ expert. The function $s = (s_1, \ldots, s_n)$ is called the \textit{gating score}, and is parameterized as $s(\cdot;\{W_i,b_i\}_{i=1}^n)$ with $ (W_i,b_i) \in \mathbb{R}^{d} \times \mathbb{R}$ are the corresponding \textit{gating} parameters. 

\textbf{Mixture-of-Experts with sparse gating. } Given a positive integer $k\le n$ denoting the number of activated experts, define the $\mathrm{Top}\text{-}k$ map by $\text{Top-}k(z) = \{i_1, \ldots, i_k\}$ for $z = (z_1, \ldots, z_n)\in\mathbb{R}^n$, where $i_1, \ldots, i_k$ are the indices corresponding to the $k$ largest components of $x$.  In the event of ties, we select smaller indices first. We define a \textit{Mixture-of-Experts with sparse gating} (SMoE) as a function $\mathcal{S} \colon \mathbb{R}^d \to \mathbb{R}^d$ as follows. For $x \in \mathbb{R}^d$, let $T(x) = \text{Top-}k(s(x))$, then define:
\begin{align}
    &\mathcal{S}\big(x; \{W_i,b_i,\theta_i\}_{i=1}^{n}\big) = \sum_{i \in T(x)}\operatorname{softmax}_i\Big(\big(s_i(x)\big)_{i \in T(x)} \Big) \cdot \mathcal{E}(x;\theta_i)
\end{align}
In other words, the Top-$k$ selects the $k$ highest-scoring experts used to compute the output.

\textbf{DeepseekMoE with shared and routed experts.} The DeepseekMoE architecture extends the sparse Mixture-of-Experts by incorporating both shared experts that are always active and routed experts that are sparsely activated based on the input. Given positive integers $n_s$ and $n_r$ denoting the number of shared and routed experts,  respectively ($n = n_s+n_r$ as the total number of experts), and a positive integer $k_r \leq n_r$ denoting the number of activated routed experts, the DeepseekMoE layer is defined as a function $\mathcal{M} \colon \mathbb{R}^d \to \mathbb{R}^d$ such that
\begin{align}
\label{main:eq-deepseekmoe}
\mathcal{M}\big(x; \{\theta_i^{(s)}\}_{i=1}^{n_s}, \{W_i, b_i, \theta_i\}_{i=1}^{n_r}\big) = \sum_{i=1}^{n_s} \mathcal{E}_i^{(s)}(x; \theta_i^{(s)}) + \sum_{i=1}^{n_r} g_i(x) \mathcal{E}(x; \theta_i^{(r)}),
\end{align}

where:
\begin{itemize}
    \item $\mathcal{E}_i^{(s)}(\cdot; \theta_i^{(s)}) \colon \mathbb{R}^d \to \mathbb{R}^d$ are the shared experts, parameterized by $\theta_i^{(s)}$,
    \item $\mathcal{E}_i^{(r)}(\cdot; \theta_i^{(r)}) \colon \mathbb{R}^d \to \mathbb{R}^d$ are the routed experts, parameterized by $\theta_i^{(r)}$,
    \item The gating values $g_i(x)$ are computed as follows:
    \begin{align*}
        \tilde{s}(x) &= \text{softmax}\big(s_1(x), \ldots, s_{n_{r}}(x)\big) \in \mathbb{R}^{n_r}, \\
        g_i(x) &= \begin{cases} 
            \tilde{s}_i(x), & \text{if } i \in \text{Top-}k_r(\tilde{s}(x)) \\
            0, & \text{otherwise}
        \end{cases}
    \end{align*}
\end{itemize}
This formulation allows the model to leverage a combination of always-active shared experts and input-dependent routed experts, enhancing efficiency and performance in large-scale neural networks.
\section{Impact of Feedforward Reinitialization on Pretrained Transformer Performance}\label{appendix:ffn_reinit}
We empirically investigate the sensitivity of pretrained Transformer models to targeted parameter reinitialization within their Feedforward Network (FFN) modules. The FFN, a critical component for non-linear transformations and feature representation within each Transformer block, was selectively reinitialized layer by layer to assess its functional contribution and the robustness of the overall model architecture. This analysis aims to identify the extent to which performance is dependent on the learned parameters of individual FFNs and to potentially highlight layers that are more critical to maintaining task proficiency. To this end, experiments were conducted on two representative model-task pairs:
\begin{enumerate}
\item Image Classification using a Vision Transformer (ViT-Base, patch size 16, image resolution 224)  pretrained on ImageNet.
\item Language Modeling using a GPT-2 model (12-layer, hidden size 768) pretrained on WikiText103.
\end{enumerate}

For each model, the parameters of the FFN subcomponent within a single Transformer block were reinitialized using a standard initialization scheme (e.g., variance scaling), while all other model parameters, including attention weights and the FFNs in other layers, were kept frozen from the pretrained state. Following reinitialization, the models were evaluated on their respective tasks without any subsequent fine-tuning. This approach isolates the immediate effect of disrupting a single FFN on pretrained performance.

Figures \ref{fig:imagenet_moe_idx} and \ref{fig:wikitext103_moe_idx} illustrate the performance degradation observed when reinitializing the FFN at different layers for the ViT and GPT-2 models, respectively. A striking observation is the significantly larger performance drop (increase in loss, decrease in accuracy) when reinitializing the FFN in the first layer (Layer 0) compared to any subsequent layer. While reinitializing FFNs in layers beyond the first does result in performance degradation, this impact is markedly less severe than at Layer 0. The results further reveal a non-uniform sensitivity profile among these subsequent layers, with FFN reinitialization in middle layers tending to induce a more pronounced performance decrement than in later layers. This pattern suggests a unique functional importance or sensitivity associated with the initial layer's FFN, while the network's architecture, particularly the presence of skip connections, appears to enable a greater degree of resilience to disruption in FFNs at deeper layers.

\begin{figure}[htp]
\centering
\includegraphics[width=1.0\linewidth]{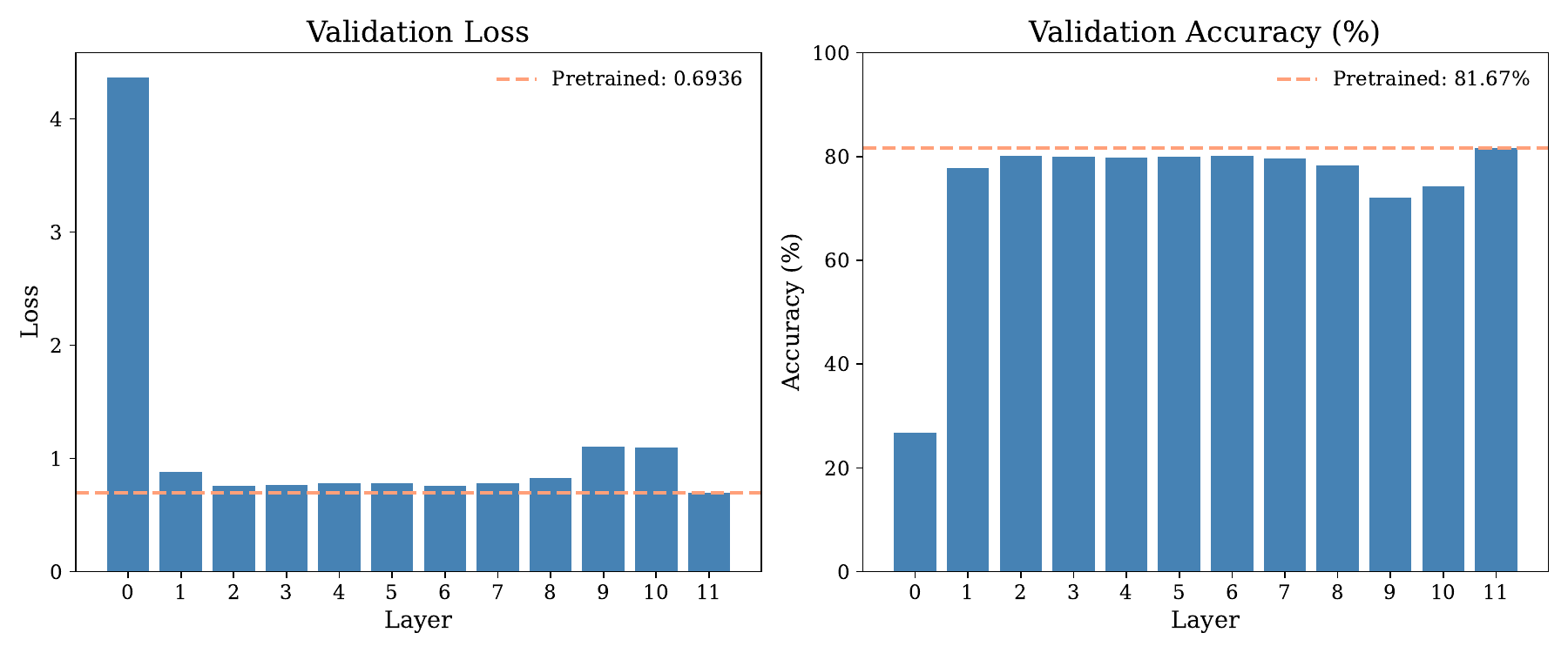}
\caption{Performance degradation in ViT-Base on ImageNet due to FFN reinitialization at different layers.}
\label{fig:imagenet_moe_idx}
\end{figure}

\begin{figure}[htp]
\centering
\includegraphics[width=1.0\linewidth]{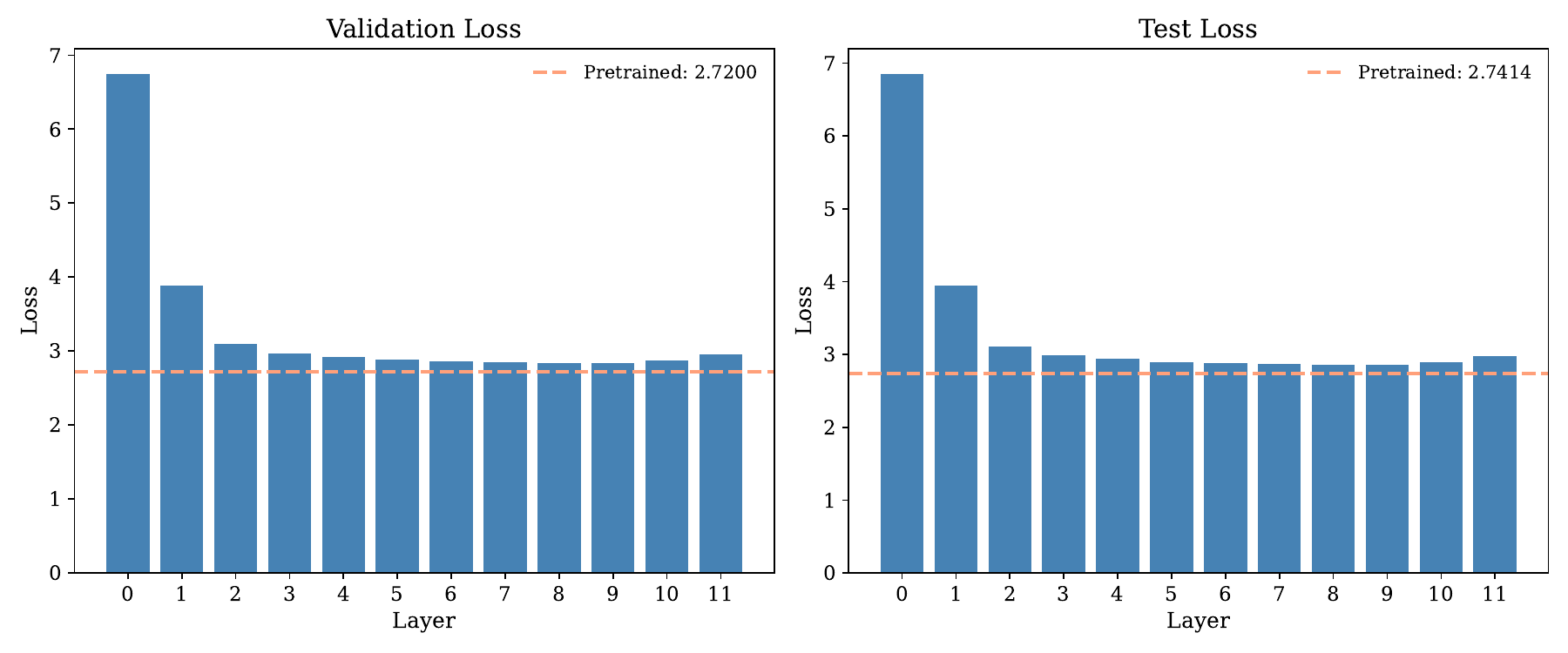}
\caption{Effect of FFN reinitialization on GPT-2 perplexity across layers on WikiText103.}
\label{fig:wikitext103_moe_idx}
\end{figure}
A key observation from these experiments, particularly for deeper and more complex models with skip connections, is that reinitializing a single FFN (except potentially the very first one) often does not lead to a catastrophic performance collapse. This phenomenon can be partially attributed to the presence of skip connections, which facilitate the unimpeded flow of information across layers, allowing features learned by other parts of the network to bypass the disrupted FFN. Furthermore, standard weight initialization schemes typically employ relatively small scales centered around zero, meaning the reinitialized FFN parameters introduce a perturbation that is initially small and balanced compared to the potentially large and specialized weights learned during pretraining. This combination of architectural properties (skip connections) and initialization characteristics helps the model maintain functionality, suggesting that the reinitialized state may remain within or close to the original optimization basin found during pretraining. Previous work \cite{li2018visualizing} has highlighted how skip connections contribute to flatter minima and improve training stability in deep networks, which aligns with the observation that reinitialization causes less disruption than might intuitively be expected.

To further investigate the generality of this observation, we also conducted FFN reinitialization experiments on tasks commonly used for evaluating model robustness and optimization landscape properties, including text classification tasks (IMDB Review \cite{maas2011learning}, AGNEWS \cite{zhang2015character}, DBPedia \cite{auer2007dbpedia}) and additional language modeling tasks (EnWik8 \cite{hutter2012human}, Penn Treebank (Word Level) \cite{marcus1993building}). Table \ref{tab:ffn_reinitialize} presents a summary of the performance changes on these datasets after reinitializing an arbitrary single FFN layer compared to the original pretrained performance, and also shows performance after subsequent fine-tuning.

\begin{table}[htp]
    \centering
 \caption{Effect of single FFN reinitialization on various tasks. "Pretrained" indicates performance of the original model. "Reinitialize" shows performance after reinitializing a single FFN layer. "Finetune" shows performance after fine-tuning the reinitialized model. "Naive Loss Barrier" refers to the loss barrier observed between two fine-tuned models initialized with different random seeds and trained with different batch orders.}    \label{tab:ffn_reinitialize}
 \vspace{6pt}
\begin{adjustbox}{width=1.0\textwidth}
    \begin{tabular}{lccccccc}
\toprule
 Dataset &  Pretrained   & Pretrained  &   Reinitialize   & Reinitialize  & Finetune  & Finetune  & Naive Loss  \\
&  Loss &  Accuracy &   Loss & Accuracy & Loss &  Accuracy &  Barrier \\
\midrule
IMDB Review &0.3724&87.50 &0.6452&68.75&0.4284 & 87.50 &0.0019\\
AGNEWS &0.2849 &90.50&0.6006&76.64&0.2986&90.73&0.0007\\
DBPedia &0.1896&93.75&0.3017&87.50&0.2036&94.47&0.0003\\
EnWik8 &0.9684&-&1.3053&-&0.9506&-&0.0082\\
Penn Treebank &4.5246&-&6.9699&-&4.4639&-&0.0454\\
\bottomrule
\end{tabular}
\end{adjustbox}
\end{table}
As shown in Table \ref{tab:ffn_reinitialize}, while reinitializing a single FFN does lead to performance degradation on these tasks (e.g., loss increases by approximately 0.1 to 0.3), the magnitude of this change is relatively modest compared to the significant loss increases observed in experiments involving more widespread reinitialization or on larger tasks like GPT-2 on WikiText103 or ViT on ImageNet (where full model reinitialization can increase loss by several units). The relatively small performance drop suggests that for these specific datasets and this focused reinitialization strategy, the model's state remains close to its original pretrained optimum. Consequently, metrics used to assess mode connectivity between the original and reinitialized states, such as the Naive Loss Barrier (shown in Table \ref{tab:ffn_reinitialize}), yield very low values. This indicates that a simple linear path in parameter space between the original and reinitialized single-FFN models exhibits low loss values. However, due to the minor performance impact of the reinitialization itself, this low barrier value may primarily reflect the fact that the reinitialized state is already highly functional and located within the same or a very accessible optimization basin, rather than necessarily implying a broadly flat landscape between drastically different high-performing modes. Therefore, while consistent with observations of flat minima facilitated by skip connections \cite{li2018visualizing}, these specific reinitialization experiments on these datasets may not provide deep insights into complex mode connectivity far from the original optimum.

In summary, these findings highlight the heterogeneous functional roles of FFNs across different layers in Transformer models and demonstrate varying degrees of sensitivity to parameter reinitialization. The relative robustness observed for many layers, especially in deep models, is consistent with the structural benefits of skip connections and the properties of standard initialization. This layer-wise sensitivity analysis contributes to a better understanding of Transformer architecture and has potential implications for model compression techniques (e.g., identifying less critical FFNs), efficient fine-tuning strategies, and the design of conditional computation mechanisms like Mixture-of-Experts (MoE) routing.

\section{Experimental Details and Hyperparameters}
\label{appendix:hyperparams}

We conduct a comprehensive evaluation of Linear Mode Connectivity (LMC) across a broad suite of vision and language modeling benchmarks. For vision tasks, our study includes MNIST, CIFAR-10, CIFAR-100, ImageNet-1k, as well as transfer learning scenarios from ImageNet-21k to CIFAR-10 and CIFAR-100. For language modeling, we utilize WikiText103 and the One Billion Word dataset. All experiments are performed using pretrained Transformer-based architectures, wherein all original model parameters are frozen and only the inserted Mixture-of-Experts (MoE) layers are subject to fine-tuning.

For vision tasks, we adopt Vision Transformer (ViT) backbones, while GPT-2 serves as the backbone for language modeling. Model configurations--including architecture depth, width, and tokenization context--are adjusted per dataset to reflect task-specific complexity and scale. Each expert module is architecturally aligned with the original feedforward network (MLP) layers of the pretrained model. To promote stable convergence, learning rates are independently tuned for each setting within the range of $10^{-4}$ to $10^{-1}$.

\textbf{MNIST}  We utilize the original MNIST grayscale images, each resized into a grid of non-overlapping patches with a patch size of 7. A lightweight Vision Transformer (ViT) model is employed, configured with an embedding dimension of 32 and a depth of 1 to 2 Transformer layers, depending on the specific setup. Each self-attention layer uses 4 attention heads to process the patch embeddings. Dropout is applied at a rate of 0.0 and the activation function throughout the network is \texttt{gelu}. The model is optimized using stochastic gradient descent (SGD) during both the pretraining and fine-tuning stages.

\textbf{CIFAR-10.} We utilize the original CIFAR-10 images, each divided into non-overlapping patches with a patch size of 4. A Vision Transformer (ViT) model is employed, configured with an embedding dimension of 128, 8 self-attention heads, and a depth ranging from 2 to 6 Transformer layers. Dropout is applied at a rate of 0.0, and the activation function throughout the network is \texttt{gelu}. The model is optimized using the Adam optimizer during pretraining and stochastic gradient descent (SGD) during fine-tuning.

\textbf{CIFAR-100.} We resize CIFAR-100 images to $224\times224$ and divide them into non-overlapping patches with a patch size of 16. A Vision Transformer (ViT) model is employed, configured with an embedding dimension of 384, 6 self-attention heads, and a depth ranging from 6 to 12 Transformer layers. Dropout is applied at a rate of 0.0, and the activation function throughout the network is \texttt{gelu}. The model is optimized using the Adam optimizer during pretraining and stochastic gradient descent (SGD) during fine-tuning.

\textbf{Imagenet21k$\to$CIFAR10.} We adopt the ViT-Small-Patch16-224 model \citep{steiner2021train}, pretrained on ImageNet-21k and subsequently fine-tuned on CIFAR-10. The model consists of 12 layers, a hidden size of 384, an MLP size of 1536, and 6 attention heads, resulting in approximately 22.2M parameters. It employs a patch size and stride of 16. Dropout is disabled (set to 0.0), and the activation function is \texttt{gelu}. Stochastic Gradient Descent (SGD)  is employed during fine-tuning.

\textbf{Imagenet21k$\to$CIFAR100.} We adopt the ViT-Small-Patch16-224 model \citep{steiner2021train}, pretrained on ImageNet-21k and subsequently fine-tuned on CIFAR-100. The model consists of 12 layers, a hidden size of 384, an MLP size of 1536, and 6 attention heads, resulting in approximately 22.2M parameters. It employs a patch size and stride of 16. The attention probability dropout and hidden layer dropout are set to 0.0, and the activation function is \texttt{gelu}. The SGD optimizer is employed during fine-tuning.

\textbf{ImageNet-1k.} We use the ViT-Base-Patch16-224 model \citep{dosovitskiy2020image}, consisting of 12 Transformer layers, each with a hidden size of 768, an MLP hidden dimension of 3072, and 12 self-attention heads. The total parameter count is approximately 86M. The encoder employs a patch size and stride of 16, and the \texttt{gelu} activation function is applied throughout. Both the attention dropout and hidden dropout rates are set to 0.0. The model is pretrained for 300 epochs with a linear warm-up of 10{,}000 steps, followed by cosine decay scheduling. Fine-tuning is performed for 30 epochs without warm-up. Optimization is carried out using the Adam optimizer during both pretraining and MoE-layer fine-tuning stages with identical hyperparameter settings.

\textbf{WikiText103.} We adopt a GPT-2 model architecture consisting of 12 layers with 12 attention heads and an embedding dimension of $768$. The context length and maximum position embeddings are set to 1024. The dropout rate is set to 0.1, and the activation function is \texttt{gelu}. The vocabulary size is 50{,}257. The model uses the standard GPT-2 initialization and layer normalization settings. The Adam optimizer is employed during both pretraining and fine-tuning.

\textbf{One Billion Word.} Similar to WikiText103, the GPT-2 model comprises 12 transformer layers, 12 attention heads, and an embedding dimension of $768$, but with a reduced context and positional length of 256. The dropout rate and activation function remain identical to those in the WikiText103 setup. The vocabulary size is expanded to 793470 to capture the dataset’s linguistic diversity. The model is first pretrained for 500000 steps using the Adam optimizer, followed by a fine-tuning phase of 100000 steps with a 2000-step linear warm-up schedule. Both stages share the same optimization hyperparameters unless otherwise noted.

All experiments are executed on a single NVIDIA H100 GPU with 80GB of memory, except for the One Billion Word task, which utilizes two H100 GPUs. Due to the use of the JAX framework, approximately 75\% of GPU memory (around 60GB) is pre-allocated by default. The number of CPU workers used in data loading and preprocessing is kept less than or equal to 10 across all experiments. For smaller-scale tasks such as MNIST, CIFAR-10, CIFAR-100, and transfer learning from ImageNet-21k, each experiment completes in under 30 minutes. For larger-scale tasks, the fine-tuning durations are approximately 15 hours for ImageNet-1k, 3.5 hours for WikiText103, and 4 hours for One Billion Word.

\section{Experimental Results}

\subsection{Verification of Linear Mode Connectivity across diverse configurations}
To rigorously evaluate the robustness and generality of Linear Mode Connectivity (LMC) in expert-based Transformer architectures, we conducted systematic experiments across a wide range of Mixture-of-Experts (MoE) configurations. Our study includes three representative variants--vanilla MoE, SMoE, and DeepSeekMoE--which differ in both architectural structure and expert routing mechanisms. In all cases, we substituted the feed-forward network (FFN) of the \textit{first} Transformer layer with an MoE module, while leaving the rest of the model architecture unchanged. All models were fine-tuned from identical random initializations to ensure consistency in comparison.

Table~\ref{tab:lmc_experimental_first_full} summarizes the experimental conditions, covering diverse model depths (2, 6, 12 layers), task domains (including vision benchmarks such as MNIST, CIFAR-10, ImageNet21k$\rightarrow$CIFAR-100, and language modeling with WikiText103), and expert configurations. SMoE employs a sparsity level of $k=2$ active experts per token, whereas DeepSeekMoE further incorporates a shared expert ($s=1$) to encourage parameter reuse and stability.
Across nearly all configurations, LMC consistently emerges: linear interpolation between independently fine-tuned models (initialized identically) yields smooth loss trajectories, with no significant barriers. This suggests that expert-based models, even with dynamic routing and varying capacities, tend to converge to connected optima under matched training setups. The corresponding loss curves, linked in Table~\ref{tab:lmc_experimental_first_full}, reinforce this observation.

These results provide strong empirical evidence that LMC is a robust and general phenomenon in expert architectures, supporting the hypothesis that expert specialization and routing do not disrupt the connected geometry of the optimization landscape when alignment in initialization and architecture is preserved.

\begin{table}[t!]
    \caption{Experimental configurations for LMC analysis across MoE, SMoE, and DeepSeekMoE variants, evaluated on multiple datasets and settings. Datasets of the form $A \rightarrow B$ denote pretraining on $A$ and finetuning on $B$. SMoE uses $k = 2$ active experts, while DeepSeekMoE uses $k = 2$ with an additional shared expert ($s = 1$). In all cases, the MLP in the \textit{first} Transformer layer is replaced with an MoE module.} 
    \label{tab:lmc_experimental_first_full}
    \medskip
    \centering
    \renewcommand*{\arraystretch}{1.3}
    \begin{adjustbox}{width=0.9\textwidth}
    \begin{tabular}{llcllll}    
        \toprule
        Method & Dataset&No. layers&No. experts&Figure \\
        \midrule
        MoE & MNIST & 1 & [2, 4] & [\ref{fig:moe-mnist-1-2}, \ref{fig:moe-mnist-1-4}] \\
        & & 2 & [2, 4] & [\ref{fig:moe-mnist-2-2}, \ref{fig:moe-mnist-2-4}] \\
        & CIFAR-10 & 2 & [2, 4, 6] & [\ref{fig:moe-cifar10-2-2}, \ref{fig:moe-cifar10-2-4}, \ref{fig:moe-cifar10-2-6}] \\
        & & 6 & [2, 4, 6] & [\ref{fig:moe-cifar10-6-2}, \ref{fig:moe-cifar10-6-4}, \ref{fig:moe-cifar10-6-6}] \\
        & CIFAR-100 & 6 & [2, 4, 6] & [\ref{fig:moe-cifar100-6-2}, \ref{fig:moe-cifar100-6-4}, \ref{fig:moe-cifar100-6-6}] \\
        & ImageNet-21k$\rightarrow$CIFAR-10 & 12 & [2, 4, 6] & [\ref{fig:moe-imagenet21k-cifar10-12-2}, \ref{fig:moe-imagenet21k-cifar10-12-4}, \ref{fig:moe-imagenet21k-cifar10-12-6}] \\
        & ImageNet-21k$\rightarrow$CIFAR-100 & 12 & [2, 4, 6] & [\ref{fig:moe-imagenet21k-cifar100-12-2}, \ref{fig:moe-imagenet21k-cifar100-12-4}, \ref{fig:moe-imagenet21k-cifar100-12-6}] \\
        &ImageNet-1k&12&[2, 4, 6, 8]&[\ref{fig:imagenet-moe-2}, \ref{fig:imagenet-moe-4}, \ref{fig:imagenet-moe-6}, \ref{fig:imagenet-moe-8}]\\
        &WikiText103 & 12 & [2, 4, 6, 8] &[\ref{fig:wikitext103-moe-2}, \ref{fig:wikitext103-moe-4}, \ref{fig:wikitext103-moe-6}, \ref{fig:wikitext103-moe-8}] \\
        &One Billion Word & 12 & [2, 4, 6, 8] &[\ref{fig:lm1b-moe-2}, \ref{fig:lm1b-moe-4}, \ref{fig:lm1b-moe-6}, \ref{fig:lm1b-moe-8}] \\        \midrule
        
        SMoE ($k=2$) & MNIST & 1 & [4] & [\ref{fig:smoe-mnist-1-4}] \\
        & & 2 & [4] & [\ref{fig:smoe-mnist-2-4}] \\
        & CIFAR-10 & 2 & [4, 8] & [\ref{fig:smoe-cifar10-2-4}, \ref{fig:smoe-cifar10-2-8}] \\
        & & 6 & [4, 8] & [\ref{fig:smoe-cifar10-6-4}, \ref{fig:smoe-cifar10-6-8}] \\
        & CIFAR-100 & 6 & [4, 8] & [\ref{fig:smoe-cifar100-6-4}, \ref{fig:smoe-cifar100-6-8}] \\
        & ImageNet-21k$\rightarrow$CIFAR-10 & 12 & [4, 8] & [\ref{fig:smoe-imagenet21k-cifar10-12-4}, \ref{fig:smoe-imagenet21k-cifar10-12-8}] \\
        & ImageNet-21k$\rightarrow$CIFAR-100 & 12 & [4, 8] & [\ref{fig:smoe-imagenet21k-cifar100-12-4}, \ref{fig:smoe-imagenet21k-cifar100-12-8}] \\
        &ImageNet-1k&12&[4, 8, 16]&[\ref{fig:imagenet-smoe-4}, \ref{fig:imagenet-smoe-8}, \ref{fig:imagenet-smoe-16}]\\
        &WikiText103 & 12 & [4, 8, 16] &[\ref{fig:wikitext103-smoe-4}, \ref{fig:wikitext103-smoe-8}, \ref{fig:wikitext103-smoe-16}] \\
        &One Billion Word & 12 & [4, 8, 16] &[\ref{fig:lm1b-smoe-4}, \ref{fig:lm1b-smoe-8}, \ref{fig:lm1b-smoe-16}] \\
        \midrule
        
        DeepSeekMoE & MNIST & 1 & [4] & [\ref{fig:deepseek-mnist-1-4}] \\
        ($k=2, s=1$) & & 2 & [4] & [\ref{fig:deepseek-mnist-2-4}] \\
        & CIFAR-10 & 2 & [4, 8] & [\ref{fig:deepseek-cifar10-2-4}, \ref{fig:deepseek-cifar10-2-8}] \\
        & & 6 & [4, 8] & [\ref{fig:deepseek-cifar10-6-4}, \ref{fig:deepseek-cifar10-6-8}] \\
        & CIFAR-100 & 6 & [4, 8] & [\ref{fig:deepseek-cifar100-6-4}, \ref{fig:deepseek-cifar100-6-8}] \\
        & ImageNet-21k$\rightarrow$CIFAR-10 & 12 & [4, 8] & [\ref{fig:deepseek-imagenet21k-cifar10-12-4}, \ref{fig:deepseek-imagenet21k-cifar10-12-8}] \\
        & ImageNet-21k$\rightarrow$CIFAR-100 & 12 & [4, 8] & [\ref{fig:deepseek-imagenet21k-cifar100-12-4}, \ref{fig:deepseek-imagenet21k-cifar100-12-8}] \\
        &ImageNet-1k&12&[4, 8, 16]&[\ref{fig:imagenet-deepseek-4}, \ref{fig:imagenet-deepseek-8}, \ref{fig:imagenet-deepseek-16}]\\
        &WikiText103 & 12 & [4, 8, 16]&[\ref{fig:wikitext103-deepseek-4}, \ref{fig:wikitext103-deepseek-8}, \ref{fig:wikitext103-deepseek-16}]\\
        &One Billion Word& 12 & [4, 8, 16]&[\ref{fig:lm1b-deepseek-4}, \ref{fig:lm1b-deepseek-8}, \ref{fig:lm1b-deepseek-16}]\\

        \bottomrule
    \end{tabular}
    \end{adjustbox}

\end{table}

\subsubsection{Dense Mixture-of-Experts}

\begin{figure}[H]
    \centering
    \includegraphics[width=0.9\textwidth]{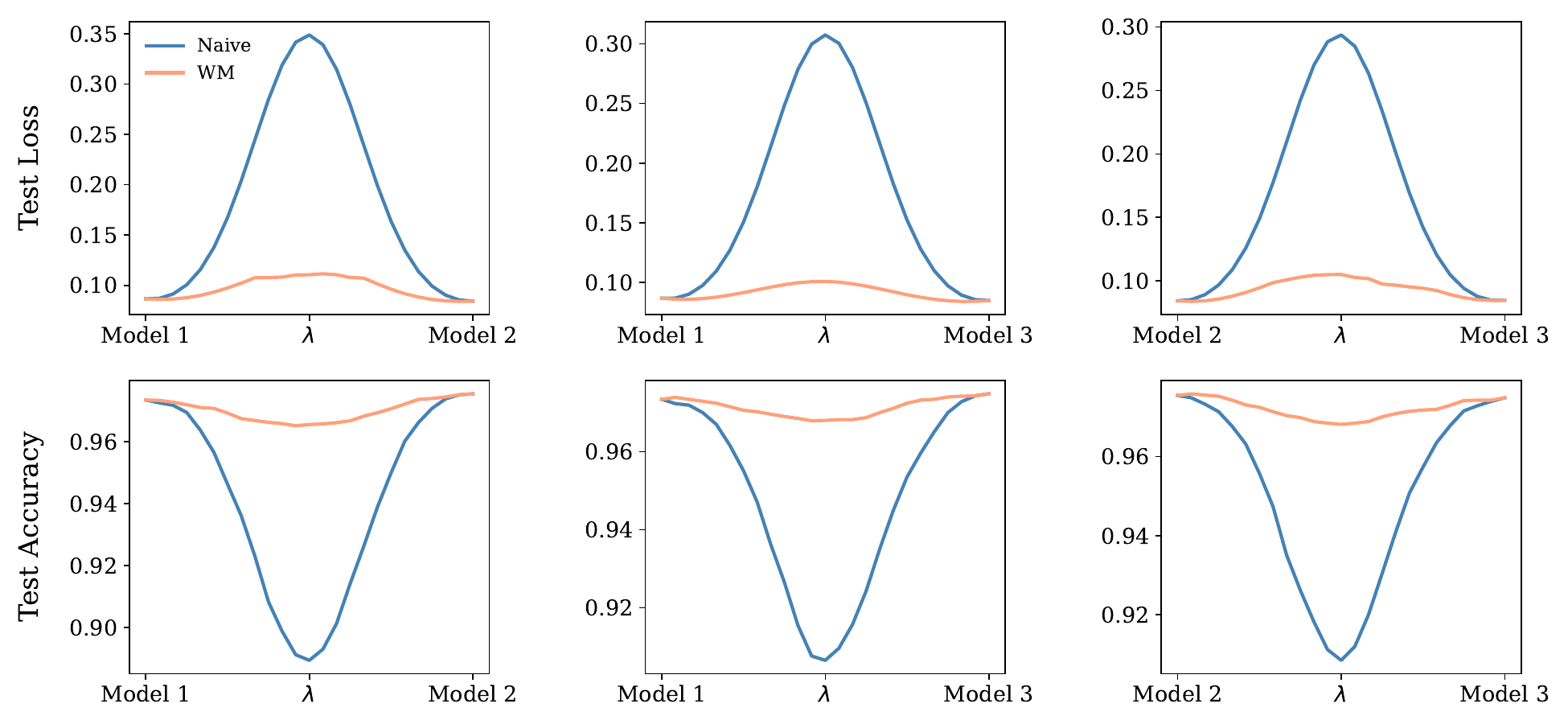} 
    \caption{Linear Mode Connectivity for ViT-MoE on MNIST with 1 layer and 2 experts}
    \label{fig:moe-mnist-1-2}
\end{figure}

\begin{figure}[H]
    \centering
    \includegraphics[width=0.9\textwidth]{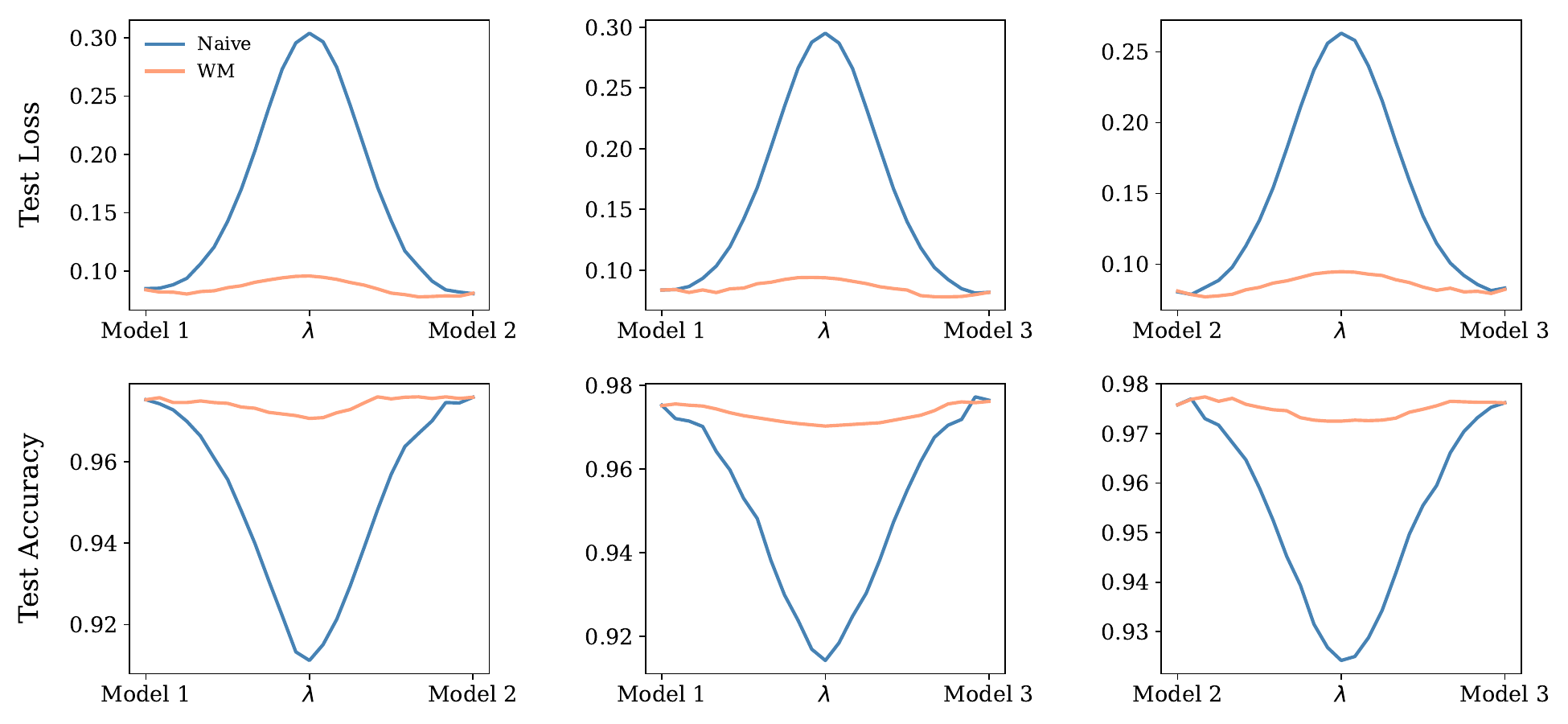}
    \caption{Linear Mode Connectivity for ViT-MoE on MNIST with 1 layer and 4 experts}
    \label{fig:moe-mnist-1-4}
\end{figure}

\begin{figure}[H]
    \centering
    \includegraphics[width=0.9\textwidth]{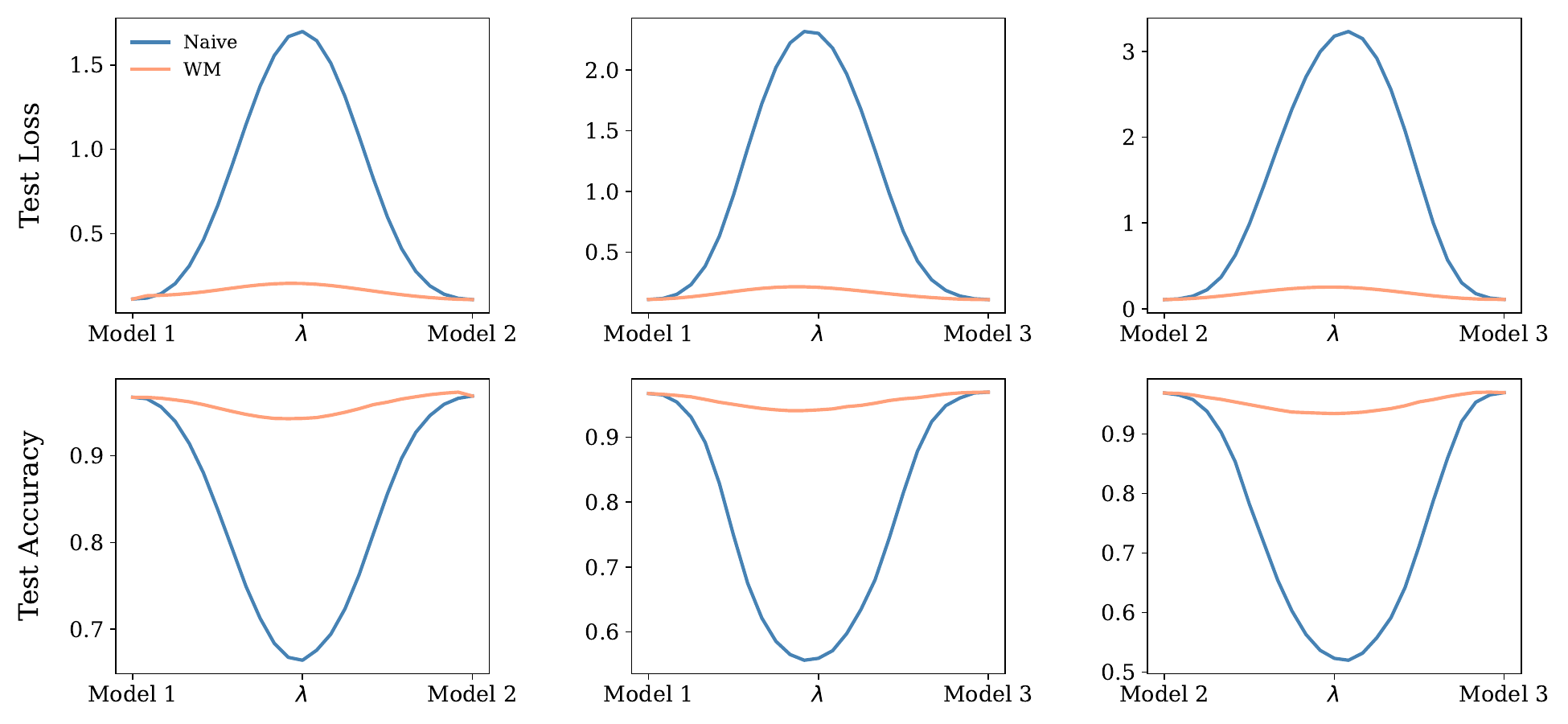} 
    \caption{Linear Mode Connectivity for ViT-MoE on MNIST with 2 layers and 2 experts}
    \label{fig:moe-mnist-2-2}
\end{figure}

\begin{figure}[H]
    \centering
    \includegraphics[width=0.9\textwidth]{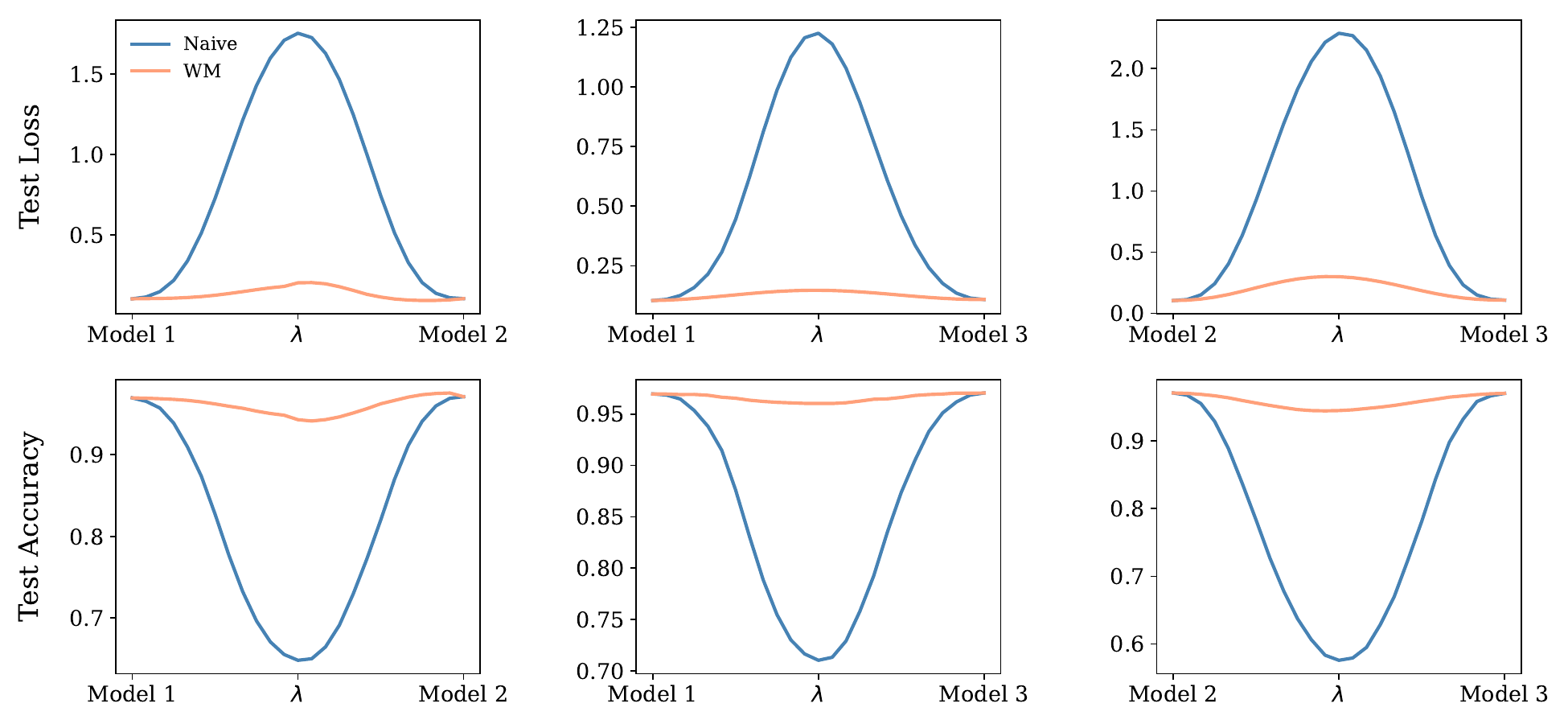} 
    \caption{Linear Mode Connectivity for ViT-MoE on MNIST with 2 layers and 4 experts}
    \label{fig:moe-mnist-2-4}
\end{figure}

\begin{figure}[H]
    \centering
    \includegraphics[width=0.9\textwidth]{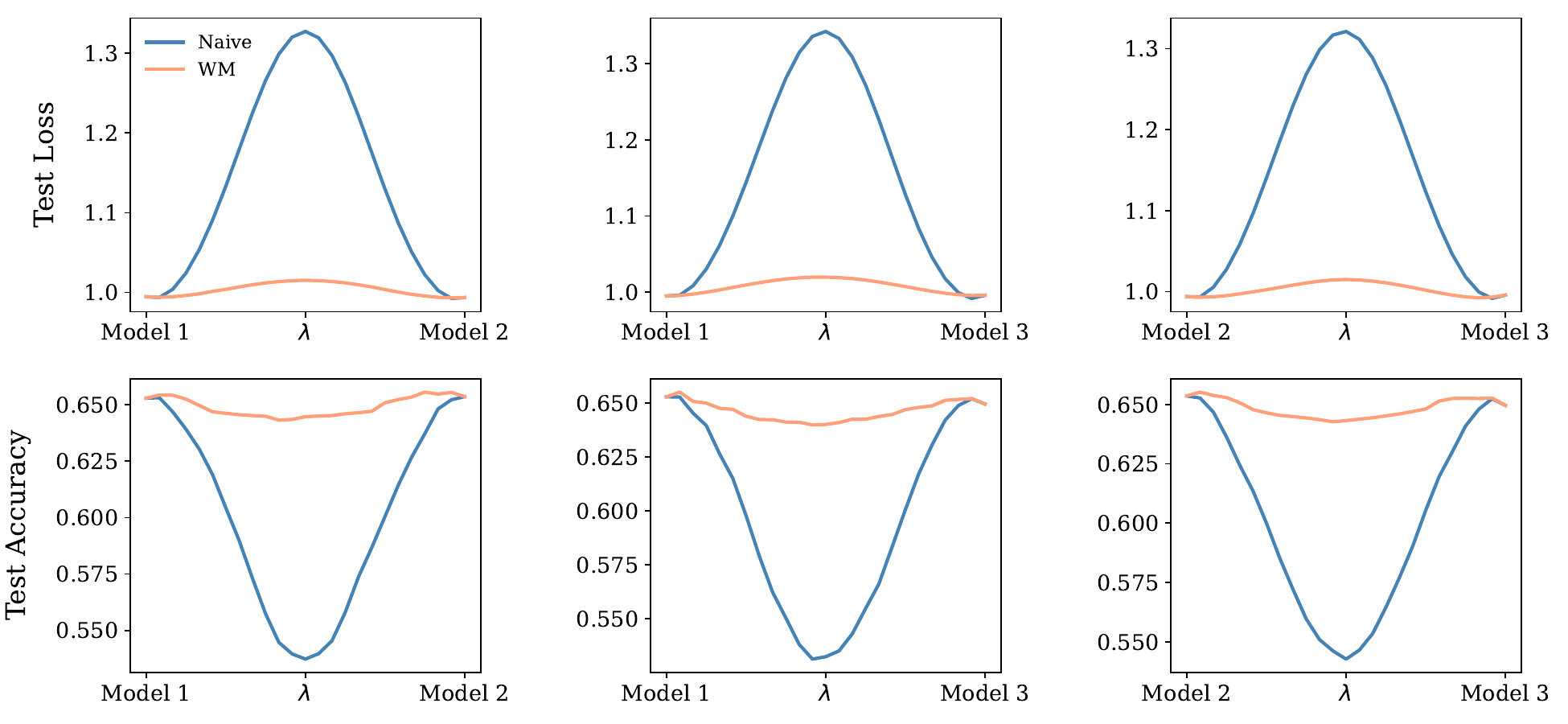} 
    \caption{Linear Mode Connectivity for ViT-MoE on CIFAR-10 with 2 layers and 2 experts}
    \label{fig:moe-cifar10-2-2}
\end{figure}

\begin{figure}[H]
    \centering
    \includegraphics[width=0.9\textwidth]{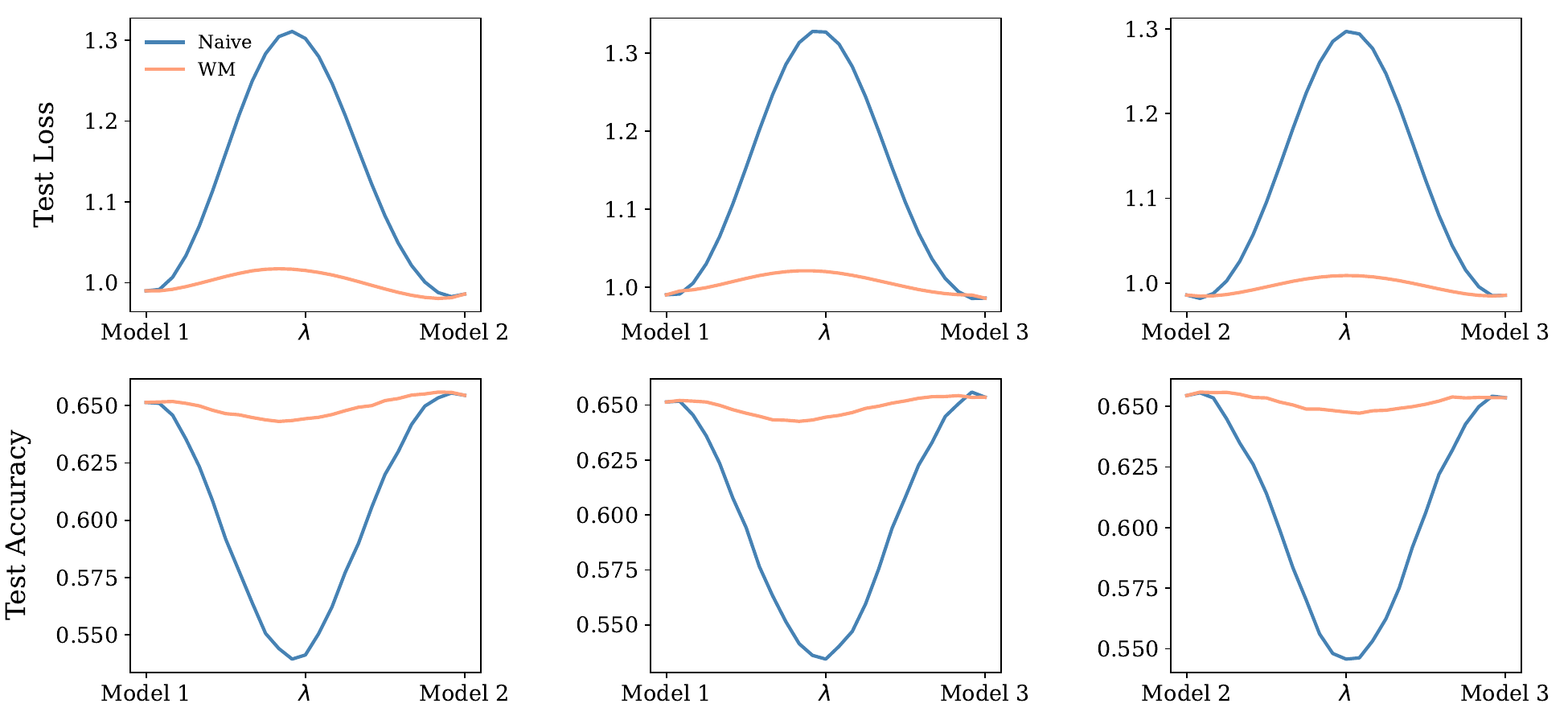} 
    \caption{Linear Mode Connectivity for ViT-MoE on CIFAR-10 with 2 layers and 4 experts}
    \label{fig:moe-cifar10-2-4}
\end{figure}

\begin{figure}[H]
    \centering
    \includegraphics[width=0.9\textwidth]{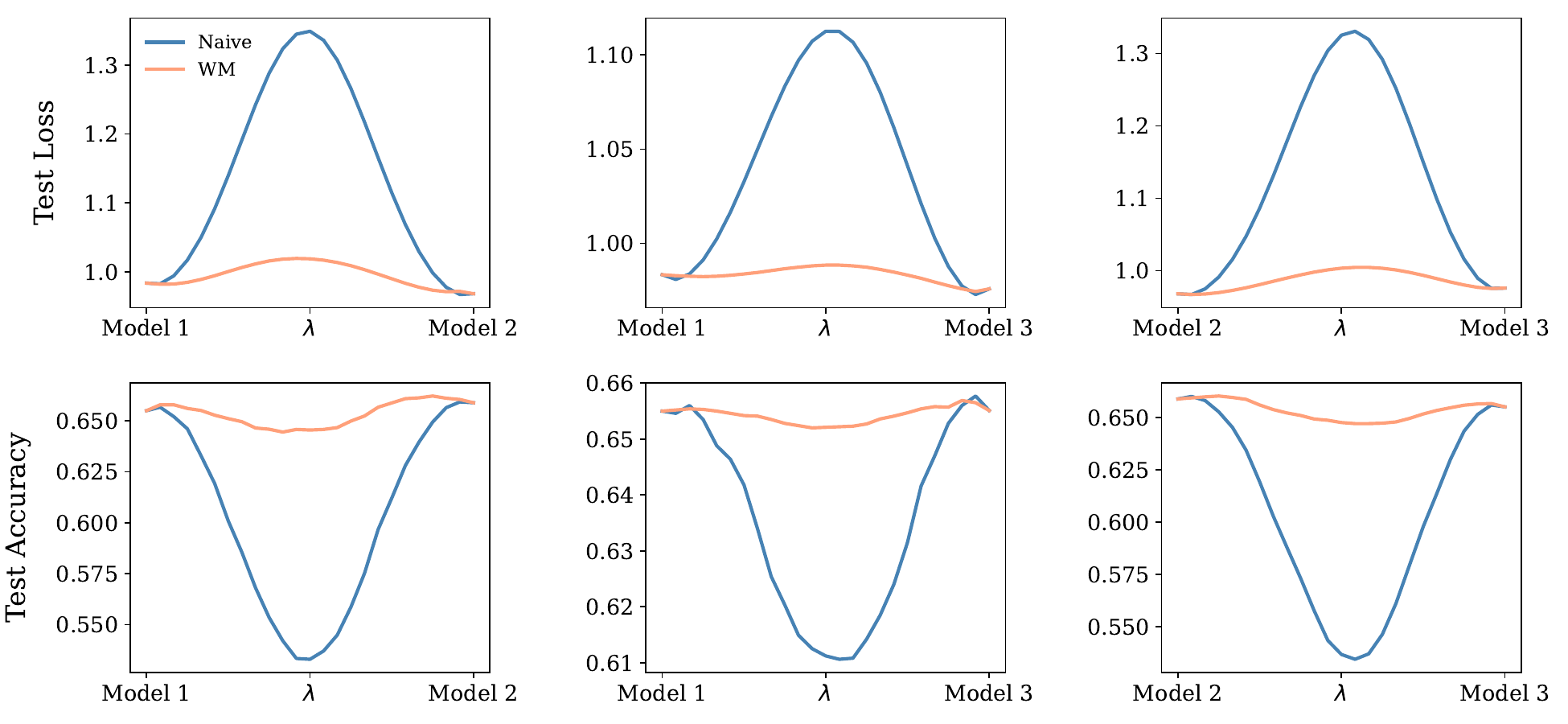} 
    \caption{Linear Mode Connectivity for ViT-MoE on CIFAR-10 with 2 layers and 6 experts}
    \label{fig:moe-cifar10-2-6}
\end{figure}

\begin{figure}[H]
    \centering
    \includegraphics[width=0.9\textwidth]{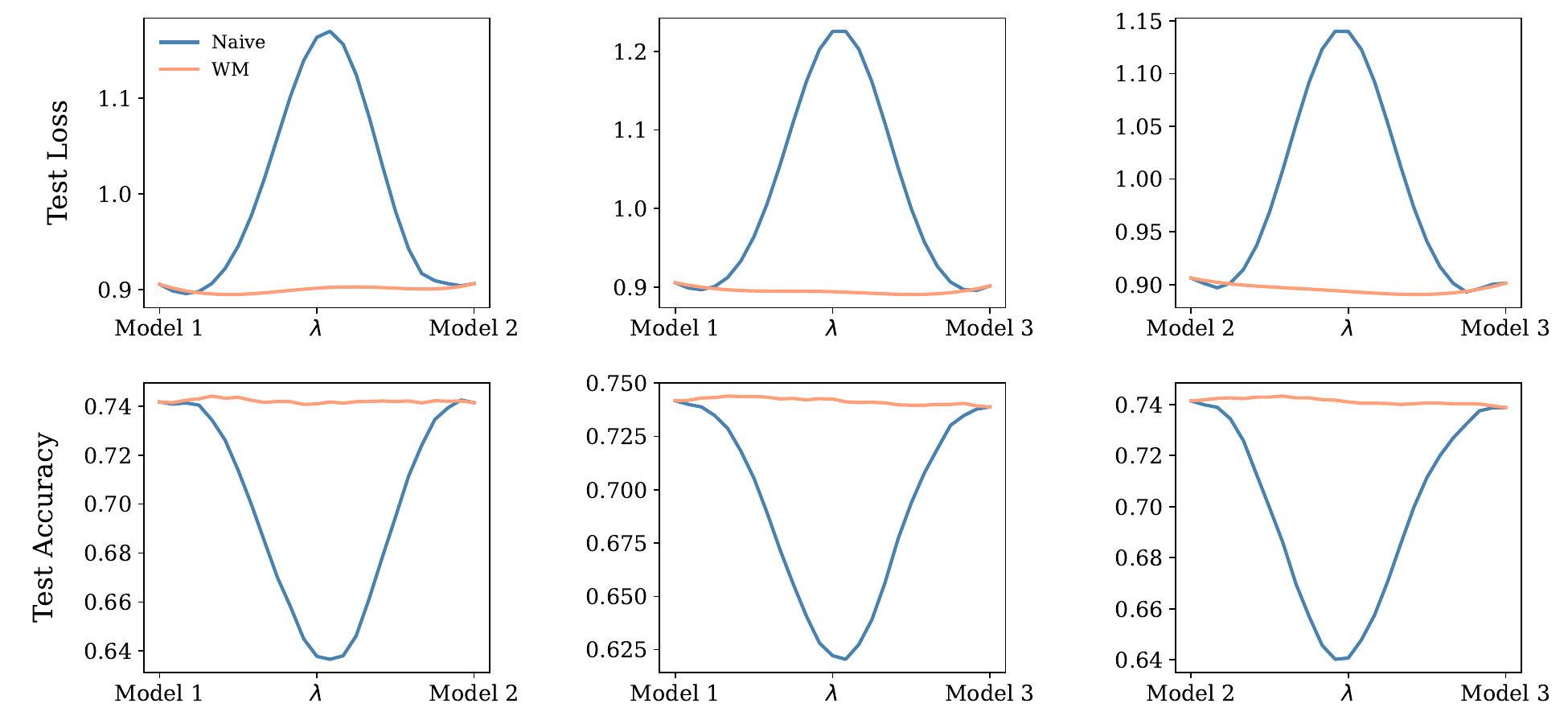} 
    \caption{Linear Mode Connectivity for ViT-MoE on CIFAR-10 with 6 layers and 2 experts}
    \label{fig:moe-cifar10-6-2}
\end{figure}

\begin{figure}[H]
    \centering
    \includegraphics[width=0.9\textwidth]{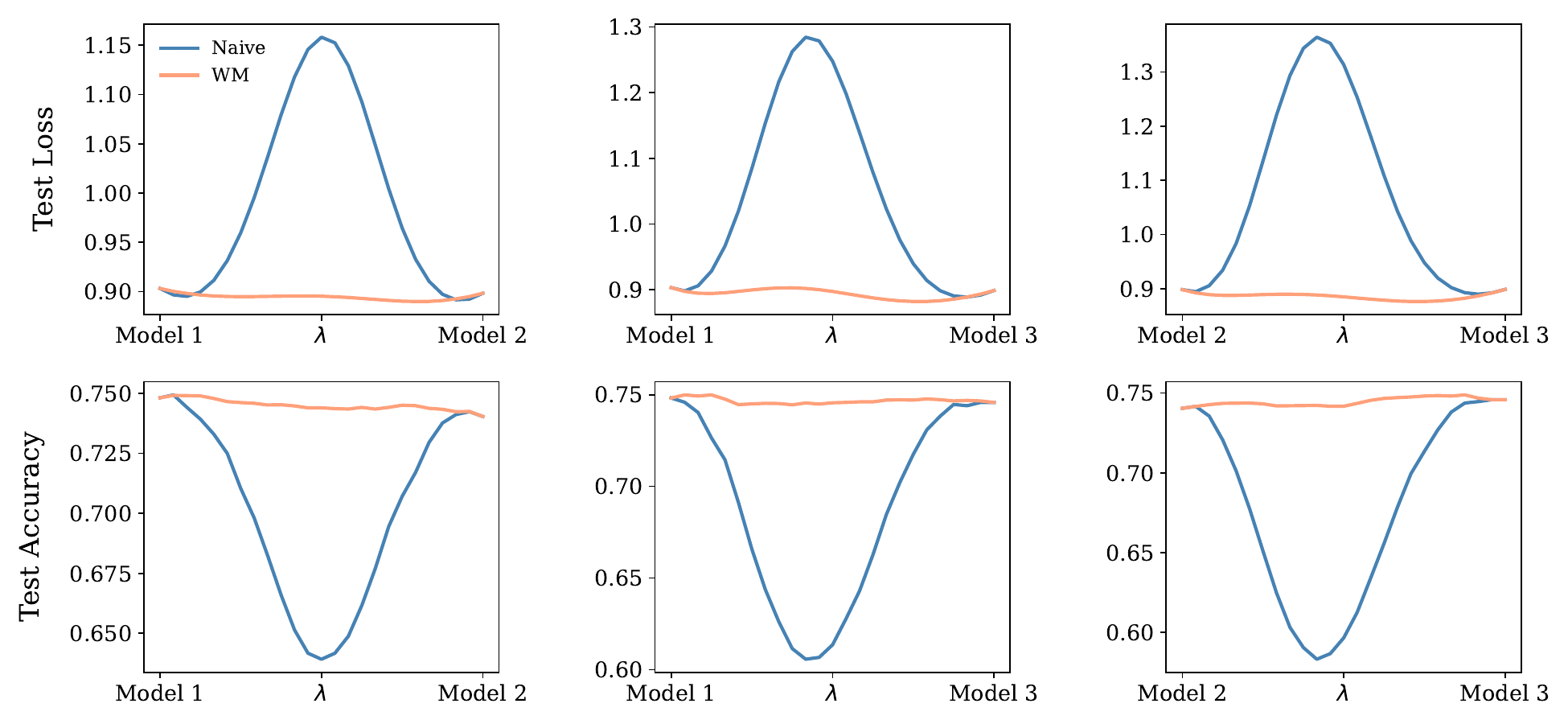} 
    \caption{Linear Mode Connectivity for ViT-MoE on CIFAR-10 with 6 layers and 4 experts}
    \label{fig:moe-cifar10-6-4}
\end{figure}

\begin{figure}[H]
    \centering
    \includegraphics[width=0.9\textwidth]{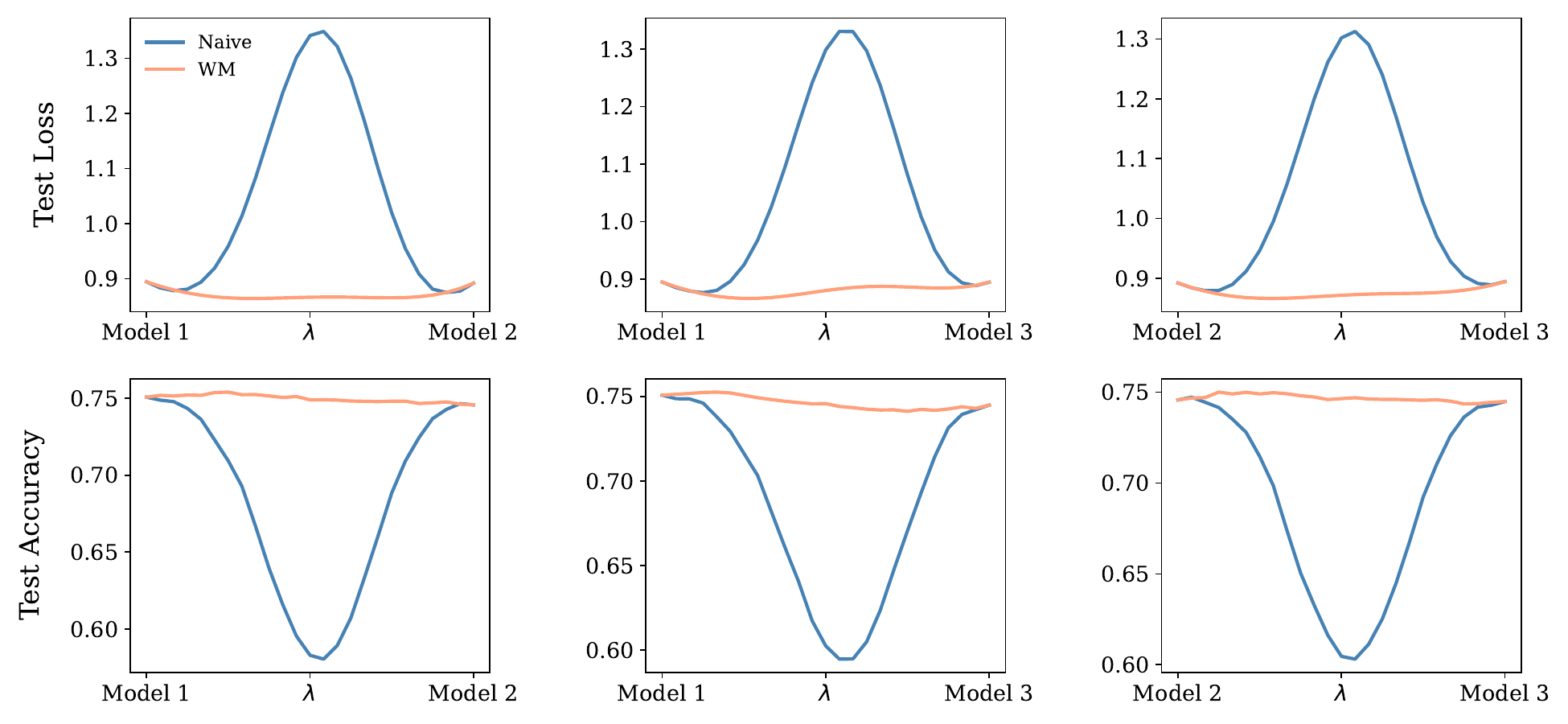} 
    \caption{Linear Mode Connectivity for ViT-MoE on CIFAR-10 with 6 layers and 6 experts}
    \label{fig:moe-cifar10-6-6}
\end{figure}

\begin{figure}[H]
    \centering
    \includegraphics[width=0.9\textwidth]{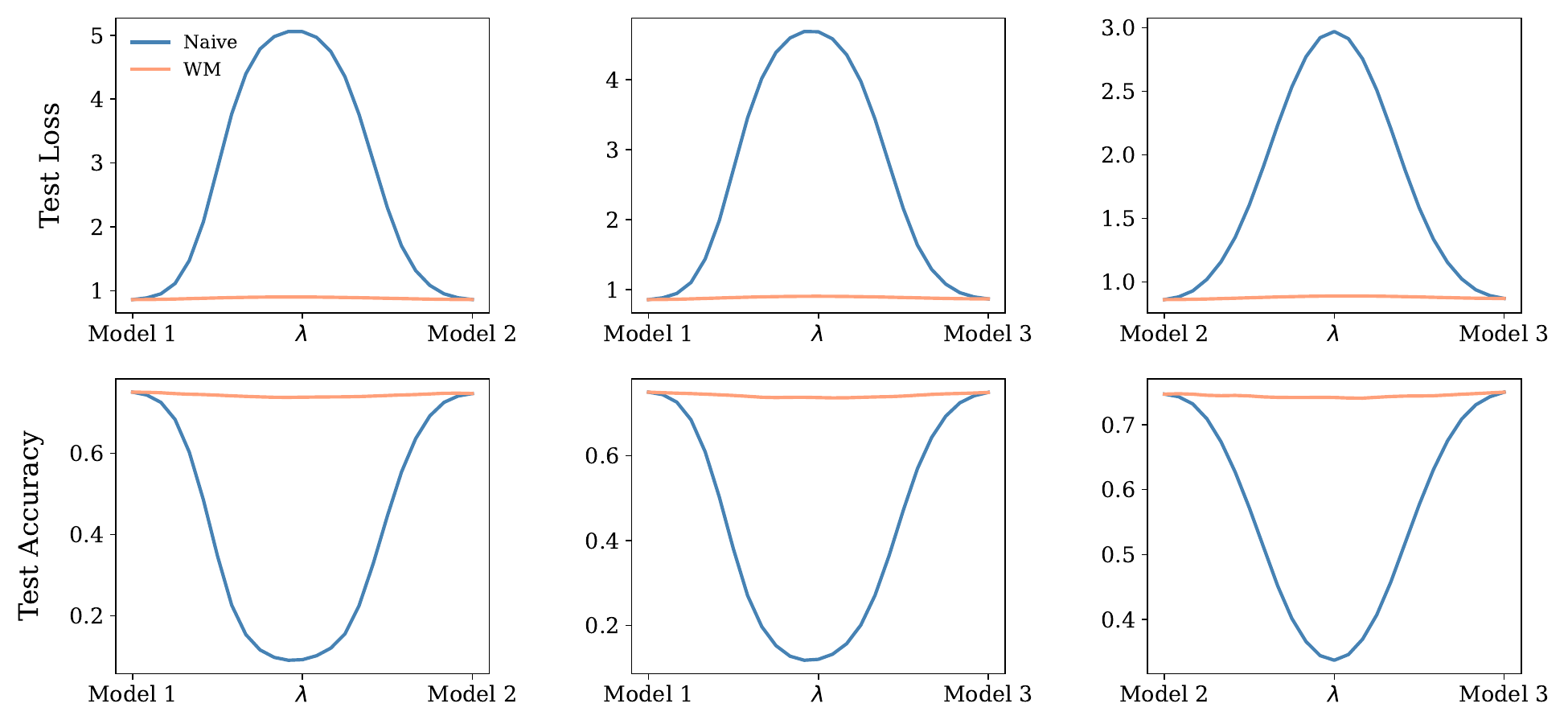}
    \caption{Linear Mode Connectivity for ViT-Moe on CIFAR-100 with 6 layers and 2 experts}
    \label{fig:moe-cifar100-6-2}
\end{figure}

\begin{figure}[H]
    \centering
    \includegraphics[width=0.9\textwidth]{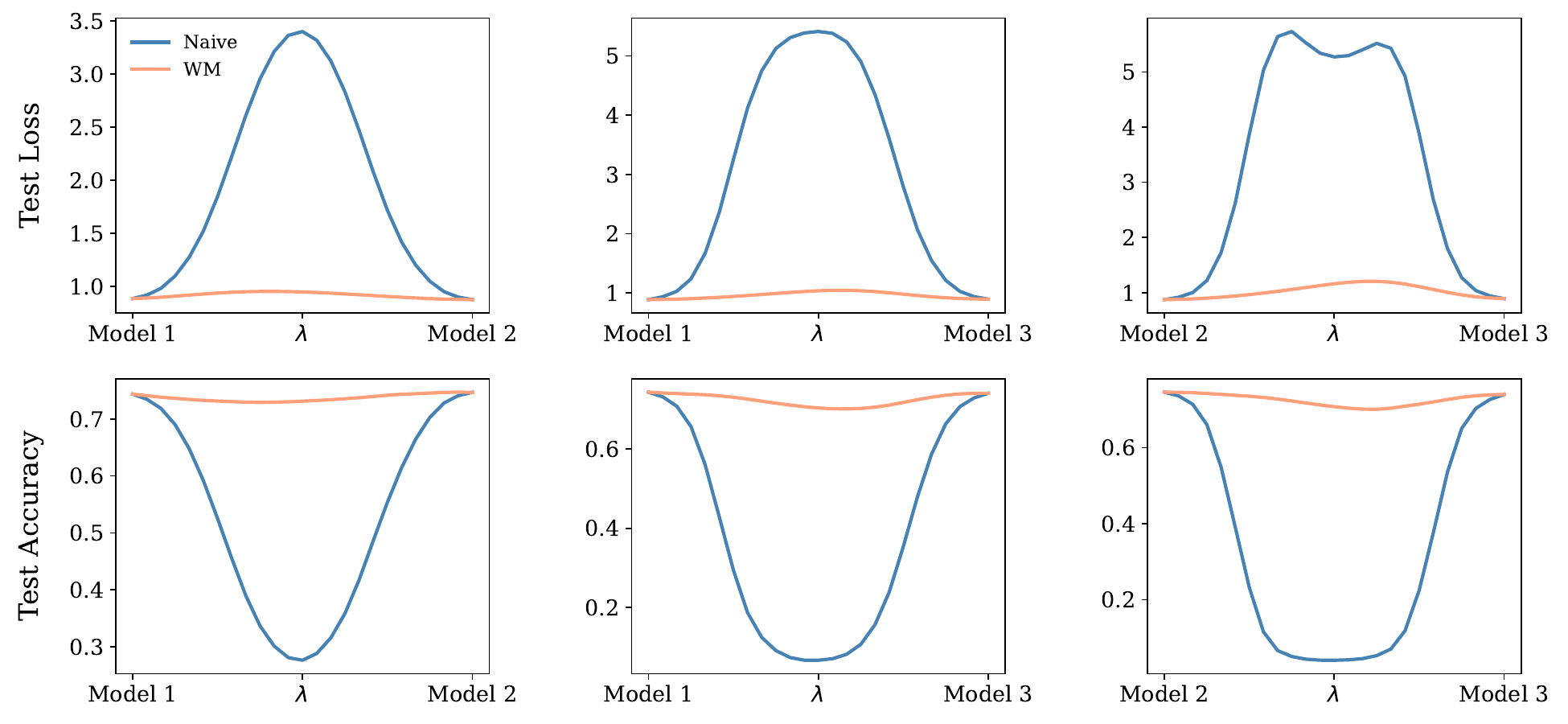}
    \caption{Linear Mode Connectivity for ViT-Moe on CIFAR-100 with 6 layers and 4 experts}
    \label{fig:moe-cifar100-6-4}
\end{figure}

\begin{figure}[H]
    \centering
    \includegraphics[width=0.9\textwidth]{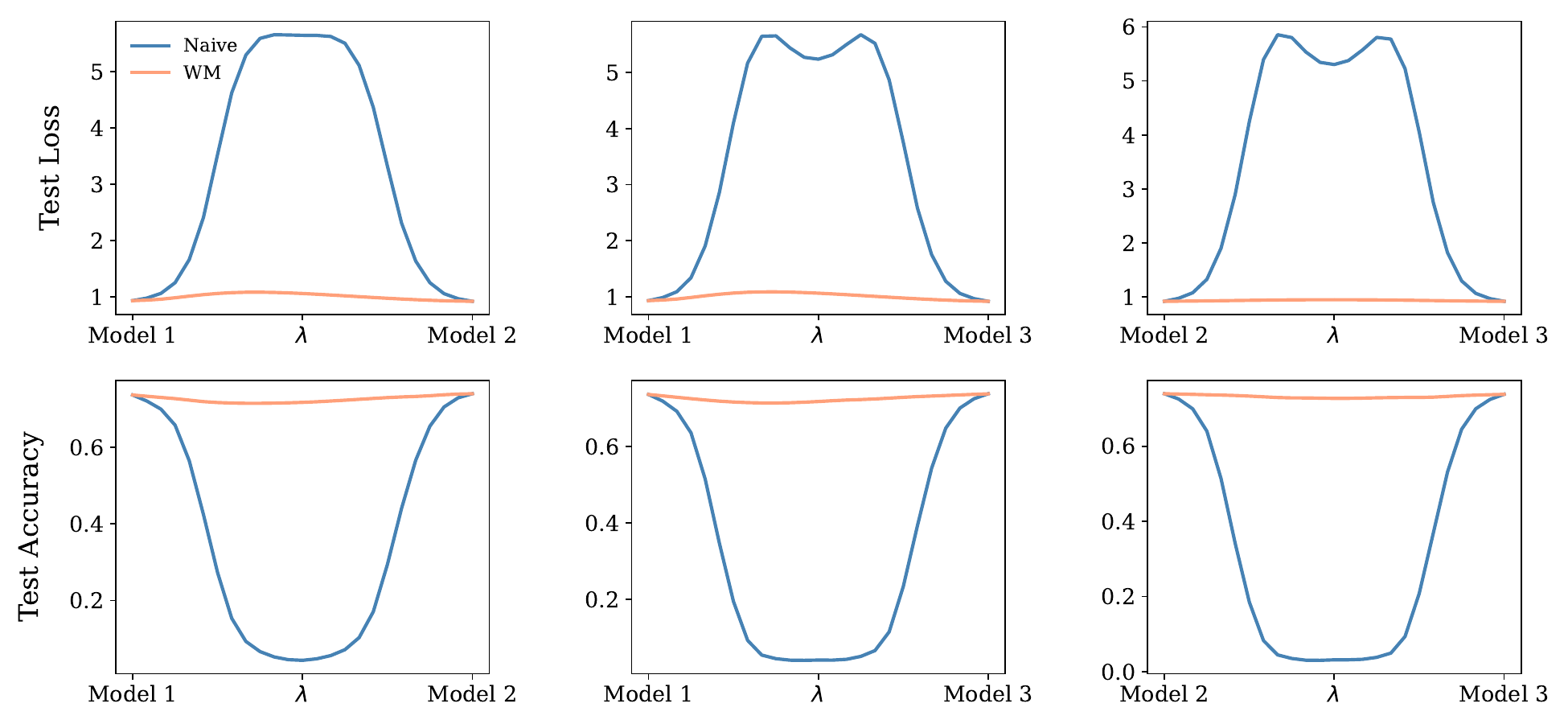}
    \caption{Linear Mode Connectivity for ViT-Moe on CIFAR-100 with 6 layers and 6 experts}
    \label{fig:moe-cifar100-6-6}
\end{figure}

\begin{figure}[H]
    \centering
    \includegraphics[width=0.9\textwidth]{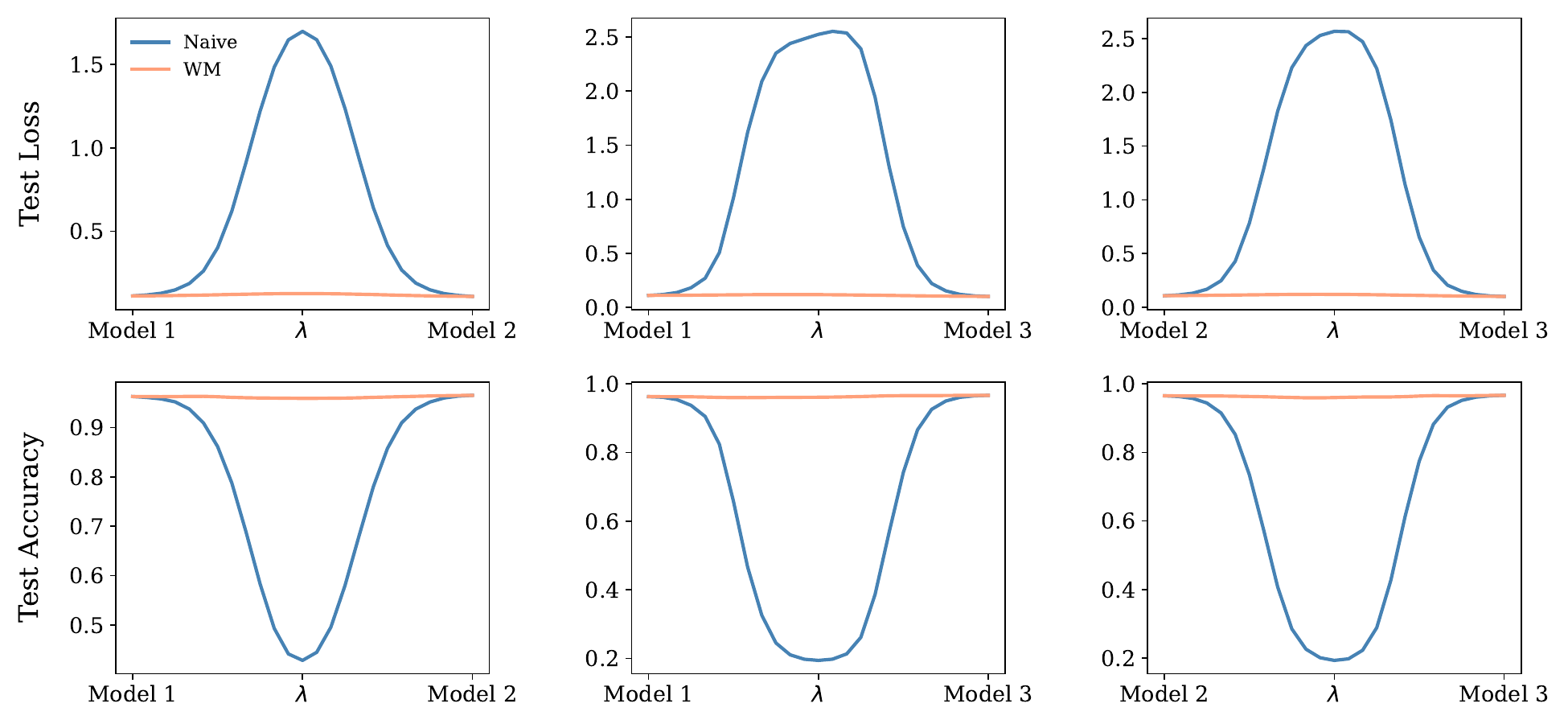}
    \caption{Linear Mode Connectivity for ViT-MoE on ImageNet-21k$\rightarrow$CIFAR-10 with 12 layers and 2 experts}
    \label{fig:moe-imagenet21k-cifar10-12-2}
\end{figure}

\begin{figure}[H]
    \centering
    \includegraphics[width=0.9\textwidth]{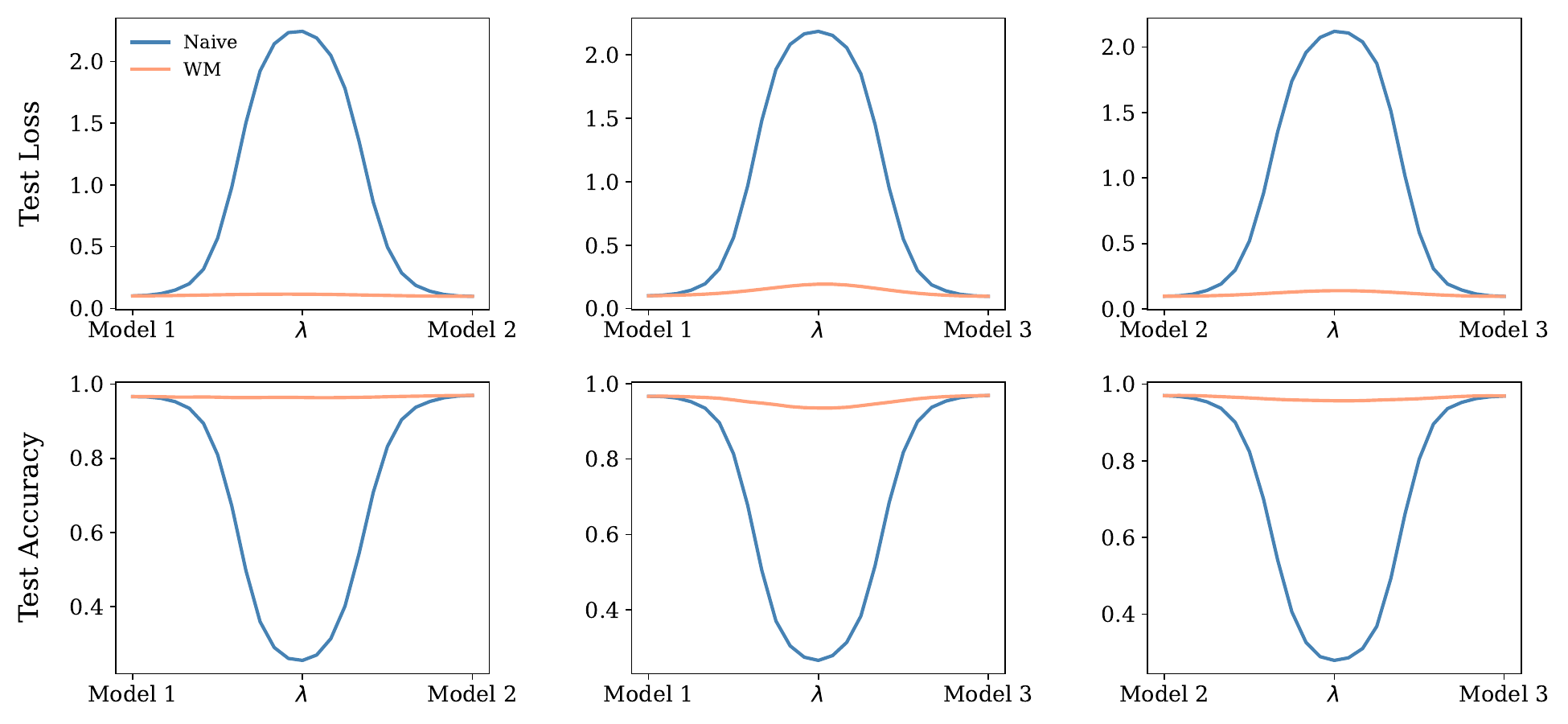}
    \caption{Linear Mode Connectivity for ViT-MoE on ImageNet-21k$\rightarrow$CIFAR-10 with 12 layers and 4 experts}
    \label{fig:moe-imagenet21k-cifar10-12-4}
\end{figure}

\begin{figure}[H]
    \centering
    \includegraphics[width=0.9\textwidth]{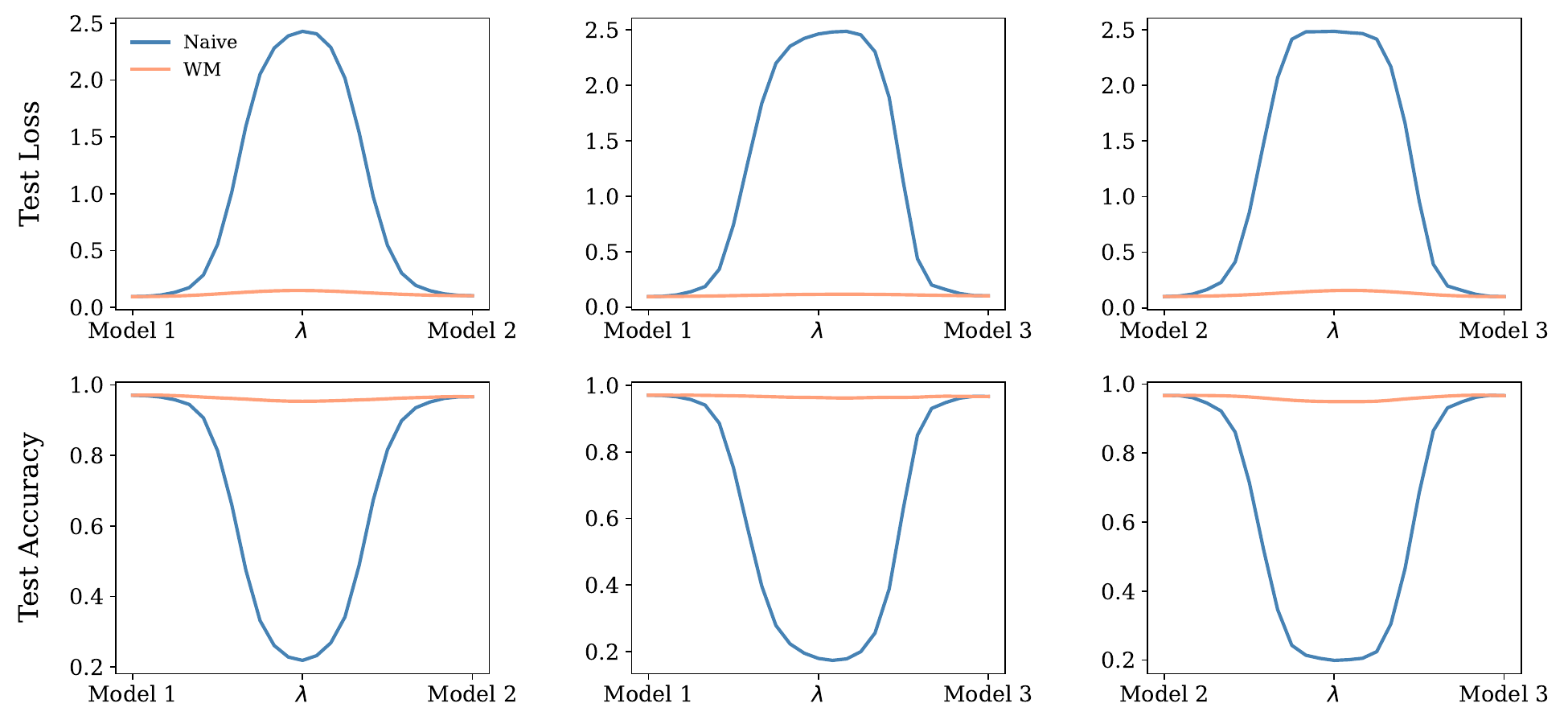}
    \caption{Linear Mode Connectivity for ViT-MoE on ImageNet-21k$\rightarrow$CIFAR-100 with 12 layers and 6 experts}
    \label{fig:moe-imagenet21k-cifar10-12-6}
\end{figure}

\begin{figure}[H]
    \centering
    \includegraphics[width=0.9\textwidth]{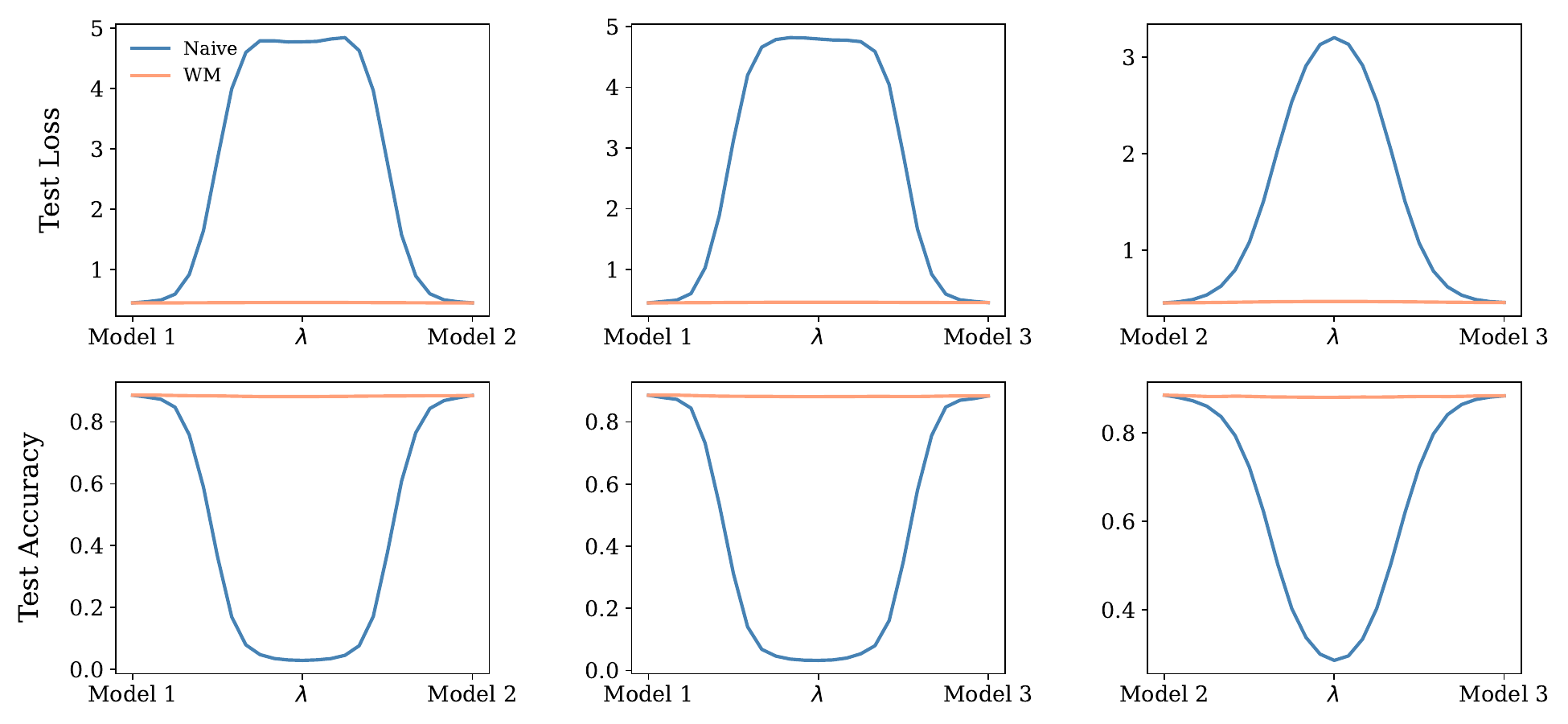}
    \caption{Linear Mode Connectivity for ViT-MoE on ImageNet-21k$\rightarrow$CIFAR-100 with 12 layers and 2 experts}
    \label{fig:moe-imagenet21k-cifar100-12-2}
\end{figure}

\begin{figure}
    \centering
    \includegraphics[width=0.9\textwidth]{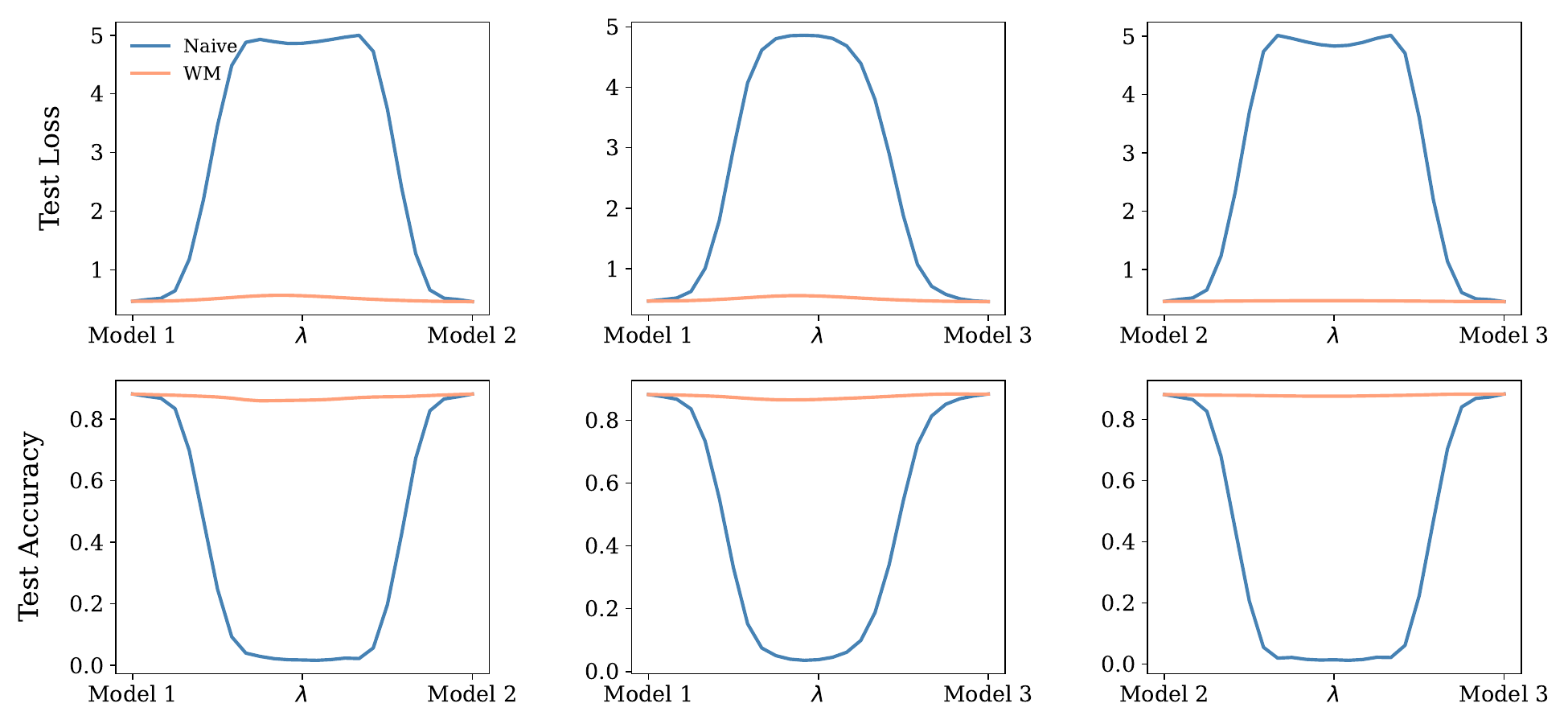}
    \caption{Linear Mode Connectivity for ViT-MoE on ImageNet-21k$\rightarrow$CIFAR-100 with 12 layers and 4 experts}
    \label{fig:moe-imagenet21k-cifar100-12-4}
\end{figure}

\begin{figure}
    \centering
    \includegraphics[width=0.9\textwidth]{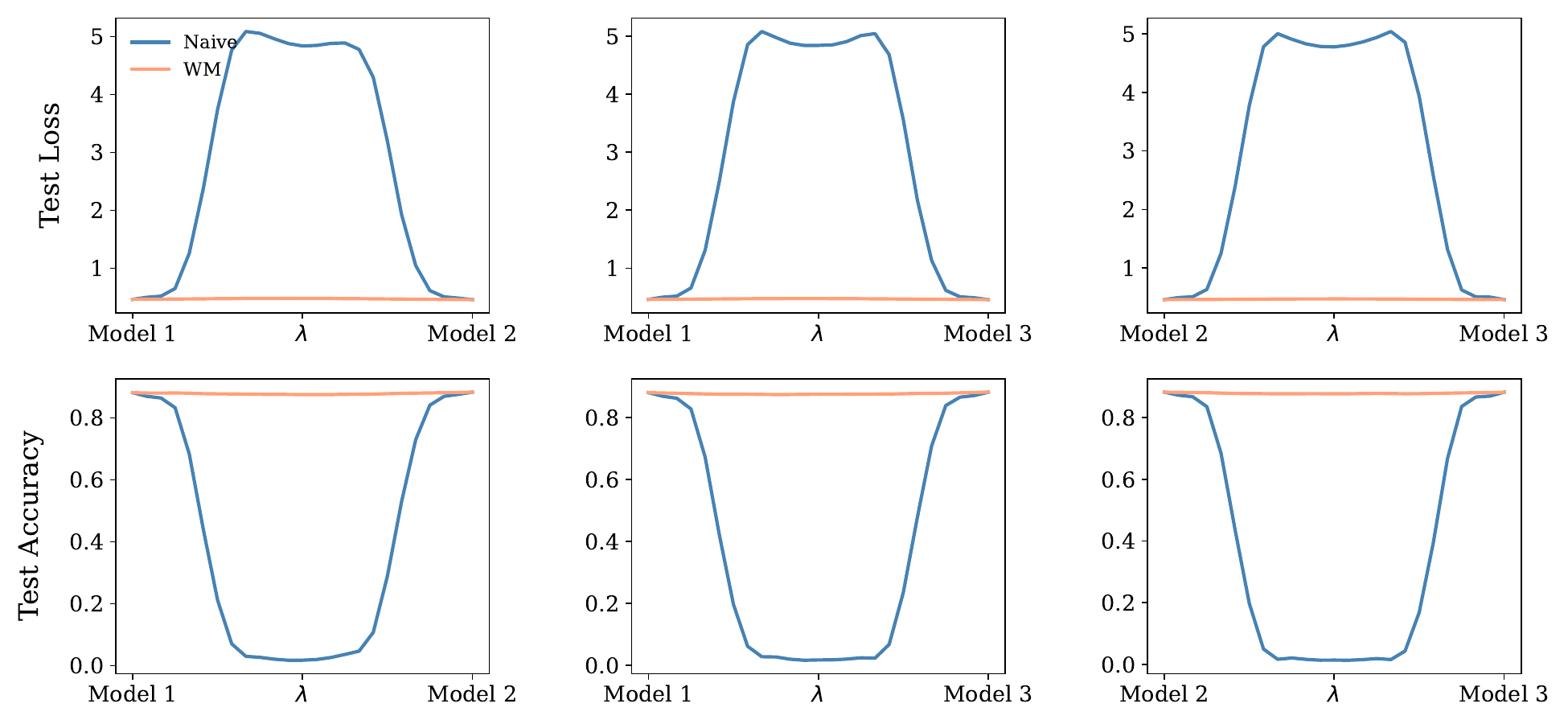}
    \caption{Linear Mode Connectivity for ViT-MoE with ImageNet-21k$\rightarrow$CIFAR-100 with 12 layers and 6 experts}
    \label{fig:moe-imagenet21k-cifar100-12-6}
\end{figure}

\begin{figure}[H]
    \centering
    \includegraphics[width=0.9\linewidth]{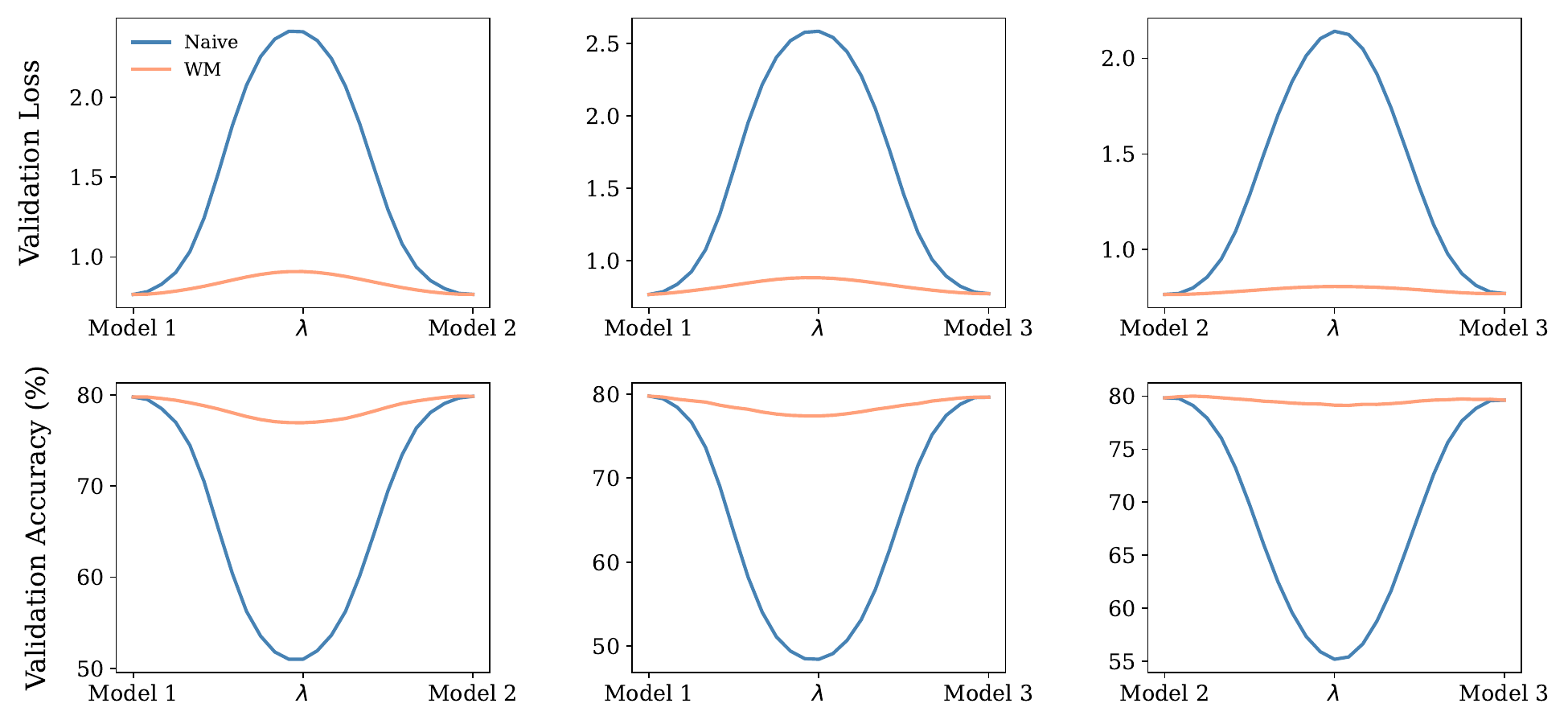}
    \caption{Linear Mode Connectivity for ViT-MoE on ImageNet-1k with 12 layers and 2 experts}
    \label{fig:imagenet-moe-2}
\end{figure}
\begin{figure}[H]
    \centering
    \includegraphics[width=0.9\linewidth]{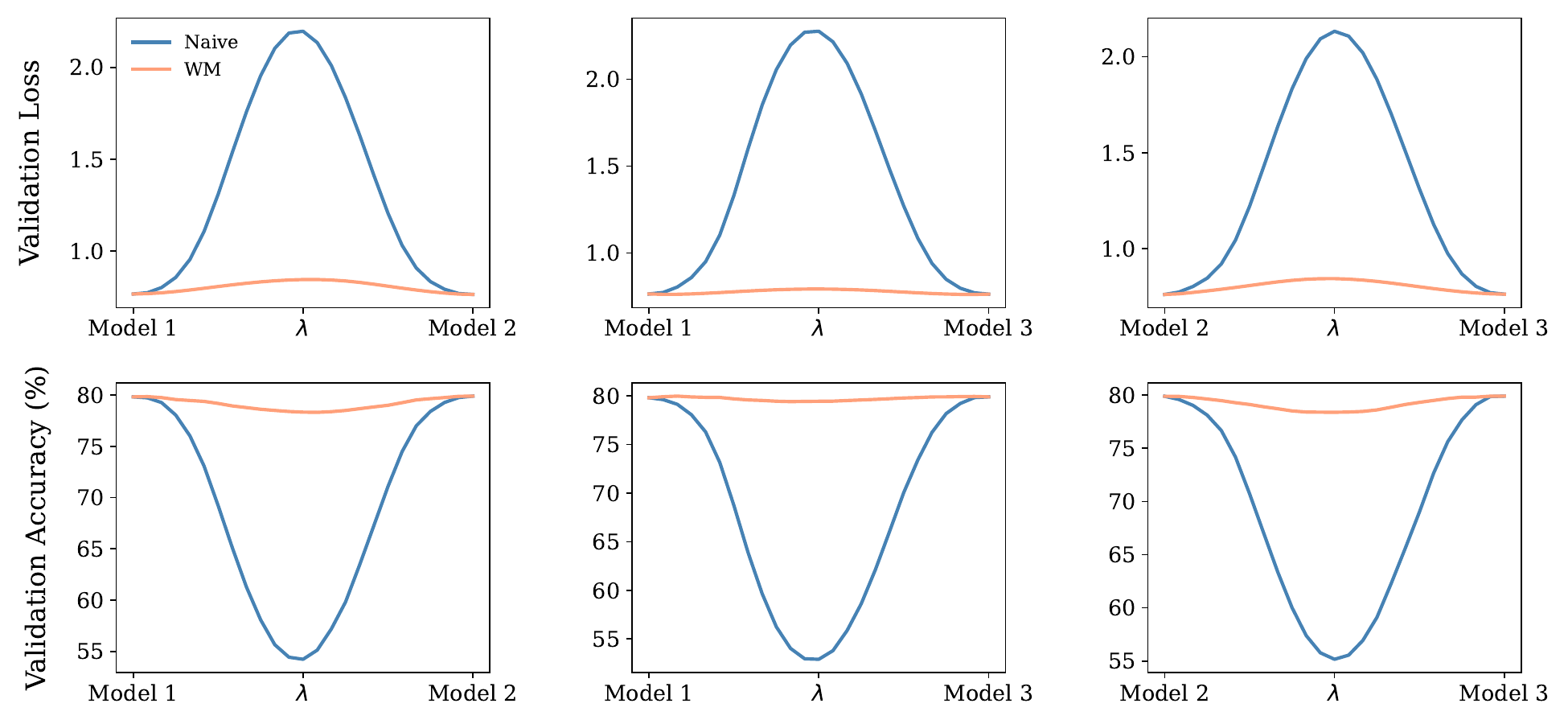}
    \caption{Linear Mode Connectivity for ViT-MoE on ImageNet-1k with 12 layers and 4 experts}
    \label{fig:imagenet-moe-4}
\end{figure}
\begin{figure}[H]
    \centering
    \includegraphics[width=0.9\linewidth]{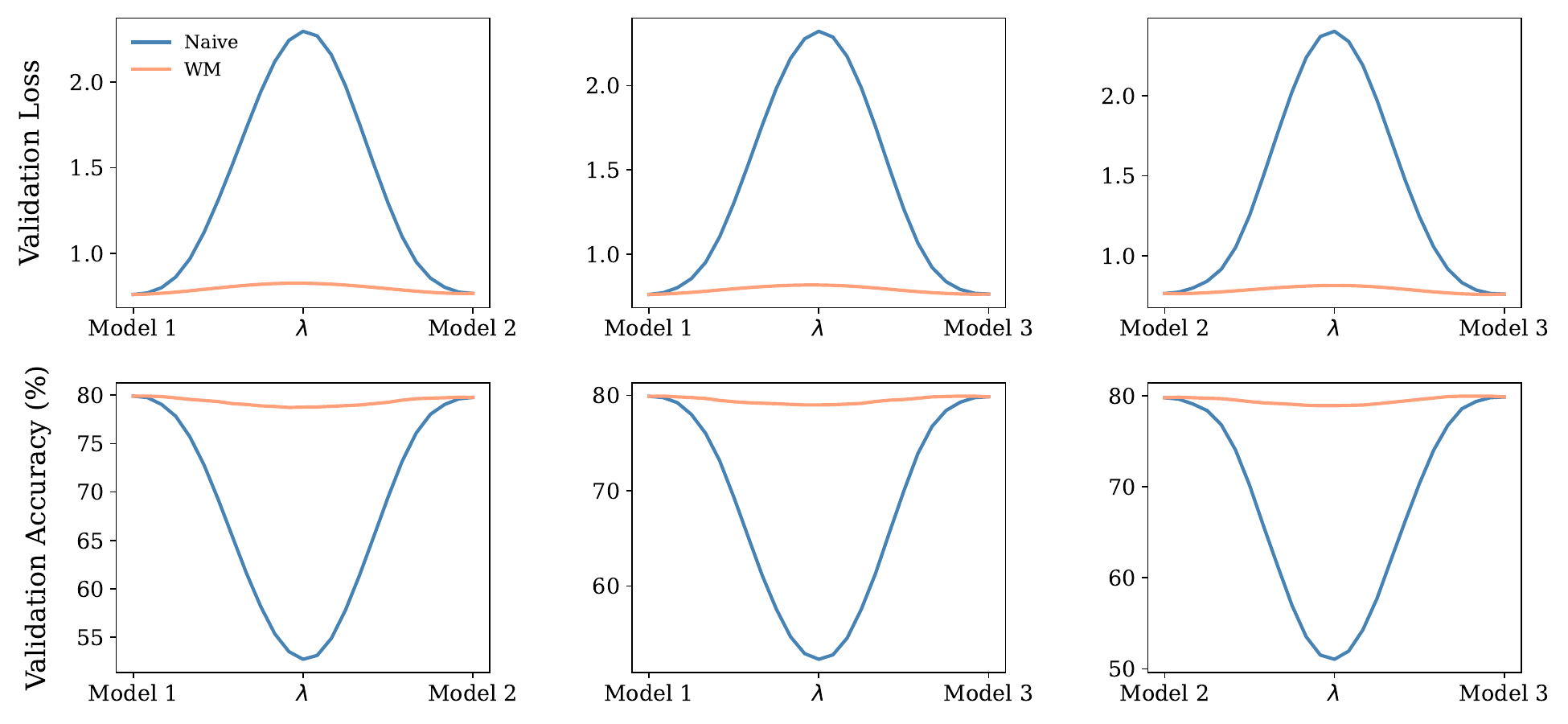}
    \caption{Linear Mode Connectivity for ViT-MoE on ImageNet-1k with 12 layers and 6 experts}
    \label{fig:imagenet-moe-6}
\end{figure}
\begin{figure}[H]
    \centering
    \includegraphics[width=0.9\linewidth]{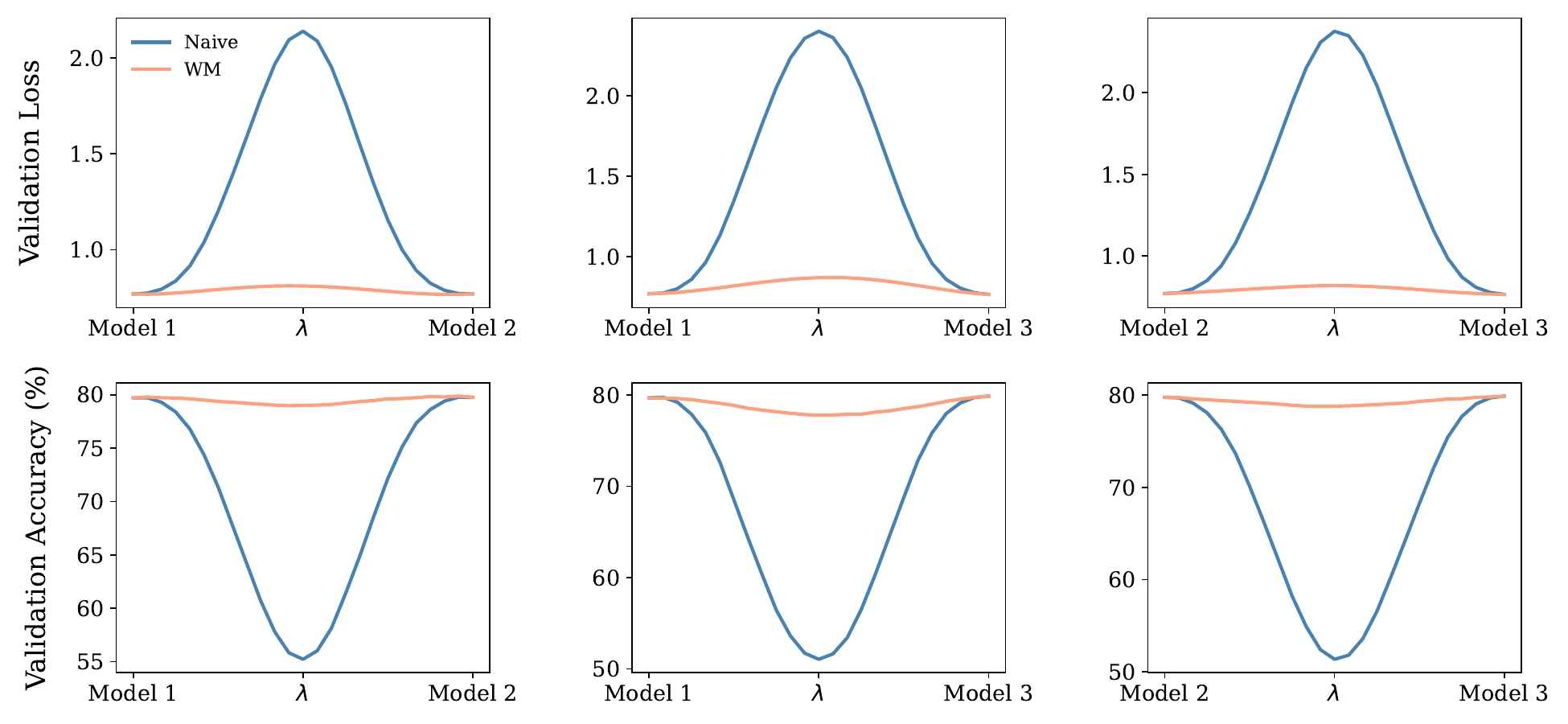}
    \caption{Linear Mode Connectivity for ViT-MoE on ImageNet-1k with 12 layers and 8 experts}
    \label{fig:imagenet-moe-8}
\end{figure}

\begin{figure}[H]
    \centering
    \includegraphics[width=0.9\linewidth]{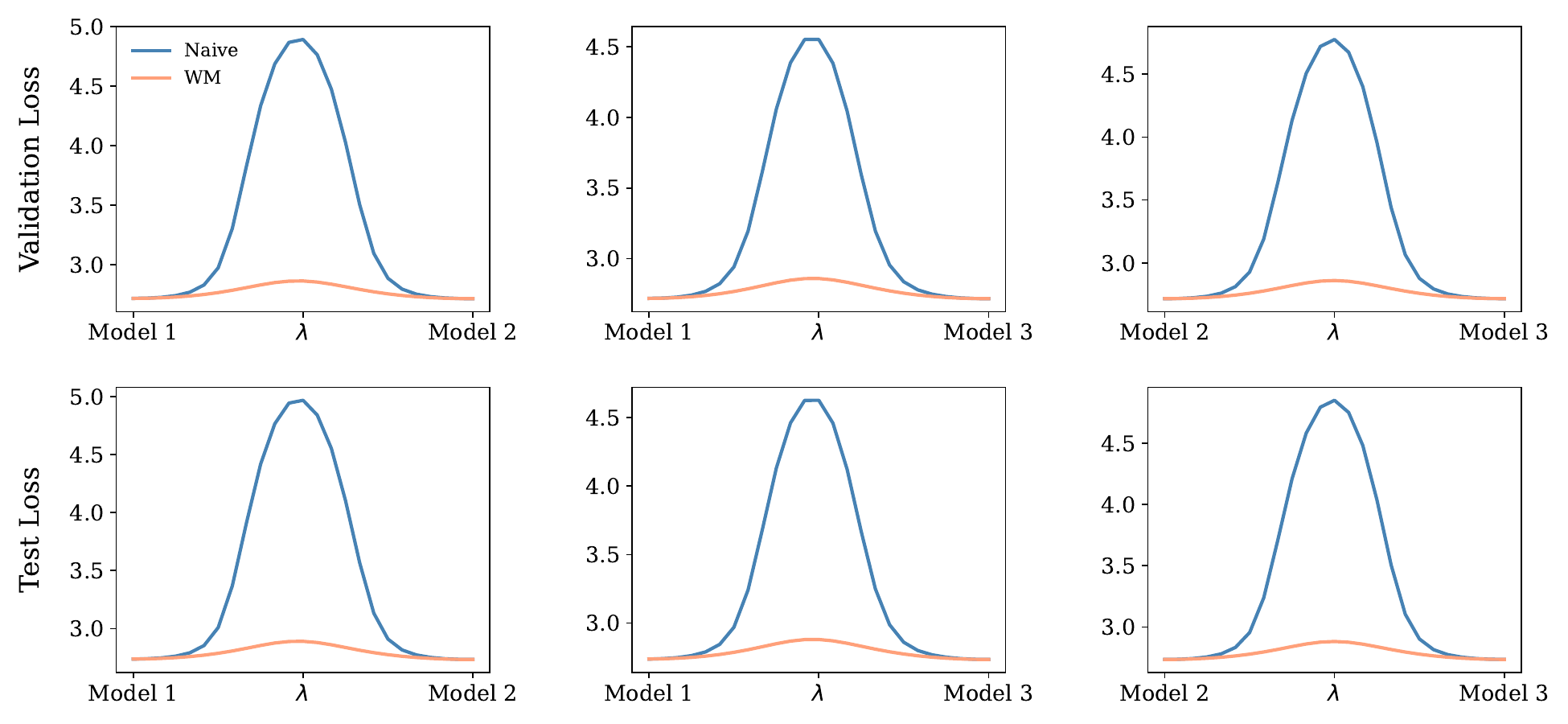}
    \caption{Linear Mode Connectivity for GPT2-MoE on Wikitext103 with 12 layers and 2 experts}
    \label{fig:wikitext103-moe-2}
\end{figure}

\begin{figure}[H]
    \centering
    \includegraphics[width=0.9\linewidth]{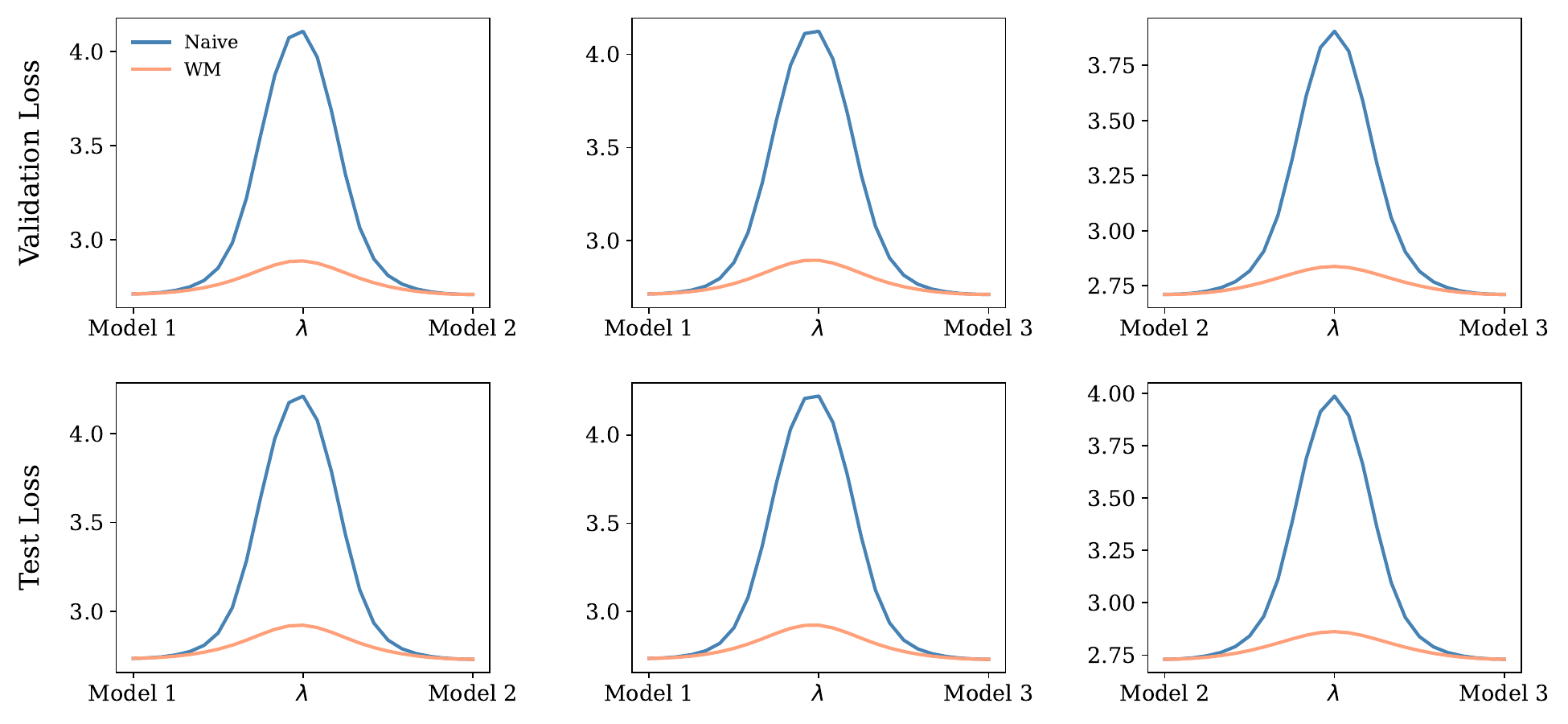}
    \caption{Linear Mode Connectivity for GPT2-MoE on Wikitext103 with 12 layers and 4 experts}    
    \label{fig:wikitext103-moe-4}
\end{figure}
\begin{figure}[H]
    \centering
    \includegraphics[width=0.9\linewidth]{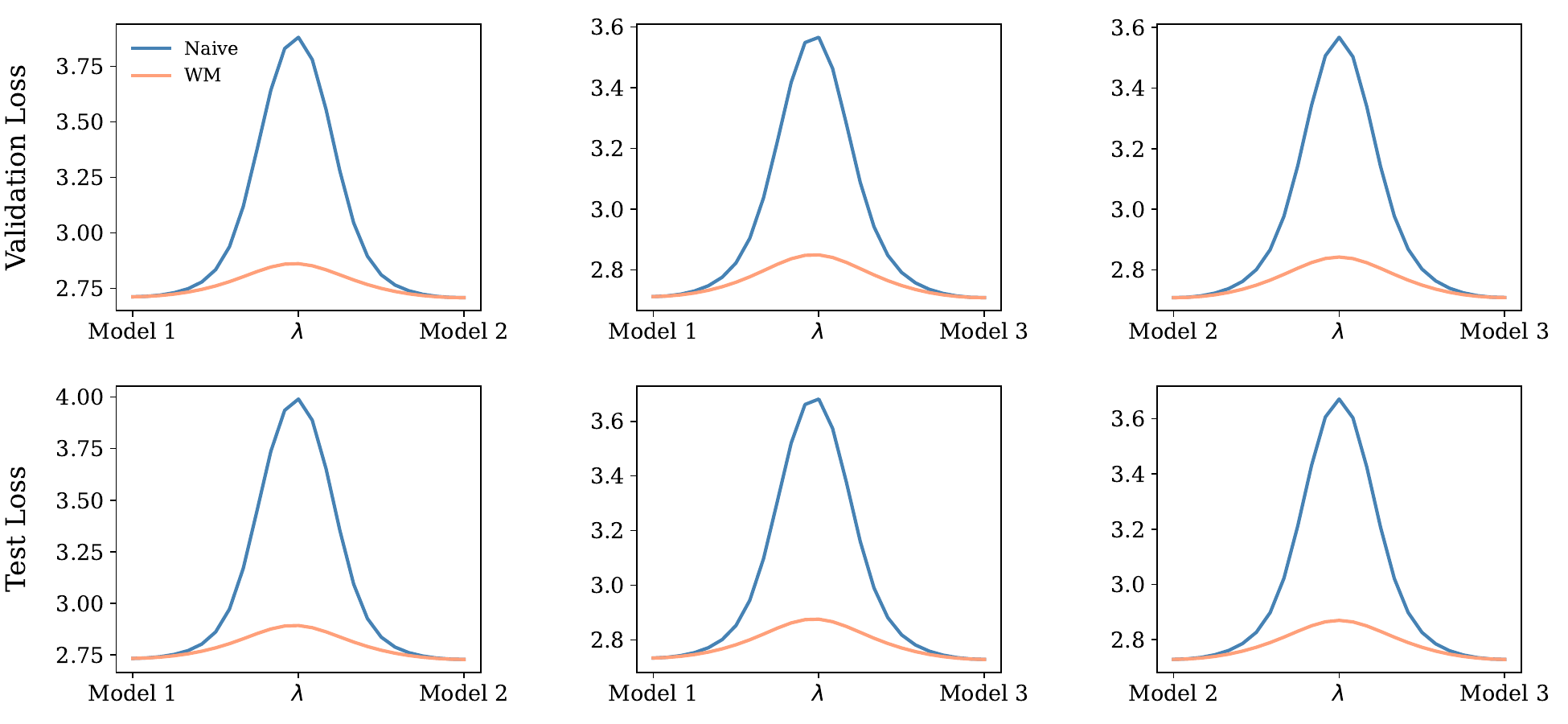}
    \caption{Linear Mode Connectivity for GPT2-MoE on Wikitext103 with 12 layers and 6 experts}
    \label{fig:wikitext103-moe-6}
\end{figure}
\begin{figure}[H]
    \centering
    \includegraphics[width=0.9\linewidth]{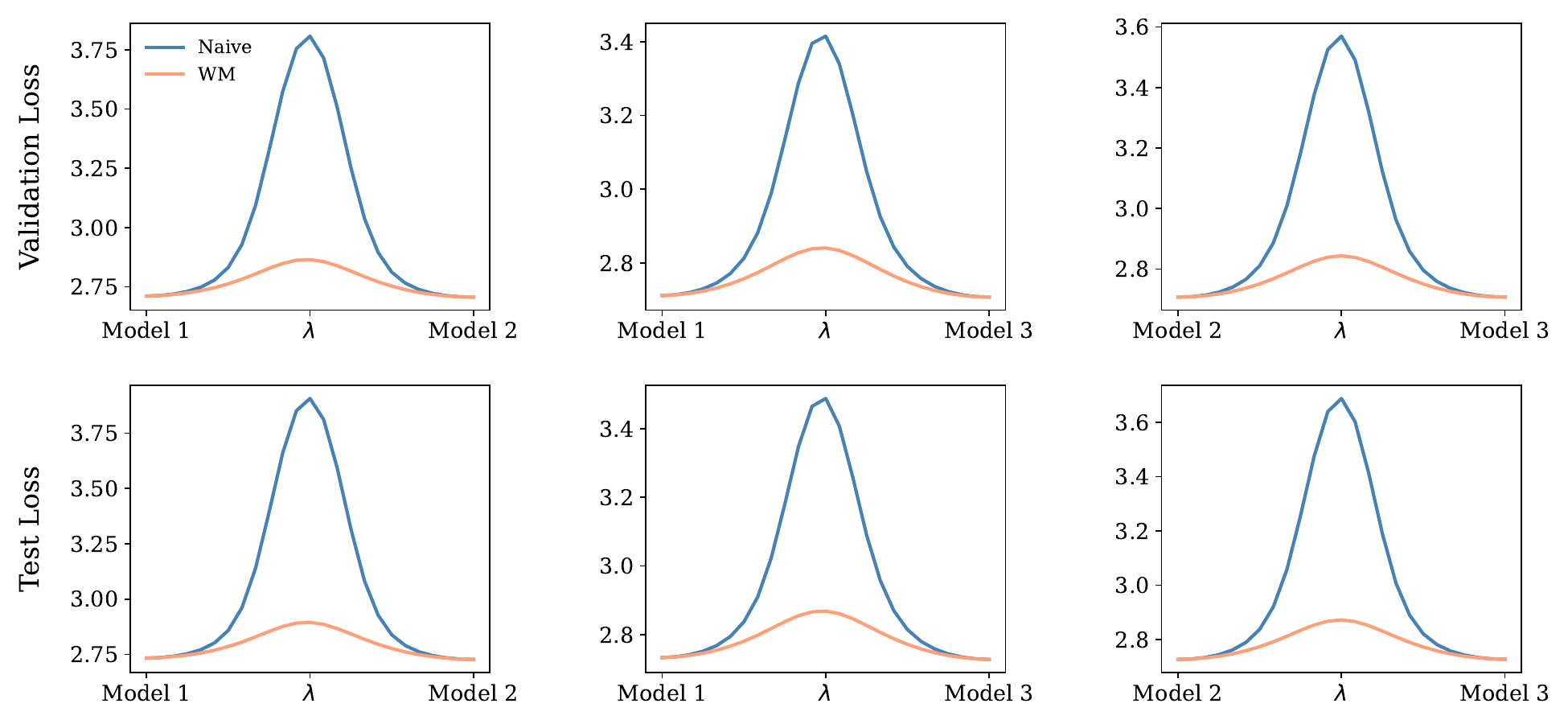}
    \caption{Linear Mode Connectivity for GPT2-MoE on Wikitext103 with 12 layers and 8 experts}
    \label{fig:wikitext103-moe-8}
\end{figure}
\begin{figure}[H]
    \centering
    \includegraphics[width=0.9\linewidth]{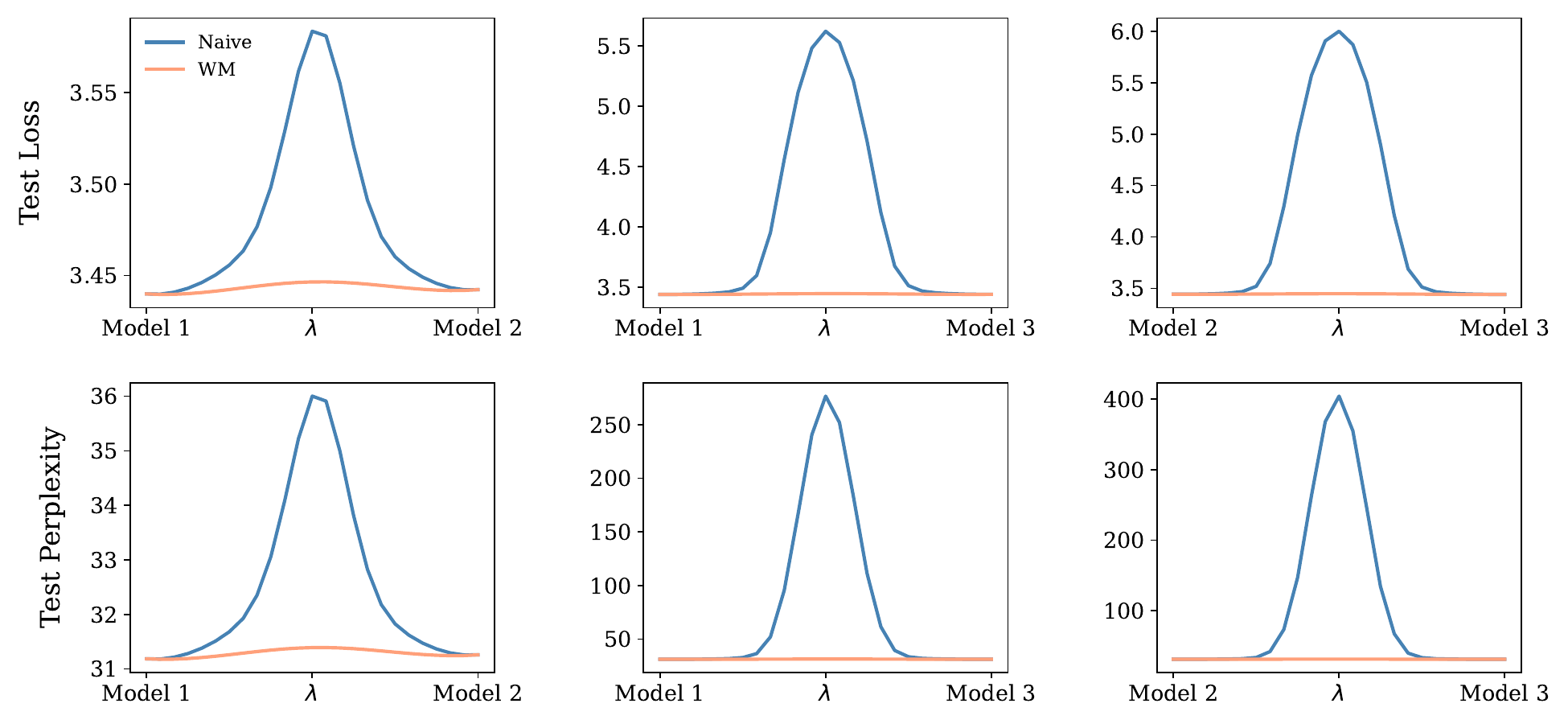}
    \caption{Linear Mode Connectivity for GPT2-MoE on One Billion Word with 12 layers and 2 experts}
    \label{fig:lm1b-moe-2}
\end{figure}
\begin{figure}[H]
    \centering
    \includegraphics[width=0.9\linewidth]{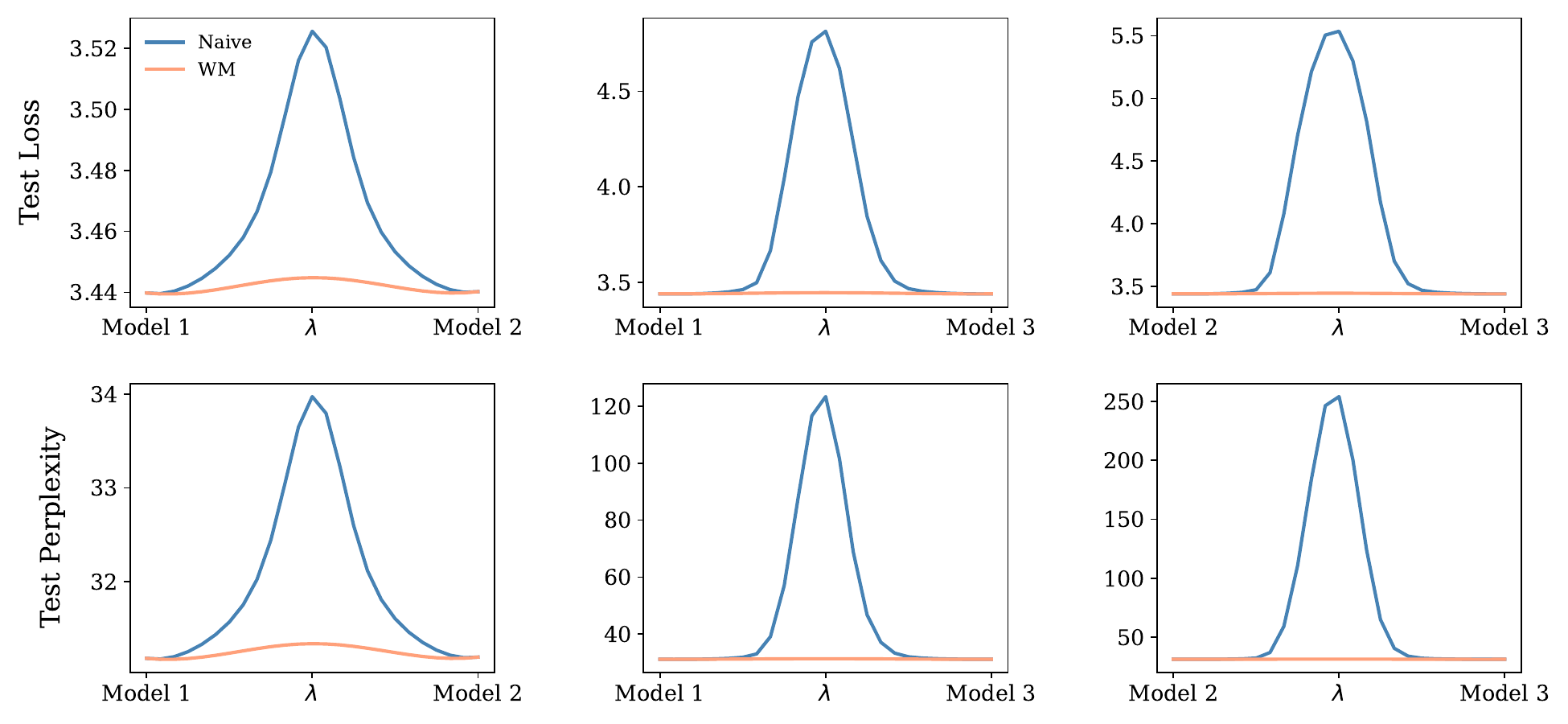}
    \caption{Linear Mode Connectivity for GPT2-MoE on One Billion Word with 12 layers and 4 experts}    
    \label{fig:lm1b-moe-4}
\end{figure}

\begin{figure}[H]
    \centering
    \includegraphics[width=0.9\linewidth]{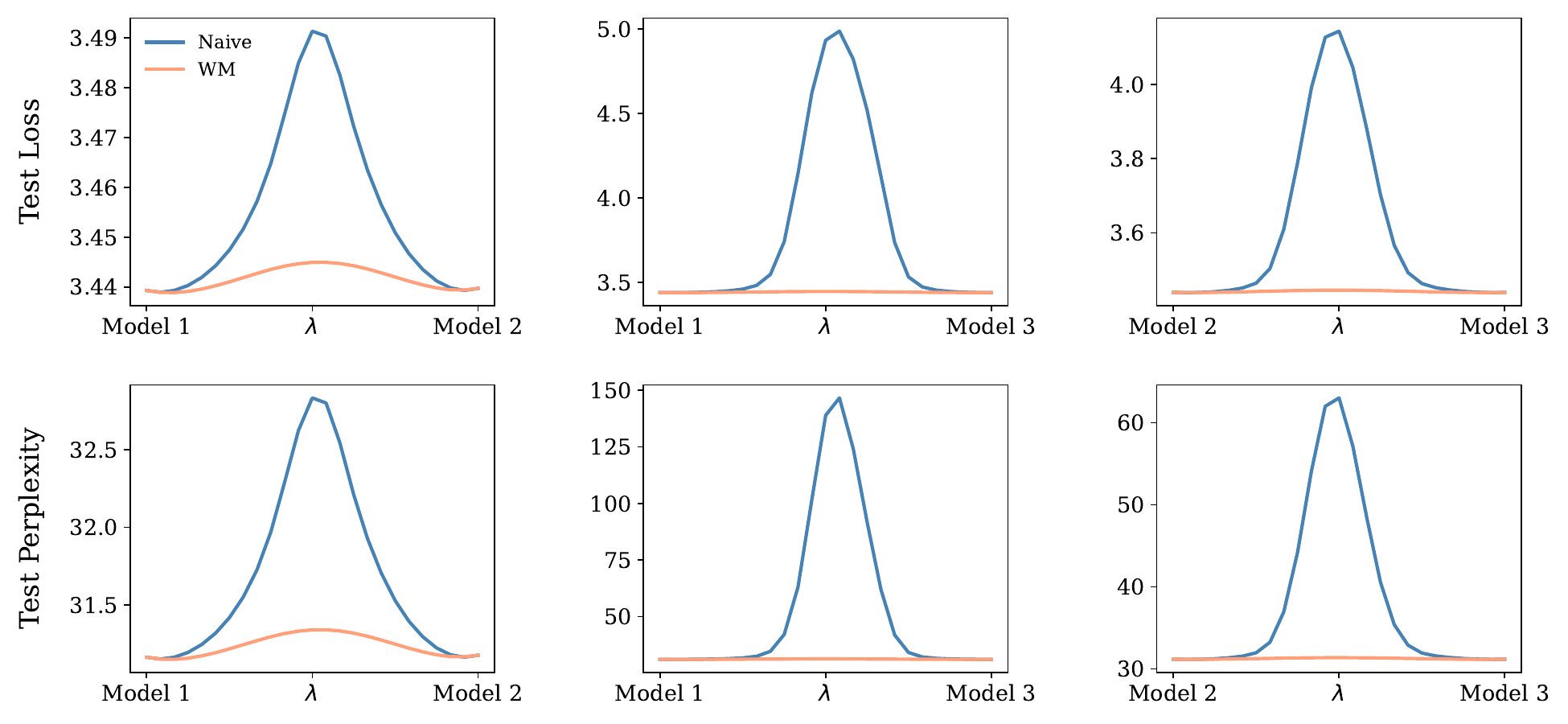}
    \caption{Linear Mode Connectivity for GPT2-MoE on One Billion Word with 12 layers and 6 experts}    
    \label{fig:lm1b-moe-6}
\end{figure}

\begin{figure}[H]
    \centering
    \includegraphics[width=0.9\linewidth]{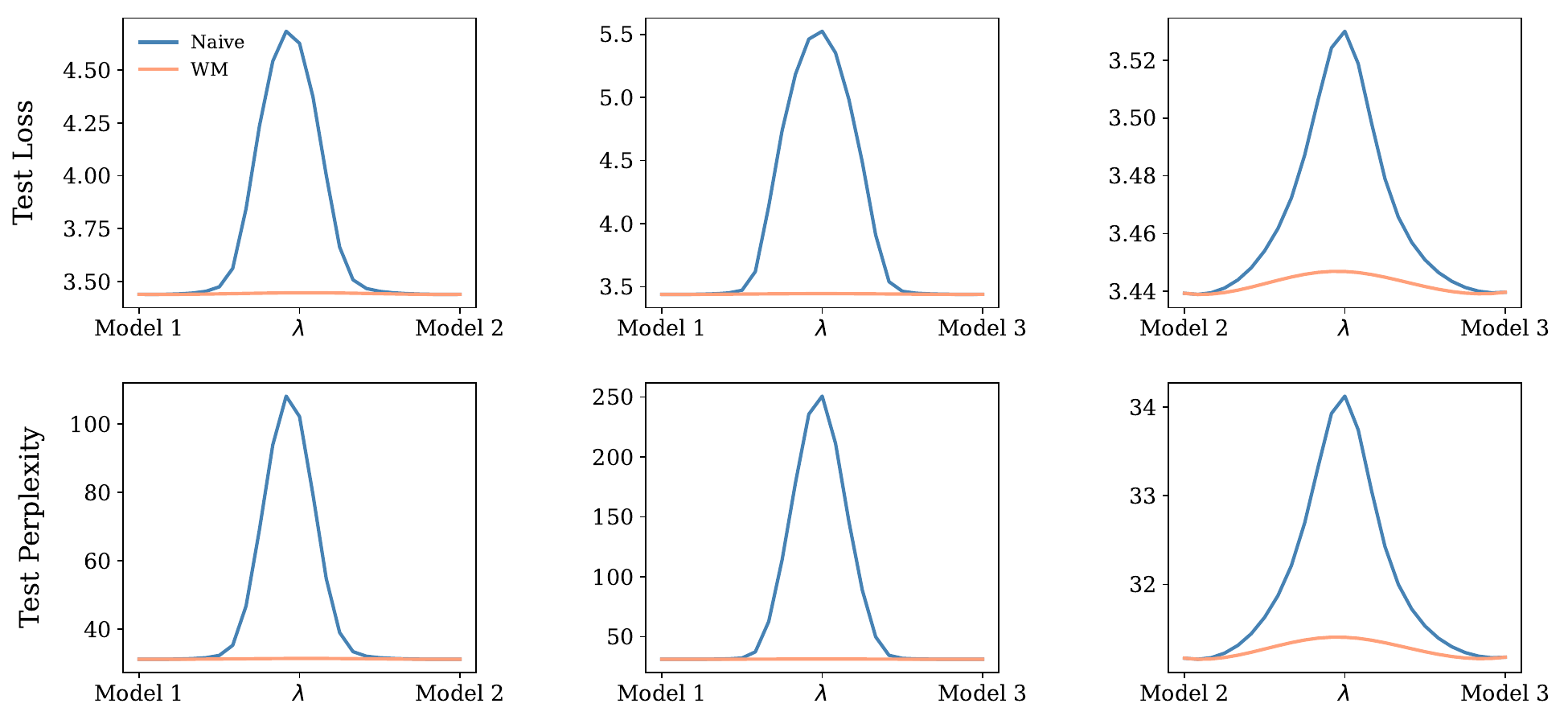}
    \caption{Linear Mode Connectivity for GPT2-MoE on One Billion Word with 12 layers and 8 experts}
    \label{fig:lm1b-moe-8}
\end{figure}

\subsubsection{Sparse Mixture-of-Experts}

\begin{figure}[H]
    \centering
    \includegraphics[width=0.9\textwidth]{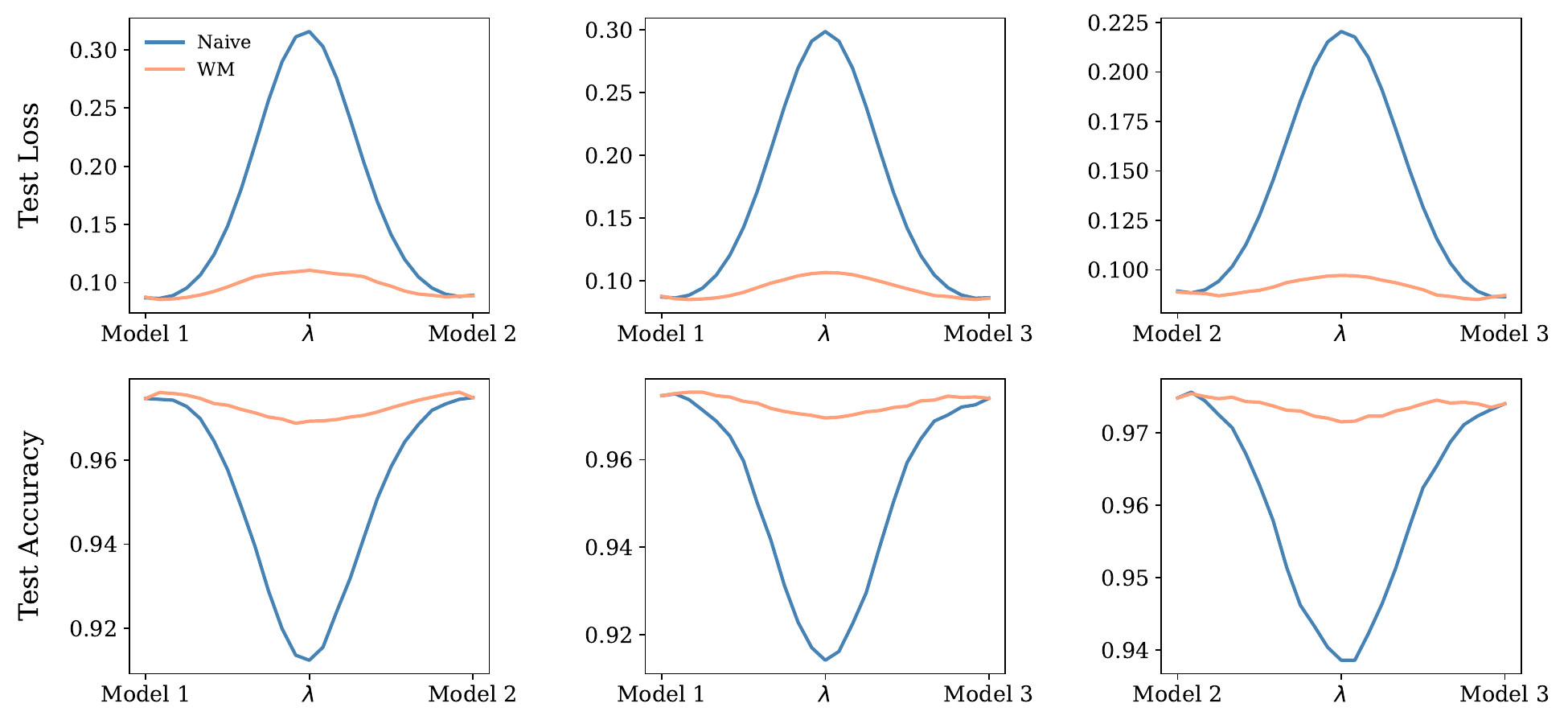} 
    \caption{Linear Mode Connectivity for ViT-SMoE $(k=2)$ on MNIST with 1 layer and 4 experts}
    \label{fig:smoe-mnist-1-4}
\end{figure}

\begin{figure}[H]
    \centering
    \includegraphics[width=0.9\textwidth]{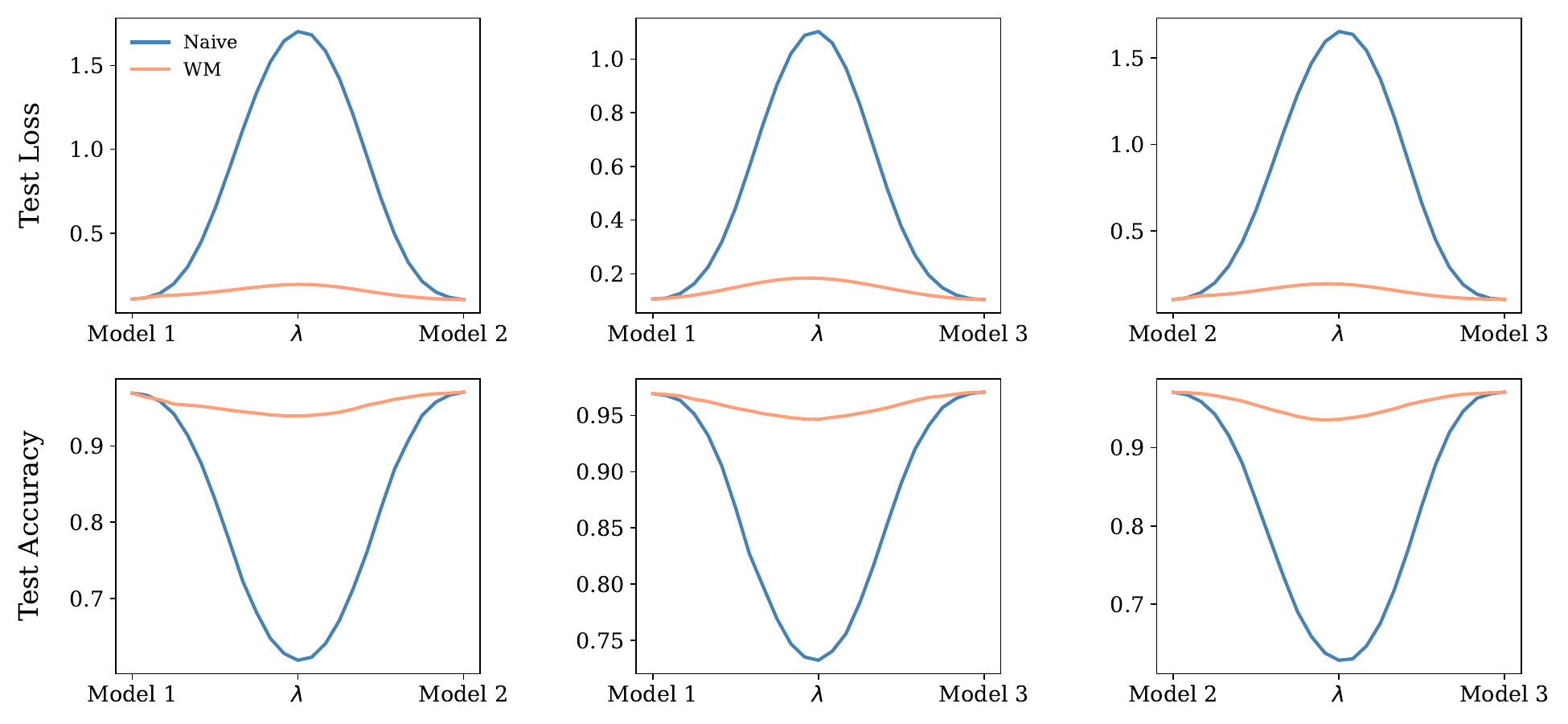} 
    \caption{Linear Mode Connectivity for ViT-SMoE $(k=2)$ on MNIST with 2 layers and 4 experts}
    \label{fig:smoe-mnist-2-4}
\end{figure}

\begin{figure}[H]
    \centering
    \includegraphics[width=0.9\textwidth]{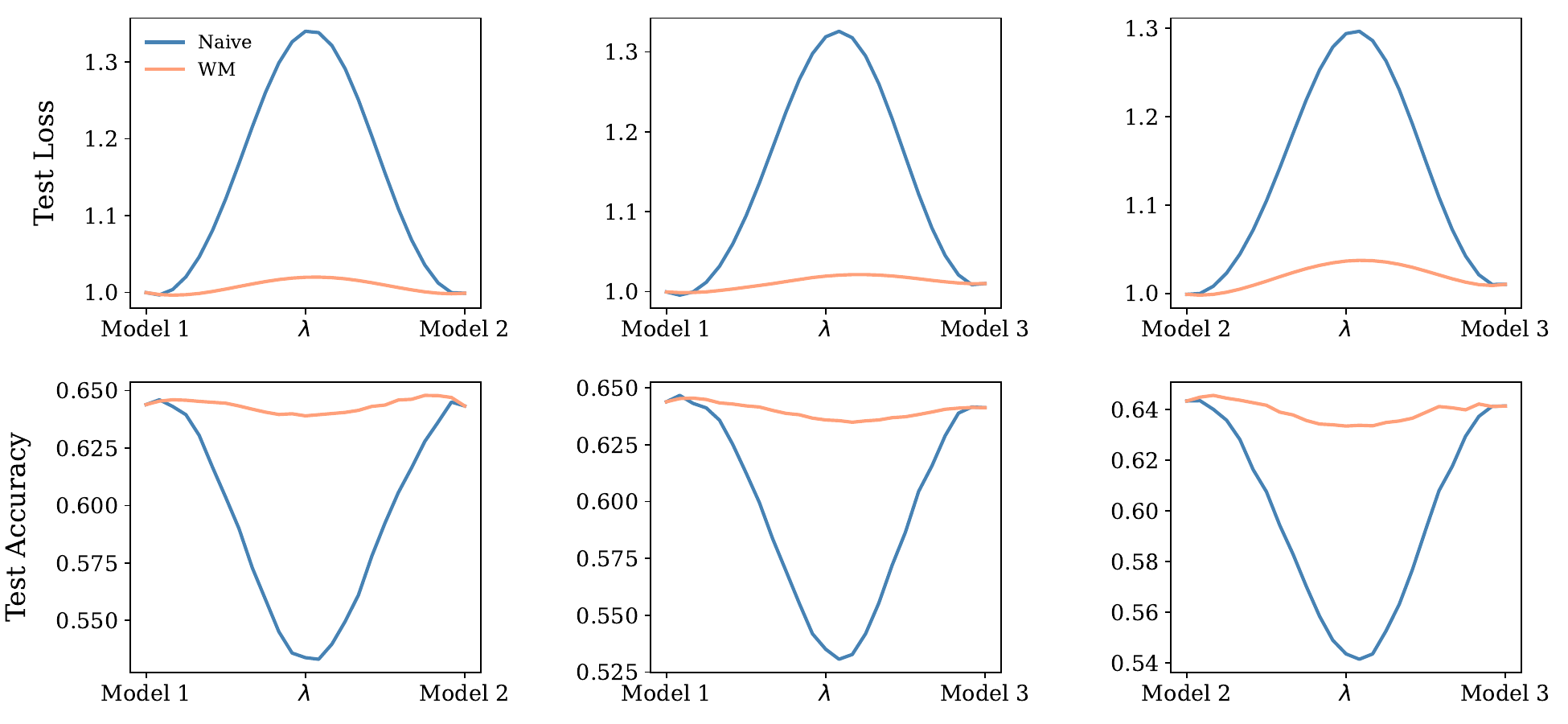} 
    \caption{Linear Mode Connectivity for ViT-SMoE $(k=2)$ on CIFAR-10 with 2 layers and 4 experts}
    \label{fig:smoe-cifar10-2-4}
\end{figure}

\begin{figure}[H]
    \centering
    \includegraphics[width=0.9\textwidth]{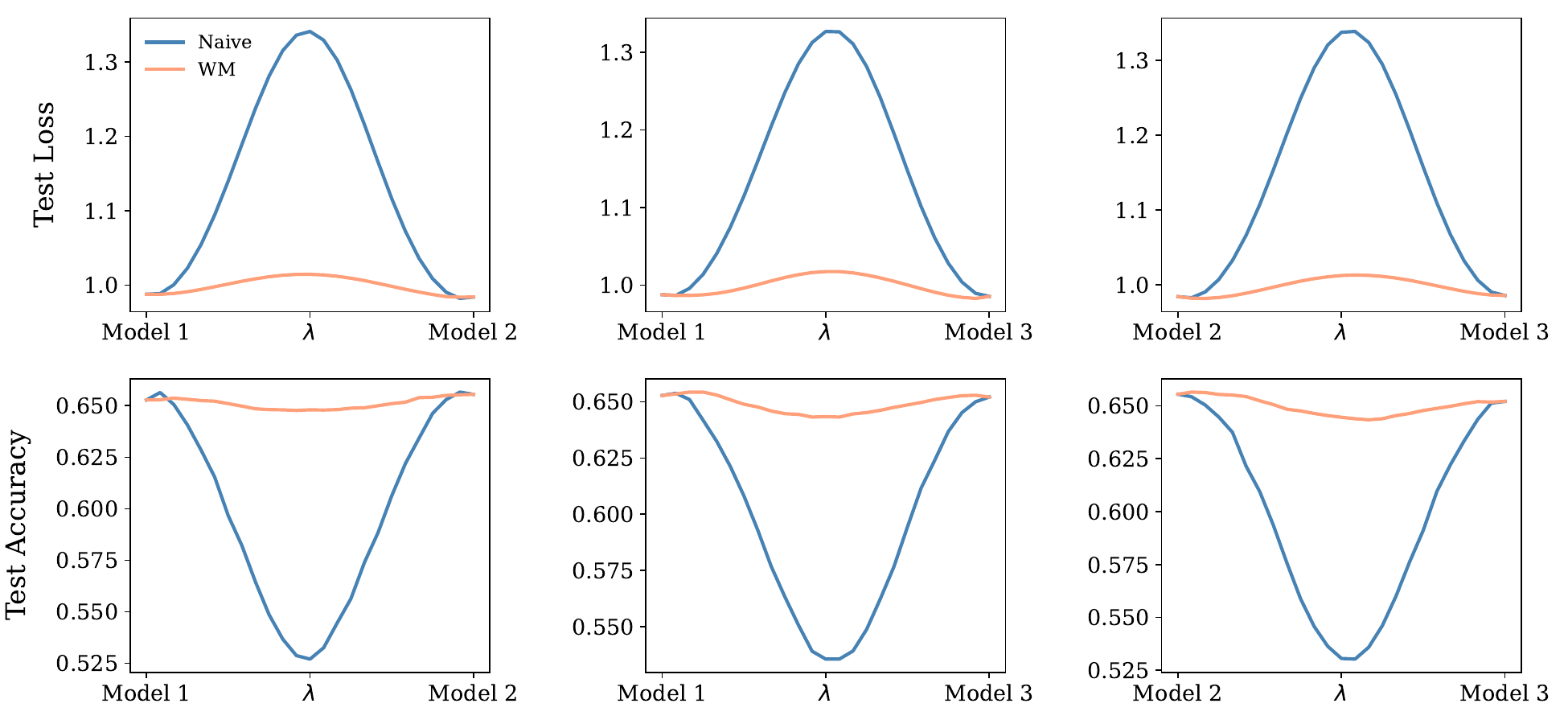} 
    \caption{Linear Mode Connectivity for ViT-SMoE $(k=2)$ on CIFAR-10 with 2 layers and 8 experts}
    \label{fig:smoe-cifar10-2-8}
\end{figure}

\begin{figure}[H]
    \centering
    \includegraphics[width=0.9\textwidth]{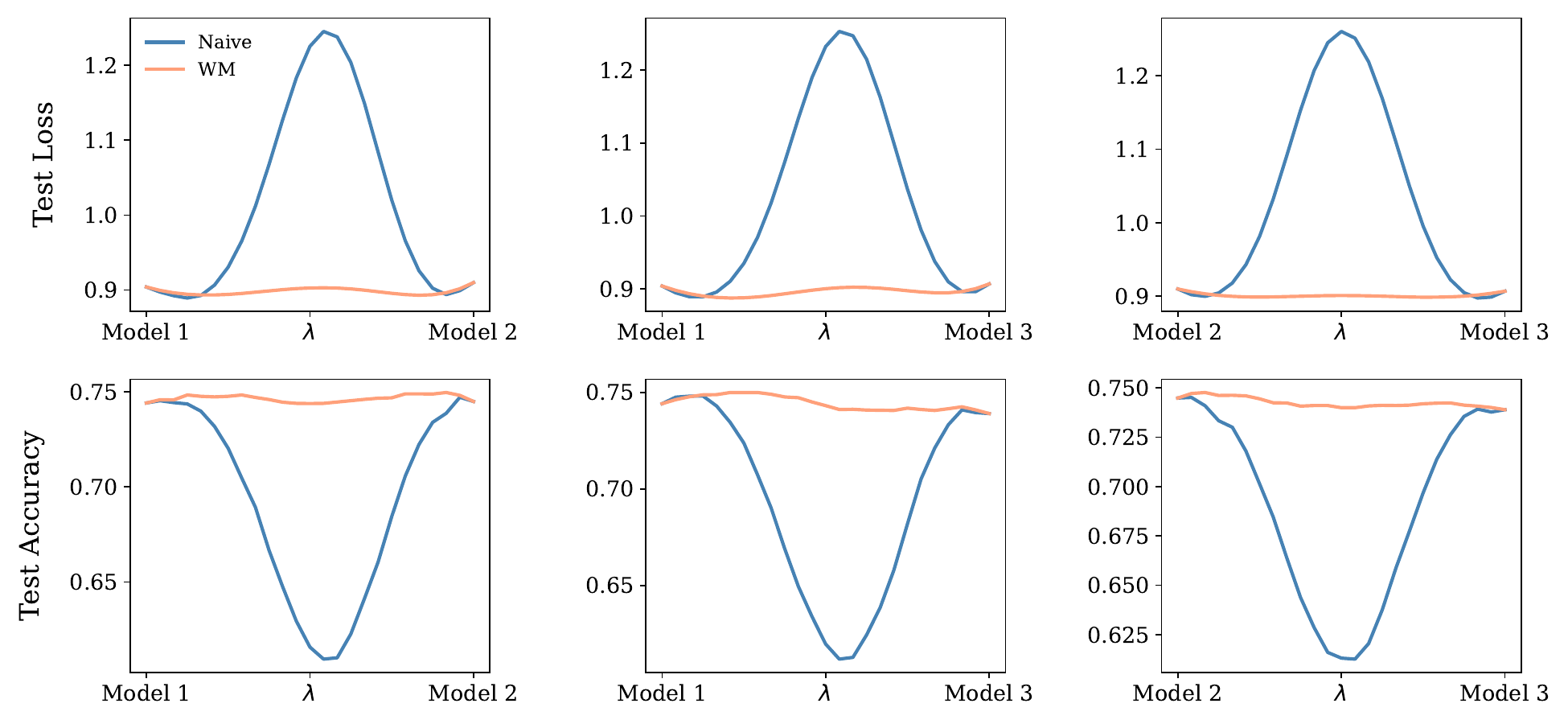} 
    \caption{Linear Mode Connectivity for ViT-SMoE $(k=2)$ on CIFAR-10 with 6 layers and 4 experts}
    \label{fig:smoe-cifar10-6-4}
\end{figure}

\begin{figure}[H]
    \centering
    \includegraphics[width=0.9\textwidth]{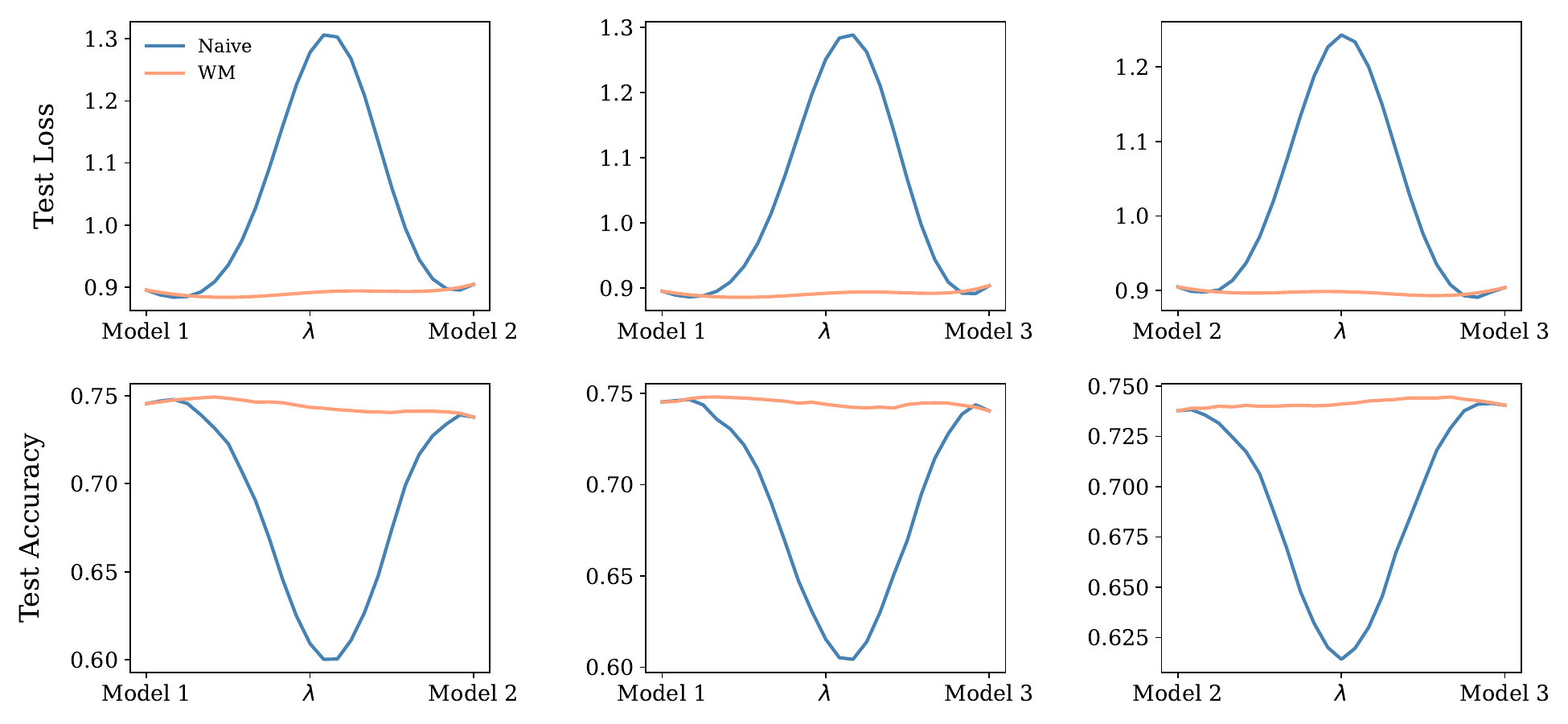} 
    \caption{Linear Mode Connectivity for ViT-SMoE $(k=2)$ on CIFAR-10 with 6 layers and 8 experts}
    \label{fig:smoe-cifar10-6-8}
\end{figure}

\begin{figure}[H]
    \centering
    \includegraphics[width=0.9\textwidth]{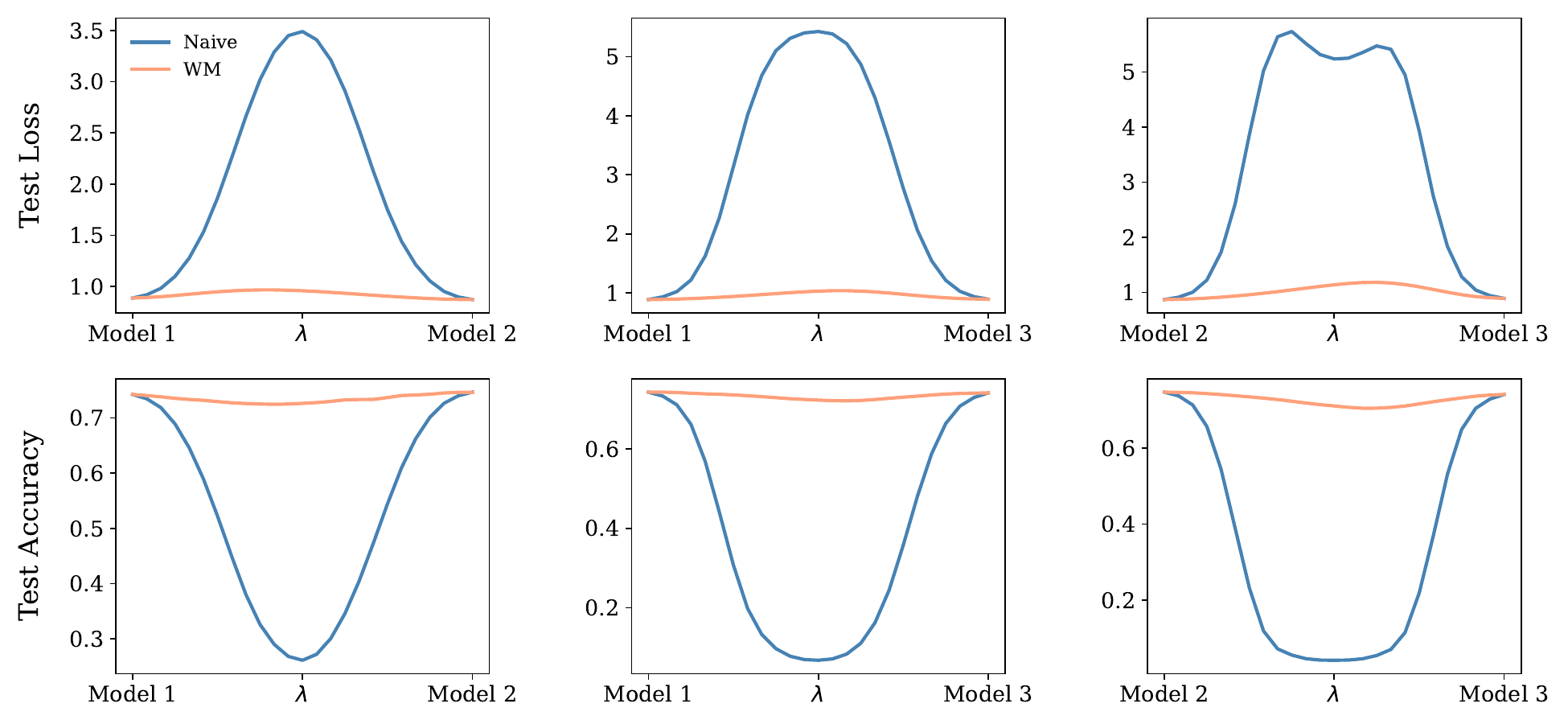}
    \caption{Linear Mode Connectivity for ViT-SMoE $(k=2)$ on CIFAR-100 with 6 layers and 4 experts}
    \label{fig:smoe-cifar100-6-4}
\end{figure}

\begin{figure}[H]
    \centering
    \includegraphics[width=0.9\textwidth]{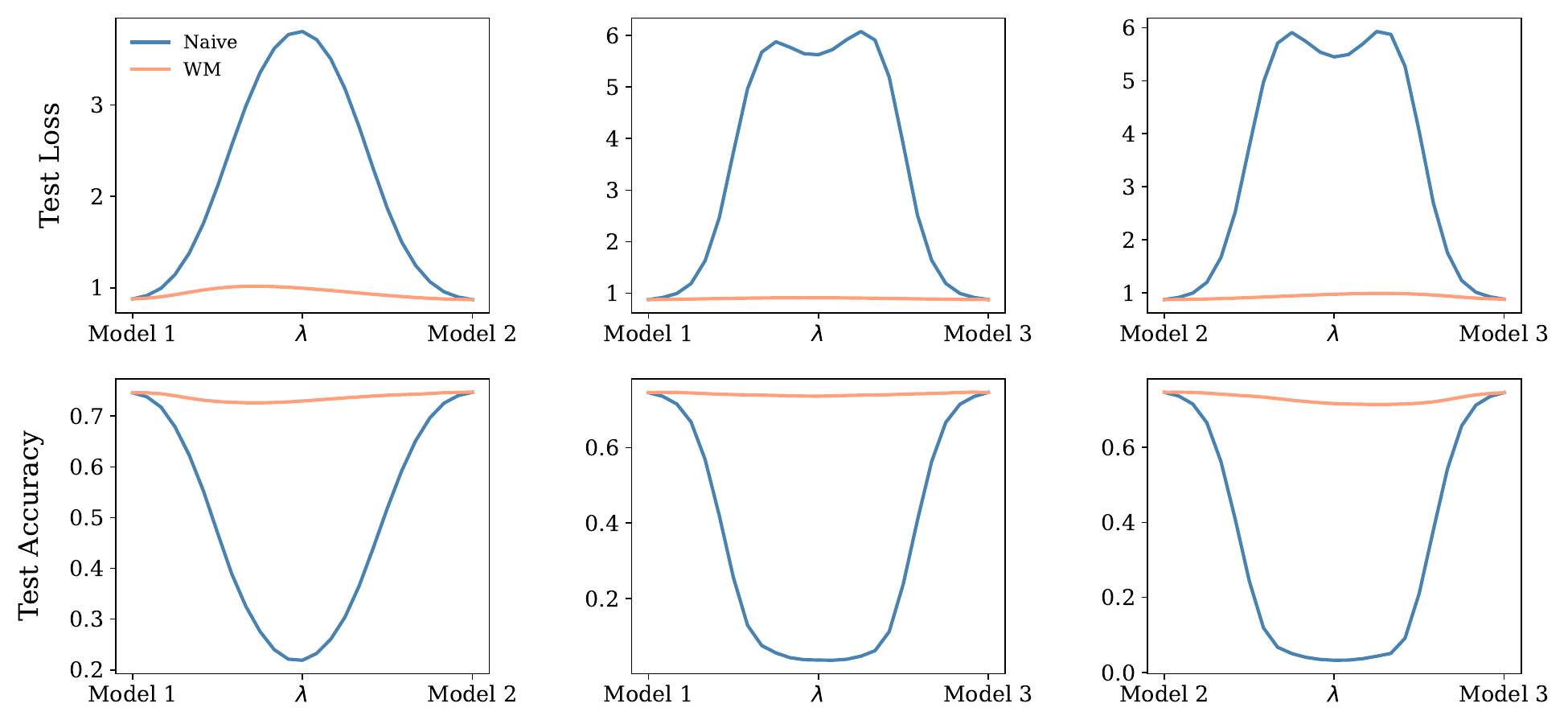}
    \caption{Linear Mode Connectivity for ViT-SMoE $(k=2)$ on CIFAR-100 with 6 layers and 8 experts}
    \label{fig:smoe-cifar100-6-8}
\end{figure}

\begin{figure}[H]
    \centering
    \includegraphics[width=0.9\textwidth]{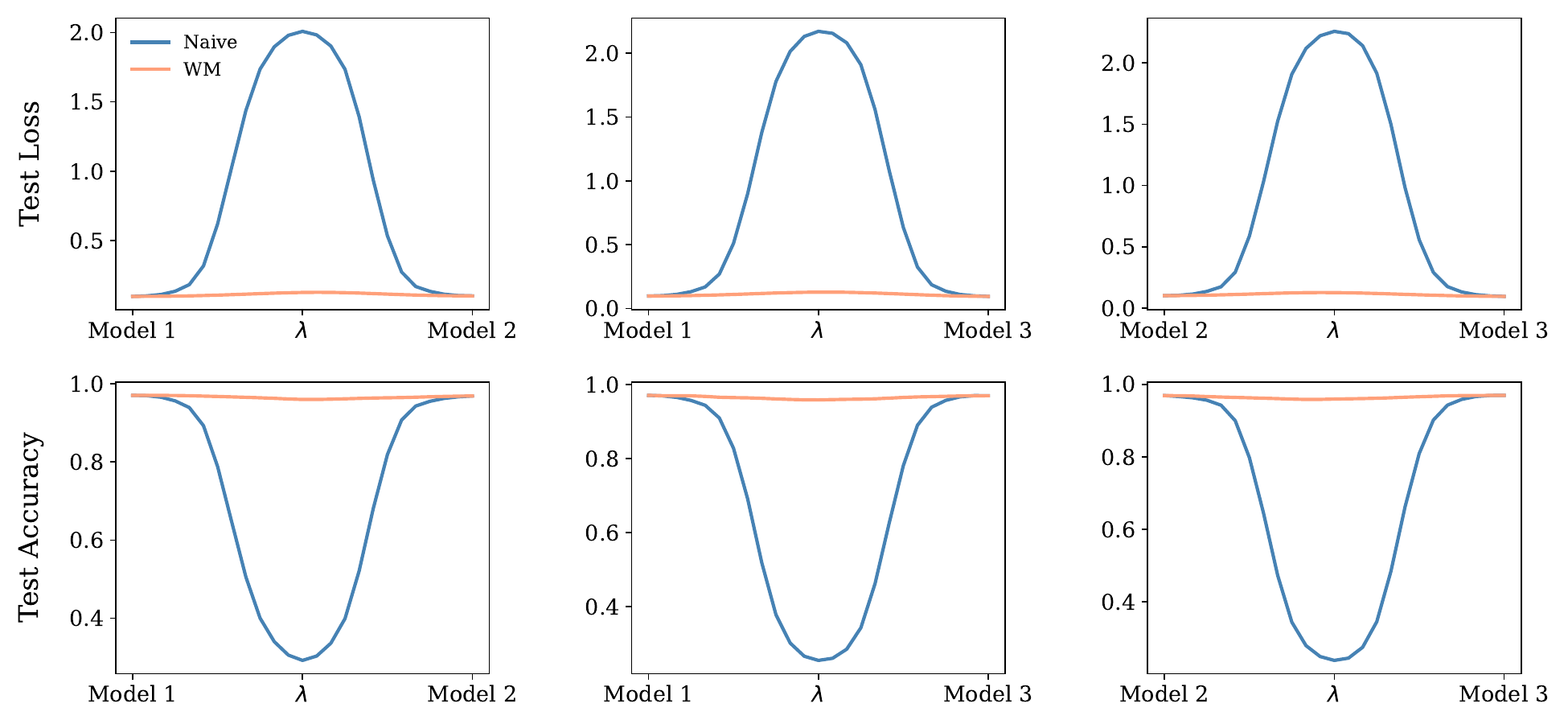}
    \caption{Linear Mode Connectivity for ViT-SMoE $(k=2)$ on ImageNet-21k$\rightarrow$CIFAR-10 with 12 layers and 4 experts}
    \label{fig:smoe-imagenet21k-cifar10-12-4}
\end{figure}

\begin{figure}[H]
    \centering
    \includegraphics[width=0.9\textwidth]{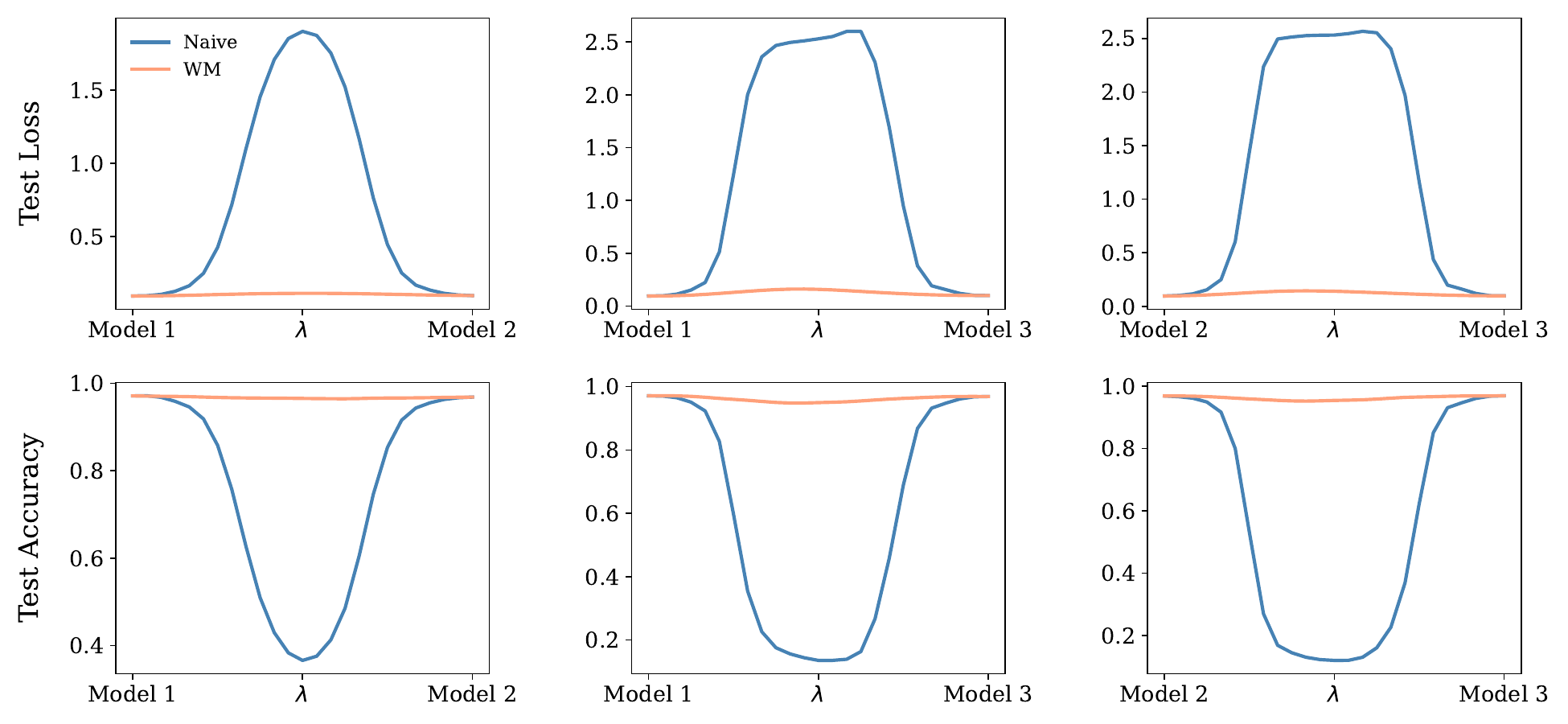}
    \caption{Linear Mode Connectivity for ViT-SMoE $(k=2)$ on ImageNet-21k$\rightarrow$CIFAR-10 with 12 layers and 8 experts}
    \label{fig:smoe-imagenet21k-cifar10-12-8}
\end{figure}

\begin{figure}[H]
    \centering
    \includegraphics[width=0.9\textwidth]{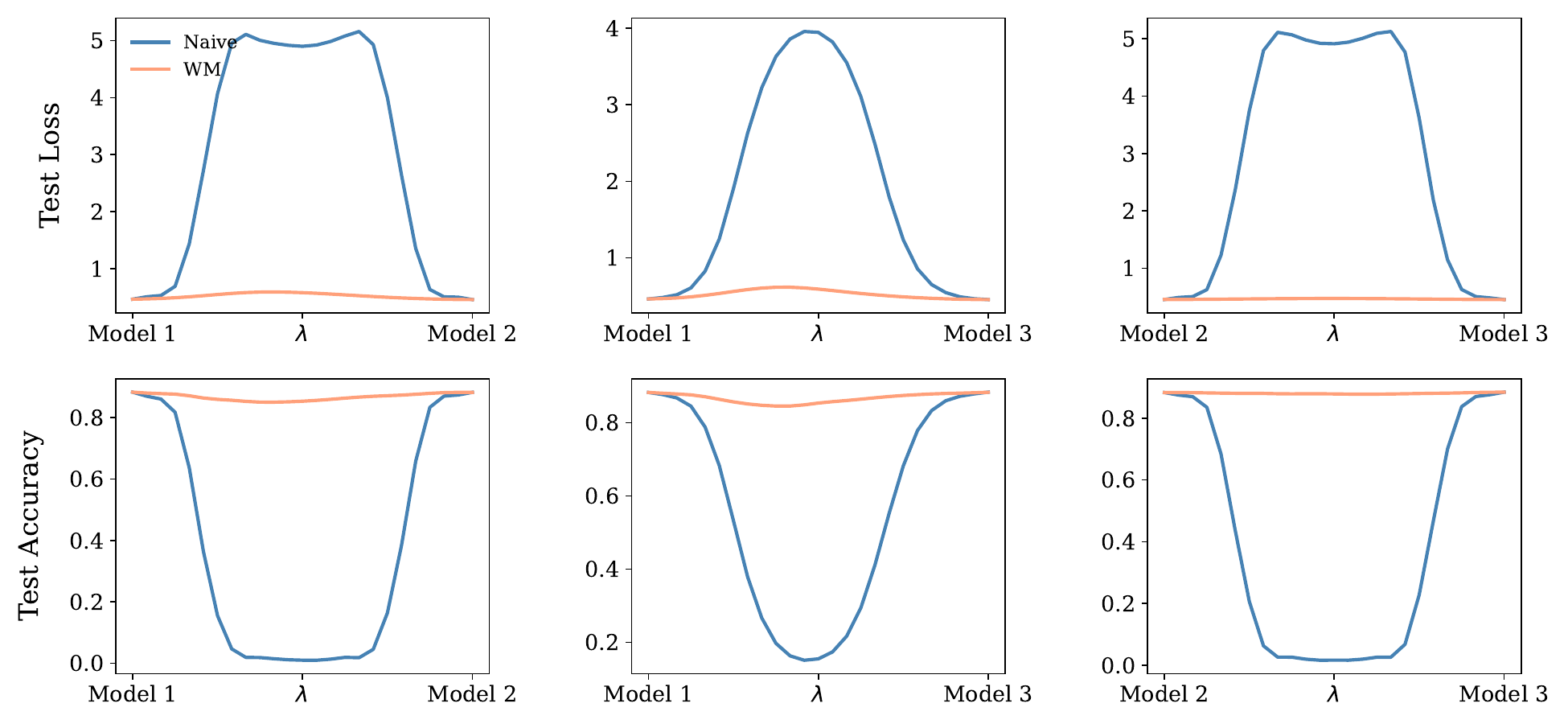}
    \caption{Linear Mode Connectivity for ViT-SMoE $(k=2)$ on ImageNet-21k$\rightarrow$CIFAR-100 with 12 layers and 4 experts}
    \label{fig:smoe-imagenet21k-cifar100-12-4}
\end{figure}

\begin{figure}[H]
    \centering
    \includegraphics[width=0.9\textwidth]{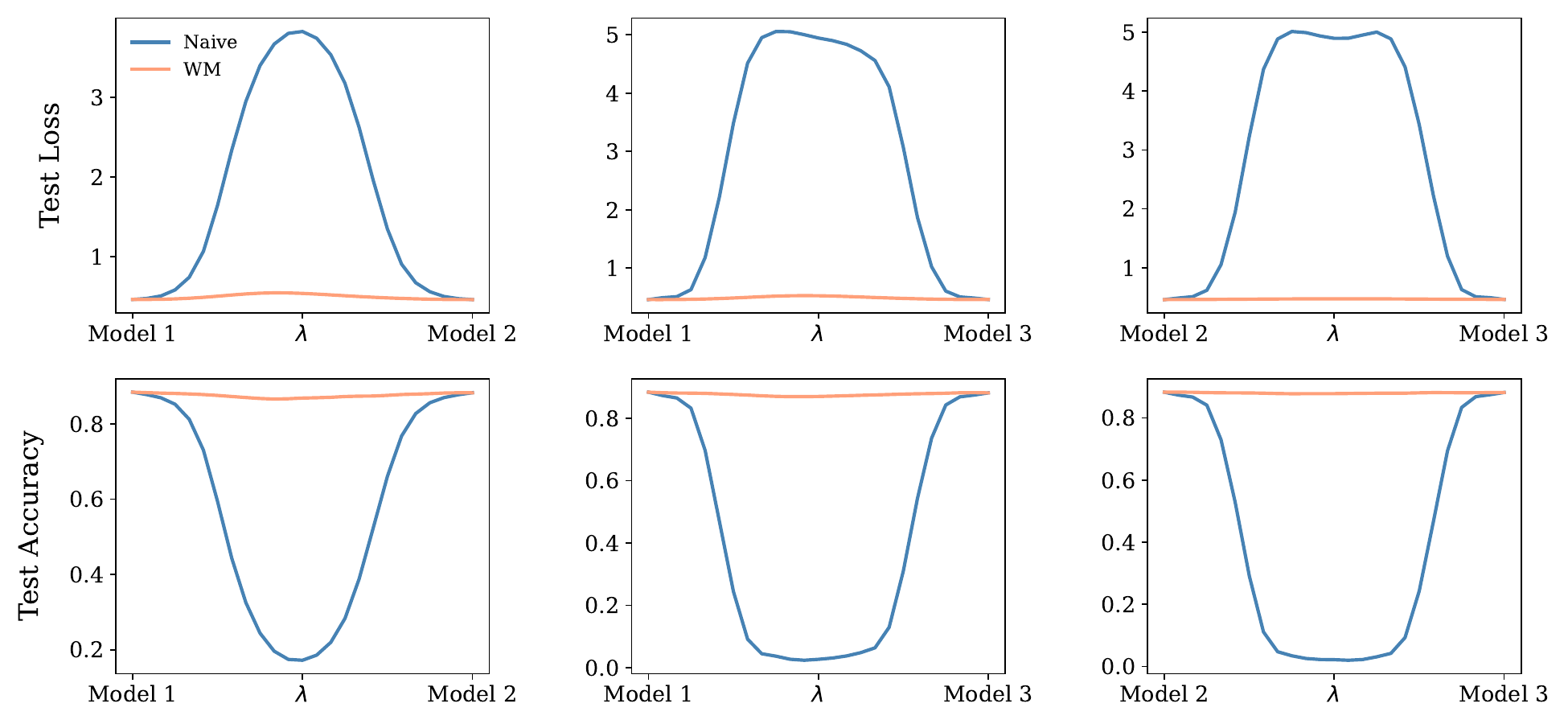}
    \caption{Linear Mode Connectivity for ViT-SMoE $(k=2)$ on ImageNet-21k$\rightarrow$CIFAR-100 with 12 layers and 8 experts}
    \label{fig:smoe-imagenet21k-cifar100-12-8}
\end{figure}


\begin{figure}[H]
    \centering
    \includegraphics[width=0.9\linewidth]{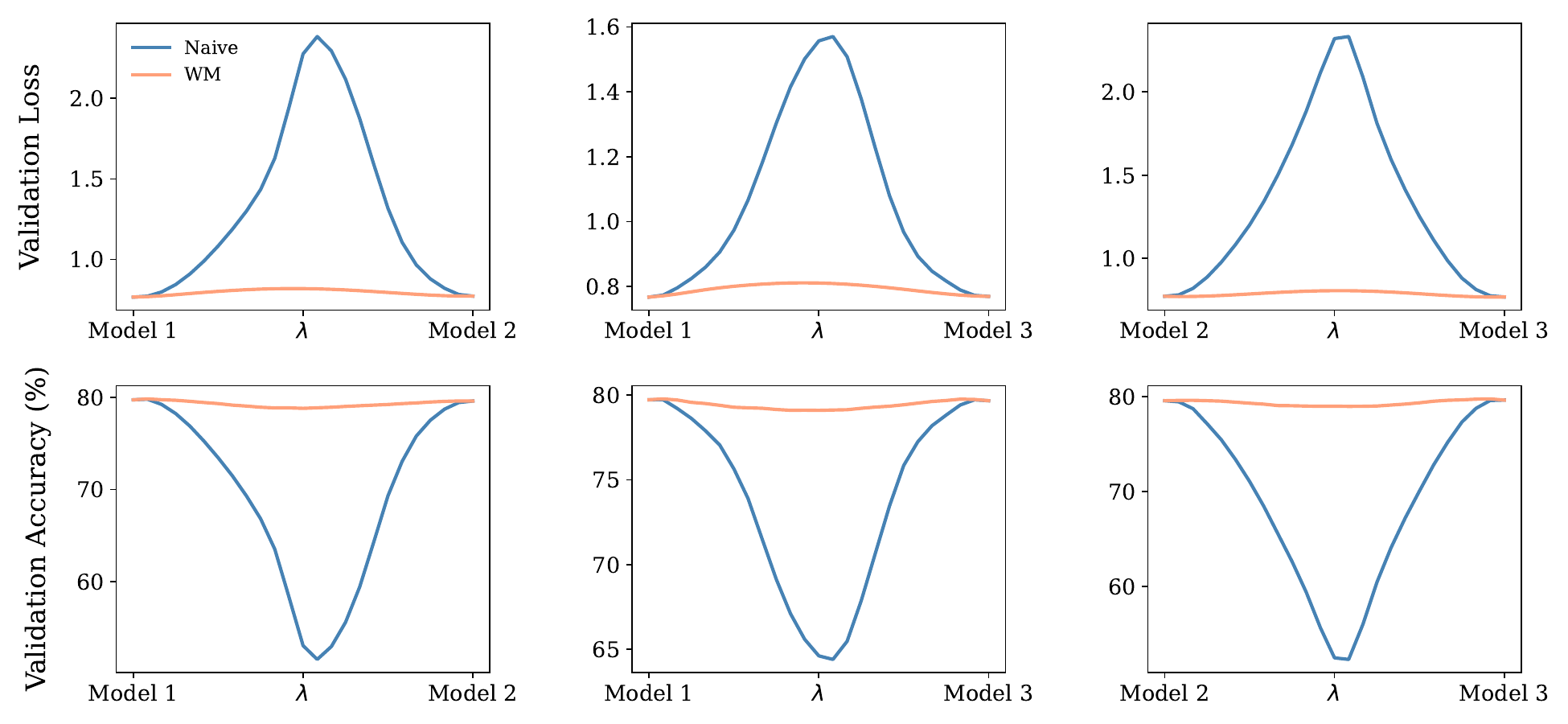}
    \caption{Linear Mode Connectivity for ViT-SMoE $(k=2)$ on ImageNet-1k with 12 layers and 4 experts}
    \label{fig:imagenet-smoe-4}
\end{figure}
\begin{figure}[H]
    \centering
    \includegraphics[width=0.9\linewidth]{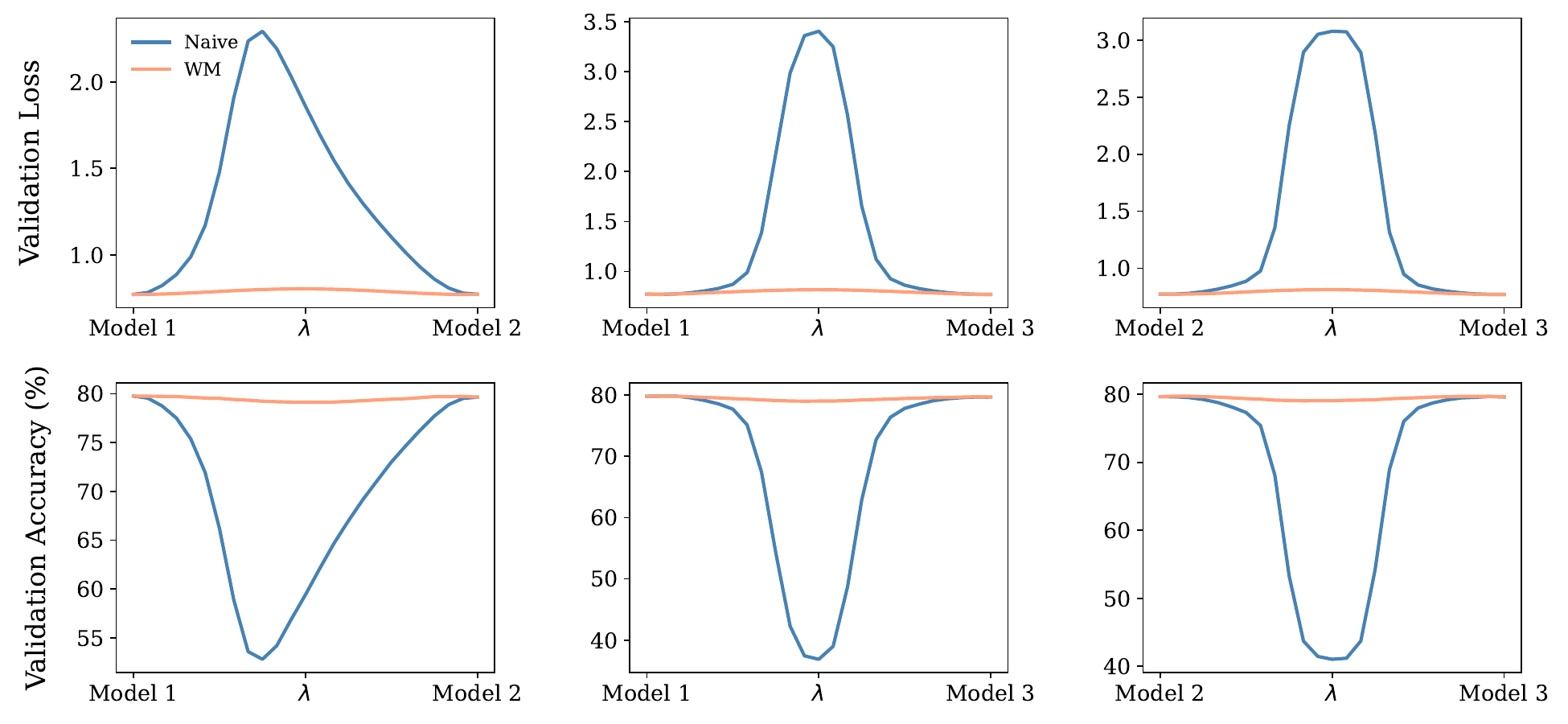}
    \caption{Linear Mode Connectivity for ViT-SMoE $(k=2)$ on ImageNet-1k with 12 layers and 8 experts}
    \label{fig:imagenet-smoe-8}
\end{figure}
\begin{figure}[H]
    \centering
    \includegraphics[width=0.9\linewidth]{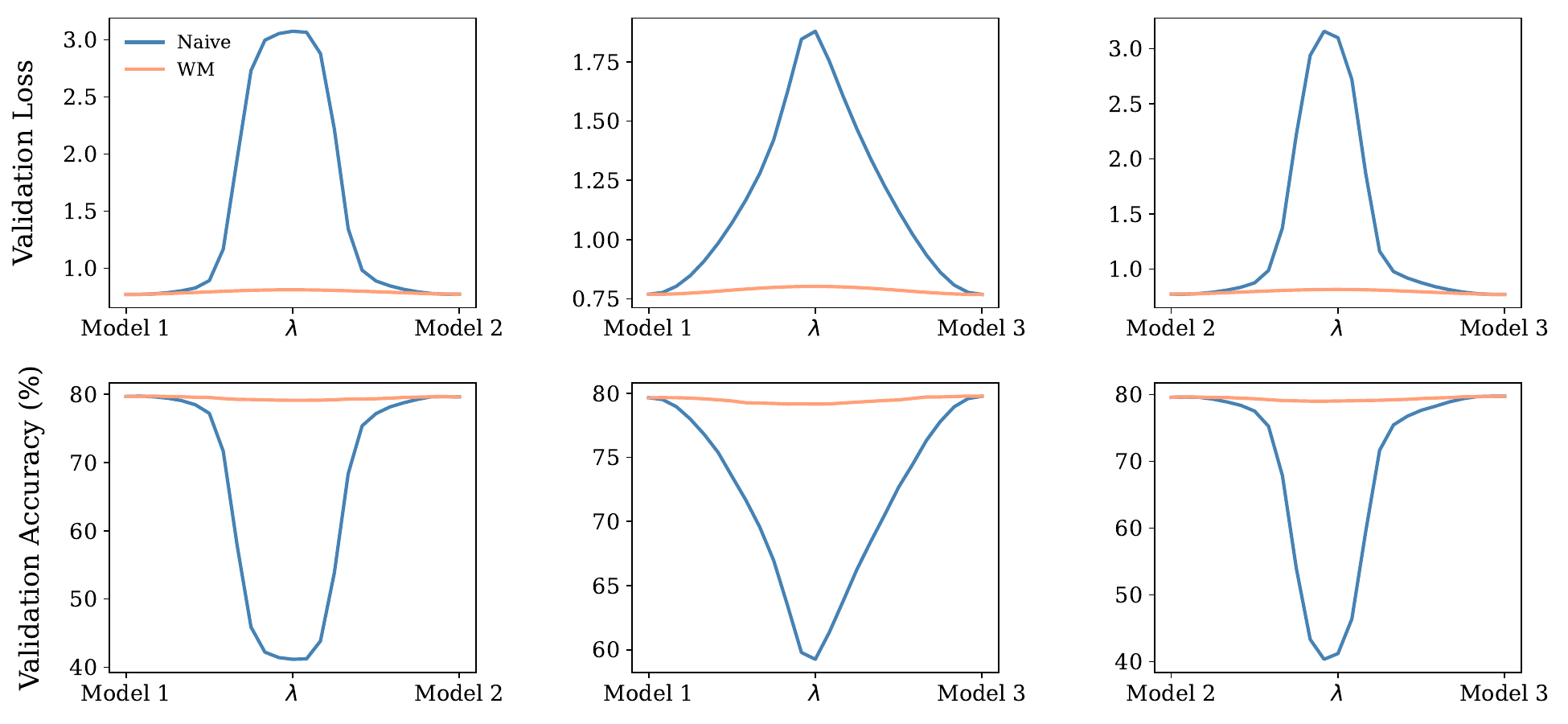}
    \caption{Linear Mode Connectivity for ViT-SMoE $(k=2)$ on ImageNet-1k with 12 layers and 16 experts}
    \label{fig:imagenet-smoe-16}
\end{figure}

\begin{figure}[H]
    \centering
    \includegraphics[width=0.9\linewidth]{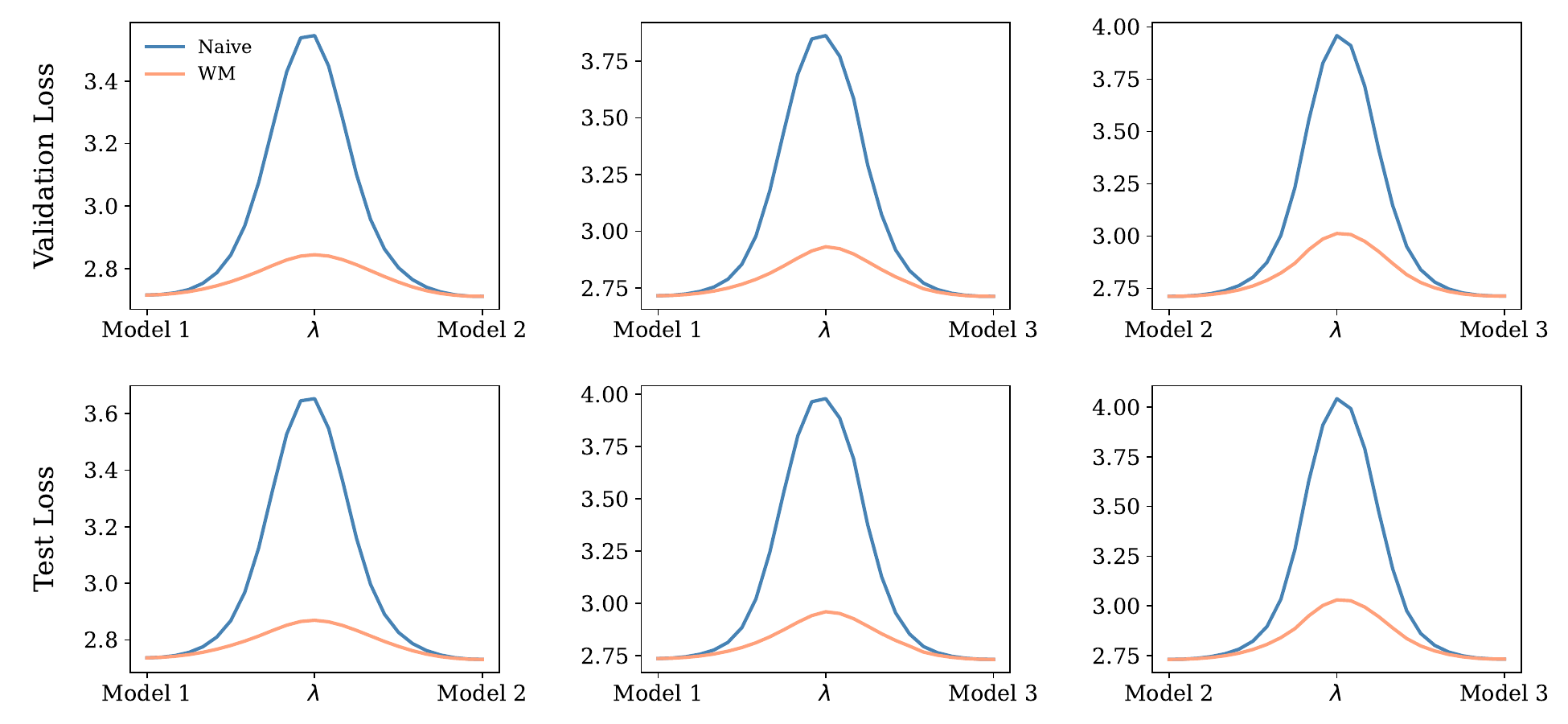}
    \caption{Linear Mode Connectivity for GPT2-SMoE $(k=2)$ on Wikitext103 with 12 layers and 4 experts}
    \label{fig:wikitext103-smoe-4}
\end{figure}
\begin{figure}[H]
    \centering
    \includegraphics[width=0.9\linewidth]{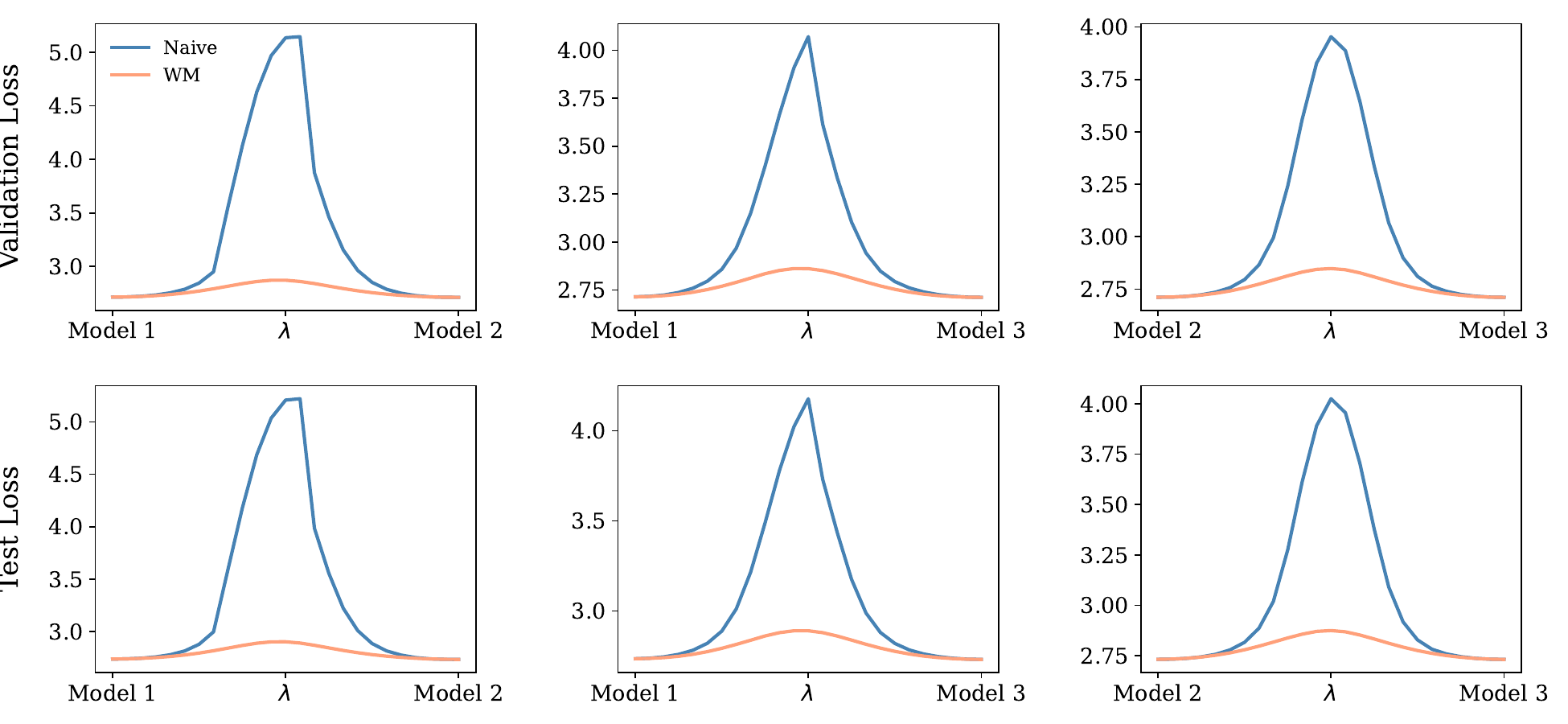}
    \caption{Linear Mode Connectivity for GPT2-SMoE $(k=2)$ on Wikitext103 with 12 layers and 8 experts}
    \label{fig:wikitext103-smoe-8}
\end{figure}
\begin{figure}[H]
    \centering
    \includegraphics[width=0.9\linewidth]{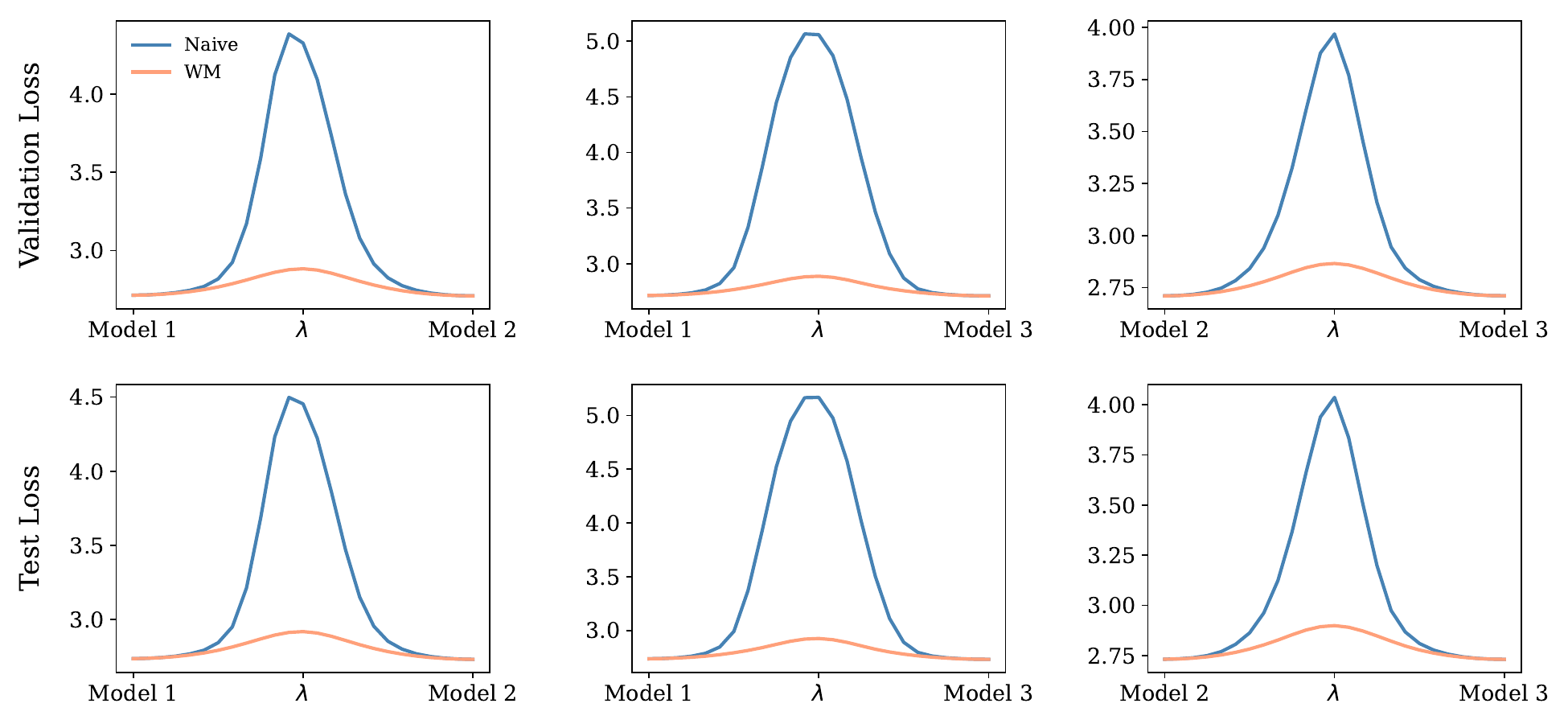}
    \caption{Linear Mode Connectivity for GPT2-SMoE $(k=2)$ on Wikitext103 with 12 layers 16 experts}
    \label{fig:wikitext103-smoe-16}
\end{figure}
\begin{figure}[H]
    \centering
    \includegraphics[width=0.9\linewidth]{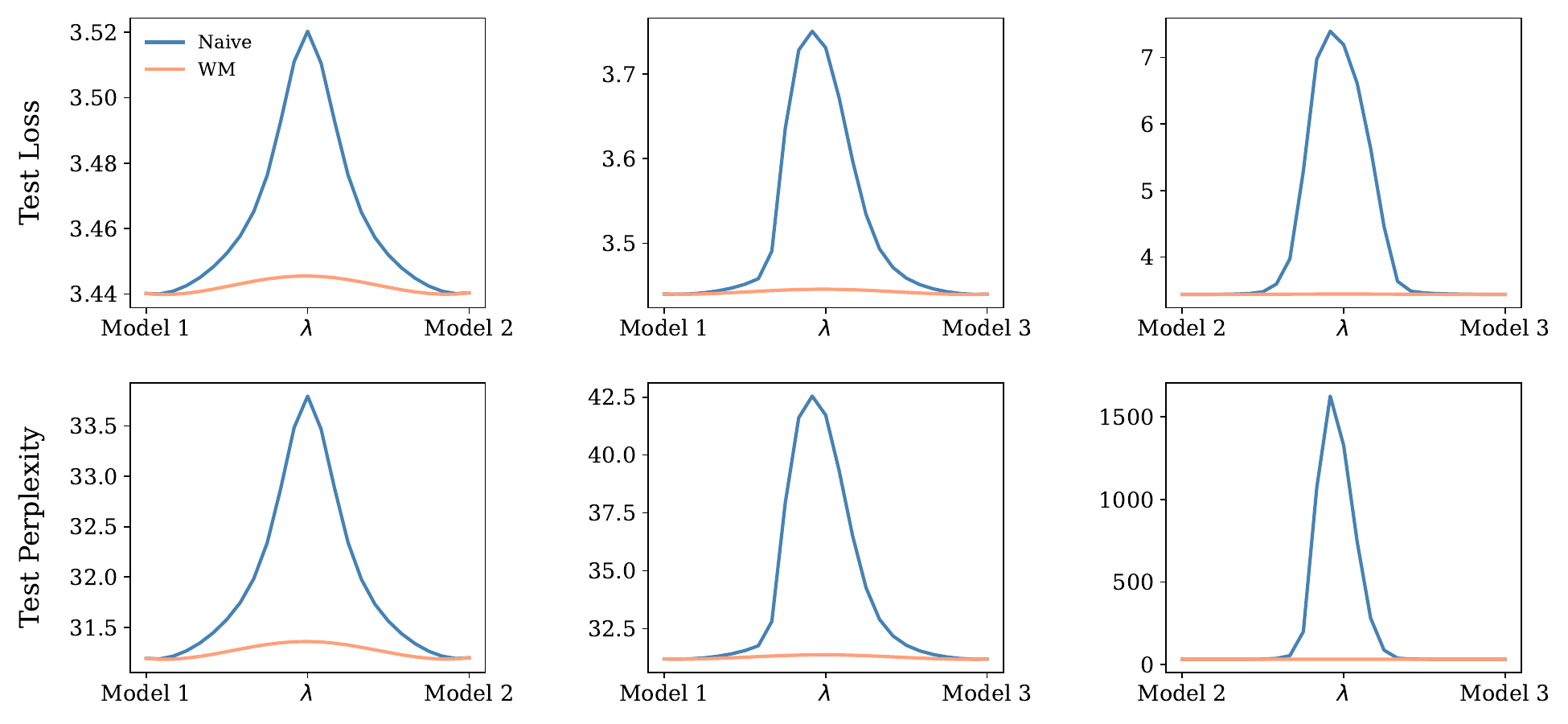}
    \caption{Linear Mode Connectivity for GPT2-SMoE $(k=2)$ on One Billion Word with 12 layers and 4 experts}
    \label{fig:lm1b-smoe-4}
\end{figure}
\begin{figure}[H]
    \centering
    \includegraphics[width=0.9\linewidth]{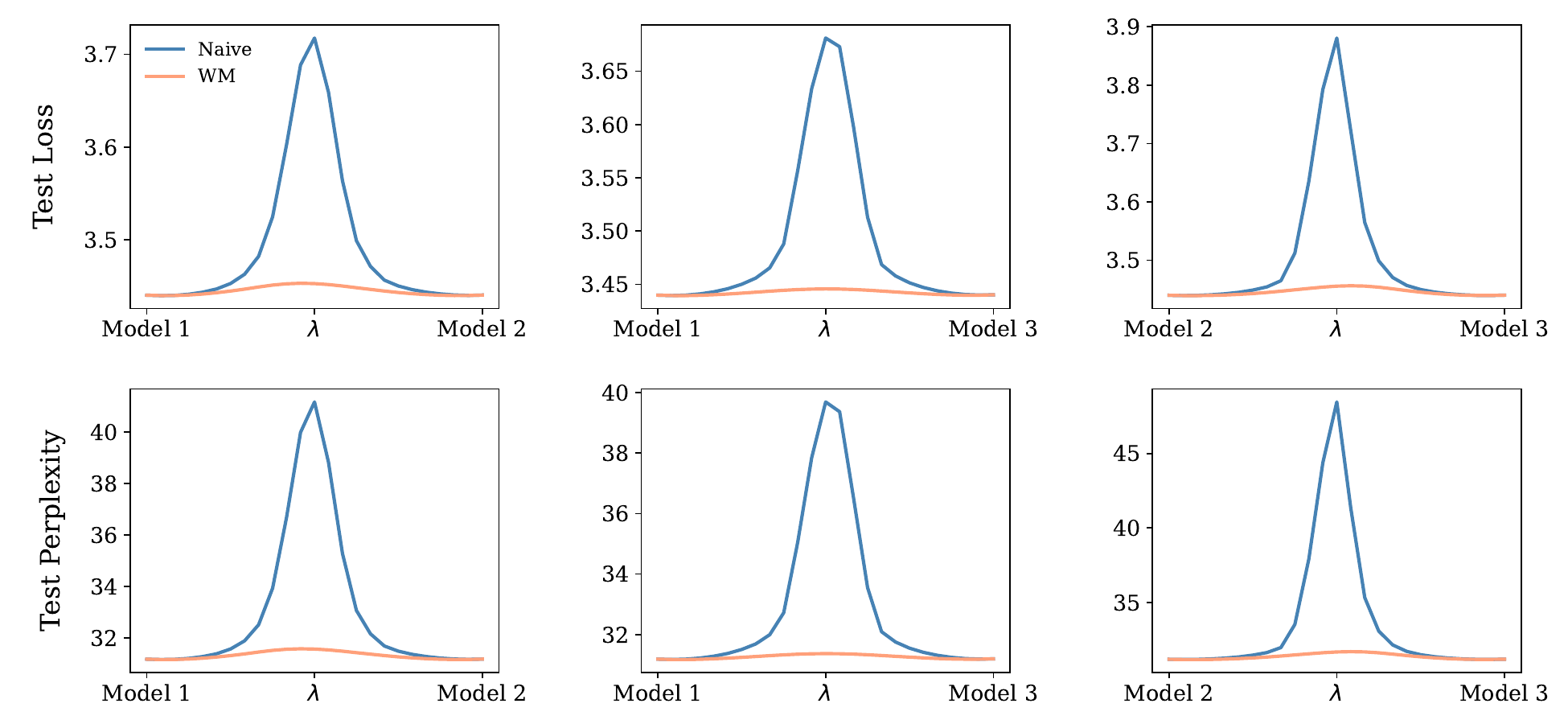}
    \caption{Linear Mode Connectivity for GPT2-SMoE $(k=2)$ on One Billion Word with 12 layers and 8 experts}
    \label{fig:lm1b-smoe-8}
\end{figure}

\begin{figure}[H]
    \centering
    \includegraphics[width=0.9\linewidth]{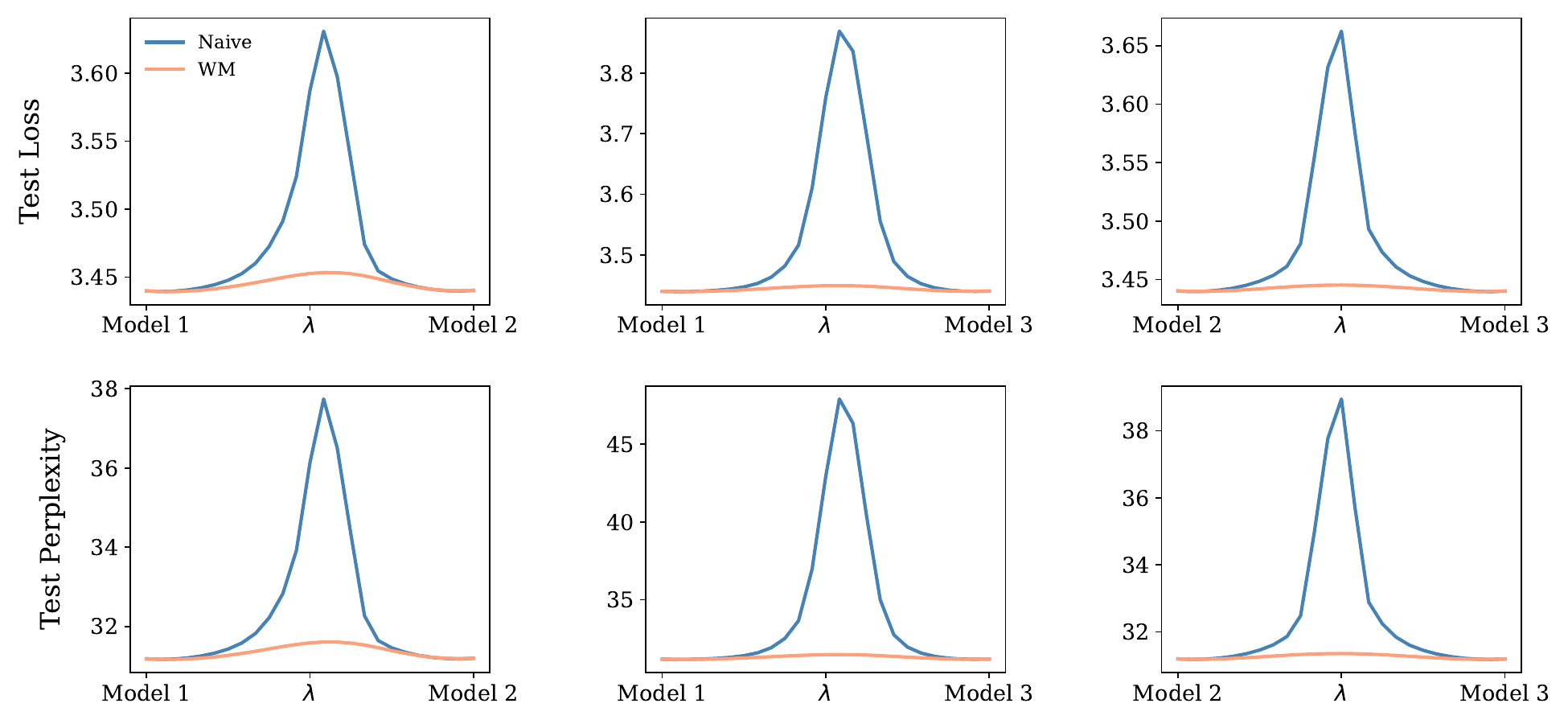}
    \caption{Linear Mode Connectivity for GPT2-SMoE $(k=2)$ on One Billion Word with 12 layers and 16 experts}
    \label{fig:lm1b-smoe-16}
\end{figure}

\subsubsection{DeepSeek Mixture-of-Experts}

\begin{figure}[H]
    \centering
    \includegraphics[width=0.9\textwidth]{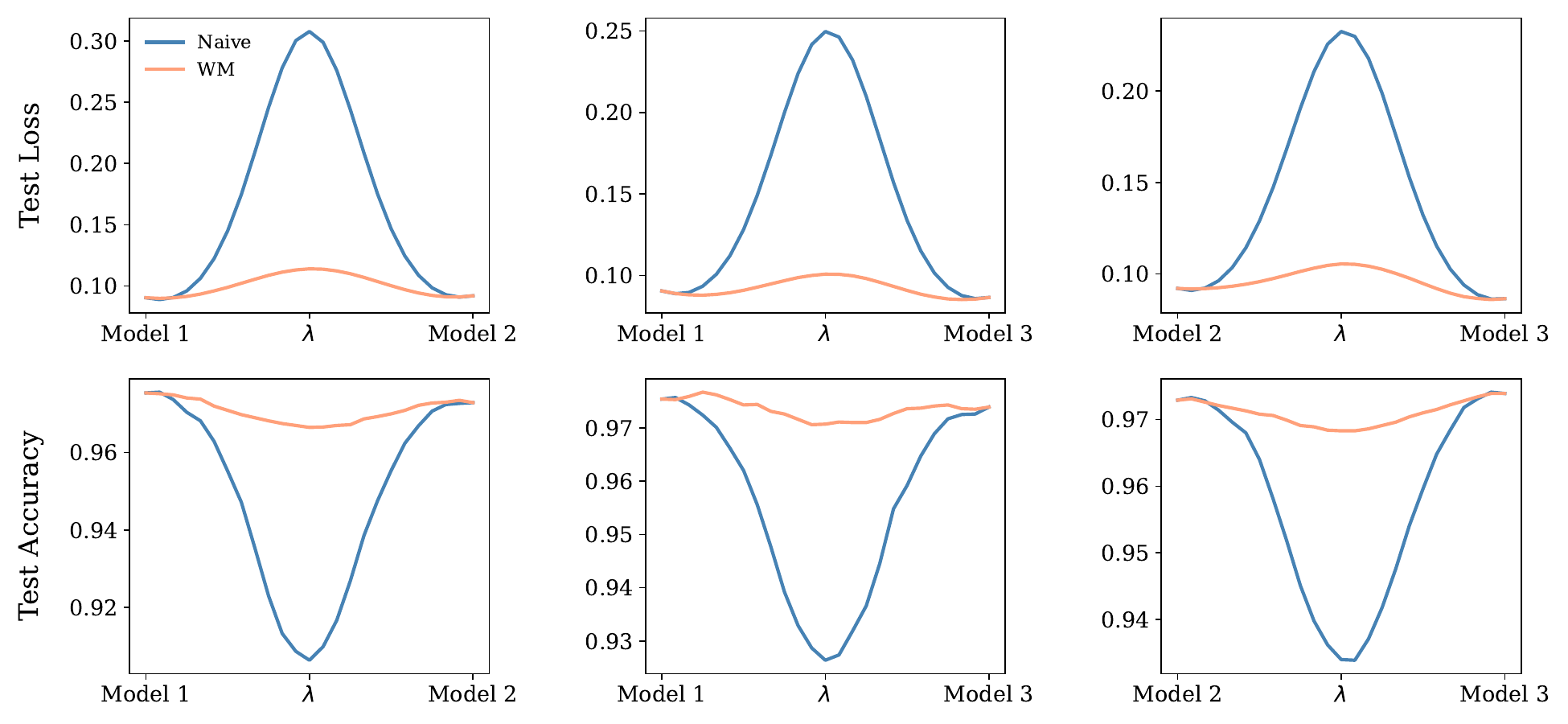} 
    \caption{Linear Mode Connectivity for ViT-DeepSeekMoE $(k=2, s=1)$ on MNIST with 1 layer and 4 experts}
    \label{fig:deepseek-mnist-1-4}
\end{figure}

\begin{figure}[H]
    \centering
    \includegraphics[width=0.9\textwidth]{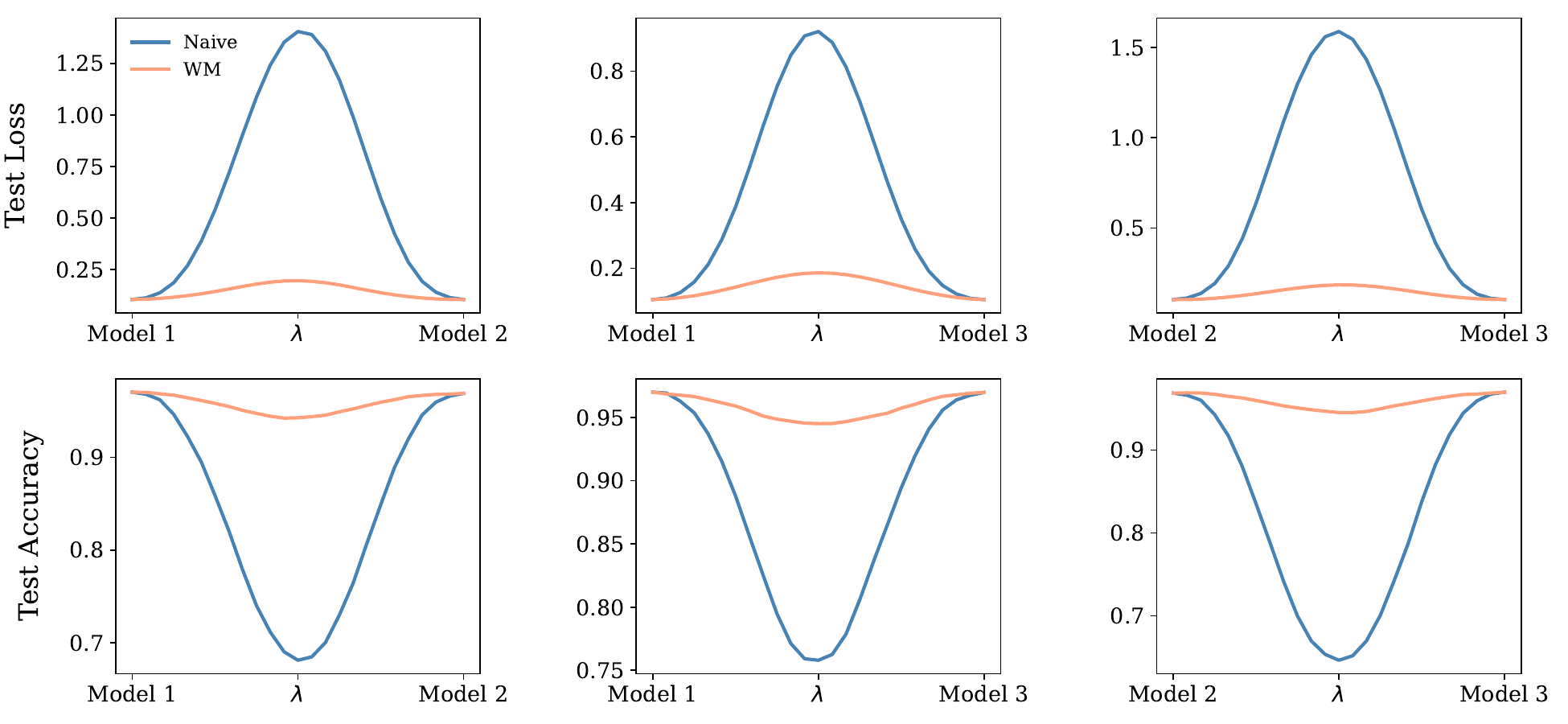} 
    \caption{Linear Mode Connectivity for ViT-DeepSeekMoE $(k=2, s=1)$ on MNIST with 2 layers and 4 experts}
    \label{fig:deepseek-mnist-2-4}
\end{figure}

\begin{figure}[H]
    \centering
    \includegraphics[width=0.9\textwidth]{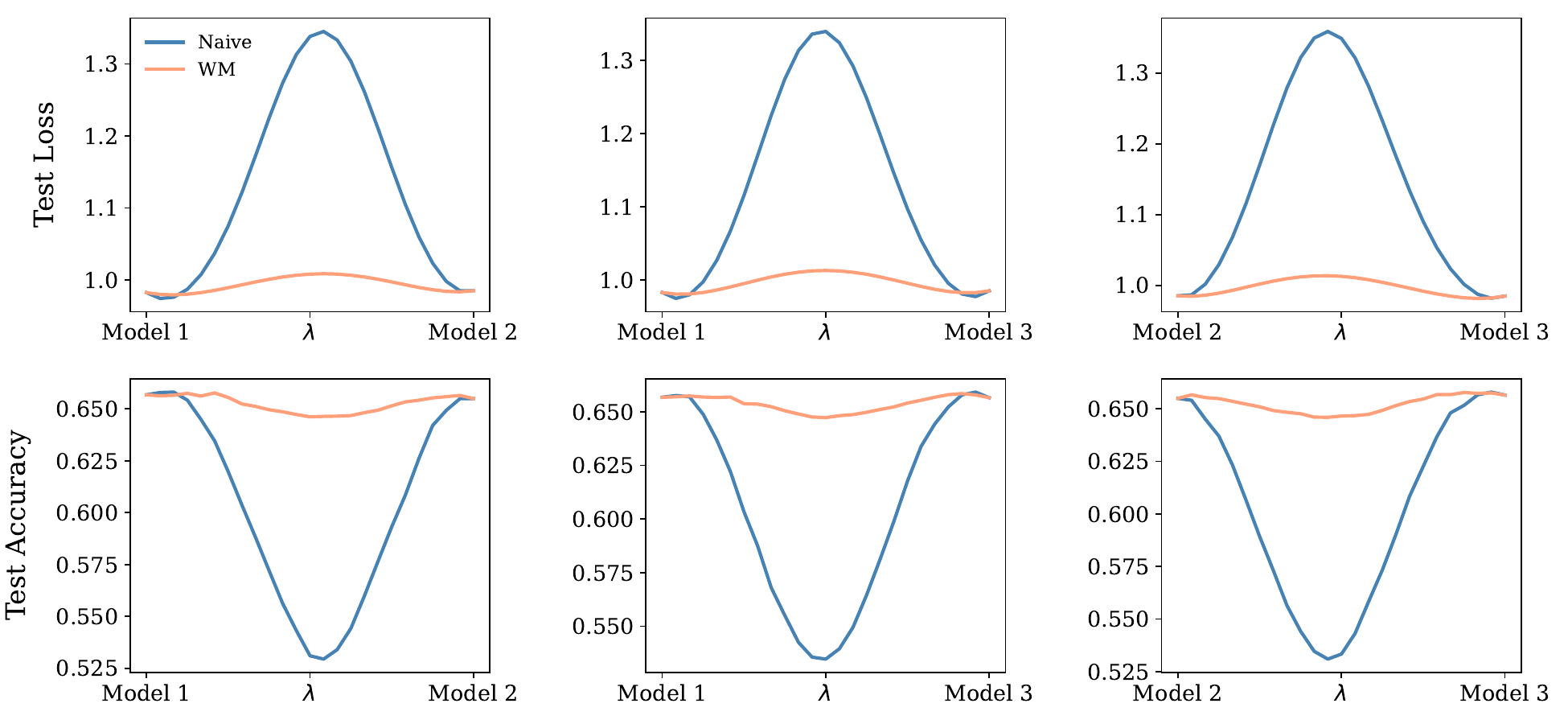} 
    \caption{Linear Mode Connectivity for ViT-DeepSeekMoE $(k=2, s=1)$ on CIFAR-10 with 2 layers and 4 experts}
    \label{fig:deepseek-cifar10-2-4}
\end{figure}

\begin{figure}[H]
    \centering
    \includegraphics[width=0.9\textwidth]{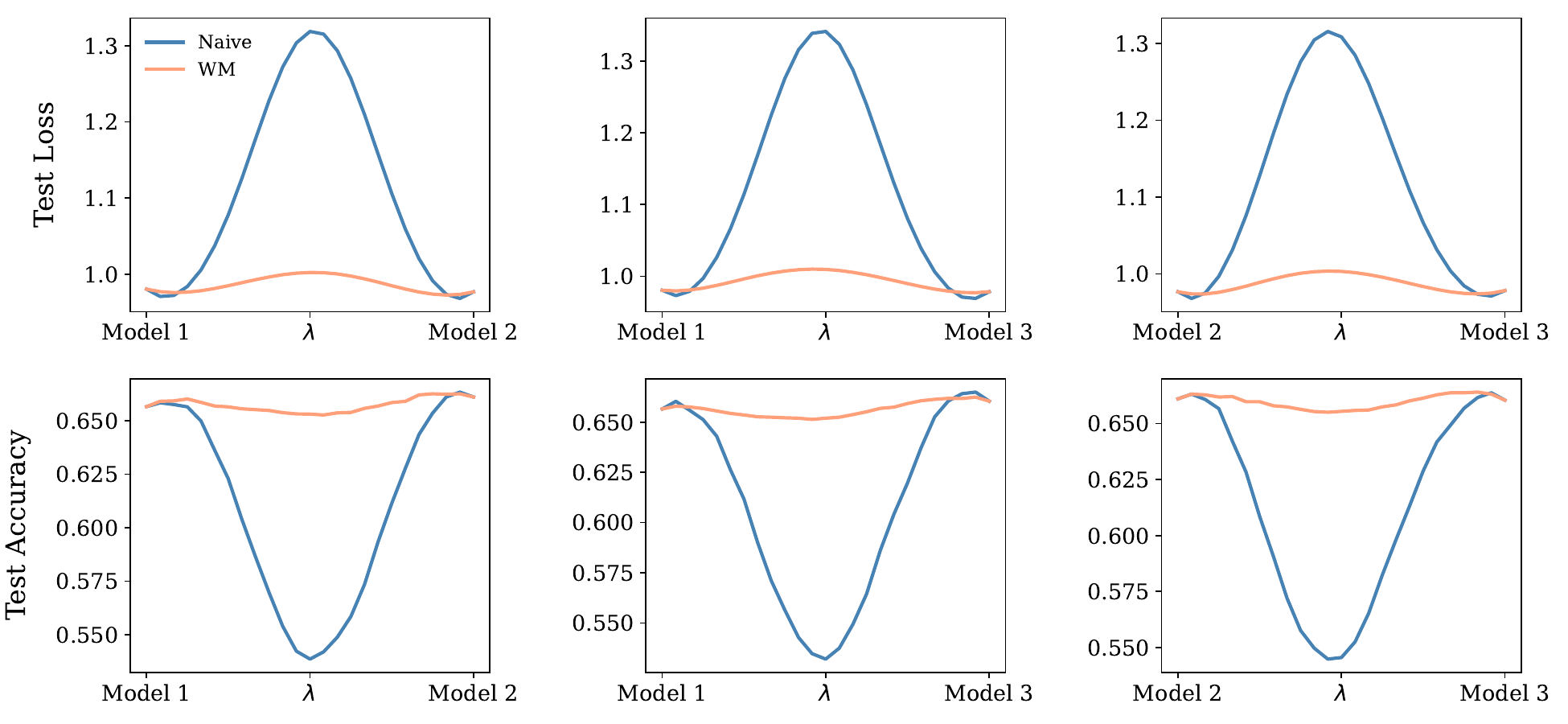} 
    \caption{Linear Mode Connectivity for ViT-DeepSeekMoE $(k=2, s=1)$ on CIFAR-10 with 2 layers and 8 experts}
    \label{fig:deepseek-cifar10-2-8}
\end{figure}

\begin{figure}[H]
    \centering
    \includegraphics[width=0.9\textwidth]{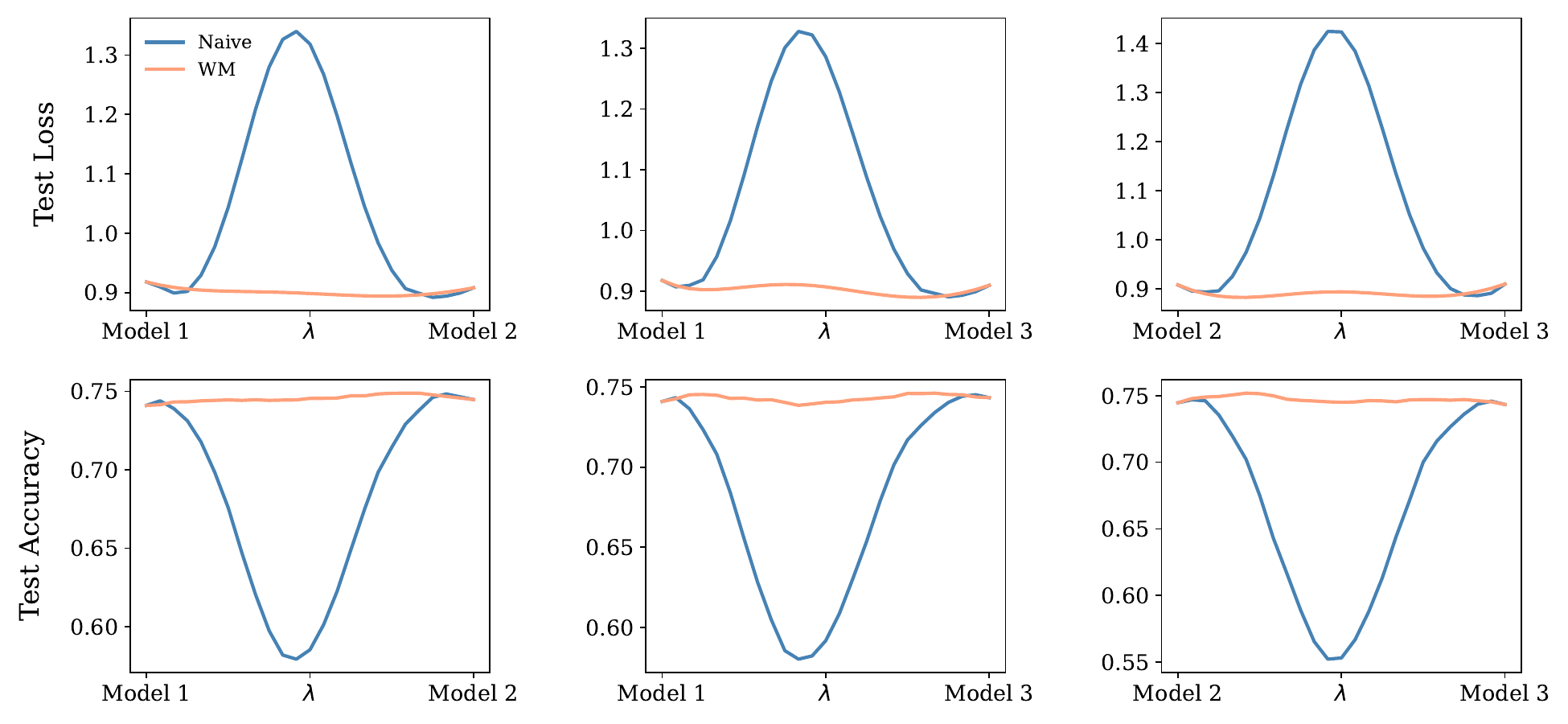} 
    \caption{Linear Mode Connectivity for ViT-DeepSeekMoE $(k=2, s=1)$ on CIFAR-10 with 6 layers and 4 experts}
    \label{fig:deepseek-cifar10-6-4}
\end{figure}

\begin{figure}[H]
    \centering
    \includegraphics[width=0.9\textwidth]{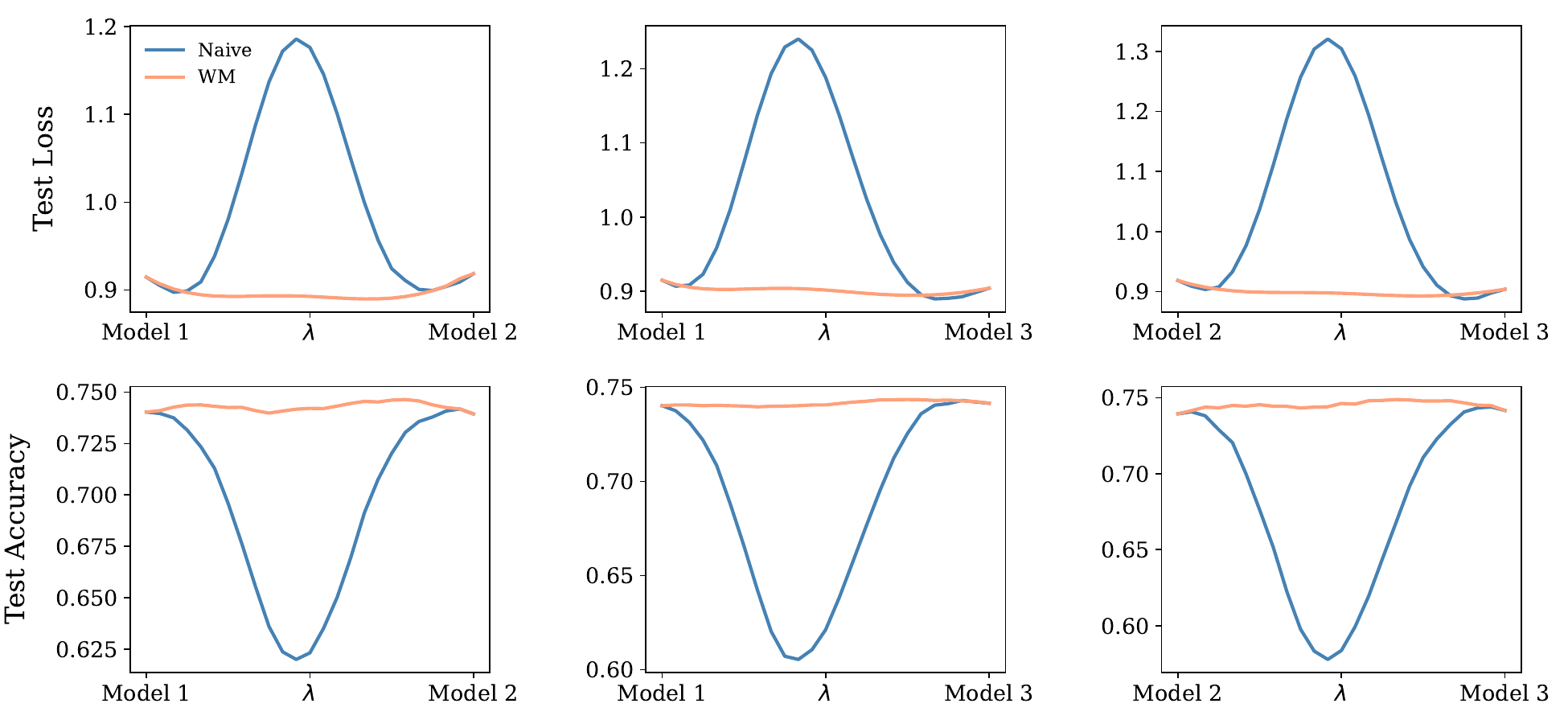} 
    \caption{Linear Mode Connectivity for ViT-DeepSeekMoE $(k=2, s=1)$ on CIFAR-10 with 6 layers and 8 experts}
    \label{fig:deepseek-cifar10-6-8}
\end{figure}

\begin{figure}[H]
    \centering
    \includegraphics[width=0.9\textwidth]{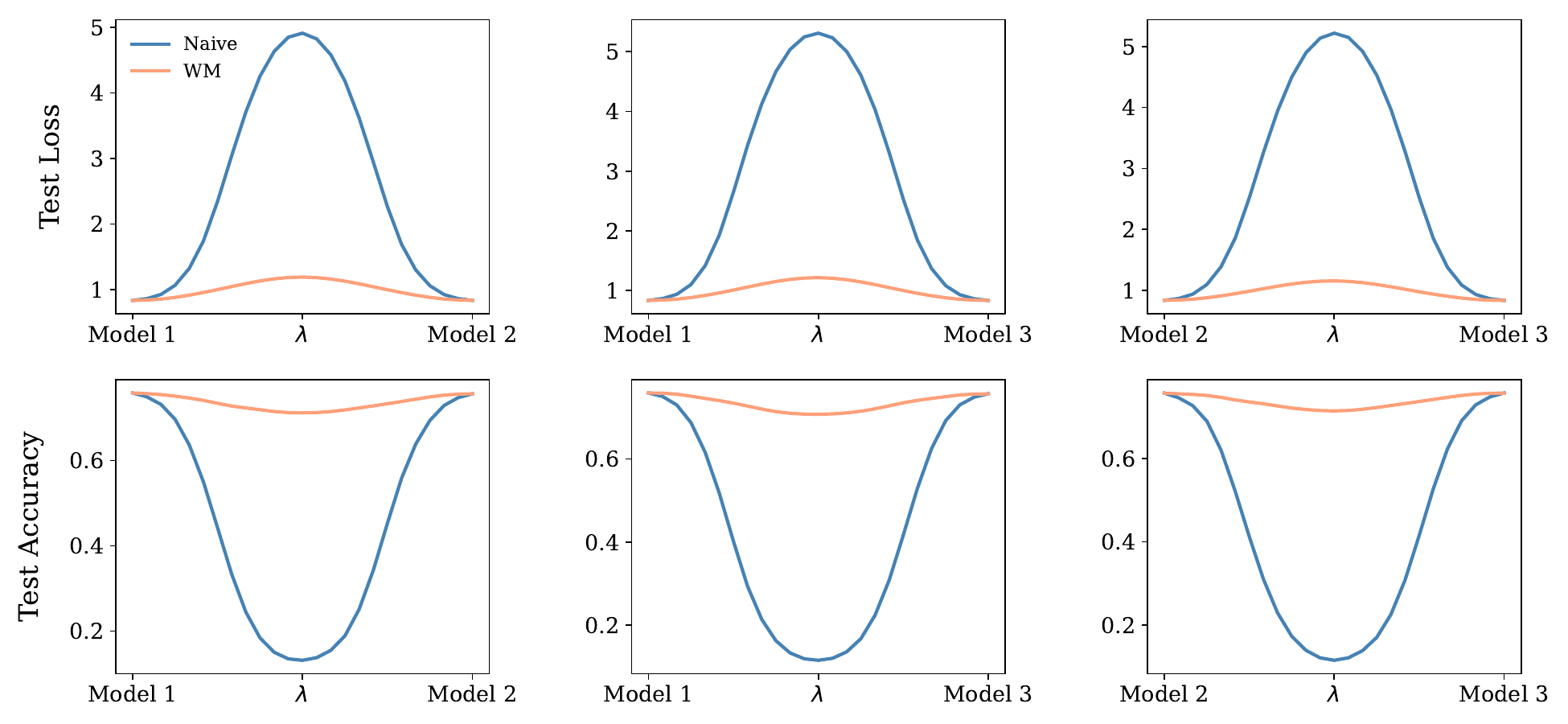}
    \caption{Linear Mode Connectivity for ViT-DeepSeekMoE $(k=2, s=1)$ on CIFAR-100 with 6 layers and 4 experts}
    \label{fig:deepseek-cifar100-6-4}
\end{figure}

\begin{figure}[H]
    \centering
    \includegraphics[width=0.9\textwidth]{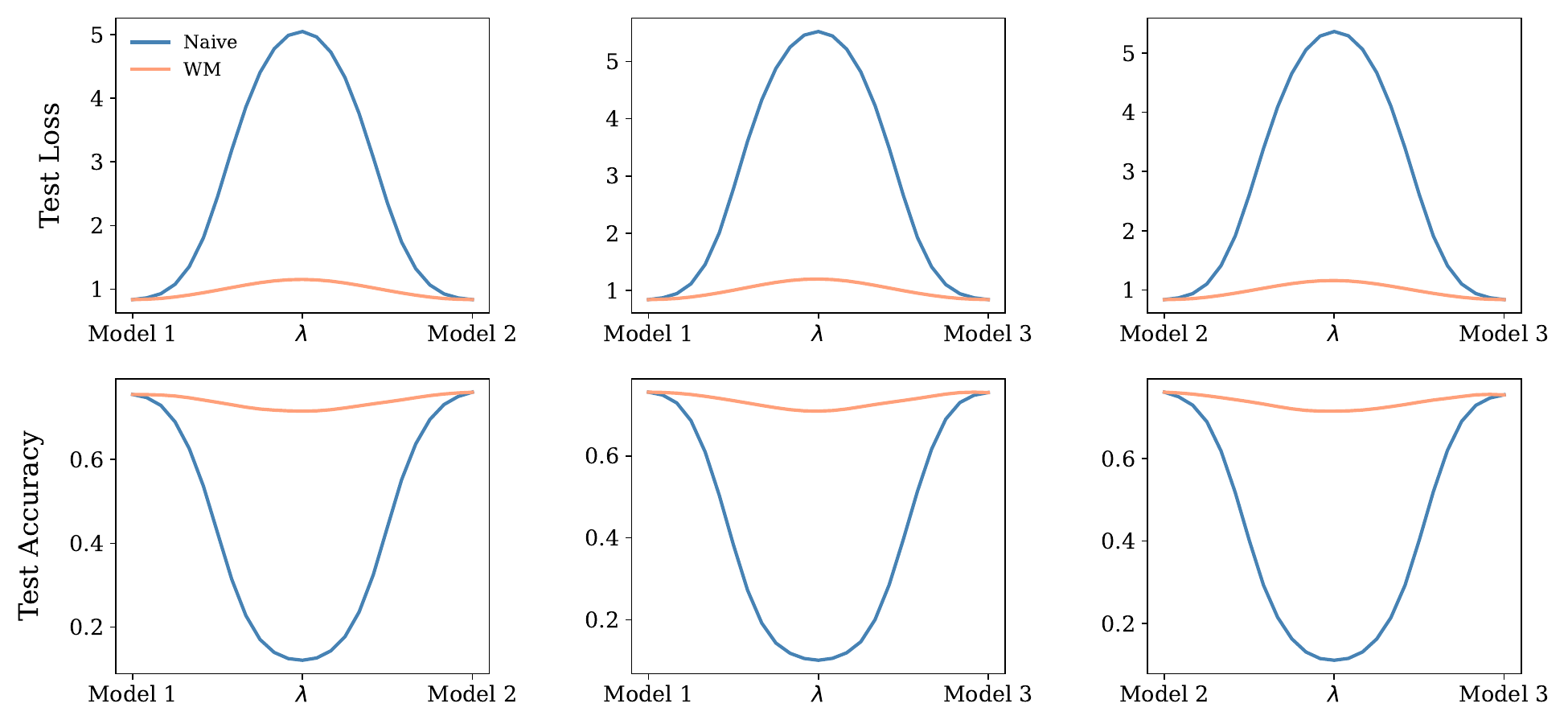}
    \caption{Linear Mode Connectivity for ViT-DeepSeekMoE $(k=2, s=1)$ on CIFAR-100 with 6 layers and 8 experts}
    \label{fig:deepseek-cifar100-6-8}
\end{figure}

\begin{figure}[H]
    \centering
    \includegraphics[width=0.9\textwidth]{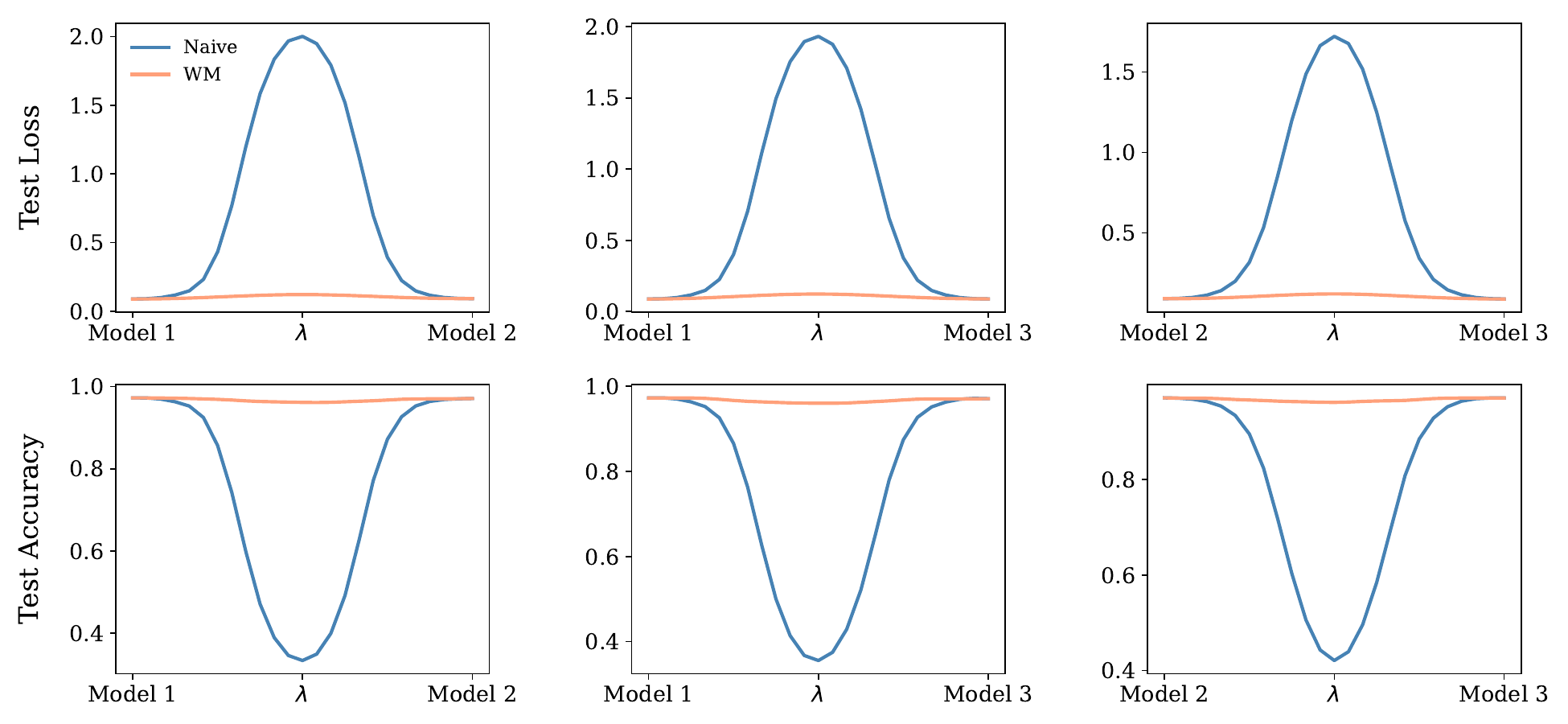}
    \caption{Linear Mode Connectivity for ViT-DeepSeekMoE $(k=2, s=1)$ on ImageNet-21k$\rightarrow$CIFAR-10 with 12 layers and 4 experts}
    \label{fig:deepseek-imagenet21k-cifar10-12-4}
\end{figure}

\begin{figure}
    \centering
    \includegraphics[width=0.9\textwidth]{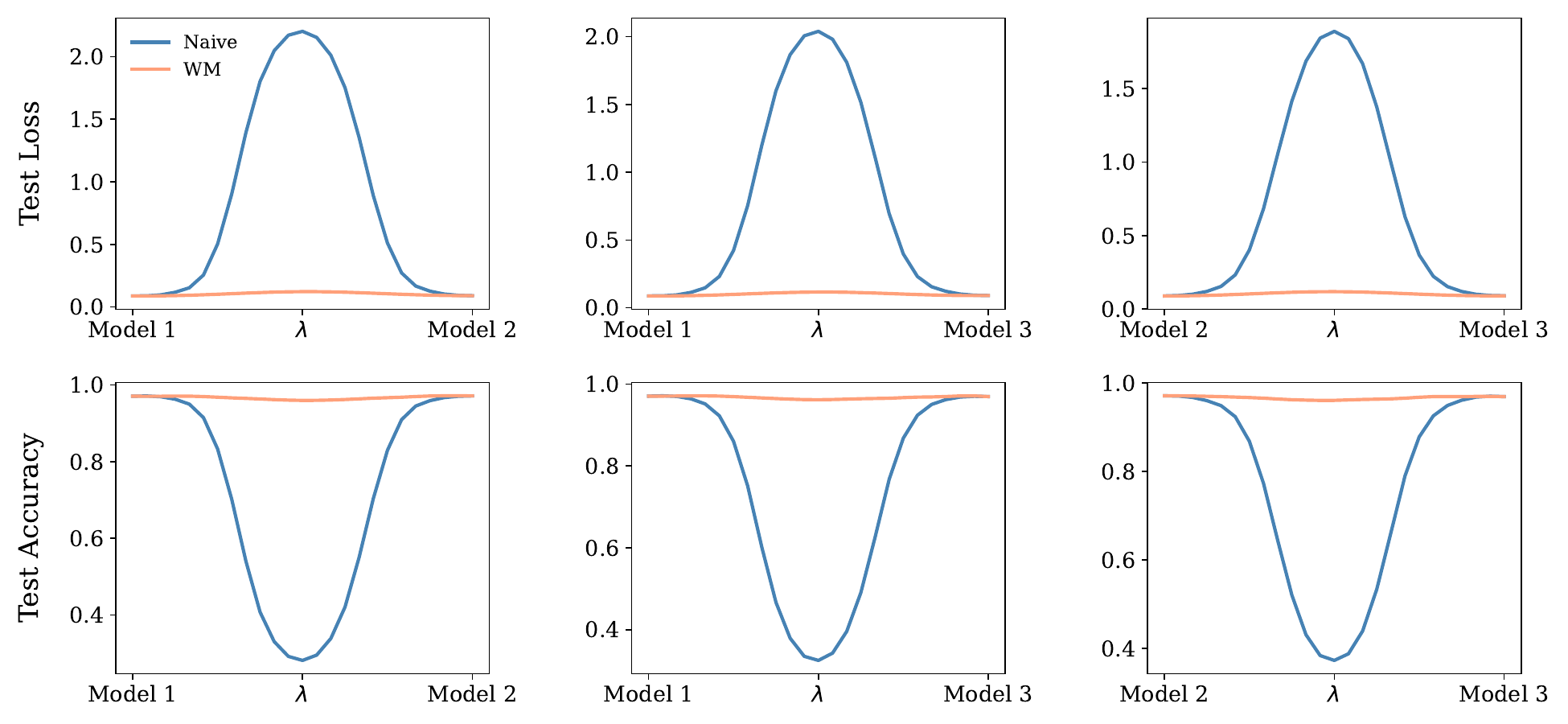}
    \caption{Linear Mode Connectivity for ViT-DeepSeekMoE $(k=2, s=1)$ on ImageNet-21k$\rightarrow$CIFAR-10 with 12 layers and 8 experts}
    \label{fig:deepseek-imagenet21k-cifar10-12-8}
\end{figure}

\begin{figure}
    \centering
    \includegraphics[width=0.9\textwidth]{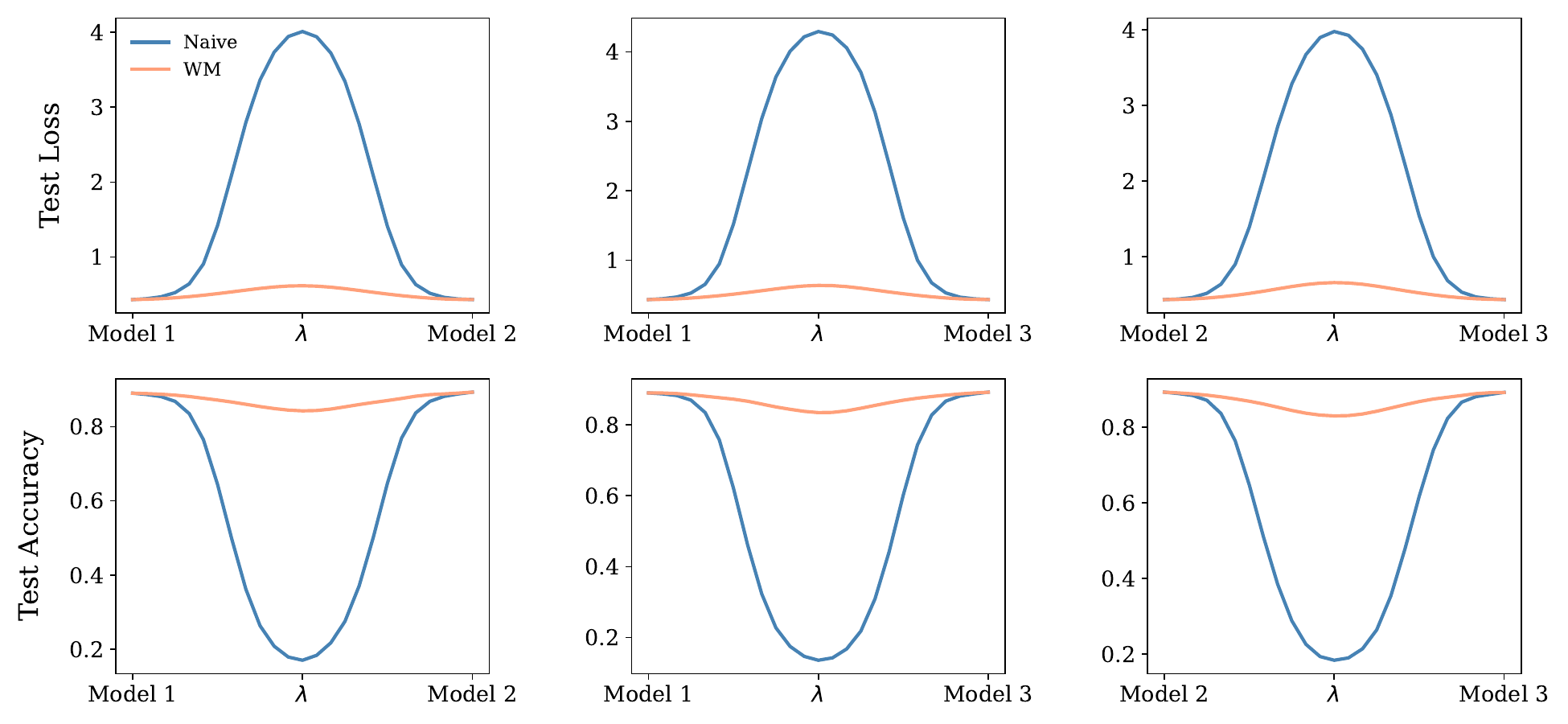}
    \caption{Linear Mode Connectivity for ViT-DeepSeekMoE $(k=2, s=1)$ on ImageNet-21k$\rightarrow$CIFAR-100 with 12 layers and 4 experts}
    \label{fig:deepseek-imagenet21k-cifar100-12-4}
\end{figure}

\begin{figure}
    \centering
    \includegraphics[width=0.9\textwidth]{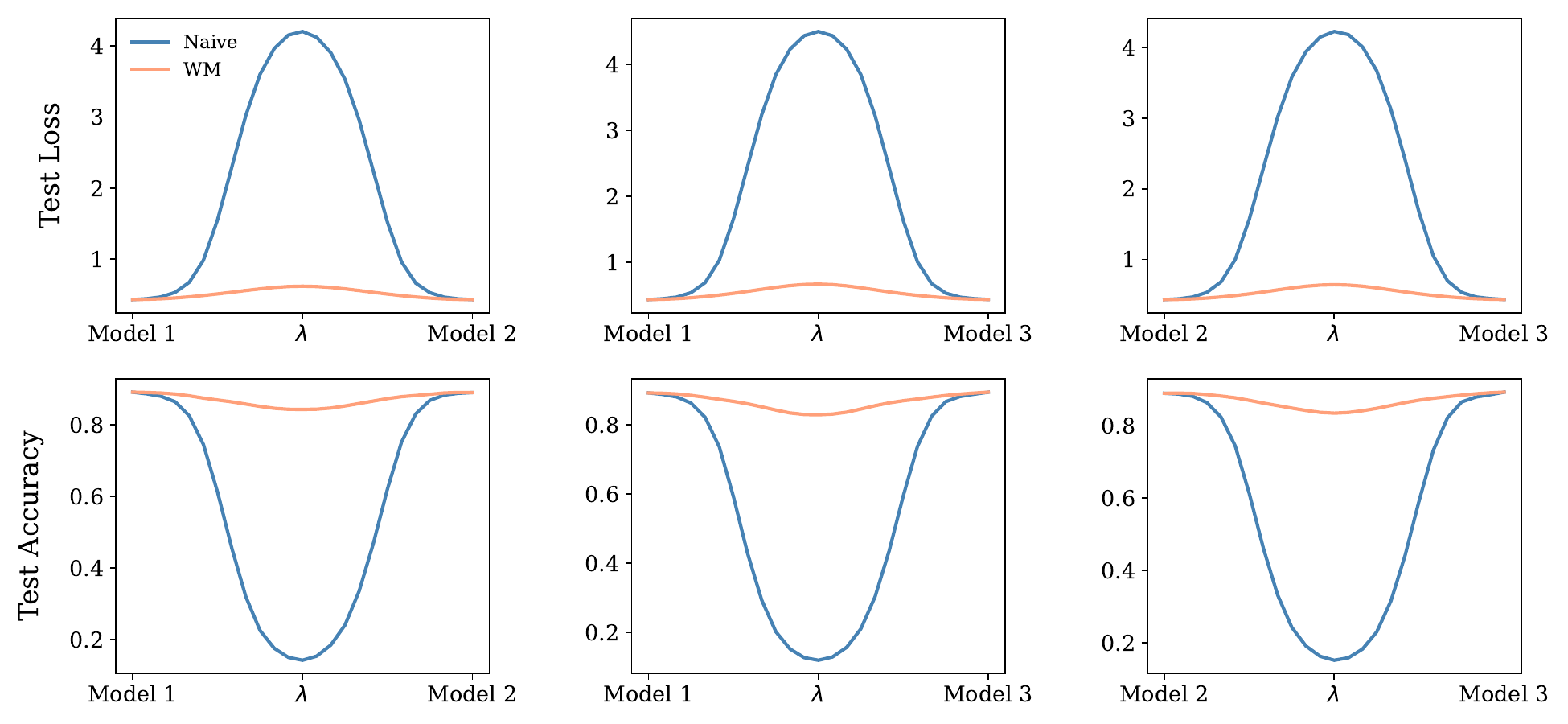}
    \caption{Linear Mode Connectivity for ViT-DeepSeekMoE $(k=2, s=1)$ on ImageNet-21k$\rightarrow$CIFAR-100 with 12 layers and 8 experts}
    \label{fig:deepseek-imagenet21k-cifar100-12-8}
\end{figure}


\begin{figure}[H]
    \centering
    \includegraphics[width=0.9\linewidth]{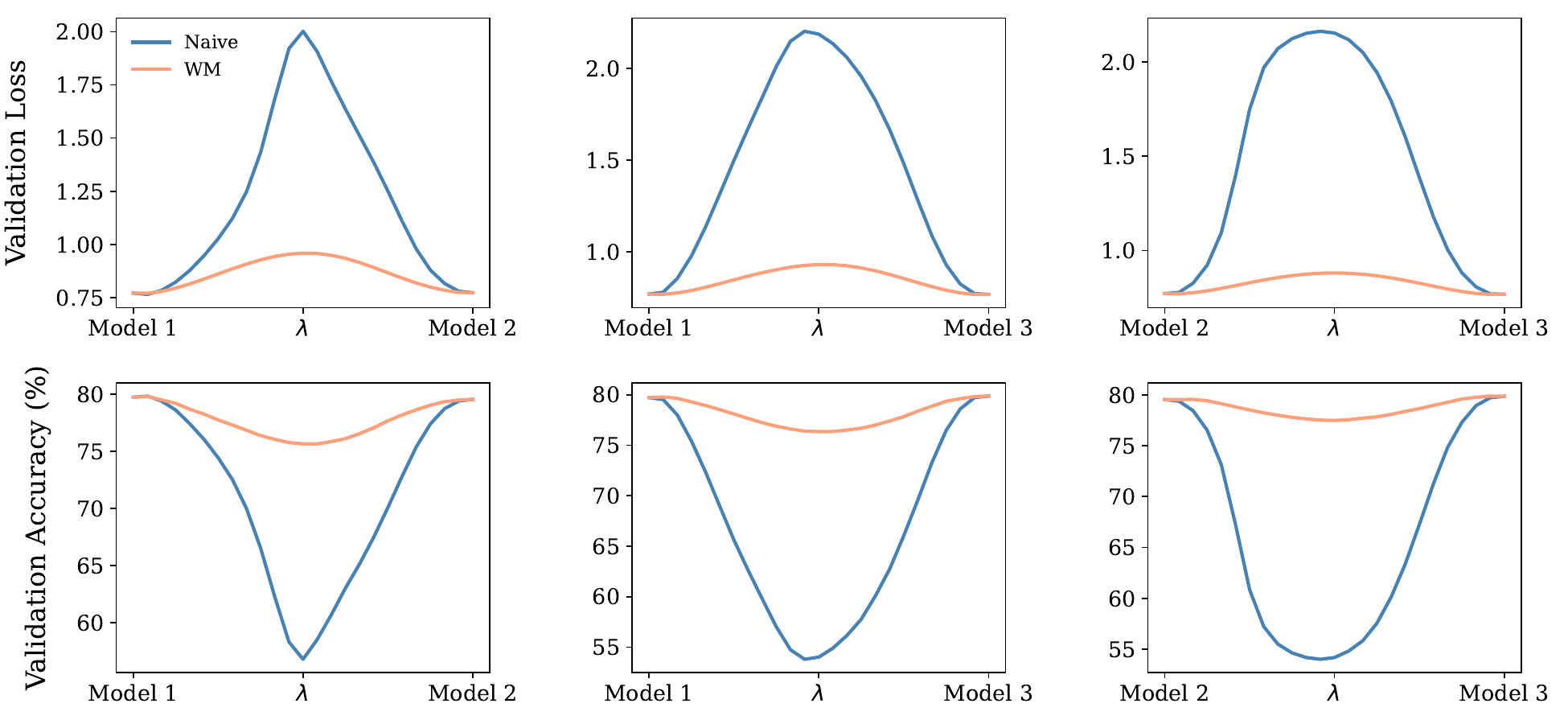}
    \caption{Linear Mode Connectivity for ViT-DeepSeekMoE $(k=2, s=1)$ on ImageNet-1k with 12 layers and 4 experts}
    \label{fig:imagenet-deepseek-4}
\end{figure}

\begin{figure}[H]
    \centering
    \includegraphics[width=0.9\linewidth]{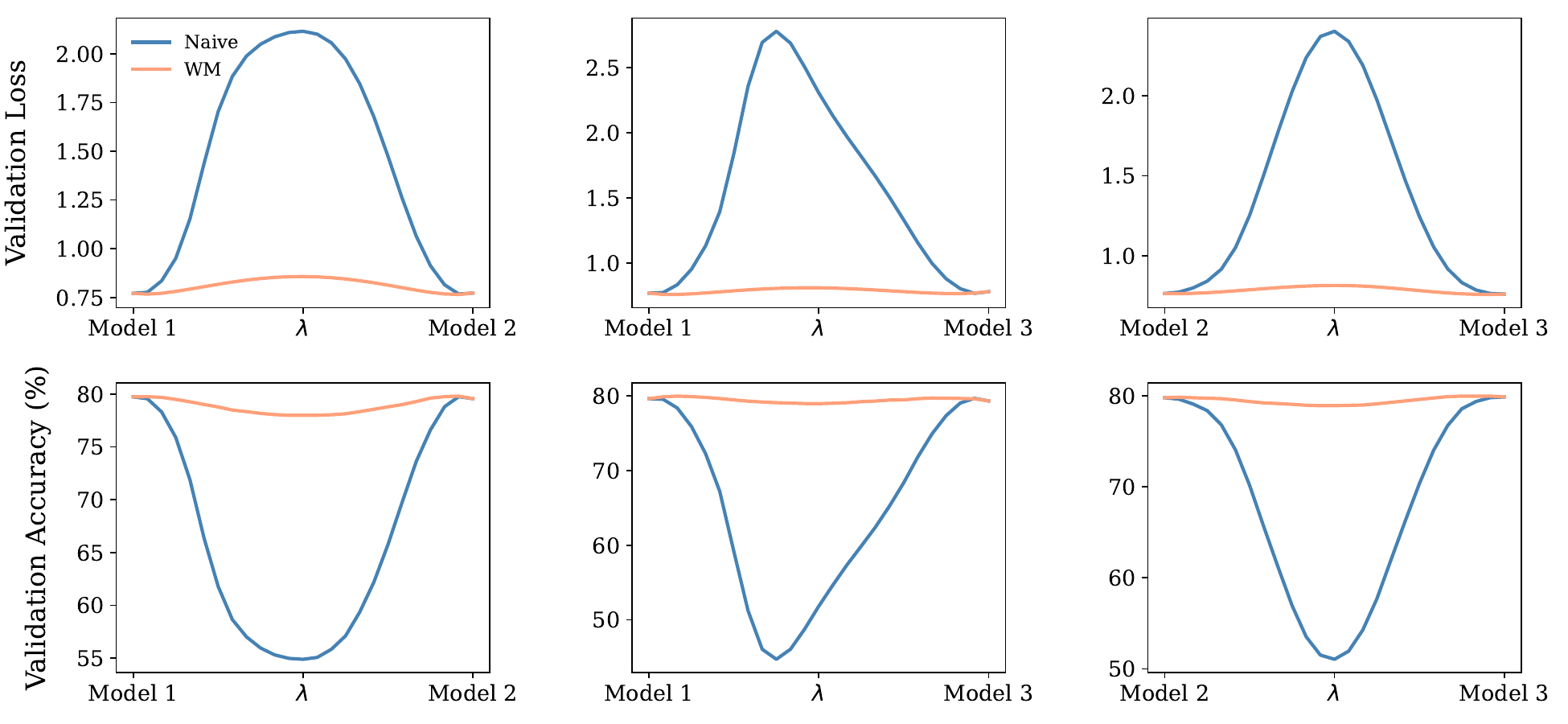}
    \caption{Linear Mode Connectivity for ViT-DeepSeekMoE $(k=2, s=1)$ on ImageNet-1k with 12 layers and 8 experts}
    \label{fig:imagenet-deepseek-8}
\end{figure}
\begin{figure}[H]
    \centering
    \includegraphics[width=0.9\linewidth]{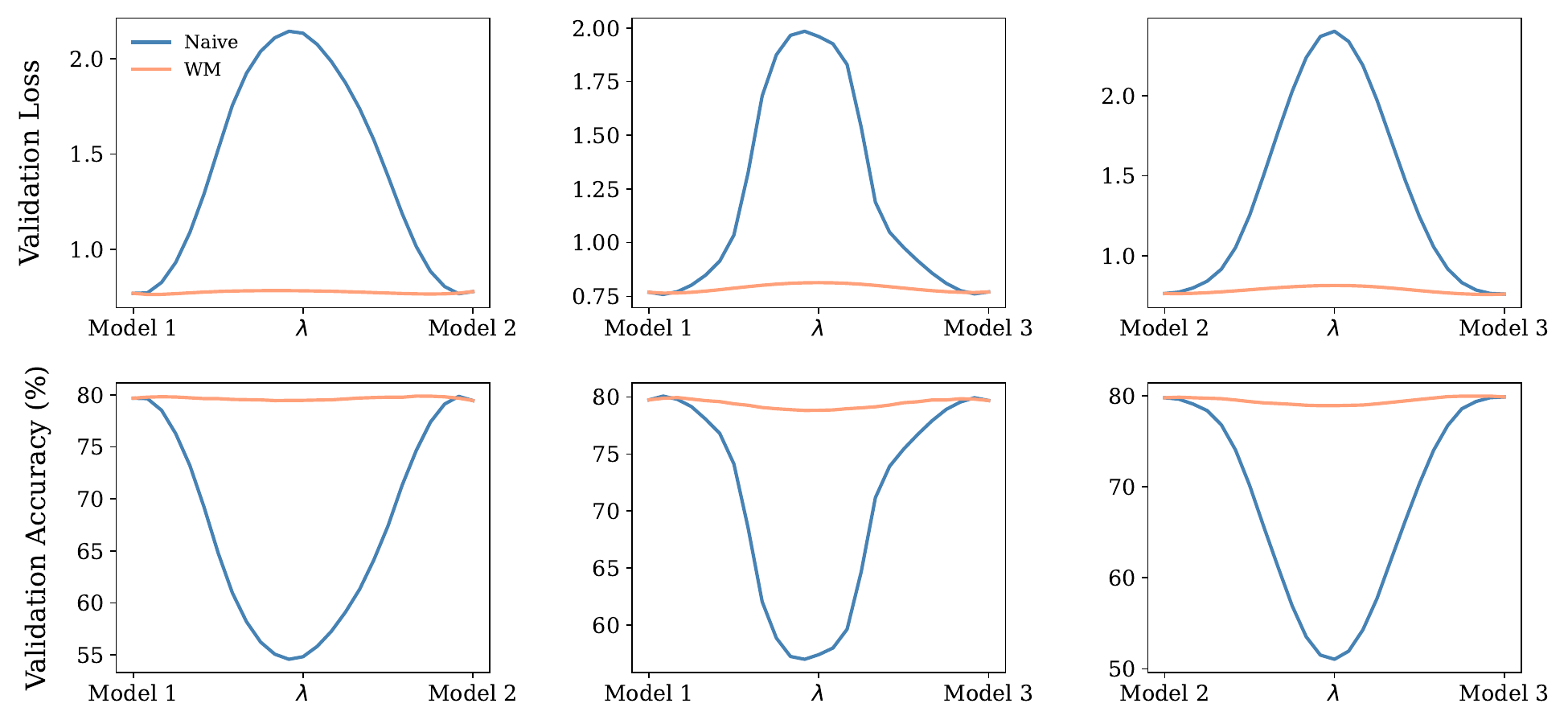}
    \caption{Linear Mode Connectivity for ViT-DeepSeekMoE $(k=2, s=1)$ on ImageNet-1k with 12 layers and 16 experts}
    \label{fig:imagenet-deepseek-16}
\end{figure}


\begin{figure}[H]
    \centering
    \includegraphics[width=0.9\linewidth]{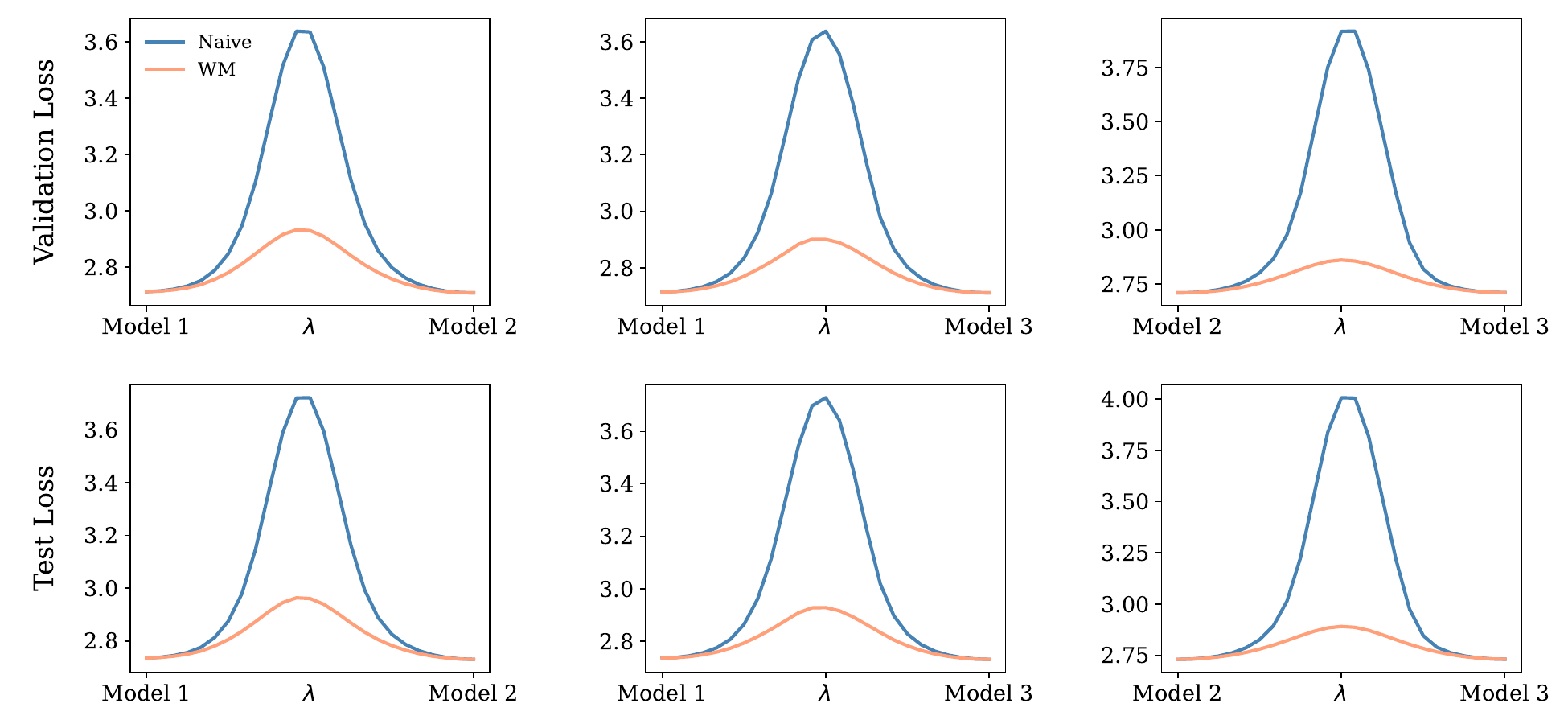}
    \caption{Linear Mode Connectivity for ViT-DeepSeekMoE $(k=2, s=1)$ on Wikitext103 with 12 layers and 4 experts}
    \label{fig:wikitext103-deepseek-4}
\end{figure}
\begin{figure}[H]
    \centering
    \includegraphics[width=0.9\linewidth]{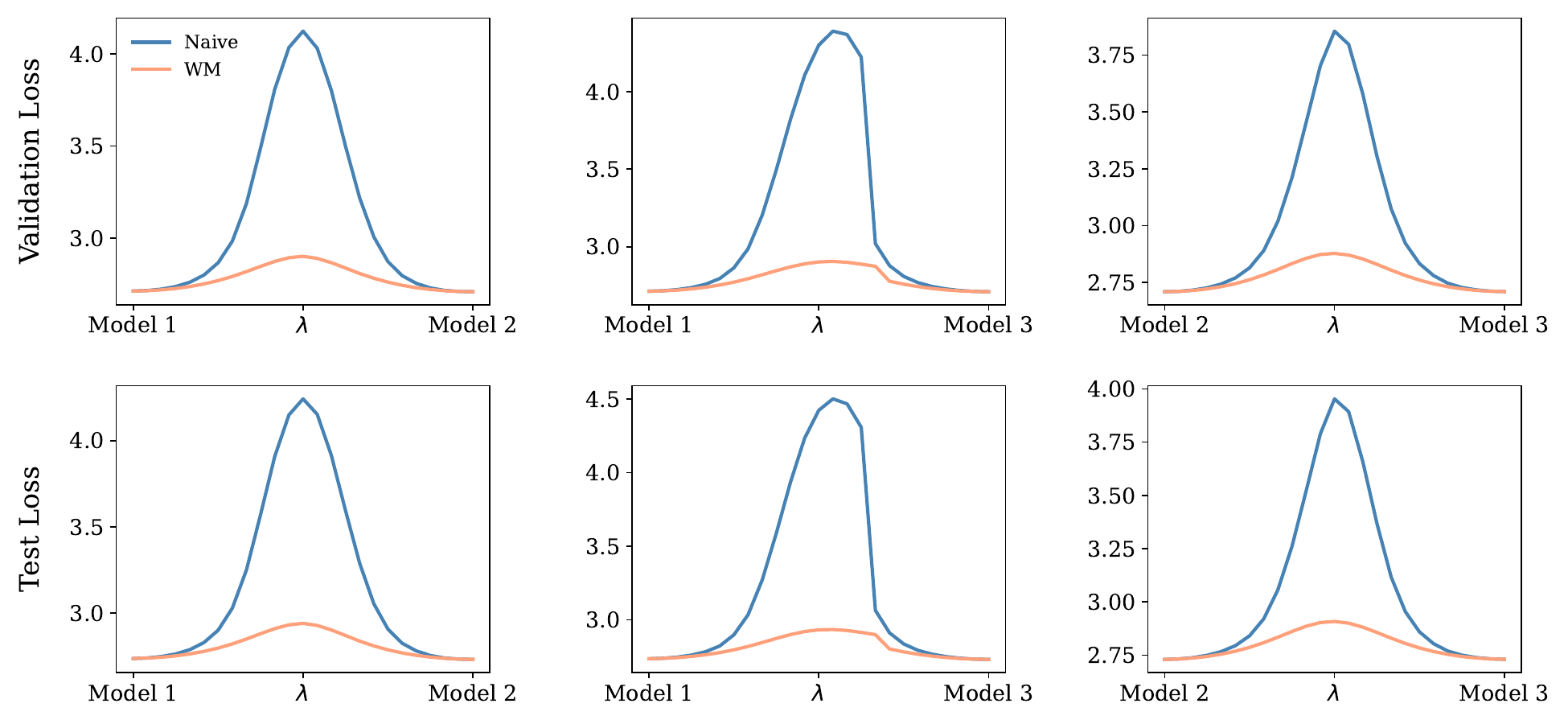}
    \caption{Linear Mode Connectivity for ViT-DeepSeekMoE $(k=2, s=1)$ on Wikitext103 with 12 layers and 8 experts}
    \label{fig:wikitext103-deepseek-8}
\end{figure}
\begin{figure}[H]
    \centering
    \includegraphics[width=0.9\linewidth]{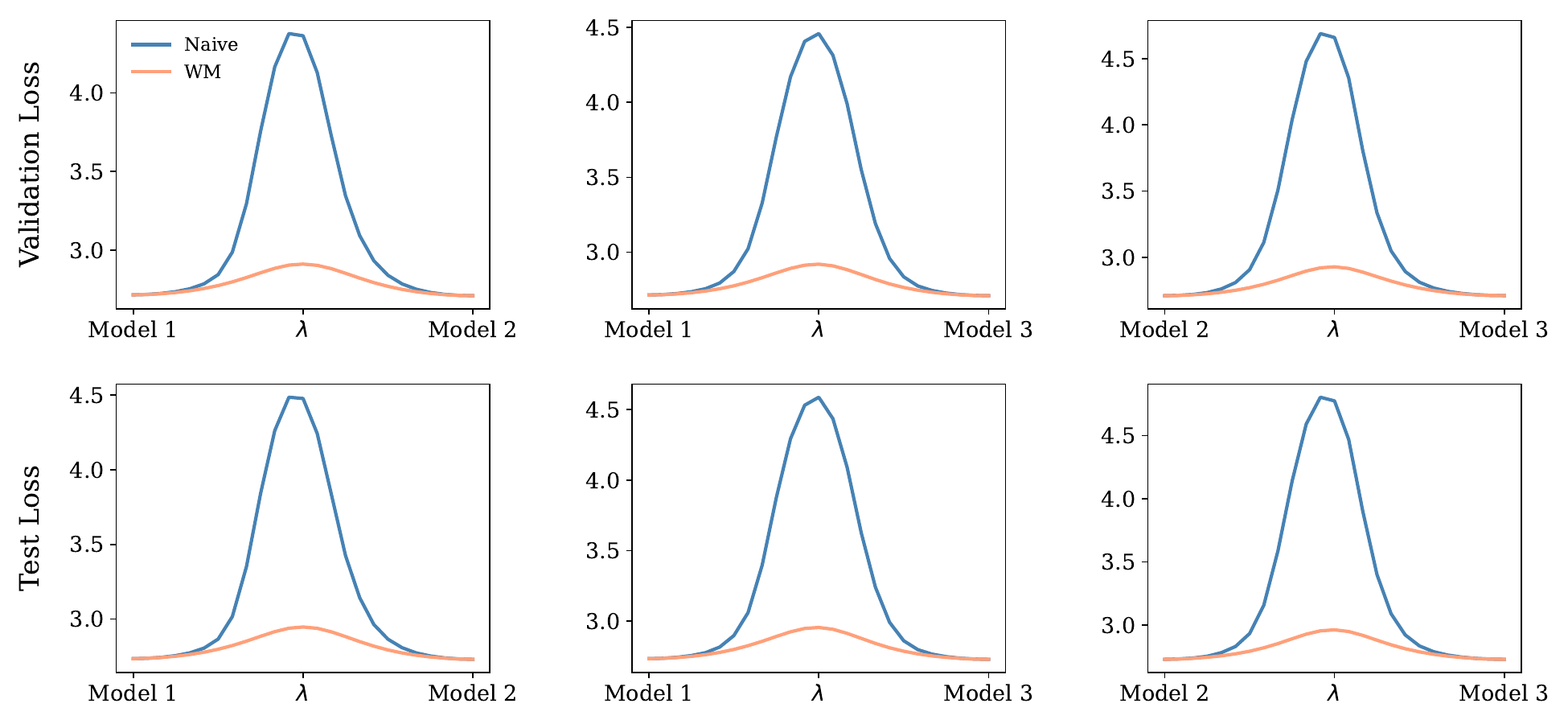}
    \caption{Linear Mode Connectivity for ViT-DeepSeekMoE $(k=2, s=1)$ on Wikitext103 with 12 layers and 16 experts}
    \label{fig:wikitext103-deepseek-16}
\end{figure}
\begin{figure}[H]
    \centering
    \includegraphics[width=0.9\linewidth]{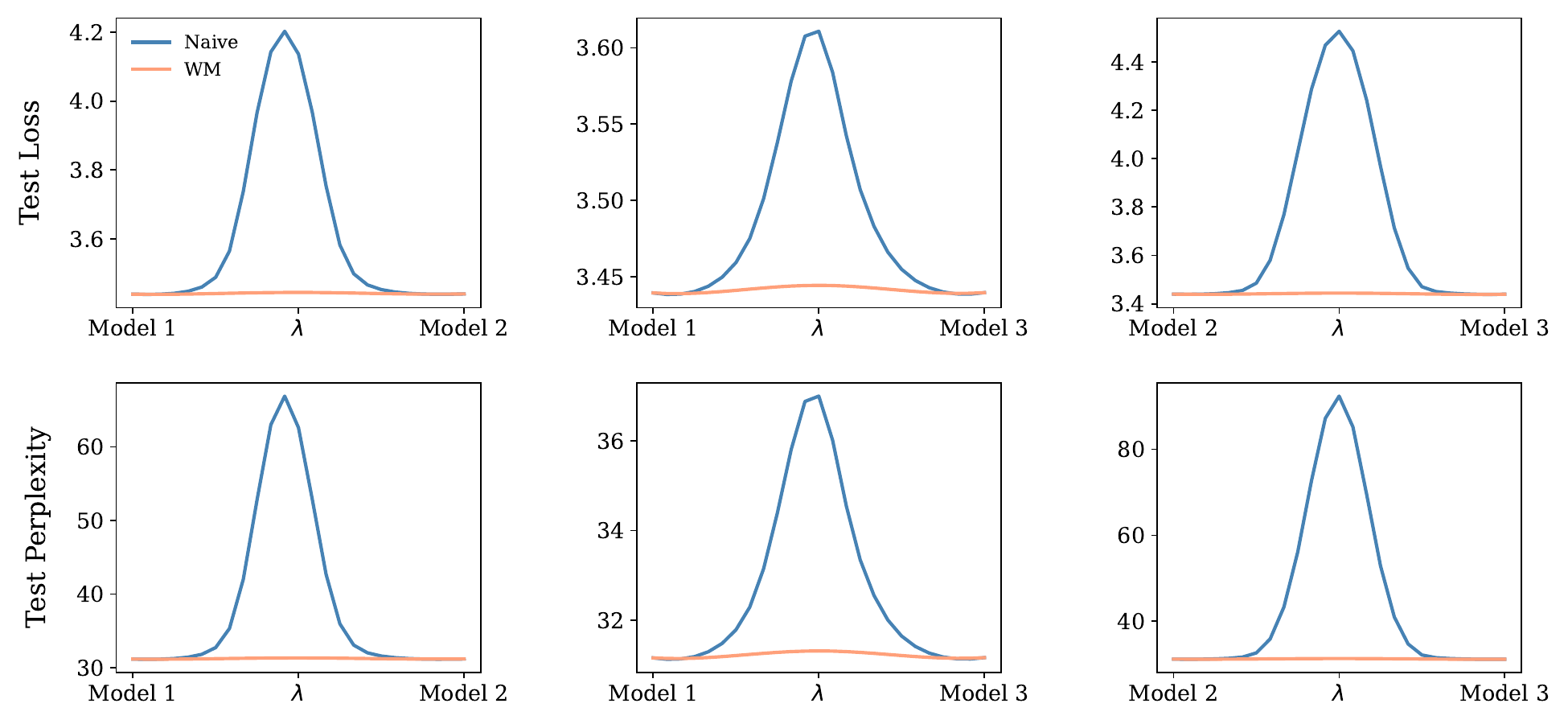}
    \caption{Linear Mode Connectivity for ViT-DeepSeekMoE $(k=2, s=1)$ on One Billion Word with 12 layers and 4 experts}
    \label{fig:lm1b-deepseek-4}
\end{figure}
\begin{figure}[H]
    \centering
    \includegraphics[width=0.9\linewidth]{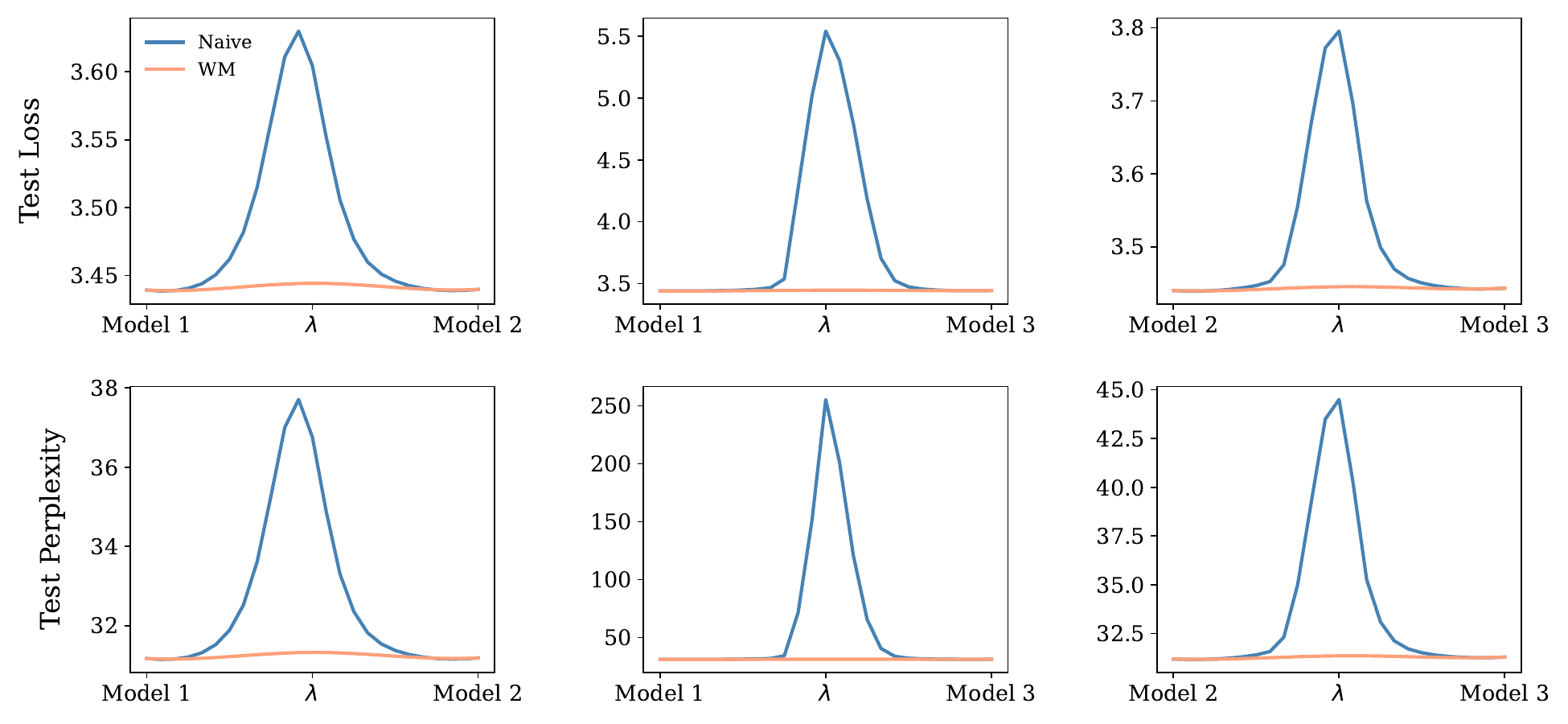}
    \caption{Linear Mode Connectivity for ViT-DeepSeekMoE $(k=2, s=1)$ on One Billion Word with 12 layers and 8 experts}
    \label{fig:lm1b-deepseek-8}
\end{figure}
\begin{figure}[H]
    \centering
    \includegraphics[width=0.9\linewidth]{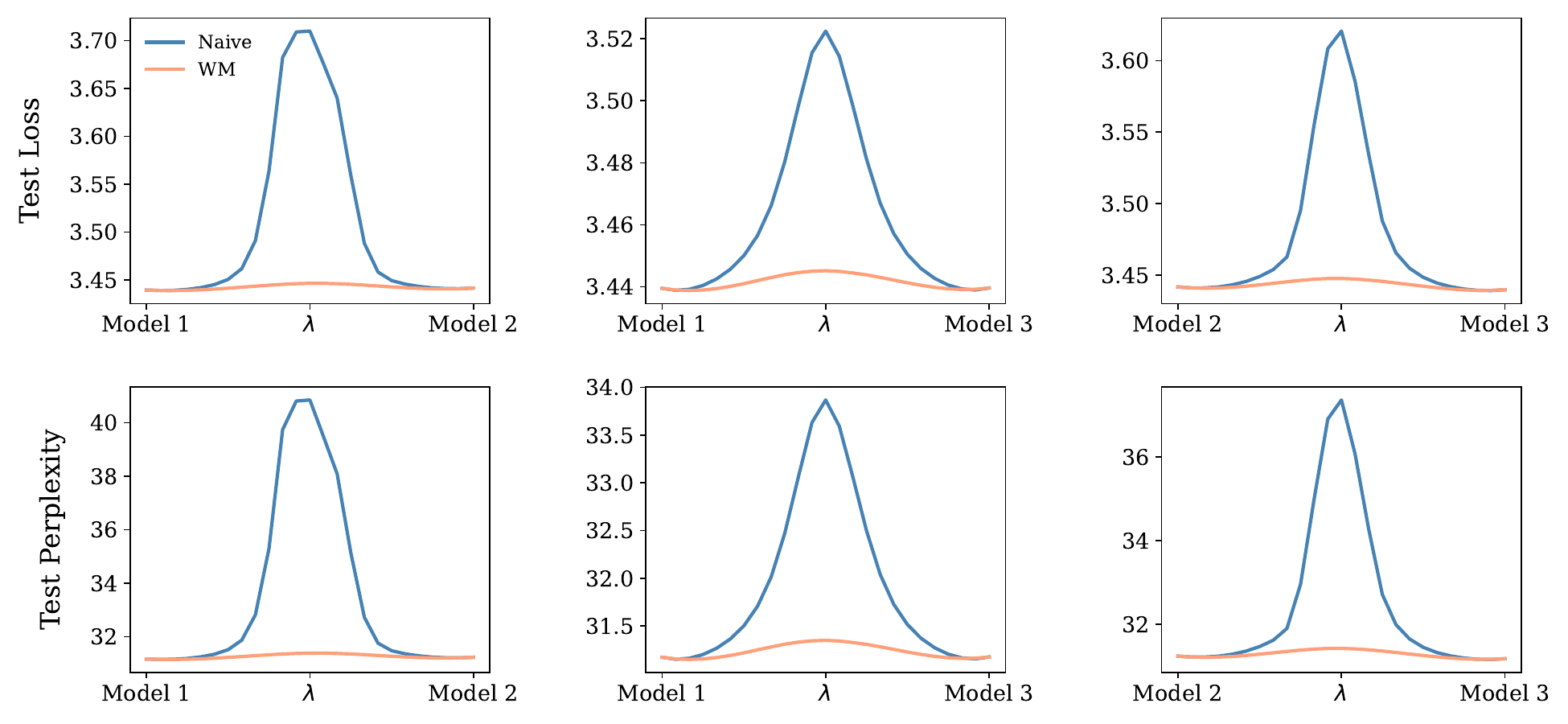}
    \caption{Linear Mode Connectivity for ViT-DeepSeekMoE $(k=2, s=1)$ on One Billion Word with 12 layers and 16 experts}
    \label{fig:lm1b-deepseek-16}
\end{figure}

\subsection{Linear Mode Connectivity Analysis: Last Layer}\label{appendix:LMC_last}

We investigate Linear Mode Connectivity (LMC) in settings where the feed-forward network (FFN) of the \textit{last} Transformer layer is replaced with a Mixture-of-Experts (MoE) module. This extends our earlier analysis of \textit{first}-layer substitutions by examining connectivity at the terminal depth of the architecture, offering a complementary view on model plasticity and the modular structure of expert-based designs.

Following the procedure outlined in Section~\ref{subsection:Experimental_Setup}, all pretrained weights were frozen, and only the MoE module in the \textit{last} layer was fine-tuned. For each configuration, three independent fine-tuning runs were conducted with different random seeds. LMC was assessed by evaluating the loss along linear interpolation paths between pairs of fine-tuned models, providing insight into the connectivity and overlap of their respective solution basins.

Table~\ref{tab:lmc_last_layer} summarizes the experimental settings for \textit{last}-layer FFN replacement across dense MoE, SMoE, and DeepSeekMoE variants. Unless otherwise stated, all figure references correspond to \textit{last}-layer experiments.
\begin{table}[t]
    \medskip
    \centering
    \caption{In all configurations, the MLP component of the \textit{last} Transformer layer is replaced by an MoE component.}
    \label{tab:lmc_last_layer}
    \renewcommand*{\arraystretch}{1.3}
    \begin{adjustbox}{width=1\textwidth}
    \begin{tabular}{llccr}
        \toprule
        Method & Dataset & Number of layers & Number of experts & Figure \\
        \midrule
        MoE & CIFAR-10 & 6 & [2, 4] & [\ref{fig:moe-cifar10-6-2-last}, \ref{fig:moe-cifar10-6-4-last}] \\
        & ImageNet-21k$\rightarrow$CIFAR-10 & 12 & [2, 4] & [\ref{fig:moe-imagenet21k-cifar10-12-2-last}, \ref{fig:moe-imagenet21k-cifar10-12-4-last}] \\
        & ImageNet-21k$\rightarrow$CIFAR-100 & 12 & [2, 4] & [\ref{fig:moe-imagenet21k-cifar100-12-2-last}, \ref{fig:moe-imagenet21k-cifar100-12-4-last}] \\
        \midrule
        SMoE ($k=2$) & CIFAR-10 & 6 & [4, 8] & [\ref{fig:smoe-cifar10-6-4-last}, \ref{fig:smoe-cifar10-6-8-last}] \\
        & ImageNet-21k$\rightarrow$CIFAR-10 & 12 & [4, 8] & [\ref{fig:smoe-imagenet21k-cifar10-12-4-last}, \ref{fig:smoe-imagenet21k-cifar10-12-8-last}] \\
        & ImageNet-21k$\rightarrow$CIFAR-100 & 12 & [4, 8] & [\ref{fig:smoe-imagenet21k-cifar100-12-4-last}, \ref{fig:smoe-imagenet21k-cifar100-12-8-last}] \\
        \midrule
        DeepSeekMoE & CIFAR-10 & 6 & [4, 8] & [\ref{fig:deepseek-cifar10-6-4-last}, \ref{fig:deepseek-cifar10-6-8-last}] \\
        ($k=2, s=1$)& ImageNet-21k$\rightarrow$CIFAR-10 & 12 & [4, 8] & [\ref{fig:deepseek-imagenet21k-cifar10-12-4-last}, \ref{fig:deepseek-imagenet21k-cifar10-12-8-last}] \\
        & ImageNet-21k$\rightarrow$CIFAR-100 & 12 & [4, 8] & [\ref{fig:deepseek-imagenet21k-cifar100-12-4-last}, \ref{fig:deepseek-imagenet21k-cifar100-12-8-last}] \\
        \bottomrule
    \end{tabular}
    \end{adjustbox}
\end{table}

\begin{figure}[H]
    \centering
    \includegraphics[width=0.9\textwidth]{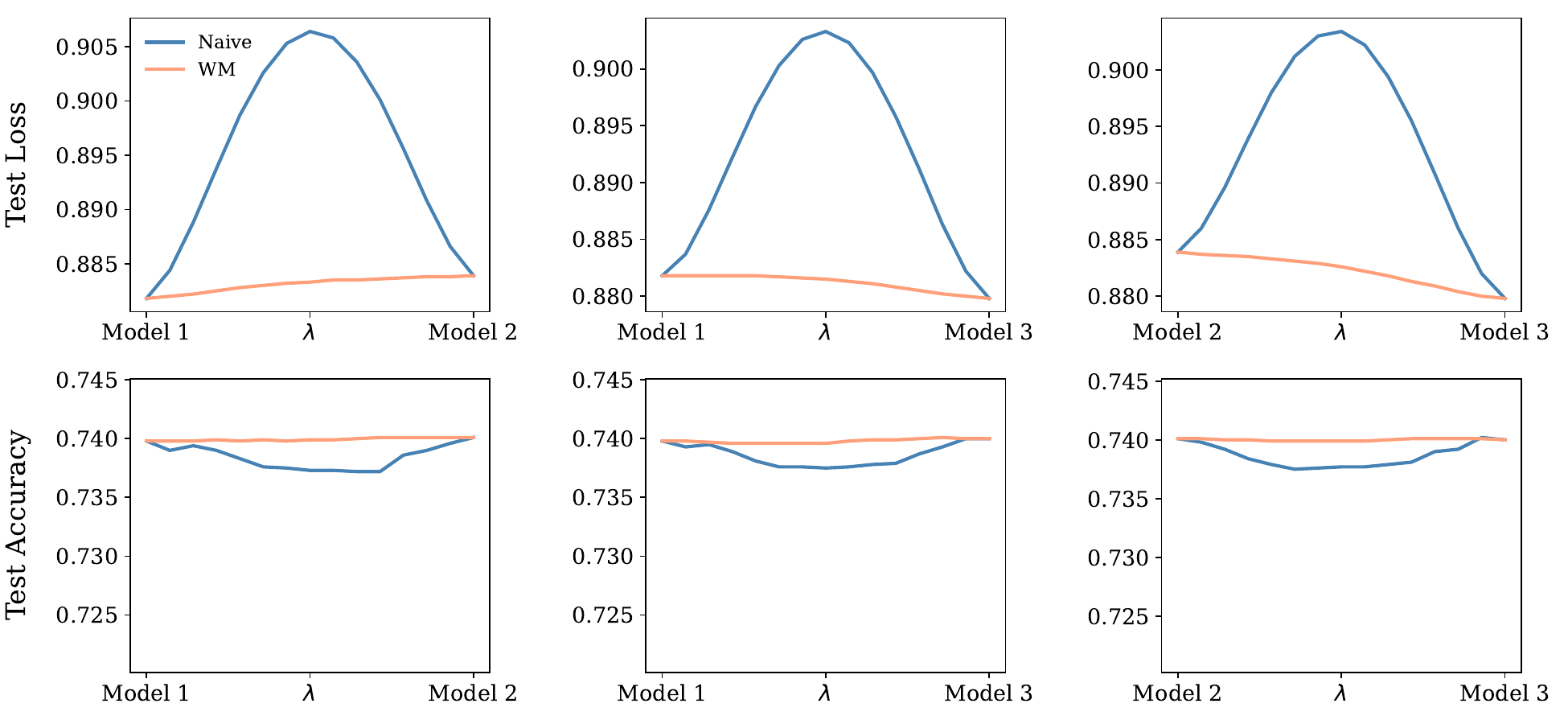}
    \caption{Linear Mode Connectivity for ViT-MoE on CIFAR-10 with 6 layers and 2 experts. }
    \label{fig:moe-cifar10-6-2-last}
\end{figure}

\begin{figure}[H]
    \centering
    \includegraphics[width=0.9\textwidth]{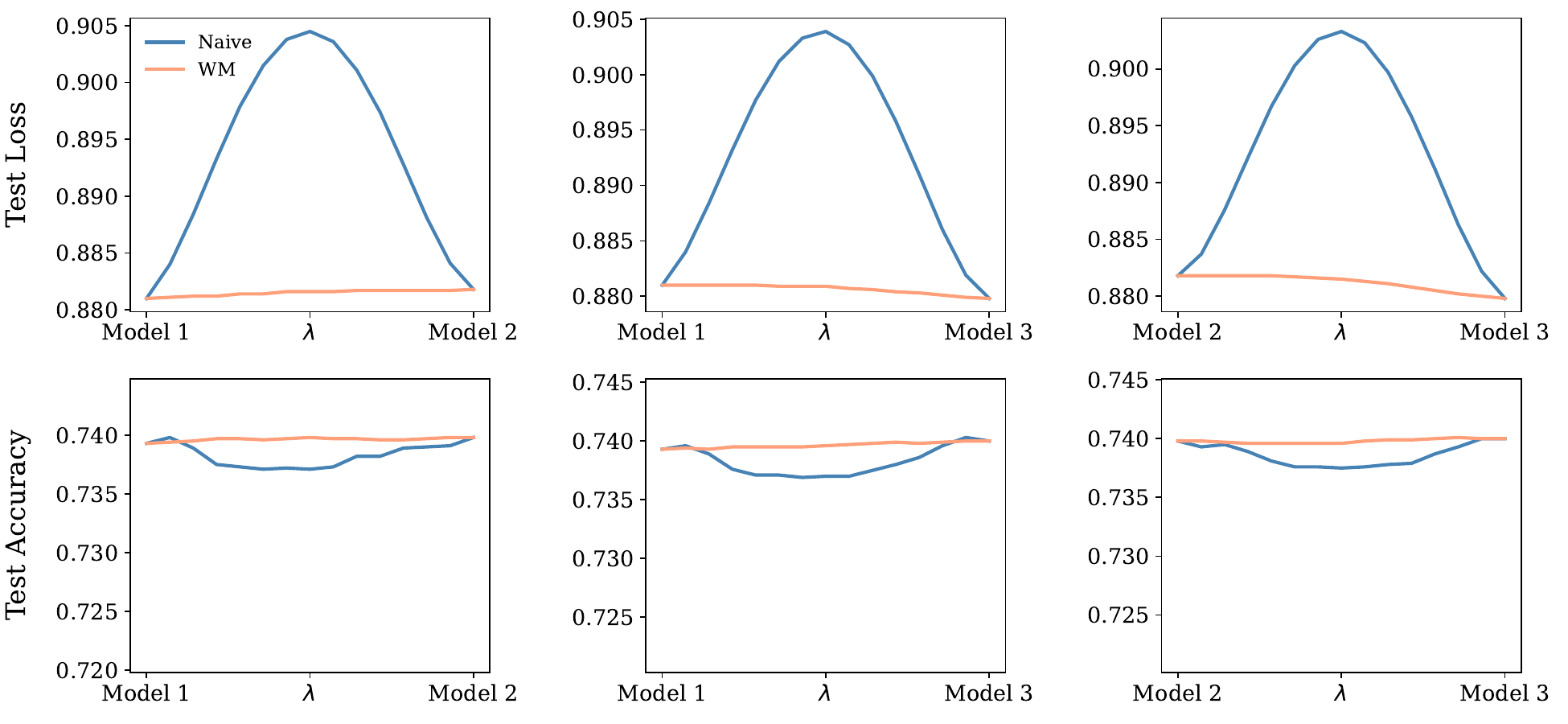}
    \caption{Linear Mode Connectivity for ViT-MoE on CIFAR-10 with 6 layers and 4 experts. }
    \label{fig:moe-cifar10-6-4-last}
\end{figure}

\begin{figure}[H]
    \centering
    \includegraphics[width=0.9\textwidth]{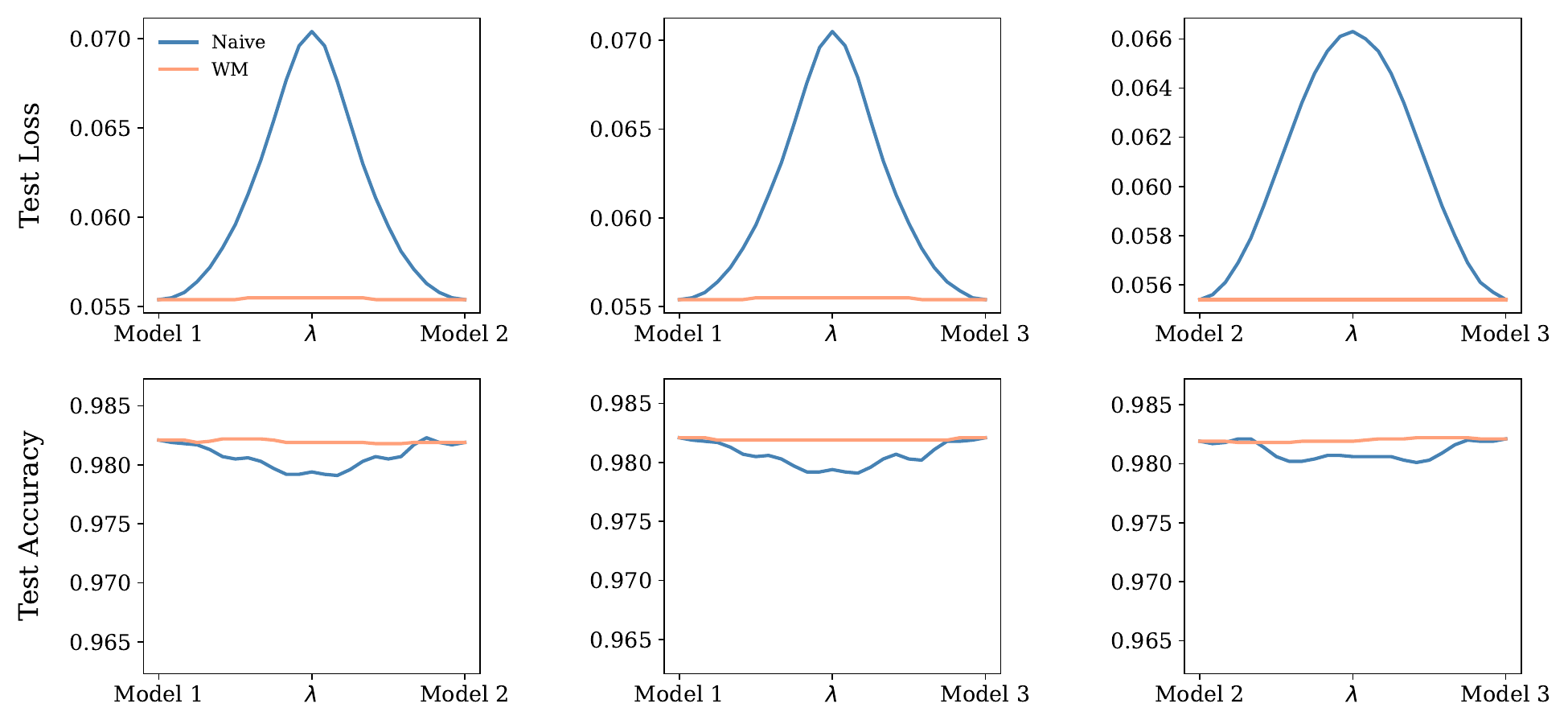}
    \caption{Linear Mode Connectivity for ViT-MoE on ImageNet-21k$\rightarrow$CIFAR-10 with 12 layers and 2 experts. }
    \label{fig:moe-imagenet21k-cifar10-12-2-last}
\end{figure}

\begin{figure}[H]
    \centering
    \includegraphics[width=0.9\textwidth]{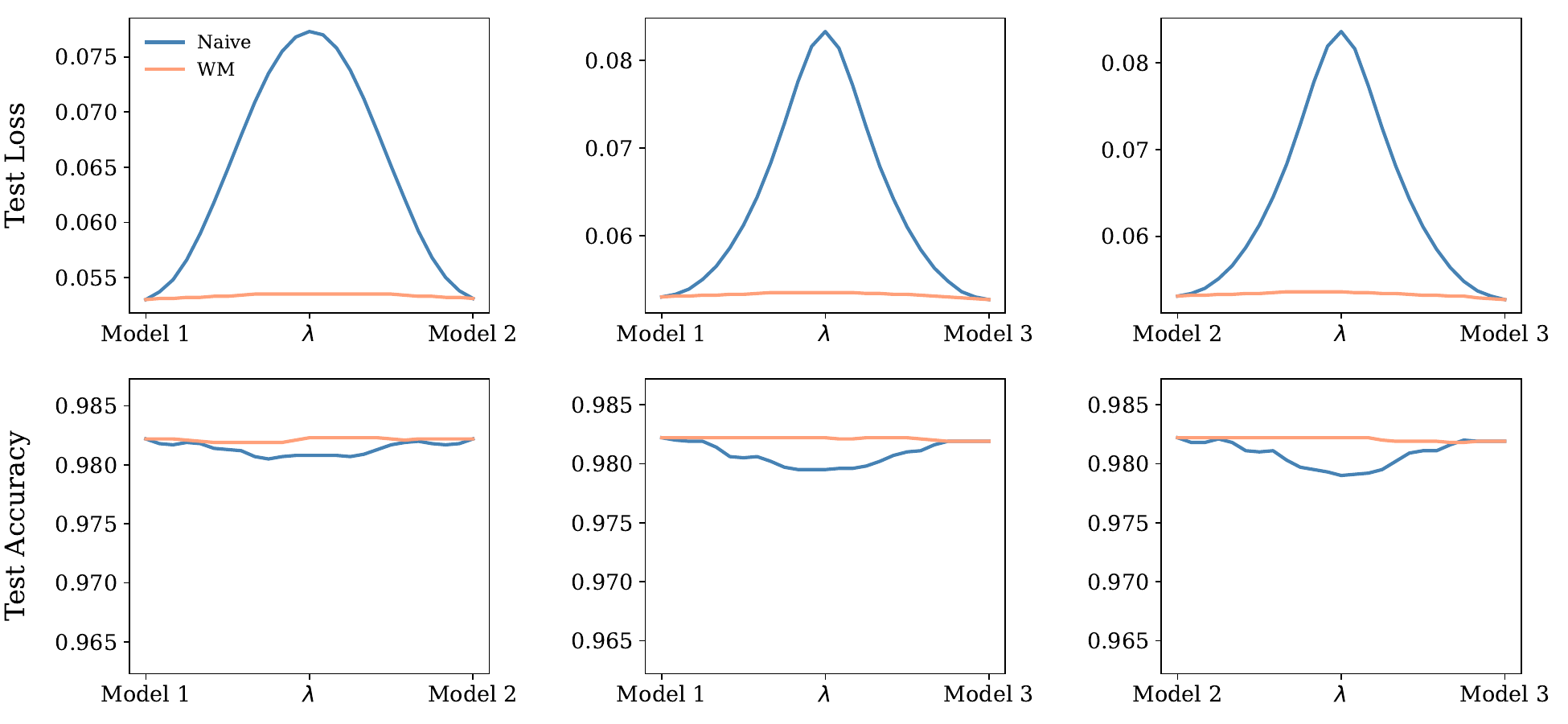}
    \caption{Linear Mode Connectivity for ViT-MoE on on ImageNet-21k$\rightarrow$CIFAR-10 with 12 layers and 4 experts. }
    \label{fig:moe-imagenet21k-cifar10-12-4-last}
\end{figure}

\begin{figure}[H]
    \centering
    \includegraphics[width=0.9\textwidth]{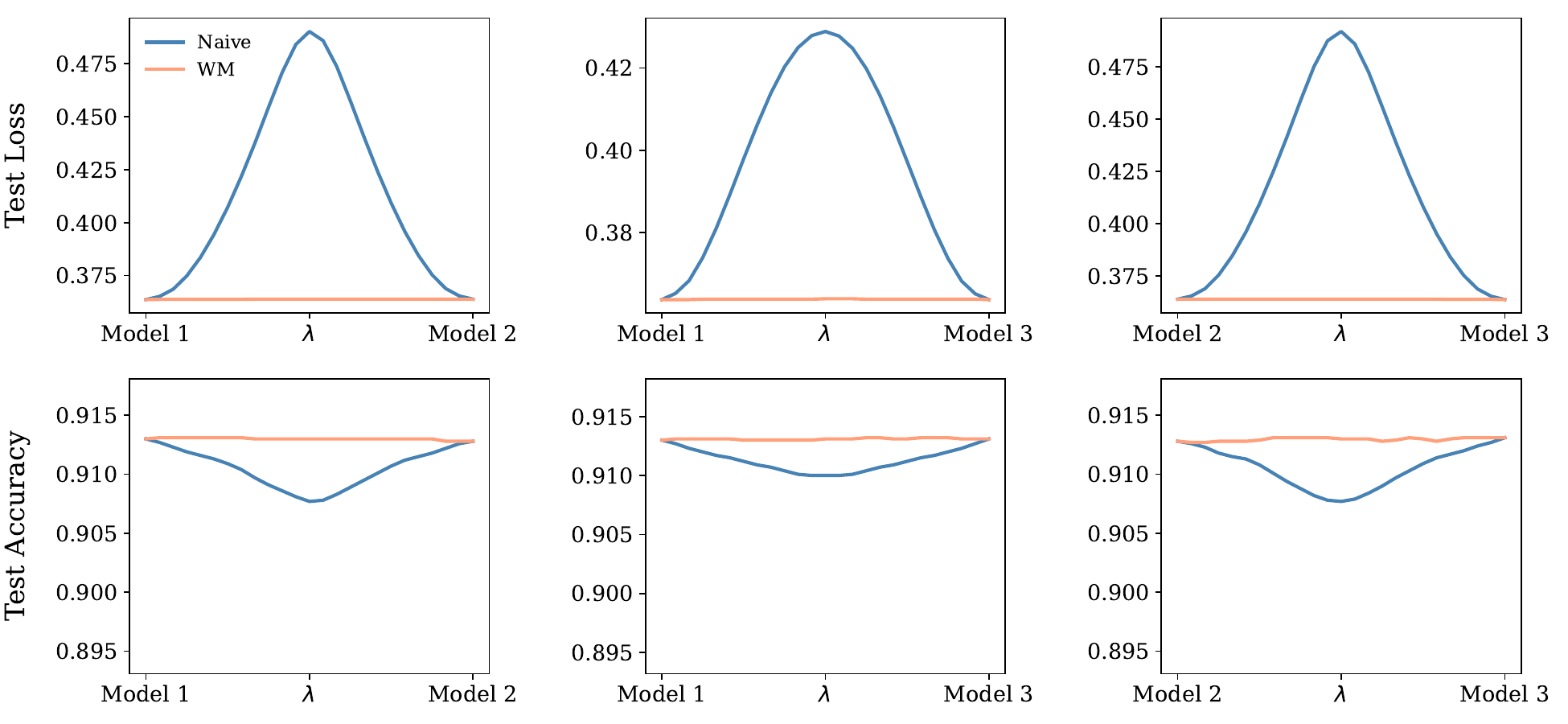}
    \caption{Linear Mode Connectivity for ViT-MoE on ImageNet-21k$\rightarrow$CIFAR-100 with 12 layers and 2 experts. }
    \label{fig:moe-imagenet21k-cifar100-12-2-last}
\end{figure}

\begin{figure}[H]
    \centering
    \includegraphics[width=0.9\textwidth]{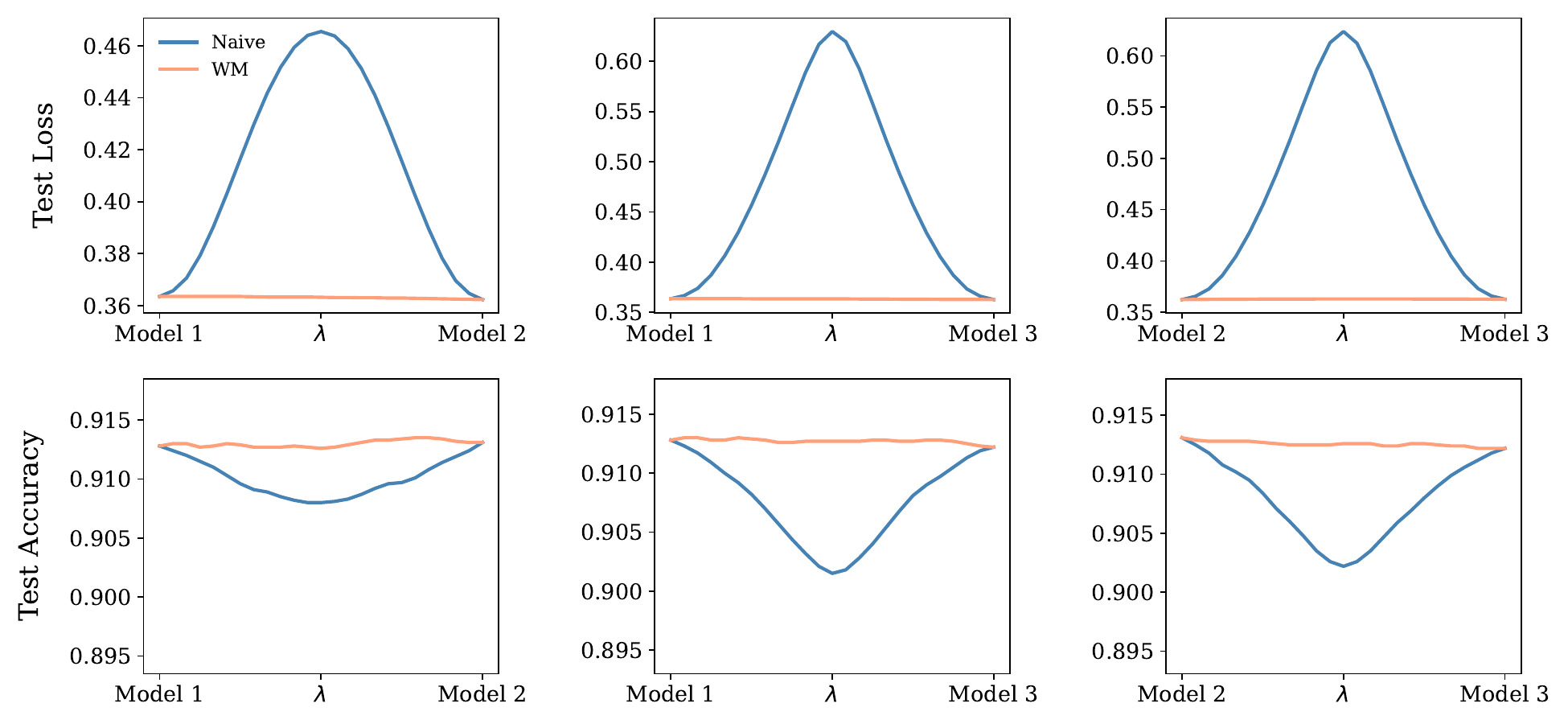}
    \caption{Linear Mode Connectivity for ViT-MoE on ImageNet-21k$\rightarrow$CIFAR-100 with 12 layers and 4 experts. }
    \label{fig:moe-imagenet21k-cifar100-12-4-last}
\end{figure}

\begin{figure}[H]
    \centering
    \includegraphics[width=0.9\textwidth]{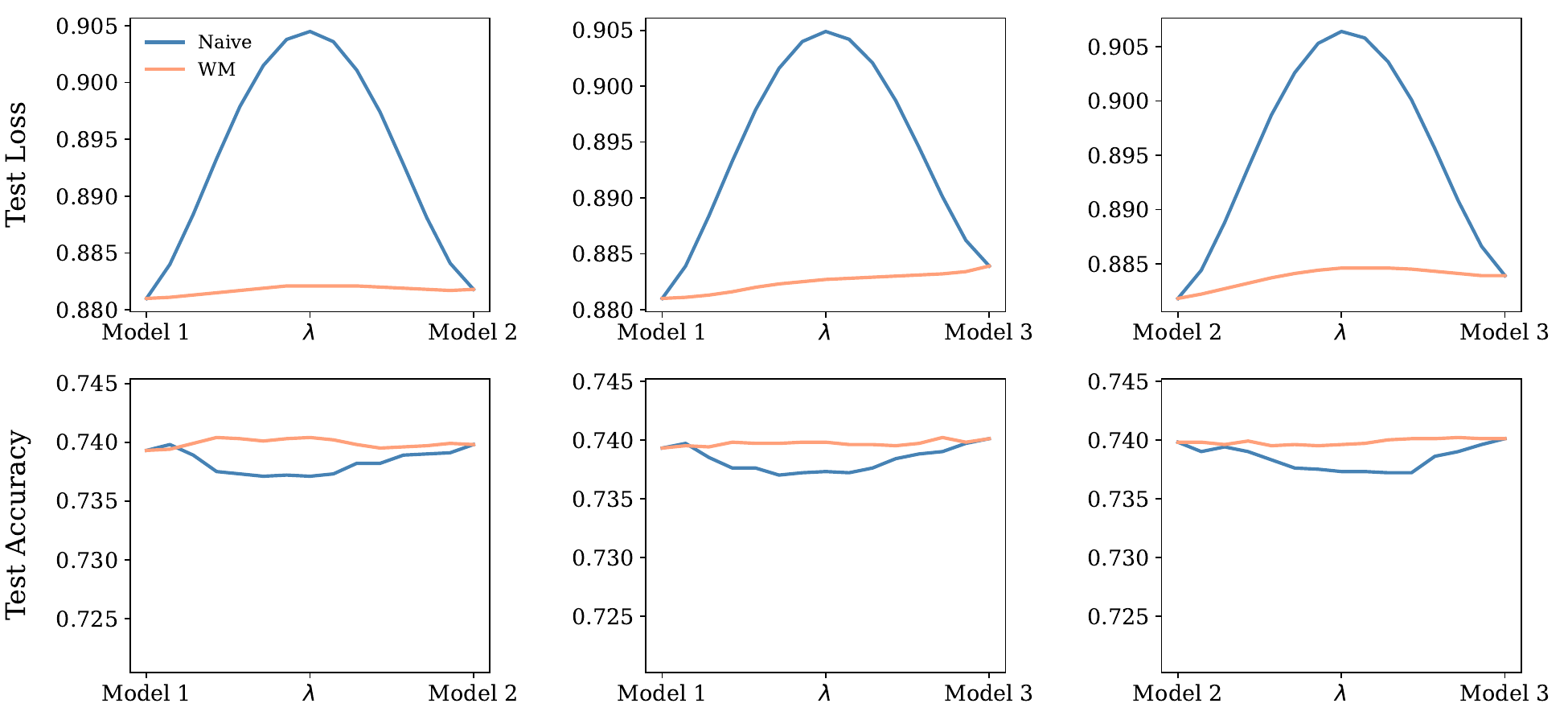}
    \caption{Linear Mode Connectivity for ViT-SMoE ($k=2$) on CIFAR-10 with 6 layers and 4 experts. }
    \label{fig:smoe-cifar10-6-4-last}
\end{figure}

\begin{figure}[H]
    \centering
    \includegraphics[width=0.9\textwidth]{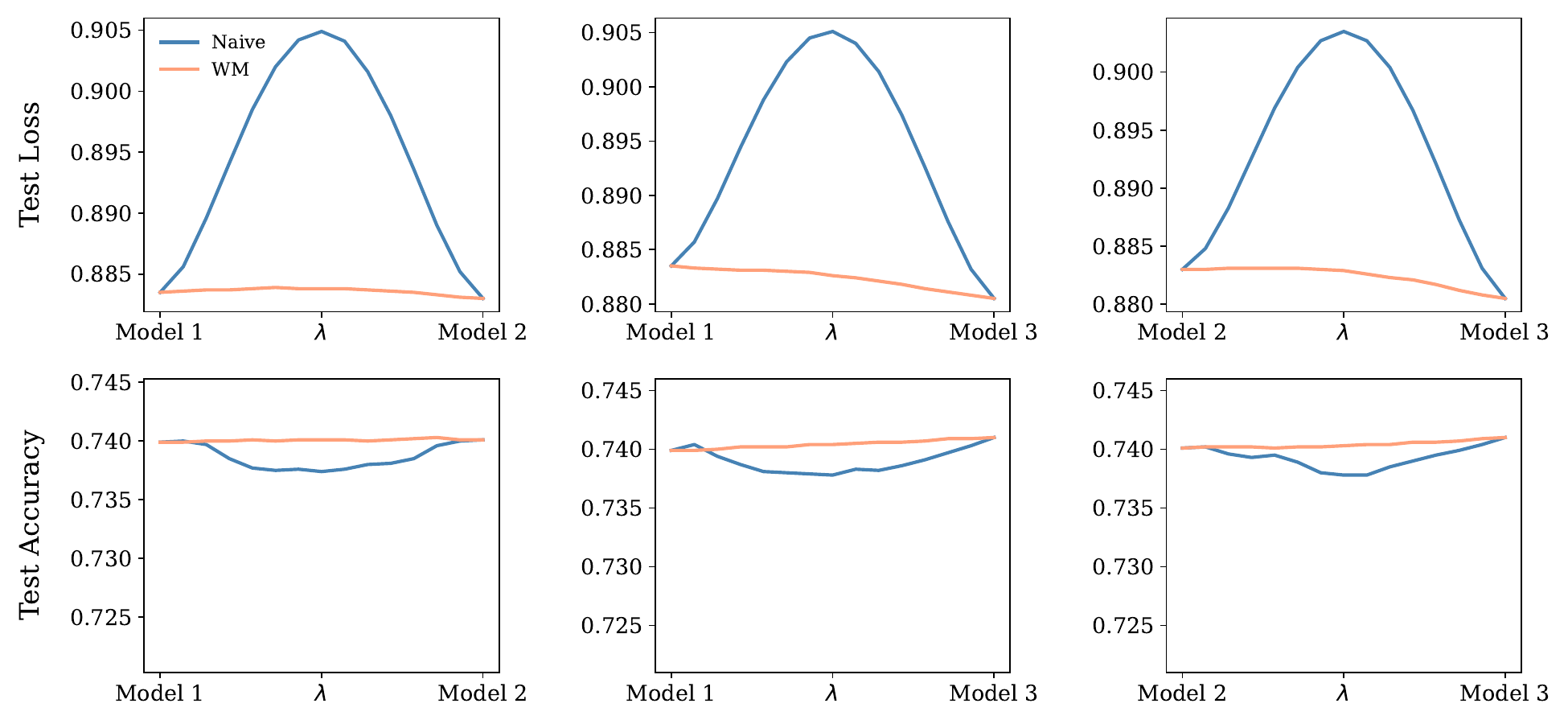}
    \caption{Linear Mode Connectivity for ViT-SMoE ($k=2$) on CIFAR-10 with 6 layers and 8 experts. }
    \label{fig:smoe-cifar10-6-8-last}
\end{figure}

\begin{figure}[H]
    \centering
    \includegraphics[width=0.9\textwidth]{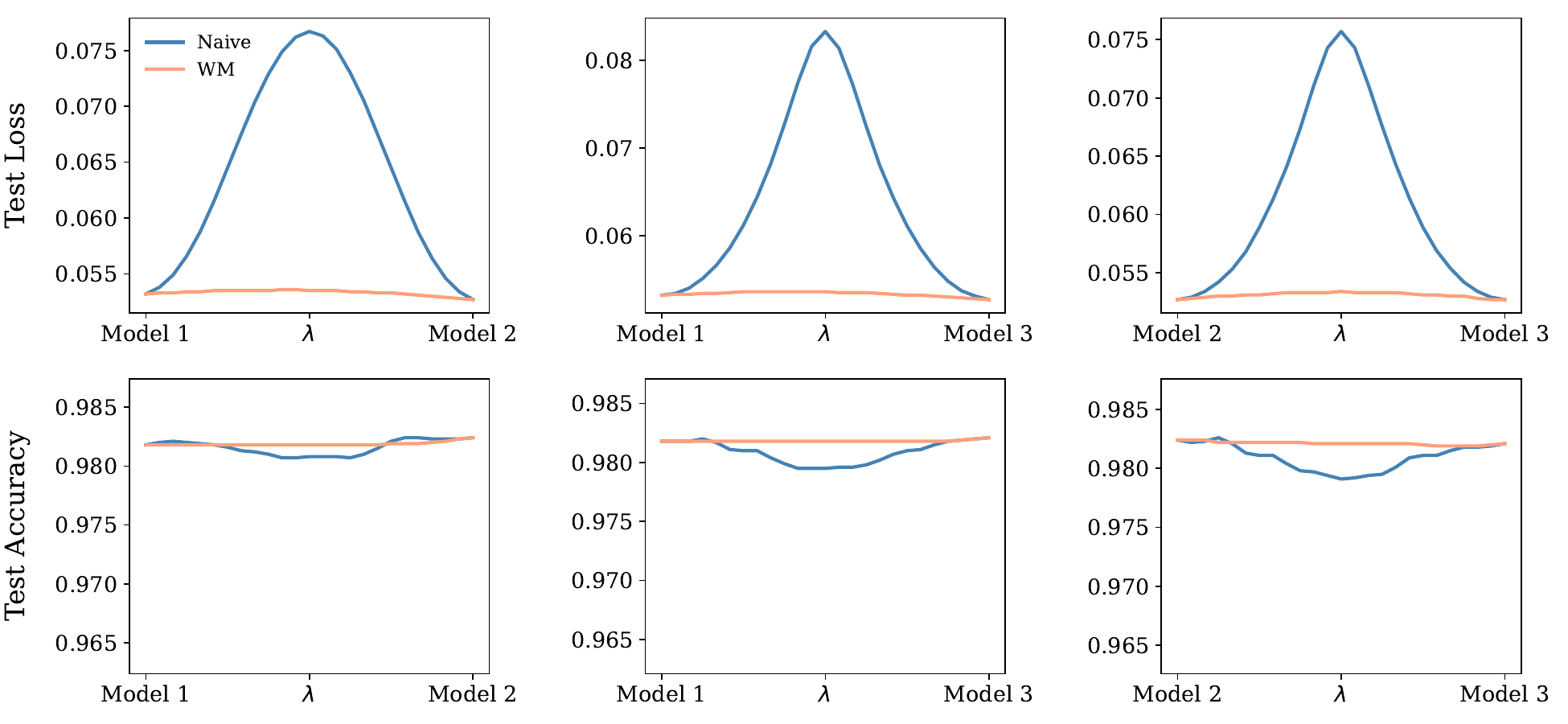}
    \caption{Linear Mode Connectivity for ViT-SMoE ($k=2$) on ImageNet-21k$\rightarrow$CIFAR-10 with 12 layers and 4 experts. }
    \label{fig:smoe-imagenet21k-cifar10-12-4-last}
\end{figure}

\begin{figure}[H]
    \centering
    \includegraphics[width=0.9\textwidth]{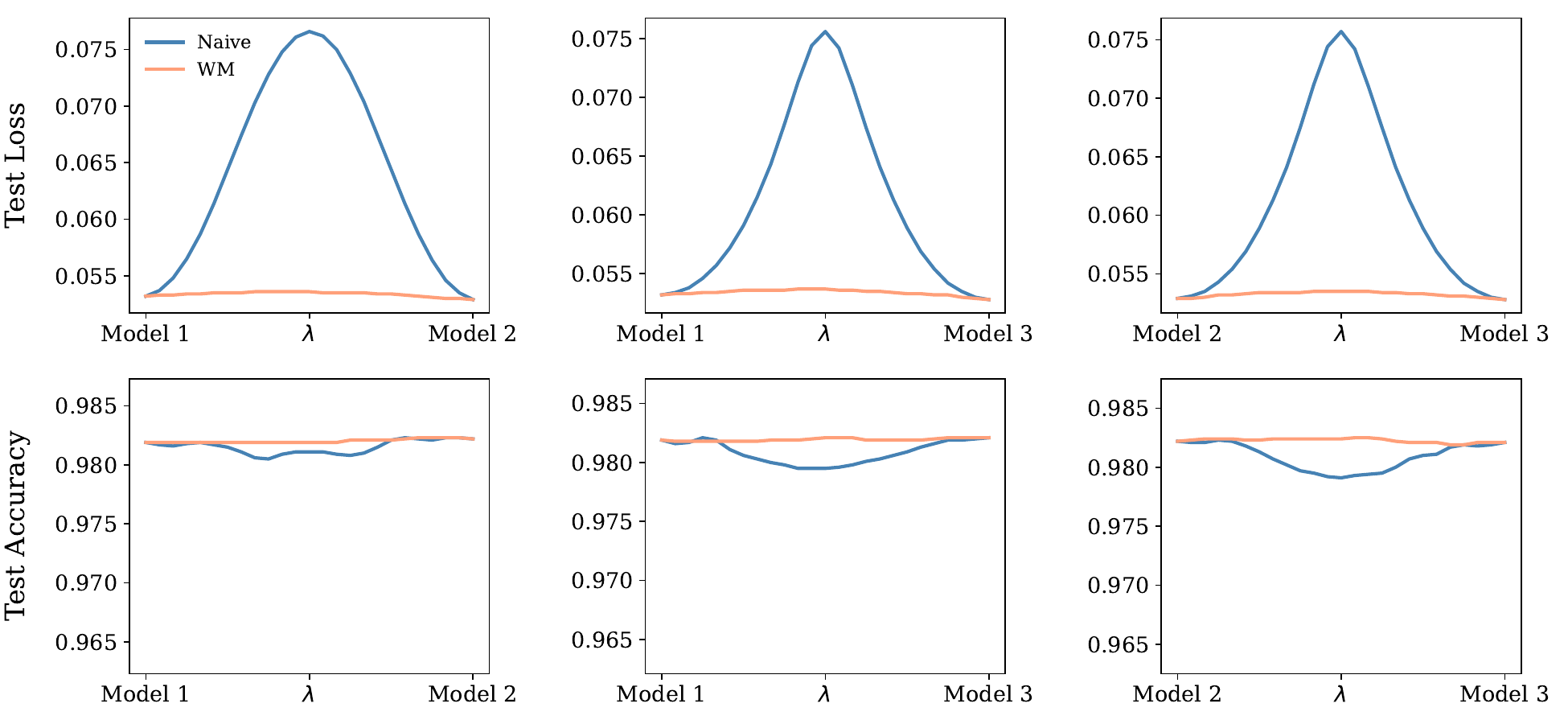}
    \caption{Linear Mode Connectivity for ViT-SMoE ($k=2$) on ImageNet-21k$\rightarrow$CIFAR-10 with 12 layers and 8 experts. }
    \label{fig:smoe-imagenet21k-cifar10-12-8-last}
\end{figure}

\begin{figure}[H]
    \centering
    \includegraphics[width=0.9\textwidth]{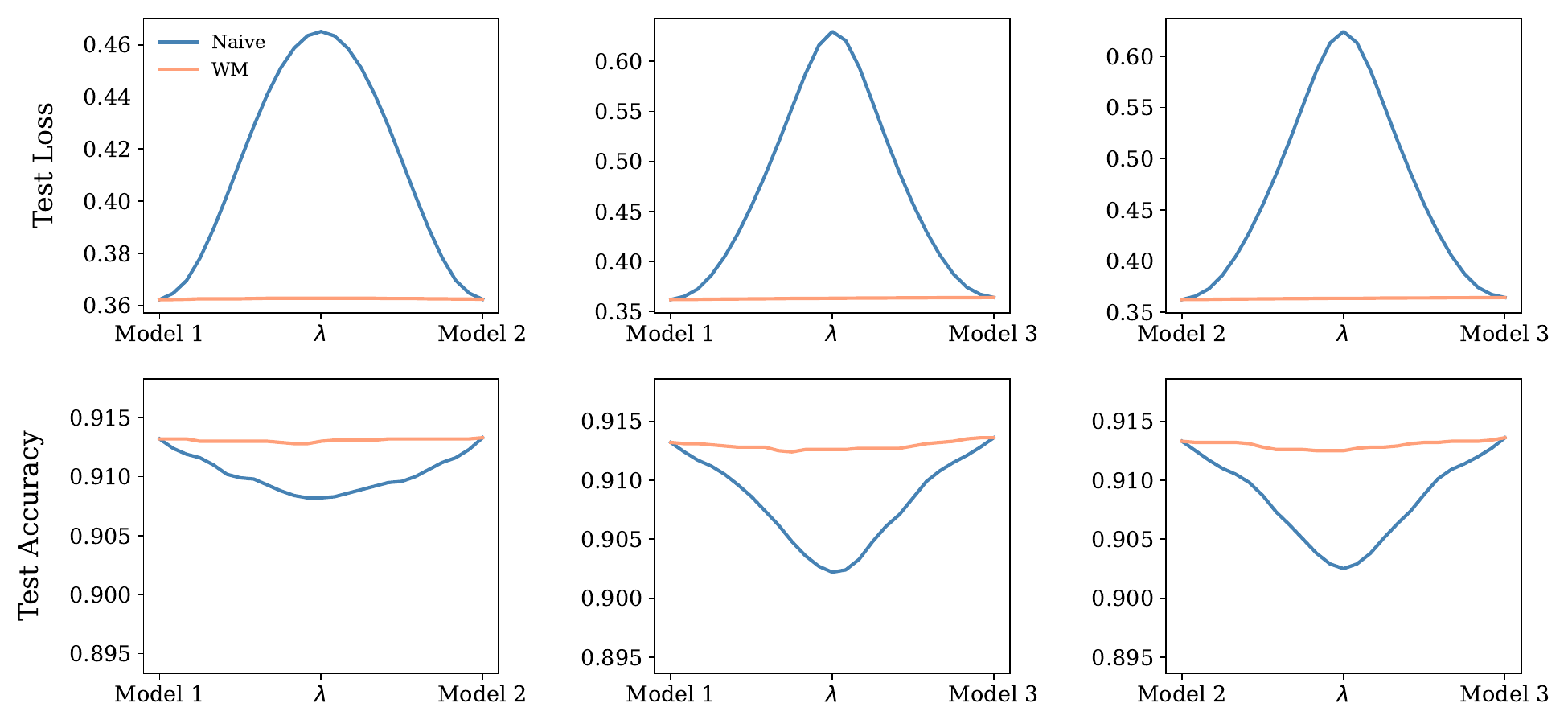}
    \caption{Linear Mode Connectivity for ViT-SMoE ($k=2$) on ImageNet-21k$\rightarrow$CIFAR-100 with 12 layers and 4 experts. }
    \label{fig:smoe-imagenet21k-cifar100-12-4-last}
\end{figure}

\begin{figure}[H]
    \centering
    \includegraphics[width=0.9\textwidth]{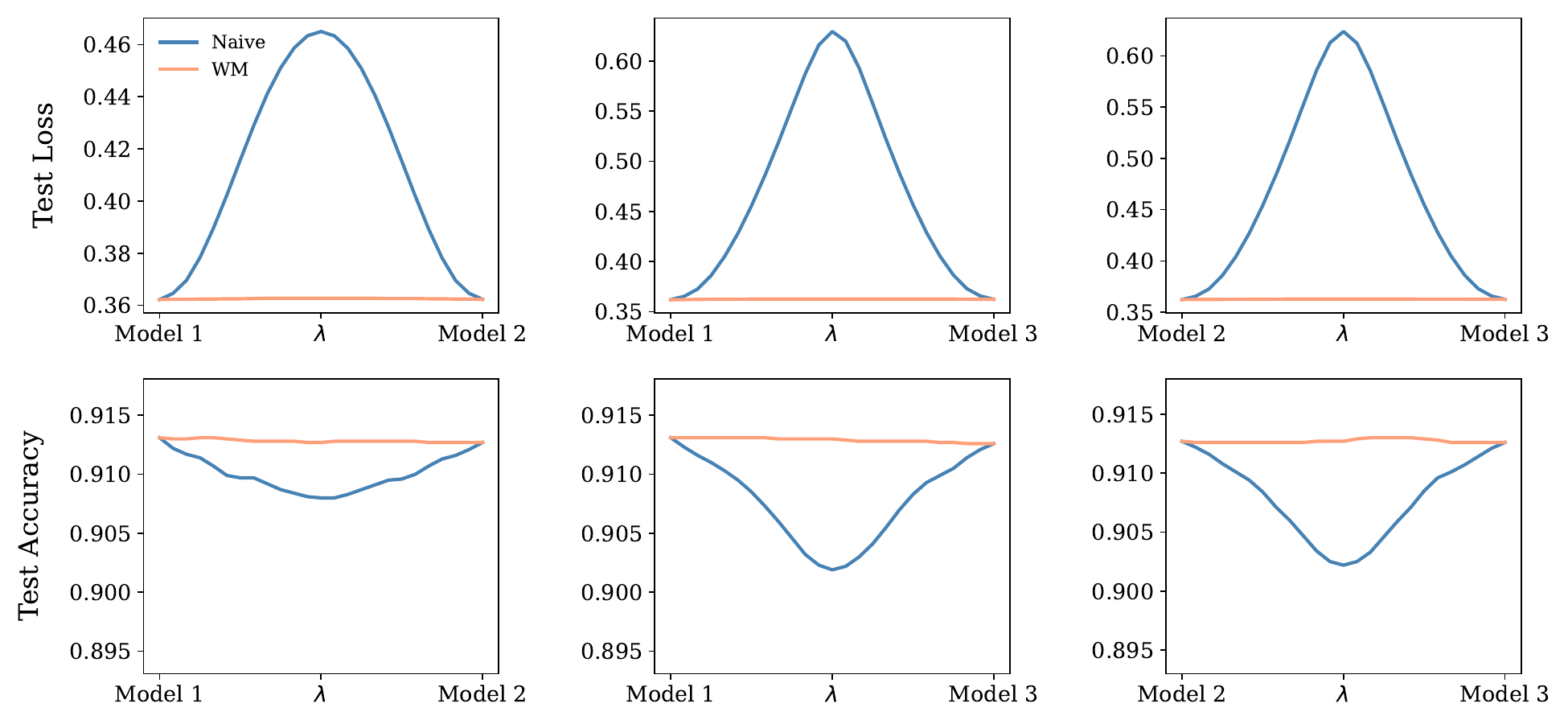}
    \caption{Linear Mode Connectivity for ViT-SMoE ($k=2$) on ImageNet-21k$\rightarrow$CIFAR-100 with 12 layers and 8 experts. }
    \label{fig:smoe-imagenet21k-cifar100-12-8-last}
\end{figure}

\begin{figure}[H]
    \centering
    \includegraphics[width=0.9\textwidth]{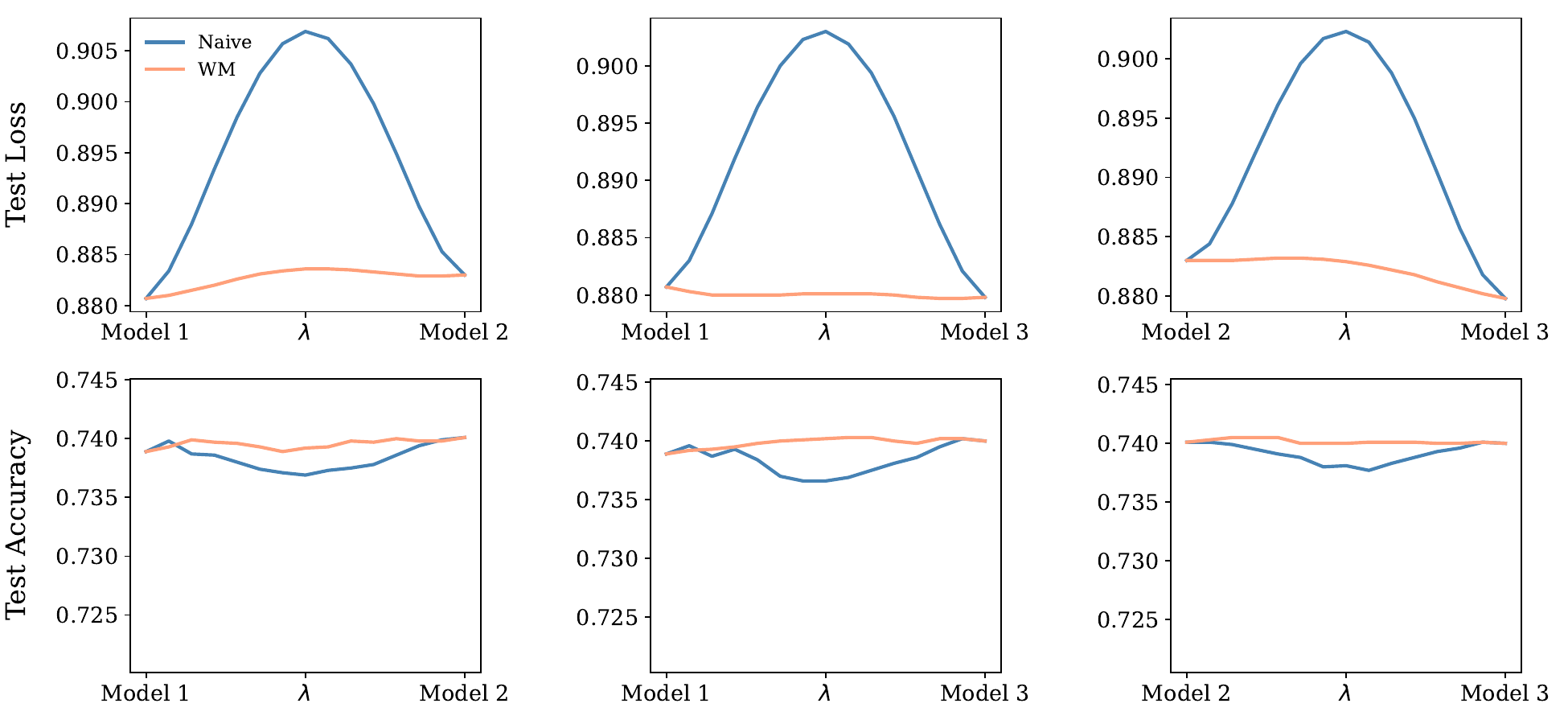}
    \caption{Linear Mode Connectivity for ViT-DeepSeekMoE $(k=2, s=1)$ on CIFAR-10 with 6 layers and 4 experts. }
    \label{fig:deepseek-cifar10-6-4-last}
\end{figure}

\begin{figure}[H]
    \centering
    \includegraphics[width=0.9\textwidth]{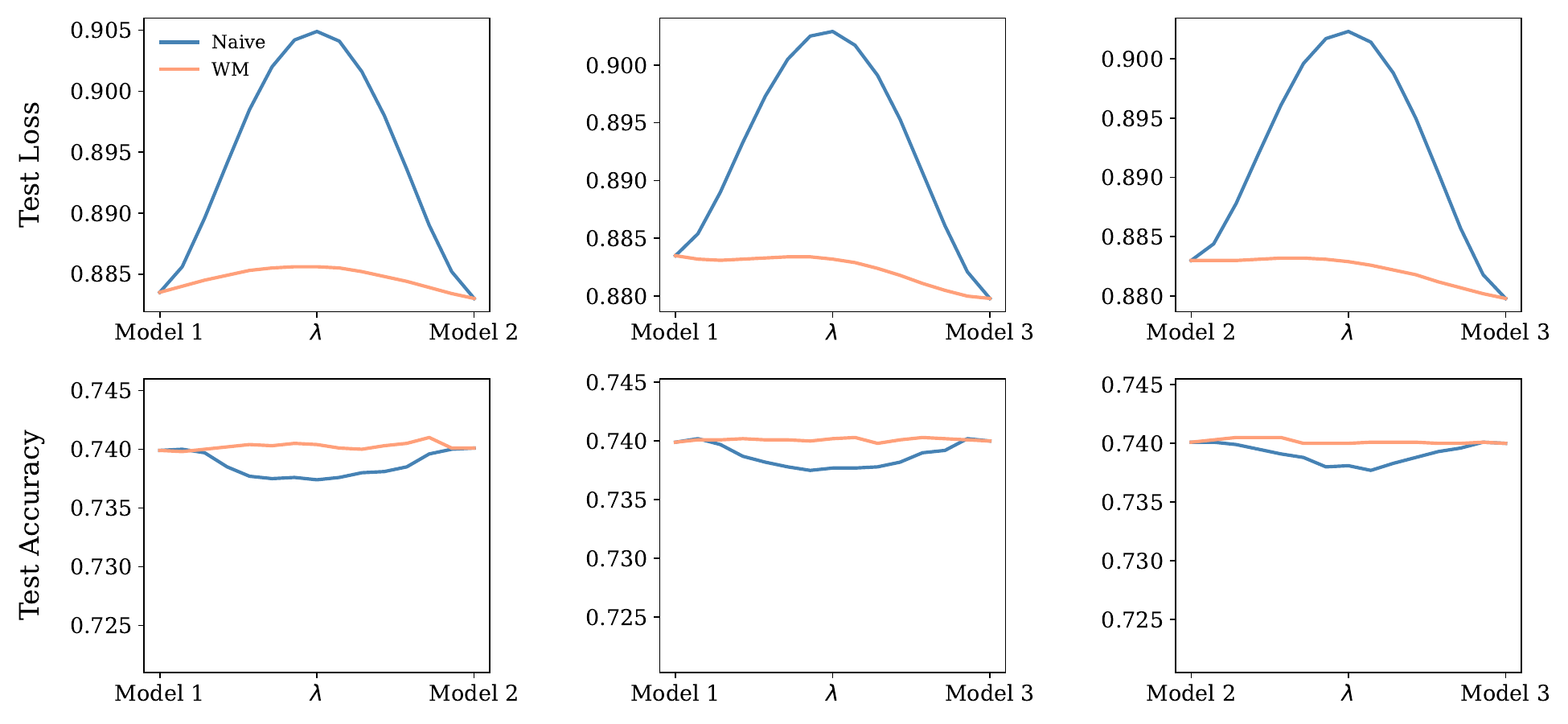}
    \caption{Linear Mode Connectivity for ViT-DeepSeekMoE $(k=2, s=1)$ on CIFAR-10 with 6 layers and 8 experts. }
    \label{fig:deepseek-cifar10-6-8-last}
\end{figure}

\begin{figure}[H]
    \centering
    \includegraphics[width=0.9\textwidth]{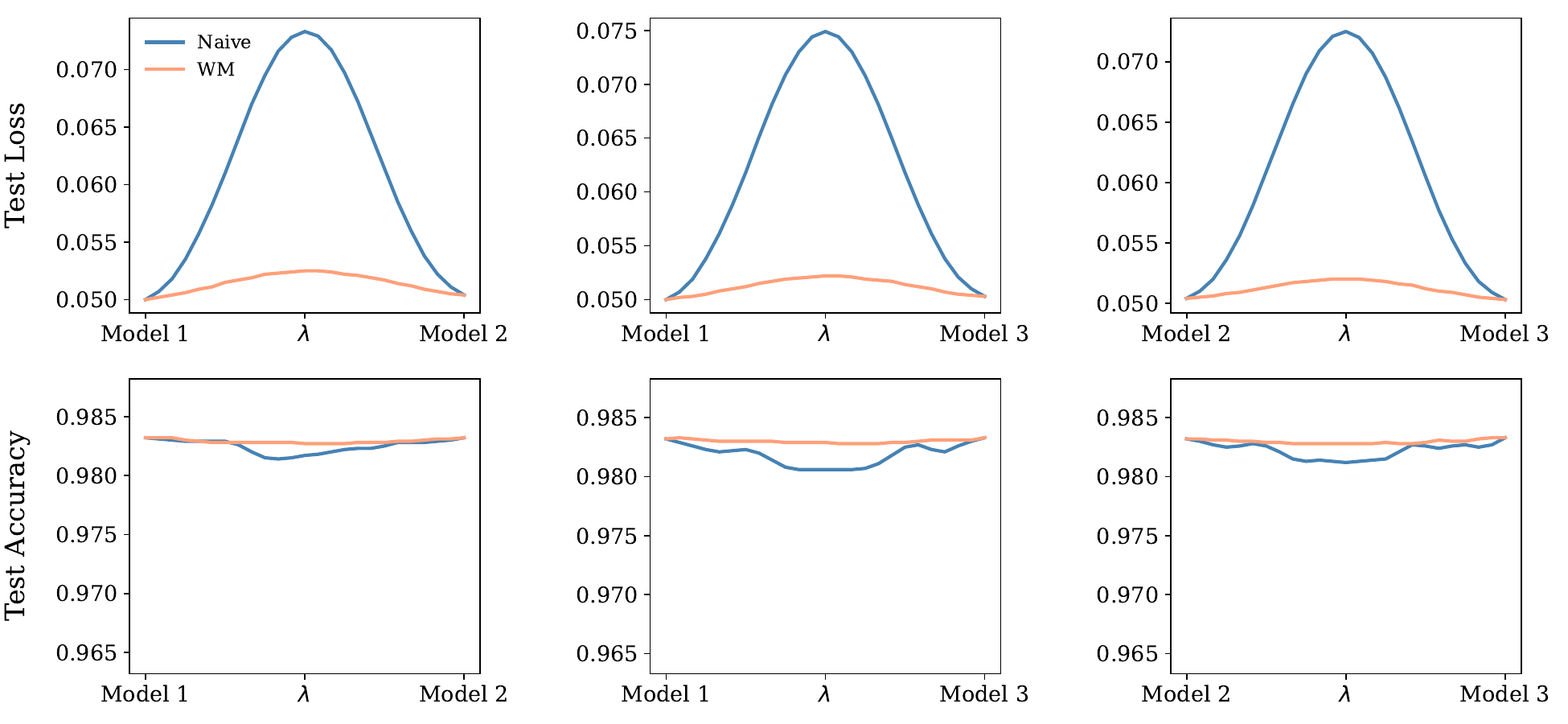}
    \caption{Linear Mode Connectivity for ViT-DeepSeekMoE $(k=2, s=1)$ on ImageNet-21k$\rightarrow$CIFAR-10 with 12 layers and 4 experts. }
    \label{fig:deepseek-imagenet21k-cifar10-12-4-last}
\end{figure}

\begin{figure}[H]
    \centering
    \includegraphics[width=0.9\textwidth]{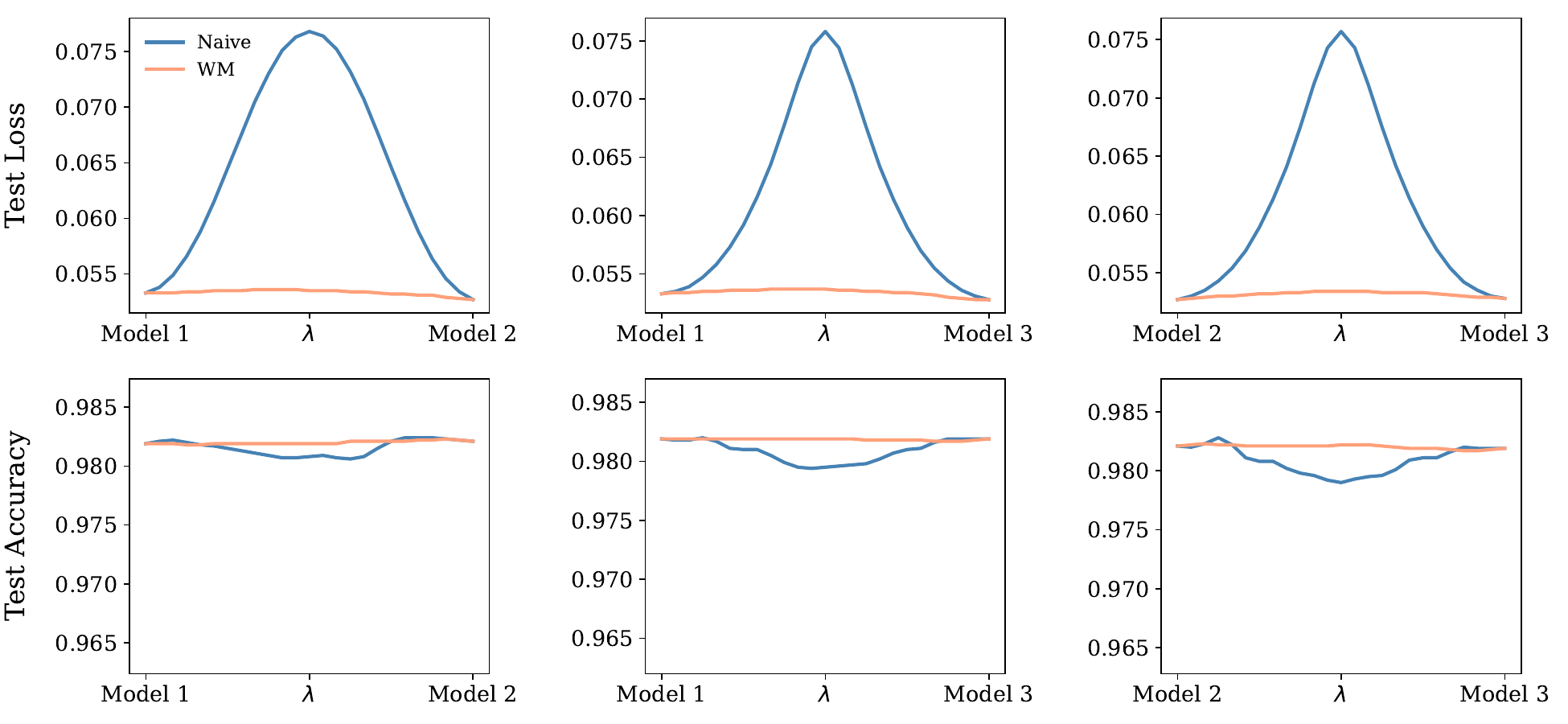}
    \caption{Linear Mode Connectivity for ViT-DeepSeekMoE $(k=2, s=1)$ on ImageNet-21k$\rightarrow$CIFAR-10 with 12 layers and 8 experts. }
    \label{fig:deepseek-imagenet21k-cifar10-12-8-last}
\end{figure}

\begin{figure}[H]
    \centering
    \includegraphics[width=0.9\textwidth]{sections/figures/lmc_lastlayer/d_imgnetcifar10_12_4.pdf}
    \caption{Linear Mode Connectivity for ViT-DeepSeekMoE $(k=2, s=1)$ on ImageNet-21k$\rightarrow$CIFAR-100 with 12 layers and 4 experts. }
    \label{fig:deepseek-imagenet21k-cifar100-12-4-last}
\end{figure}

\begin{figure}[H]
    \centering
    \includegraphics[width=0.9\textwidth]{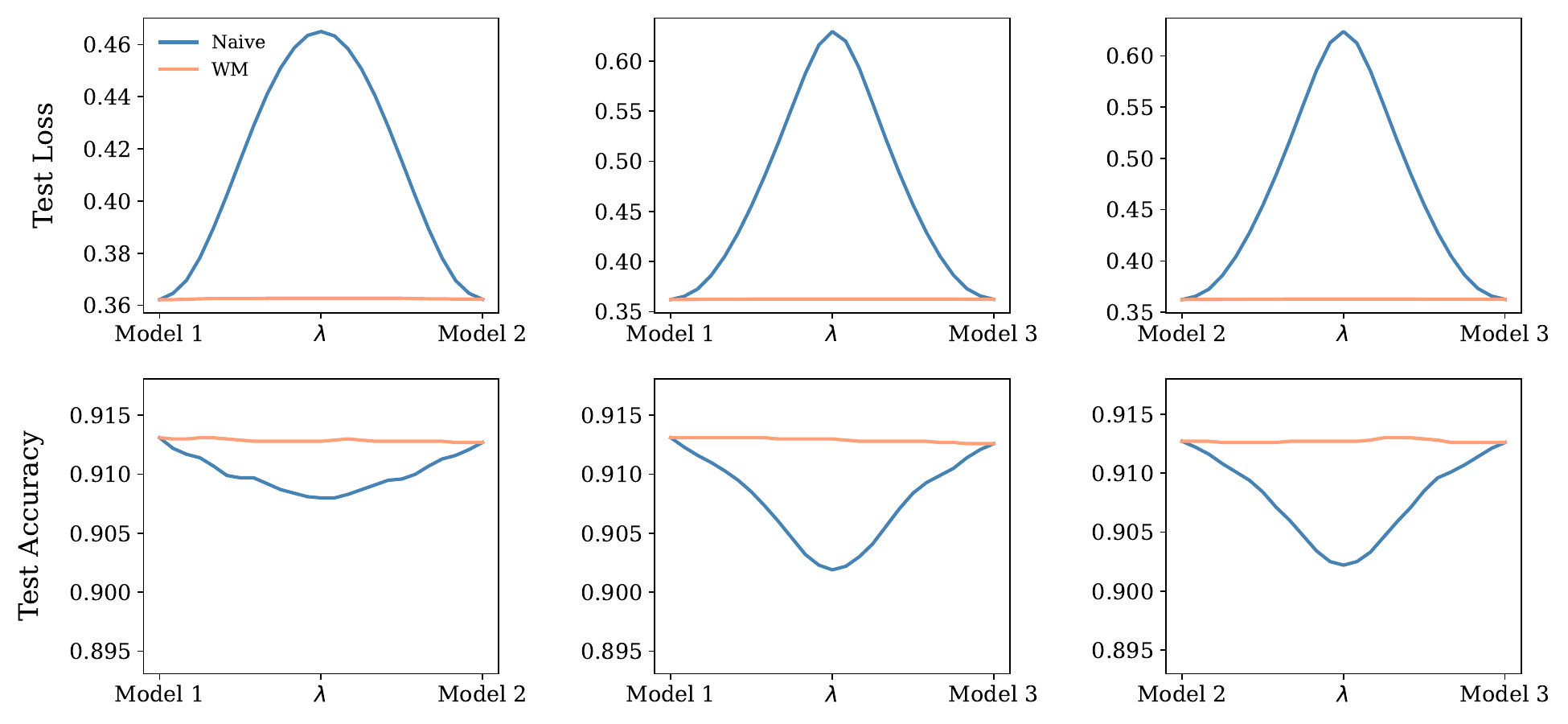}
    \caption{Linear Mode Connectivity for ViT-DeepSeekMoE $(k=2, s=1)$ on ImageNet-21k$\rightarrow$CIFAR-100 with 12 layers and 8 experts. }
    \label{fig:deepseek-imagenet21k-cifar100-12-8-last}
\end{figure}

\subsection{Linear Mode Connectivity Analysis: All Layer}\label{appendix:LMC_all}

\begin{table}[H]
    \medskip
    \centering
    \caption{Loss and accuracy barriers for weight matching interpolation versus naive interpolation, alongside test loss and accuracy values, across MoE, SMoE ($k=2$), and DeepSeekMoE ($k=2,s=1$) variants of ViT models on MNIST, CIFAR-10/100 datasets, and of GPT-2 on the One Billion Word dataset. Barriers (lower is better ↓) are averaged over 6 pairs of models from a pool of 4 checkpoints, while metric values are averaged over the 4 model checkpoints. All metrics are scaled up by $10^2$ to improve readability.}
    \label{tab:moe_barrier_loss_acc}
    \renewcommand*{\arraystretch}{1.3}
    \vspace{3pt}
    \begin{adjustbox}{width=1\textwidth}
    \begin{tabular}{llcccccccc}
        \toprule
        MoE variant & Dataset & No. & No. & WM Loss & Naive Loss & Loss value ↓ & WM Acc. & Naive Acc. & Acc. value ↑ \\
        &  & layers & experts & Barrier ↓ & Barrier ↓ &  & Barrier ↓ & Barrier ↓ & \\

        \midrule
        \textbf{MoE} & MNIST & 2 & 2 & 1.20 $\pm$ 0.32 & 11.62 $\pm$ 2.44 & 10.20 $\pm$ 2.04 & 2.12 $\pm$ 0.21 & 25.04 $\pm$ 3.75 & 97.32 $\pm$ 0.73 \\
         &  &  & 4 & 1.31 $\pm$ 0.21 & 11.77 $\pm$ 3.29 & 8.45 $\pm$ 2.53 & 2.34 $\pm$ 0.30 & 26.62 $\pm$ 4.22 & 97.01 $\pm$ 1.01 \\
         & CIFAR-10 & 2 & 4 & 4.24 $\pm$ 0.72 & 37.24 $\pm$ 3.25 & 95.06 $\pm$ 0.42 & 1.52 $\pm$ 0.35 & 14.21 $\pm$ 1.42 & 66.57 $\pm$ 0.89 \\
         &  &  & 8 & 4.52 $\pm$ 0.42 & 36.66 $\pm$ 4.01 & 95.44 $\pm$ 0.84 & 1.74 $\pm$ 0.50 & 16.01 $\pm$ 1.02 & 65.93 $\pm$ 1.20 \\
         &  & 6 & 4 & 5.42 $\pm$ 0.74 & 47.47 $\pm$ 8.02 & 90.25 $\pm$ 1.95 & 2.21 $\pm$ 0.52 & 24.02 $\pm$ 2.66 & 74.61 $\pm$ 2.13 \\
         &  &  & 8 & 5.27 $\pm$ 1.26 & 46.47 $\pm$ 5.44 & 91.31 $\pm$ 2.32 & 2.44 $\pm$ 0.43 & 22.45 $\pm$ 3.21 & 74.49 $\pm$ 1.35 \\
         & CIFAR-100 & 6 & 4 & 2.32 $\pm$ 0.51 & 20.11 $\pm$ 4.34 & 94.04 $\pm$ 2.02 & 0.21 $\pm$ 0.07 & 3.22 $\pm$ 0.71 & 75.25 $\pm$ 1.22 \\
         &  &  & 8 & 2.43 $\pm$ 0.62 & 23.33 $\pm$ 3.33 & 93.43 $\pm$ 1.15 & 0.27 $\pm$ 0.12 & 4.01 $\pm$ 1.21 & 75.66 $\pm$ 1.30 \\
         & One Billion  & 12 & 2 & 56.82 $\pm$ 3.51 & 400.11 $\pm$ 50.54 & 348.60 $\pm$ 0.05 & -- & -- & -- \\
         & Word &  & 4 & 68.76 $\pm$ 4.45 & 435.65 $\pm$ 34.92 & 344.57 $\pm$ 0.04 & -- & -- & -- \\
         &  &  & 6 & 78.44 $\pm$ 3.96 & 476.65 $\pm$ 29.58 & 342.44 $\pm$ 0.03 & -- & -- & -- \\
         &  &  & 8 & 93.23 $\pm$ 8.84 & 455.82 $\pm$ 38.10 & 341.29 $\pm$ 0.32 & -- & -- & -- \\
        \midrule
        \textbf{SMoE ($k=2$)} & MNIST & 2 & 2 & 1.55 $\pm$ 0.23 & 18.26 $\pm$ 2.32 & 11.21 $\pm$ 2.02 & 2.14 $\pm$ 0.31 & 24.92 $\pm$ 3.56 & 97.22 $\pm$ 0.63 \\
         &  &  & 4 & 1.62 $\pm$ 0.54 & 19.19 $\pm$ 2.27 & 10.09 $\pm$ 1.03 & 2.30 $\pm$ 0.34 & 26.24 $\pm$ 4.41 & 97.06 $\pm$ 1.00 \\
         & CIFAR-10 & 2 & 4 & 4.44 $\pm$ 0.52 & 37.44 $\pm$ 3.35 & 95.10 $\pm$ 0.44 & 1.50 $\pm$ 0.37 & 14.33 $\pm$ 1.32 & 66.62 $\pm$ 0.69 \\
         &  &  & 8 & 4.61 $\pm$ 0.46 & 36.56 $\pm$ 3.87 & 95.74 $\pm$ 0.92 & 1.64 $\pm$ 0.54 & 17.04 $\pm$ 1.32 & 65.97 $\pm$ 1.27 \\
         &  & 6 & 4 & 5.57 $\pm$ 0.84 & 47.75 $\pm$ 8.12 & 90.33 $\pm$ 1.85 & 2.19 $\pm$ 0.42 & 24.12 $\pm$ 2.44 & 74.63 $\pm$ 2.03 \\
         &  &  & 8 & 5.26 $\pm$ 1.32 & 46.61 $\pm$ 5.36 & 91.35 $\pm$ 2.14 & 2.31 $\pm$ 0.41 & 23.66 $\pm$ 4.21 & 74.51 $\pm$ 1.25 \\
         & CIFAR-100 & 6 & 4 & 2.24 $\pm$ 0.51 & 21.25 $\pm$ 2.05 & 94.24 $\pm$ 2.13 & 0.22 $\pm$ 0.08 & 3.36 $\pm$ 0.91 & 75.35 $\pm$ 1.11 \\
         &  &  & 8 & 2.67 $\pm$ 0.52 & 24.39 $\pm$ 1.44 & 93.83 $\pm$ 4.25 & 0.23 $\pm$ 0.13 & 4.12 $\pm$ 1.02 & 75.72 $\pm$ 1.20 \\
         & One Billion  & 12 & 4 & 72.31 $\pm$ 0.28 & 522.22 $\pm$ 41.40 & 345.02 $\pm$ 0.00 & -- & -- & -- \\
         & Word &  & 8 & 79.86 $\pm$ 4.81 & 618.23 $\pm$ 67.93 & 342.31 $\pm$ 0.11 & -- & -- & -- \\
         &  &  & 16 & 98.68 $\pm$ 18.59 & 492.16 $\pm$ 42.23 & 340.57 $\pm$ 0.05 & -- & -- & -- \\
        \midrule
        \textbf{DeepSeekMoE} & MNIST & 2 & 2 & 1.35 $\pm$ 0.64 & 17.46 $\pm$ 2.42 & 12.32 $\pm$ 2.02 & 2.31 $\pm$ 0.41 & 25.12 $\pm$ 3.85 & 97.42 $\pm$ 0.93 \\
         &  &  & 4 & 1.52 $\pm$ 0.62 & 16.72 $\pm$ 2.33 & 11.27 $\pm$ 3.24 & 2.04 $\pm$ 0.32 & 27.02 $\pm$ 4.36 & 97.17 $\pm$ 1.31 \\
         ($k=2, s=1$)& CIFAR-10 & 2 & 4 & 4.22 $\pm$ 0.42 & 38.64 $\pm$ 3.44 & 95.11 $\pm$ 0.33 & 1.66 $\pm$ 0.45 & 17.31 $\pm$ 1.63 & 66.77 $\pm$ 1.04 \\
         &  &  & 8 & 4.64 $\pm$ 0.52 & 38.86 $\pm$ 4.11 & 95.36 $\pm$ 0.77 & 1.68 $\pm$ 0.52 & 16.92 $\pm$ 1.15 & 65.99 $\pm$ 1.11 \\
         &  & 6 & 4 & 5.44 $\pm$ 0.92 & 52.27 $\pm$ 5.32 & 90.05 $\pm$ 1.64 & 2.12 $\pm$ 0.62 & 26.11 $\pm$ 2.72 & 74.63 $\pm$ 2.33 \\
         &  &  & 8 & 5.47 $\pm$ 1.22 & 49.34 $\pm$ 6.24 & 91.21 $\pm$ 2.12 & 2.22 $\pm$ 0.63 & 25.25 $\pm$ 4.21 & 74.88 $\pm$ 1.25 \\
         & CIFAR-100 & 6 & 4 & 2.43 $\pm$ 0.50 & 23.23 $\pm$ 5.62 & 95.22 $\pm$ 3.15 & 0.23 $\pm$ 0.07 & 3.02 $\pm$ 0.61 & 75.36 $\pm$ 1.73 \\
         &  &  & 8 & 2.72 $\pm$ 0.71 & 24.55 $\pm$ 4.07 & 93.89 $\pm$ 2.17 & 0.24 $\pm$ 0.11 & 4.00 $\pm$ 1.02 & 75.76 $\pm$ 1.21 \\
         & One Billion  & 12 & 4 & 64.06 $\pm$ 4.06 & 426.11 $\pm$ 58.07 & 343.27 $\pm$ 0.05 & -- & -- & -- \\
         & Word &  & 8 & 117.42 $\pm$ 37.94 & 390.59 $\pm$ 28.63 & 340.21 $\pm$ 0.11 & -- & -- & -- \\
         &  &  & 16 & 72.94 $\pm$ 2.12 & 623.63 $\pm$ 147.42 & 338.25 $\pm$ 0.21 & -- & -- & -- \\
        \bottomrule
    \end{tabular}
    \end{adjustbox}
\end{table}

\subsection{Expert Matching Method}

\begin{table}[t]
    \medskip
    \centering
    \caption{Comparison of expert matching methods (Expert Weight Matching and Gate Weight Matching) across three ViT-MoE model variants evaluated on the CIFAR-10 and CIFAR-100 test sets, measuring \textbf{loss}. Metrics reported include rank and $\hat{L}$, defined in Section~\ref{sec:matching_method}, computed across 24 permutations for 10 checkpoint pairs. All models consist of 12 layers and 4 experts.}
    \label{tab:expert_matching_full_loss}
    \renewcommand*{\arraystretch}{1.3}
    \vspace{3pt}
    \begin{adjustbox}{width=1\textwidth}
    \begin{tabular}{llccccc}
        \toprule
        \multirow{2}{*}{Method} & \multirow{2}{*}{Dataset}& \multirow{2}{*}{\makecell[c]{Layer \\ replaced}}  & 
        \multicolumn{2}{c}{Expert Weight Matching} & \multicolumn{2}{c}{Gate Weight Matching} \\
        \cmidrule(lr){4-5} \cmidrule(lr){6-7}
        & &  & ~~Rank $\downarrow$
        & $\hat{L} \downarrow$ & Rank & $\hat{L} \downarrow$ \\
        \midrule
        MoE & CIFAR-10 & 1 & 2.50 $\pm$ 1.50 & 2.12 $\pm$ 0.42 & 3.00 $\pm$ 1.00 & 3.42 $\pm$ 0.55 \\
         & & 4 & 2.10 $\pm$ 0.50 & 1.04 $\pm$ 0.32 & 2.60 $\pm$ 0.92 & 1.54 $\pm$ 0.44 \\
         & & 8 & 2.70 $\pm$ 0.46 & 0.60 $\pm$ 0.27 & 2.90 $\pm$ 0.30 & 0.74 $\pm$ 0.17 \\
         & & 12 & 4.60 $\pm$ 2.00 & 0.13 $\pm$ 0.05 & 3.80 $\pm$ 1.66 & 0.09 $\pm$ 0.03 \\
         & CIFAR-100 & 1 & 2.80 $\pm$ 0.40 &3.17 $\pm$ 0.25 & 2.90 $\pm$ 0.70 & 2.73 $\pm$ 1.03 \\
         & & 4 & 3.60 $\pm$ 1.20 & 1.15 $\pm$ 0.55 & 2.70 $\pm$ 1.00 & 2.03 $\pm$ 0.93\\
         & & 8 & 3.30 $\pm$ 0.78 & 0.67 $\pm$ 0.14 & 3.20 $\pm$ 0.87 & 1.13 $\pm$ 0.55 \\
         & & 12 & 3.40 $\pm$ 0.92 & 0.07 $\pm$ 0.03 & 4.20 $\pm$ 0.89 & 0.11 $\pm$ 0.04\\
        \midrule
        SMoE ($k=2$) &CIFAR-10 & 1 & 3.00 $\pm$ 1.00 & 3.80 $\pm$ 1.18 & 3.20 $\pm$ 0.98 & 3.46 $\pm$ 2.94 \\
        & & 4 & 2.80 $\pm$ 0.98 & 1.73 $\pm$ 0.49 & 2.40 $\pm$ 1.56 & 1.64 $\pm$ 0.86 \\
         && 8 & 2.60 $\pm$ 0.49 & 0.40 $\pm$ 0.46 & 2.60 $\pm$ 0.43 & 0.36 $\pm$ 0.83 \\
        & & 12 & 4.00 $\pm$ 2.45 & 0.06 $\pm$ 0.04 & 2.80 $\pm$ 1.66 & 0.22 $\pm$ 0.19 \\
         &CIFAR-100 & 1 & 2.80 $\pm$ 0.40 & 3.29 $\pm$ 3.18 & 2.10 $\pm$ 0.70 & 2.00 $\pm$ 2.02 \\
        & & 4 & 2.60 $\pm$ 1.20 & 1.20 $\pm$ 0.43 & 3.20 $\pm$ 1.47 &1.91 $\pm$ 1.05 \\
         && 8 & 3.10 $\pm$ 0.83 & 0.13 $\pm$ 0.07 & 2.70 $\pm$ 0.90 & 0.11 $\pm$ 0.09  \\
        & & 12 & 2.40 $\pm$ 0.92 &  0.07 $\pm$ 0.13& 2.00 $\pm$ 0.89 & 0.03 $\pm$ 0.12 \\
        \midrule
        DeepSeekMoE&CIFAR-10& 1 & 3.10 $\pm$ 2.12 & 5.06 $\pm$ 1.22 &3.60 $\pm$ 0.83 & 4.28 $\pm$ 2.92 \\
        $(k=2, s=1)$ & &4   & 2.60 $\pm$ 0.77 & 3.32 $\pm$ 0.38 & 3.10 $\pm$ 1.52 & 2.80 $\pm$ 0.88 \\
         && 8 & 2.30 $\pm$ 0.51 & 0.39 $\pm$ 0.48 & 2.40 $\pm$ 0.37& 0.78 $\pm$ 1.00 \\
        & & 12 & 3.60 $\pm$ 3.55 &0.08 $\pm$ 0.04  & 3.60 $\pm$ 1.48 & 0.23 $\pm$ 0.17 \\
        &CIFAR-100& 1 & 4.20 $\pm$ 0.60 & 4.45 $\pm$ 3.40 & 3.70 $\pm$ 0.45 & 4.21 $\pm$ 2.60\\
        &  & 4 & 2.50 $\pm$ 0.78& 1.34 $\pm$ 0.46 & 3.10 $\pm$ 1.91 & 2.98 $\pm$ 1.46 \\
         && 8 &  1.90 $\pm$ 0.60& 0.13 $\pm$ 0.08 &  3.10 $\pm$ 1.02 & 0.13 $\pm$ 0.16\\
        & & 12 & 3.30 $\pm$ 1.26 &  0.04 $\pm$ 0.11& 2.70 $\pm$ 0.89 & 0.02 $\pm$ 0.20 \\
        \bottomrule
    \end{tabular}
    \end{adjustbox}
\end{table}

\begin{table}[t]
    \medskip
    \centering
    \renewcommand*{\arraystretch}{1.3}
    \caption{Comparison of expert matching methods (Expert Weight Matching and Gate Weight Matching) across three ViT-MoE model variants evaluated on the CIFAR-10 and CIFAR-100 test sets, measuring \textbf{accuracy}. Metrics reported include rank and $\hat{L}$, defined in Section~\ref{sec:matching_method}, computed across 24 permutations for 10 checkpoint pairs. All models consist of 12 layers and 4 experts.}
    \label{tab:expert_matching_full_acc}
    \vspace{3pt}
    \begin{adjustbox}{width=1\textwidth}
    \begin{tabular}{llccccc}
        \toprule
        \multirow{2}{*}{Method} & \multirow{2}{*}{Dataset}& \multirow{2}{*}{\makecell[c]{Layer \\ replaced}}  & 
        \multicolumn{2}{c}{Expert Weight Matching} & \multicolumn{2}{c}{Gate Weight Matching} \\
        \cmidrule(lr){4-5} \cmidrule(lr){6-7}
        & &  & ~~~Rank $\downarrow$
        & $\hat{L} \downarrow$  & Rank & $\hat{L} \downarrow$  \\
        \midrule
         MoE & CIFAR-10 & 1 & 4.90 $\pm$ 2.59 & 1.80 $\pm$ 0.78 &  4.40 $\pm$ 1.80  & 2.00 $\pm$ 0.44 \\
         & & 4 & 3.20 $\pm$ 0.40  & 0.95 $\pm$ 0.97 & 2.90 $\pm$ 0.70 & 1.64 $\pm$ 0.98\\
        & & 8 & 3.00 $\pm$ 1.80  & 0.03 $\pm$ 0.05 &  3.80 $\pm$ 1.83 & 0.05 $\pm$ 0.05\\
         & & 12 & 4.00 $\pm$ 1.45 & 0.02 $\pm$ 0.03 & 2.80 $\pm$ 1.40 & 0.04 $\pm$ 0.06\\
          \cmidrule{2-7}

         & CIFAR-100 & 1 & 3.20 $\pm$ 1.47 & 2.66 $\pm$ 1.03 & 4.20 $\pm$ 1.00  & 1.51 $\pm$ 1.16 \\
         & & 4 & 2.70 $\pm$ 0.78  & 1.87 $\pm$ 1.17 & 2.90 $\pm$ 0.83  & 1.39 $\pm$ 1.06 \\
        & & 8 & 4.70 $\pm$ 1.42  & 0.08 $\pm$ 0.02 & 3.80 $\pm$ 1.47  & 0.05 $\pm$ 0.03\\
         & & 12 & 2.40 $\pm$ 0.92 & 0.05 $\pm$ 0.02 & 3.20 $\pm$ 0.87 & 0.02 $\pm$ 0.02\\
        \midrule
        SMoE ($k=2$) &CIFAR-10 & 1 & 2.30 $\pm$ 2.06 & 3.09 $\pm$ 1.21 & 3.20 $\pm$ 2.75 &2.24 $\pm$ 0.90\\
        & & 4 & 2.90 $\pm$ 0.59 & 1.16 $\pm$ 1.00 & 2.80 $\pm$ 0.83 & 1.75 $\pm$ 1.00 \\
         && 8 & 3.40 $\pm$ 1.47 &  0.07 $\pm$ 0.03 & 3.00 $\pm$ 1.74& 0.04 $\pm$ 0.02  \\
        & & 12 & 3.60 $\pm$ 1.21 & 0.12 $\pm$ 0.08  & 4.70 $\pm$ 2.05 &  0.07 $\pm$ 0.03 \\
          \cmidrule{2-7}

         &CIFAR-100 & 1 & 2.10 $\pm$ 1.85 & 3.52 $\pm$ 2.26 & 2.40 $\pm$ 0.93 & 2.05 $\pm$ 2.95 \\
        & & 4 & 2.00 $\pm$ 0.67 & 1.81 $\pm$ 1.28 &  2.90 $\pm$ 0.73 & 1.02 $\pm$ 2.41\\
         && 8 & 4.30 $\pm$ 1.60 & 0.09 $\pm$ 0.03 & 3.10 $\pm$ 1.02 & 0.11 $\pm$ 0.08 \\
        & & 12 & 2.60 $\pm$ 1.03 & 0.11 $\pm$ 0.08 & 2.70 $\pm$ 0.89 & 0.03 $\pm$ 0.06 \\
        \midrule
        DeepSeekMoE & CIFAR-10 & 1 & 2.80 $\pm$ 2.20 & 2.95 $\pm$ 1.15 & 3.30 $\pm$ 2.40 & 2.60 $\pm$ 0.85 \\
        $(k=2, s=1)$ & & 4 & 3.10 $\pm$ 0.80 & 1.40 $\pm$ 1.13 & 2.90 $\pm$ 0.90 & 1.60 $\pm$ 1.30 \\
        & & 8 & 3.60 $\pm$ 1.32 & 0.09 $\pm$ 0.04 & 3.40 $\pm$ 1.10 & 0.07 $\pm$ 0.03 \\
        & & 12 & 3.20 $\pm$ 1.00 & 0.06 $\pm$ 0.02 & 3.00 $\pm$ 0.87 & 0.05 $\pm$ 0.02 \\
          \cmidrule{2-7}

        & CIFAR-100 & 1 & 2.54 $\pm$ 1.90 & 3.20 $\pm$ 2.00 & 2.80 $\pm$ 2.00 & 2.90 $\pm$ 1.80 \\
        & & 4 & 2.90 $\pm$ 0.70 & 1.60 $\pm$ 1.20 & 3.10 $\pm$ 0.80 & 1.40 $\pm$ 1.00 \\
        & & 8 & 3.80 $\pm$ 1.43 & 0.10 $\pm$ 0.03 & 3.50 $\pm$ 1.20 & 0.08 $\pm$ 0.04 \\
        & & 12 & 2.70 $\pm$ 0.90 & 0.07 $\pm$ 0.02 & 2.90 $\pm$ 0.70 & 0.06 $\pm$ 0.03 \\
        \bottomrule
    \end{tabular}
    \end{adjustbox}
\end{table}
\begin{figure}[H]
    \centering
    \includegraphics[width=1.0\linewidth]{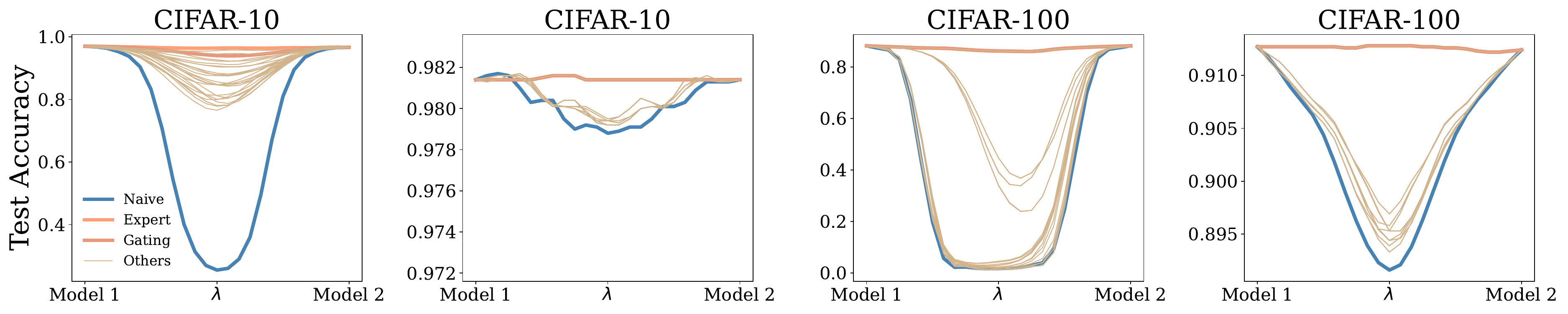}
    \caption{Accuracy curves for 12-layer ViT-MoE models with a 4-expert MoE replacement in either the first layer (subplots 1, 3) or the last layer (subplots 2, 4), on CIFAR-10 and CIFAR-100. The curves compare two Expert Order Matching methods across 24 permutations, with Weight Matching applied post-reordering to all permutations. Corresponding loss metrics are presented in Figure~\ref{fig:rank_loss}}
    \label{fig:rank_acc}
\end{figure}
\begin{table}[H]
    \centering
    \renewcommand*{\arraystretch}{1.3}
    \caption{Loss barrier comparison between Total Weight Matching and Skipping-Expert-Order Matching across MoE variants on the One Billion Words dataset. Each value is averaged over 3 independent checkpoint pairs. The last column reports the interpolated loss value at the midpoint between matched models.}
    \label{tab:permute_lm1b_loss_barrier}
    \vspace{3pt}
    \begin{adjustbox}{width=\textwidth}
    \begin{tabular}{lcccccc}
        \toprule
        \multirow{2}{*}{MoE Variant} & \multirow{2}{*}{No. Layers} & \multirow{2}{*}{No. Experts} & 
        \multicolumn{2}{c}{Loss Barrier $\downarrow$} & \multirow{2}{*}{Loss Value $\downarrow$} \\
        \cmidrule(lr){4-5}
        & & & Total Weight Matching & Skipping-Expert-Order Matching &  \\
        \midrule
        \multirow{2}{*}{MoE} 
        & 12 & 4 & 0.0053 $\pm$ 0.0006 & 0.1384 $\pm$ 0.0960 & 3.4399 $\pm$ 0.0002 \\
        &  & 8 & 0.0066 $\pm$ 0.0008 & 0.1728 $\pm$ 0.1144 & 3.4394 $\pm$ 0.0001 \\
        \midrule
        \multirow{2}{*}{SMoE ($k=2$)} 
        & 12 & 4 & 0.0054 $\pm$ 0.0002 & 0.8579 $\pm$ 1.1412 & 3.4403 $\pm$ 0.0001 \\
        &  & 8 & 0.0051 $\pm$ 0.0005 & 0.2202 $\pm$ 0.1470 & 3.4400 $\pm$ 0.0000 \\
        \midrule
        \multirow{2}{*}{DeepSeekMoE ($k=2, s=1$)} 
        & 12 & 4 & 0.0047 $\pm$ 0.0001 & 0.3223 $\pm$ 0.3817 & 3.4395 $\pm$ 0.0003 \\
        &  & 8 & 0.0044 $\pm$ 0.0004 & 0.4005 $\pm$ 0.3666 & 3.4395 $\pm$ 0.0002 \\
        \bottomrule
    \end{tabular}
    \end{adjustbox}
\end{table}
\clearpage
\subsection{Ablation Study on Number of Layers}

\begin{table}[H]
\centering
\renewcommand*{\arraystretch}{1.3}
\caption{Ablation Study on varying the number of Transformer layers and all MoE variants Integration in ViT for CIFAR-10. The ratios of metrics (loss barrier, loss AUC, accuracy barrier, and accuracy AUC) compare Algorithm \ref{alg:moe_weight_matching} to naive interpolation. For full table results (non-raio), kindly refer to Tables~\ref{tab:ablation_loss_full} and~\ref{tab:ablation_acc_full}}
\label{tab:ablation_num_layers_full}
\vspace{3pt}
\begin{adjustbox}{width=0.95\textwidth}
\begin{tabular}{lccrrrr}
\toprule
\multirow{2}{*}{Method} & \multirow{2}{*}{\makecell[c]{Number\\of layers}}
  & \multirow{2}{*}{\makecell[c]{Layer\\replaced}} & \multirow{2}{*}{\makecell[l]{Loss barrier\\ratio (\%) $\downarrow$}}& \multirow{2}{*}{\makecell[l]{Loss AUC\\ratio (\%) $\downarrow$}}   & \multirow{2}{*}{\makecell[l]{Acc. barrier\\ratio (\%) $\downarrow$}} & \multirow{2}{*}{\makecell[l]{Acc. AUC\\ratio (\%) $\downarrow$}} \\
 &  &  &  &      &   &  \\
\midrule
MoE & 2 & 1 & 8.54 $\pm$ 1.53 & 8.73 $\pm$ 1.68 & 8.39 $\pm$ 1.42 & 8.01 $\pm$ 1.39 \\
    &   & 2 & 9.33 $\pm$ 2.04 & 8.94 $\pm$ 1.92 & 9.10 $\pm$ 2.19 & 8.83 $\pm$ 2.22 \\
 \cmidrule{2-7}
 & 4 & 1 & 8.29 $\pm$ 1.63 & 8.08 $\pm$ 1.59 & 7.41 $\pm$ 1.45 &6.43 $\pm$ 1.09 \\
 &  &  2& 10.19 $\pm$ 3.43 &   10.19 $\pm$ 3.16   &  10.21 $\pm$ 5.26 & 9.08 $\pm$ 5.33 \\
 &  &  3& 8.50 $\pm$ 1.82 &   7.83 $\pm$ 1.89   &  9.82 $\pm$ 2.58 & 8.70 $\pm$ 2.58 \\
 &  &  4&  5.12 $\pm$ 0.67&  4.60 $\pm$ 0.65    & 4.17 $\pm$ 1.68  & 4.58 $\pm$ 1.01 \\
 \cmidrule{2-7}
 &  6& 1 &  4.16 $\pm$ 1.47 &      4.96 $\pm$ 2.31 &      3.64 $\pm$ 1.00 &      4.13 $\pm$ 1.55\\
 &  & 2 &  6.78 $\pm$ 4.80 &      7.71 $\pm$ 3.09 &      7.27 $\pm$ 5.73 &      8.98 $\pm$ 2.94\\
 &  & 3 & 9.69 $\pm$ 4.92 &      8.62 $\pm$ 5.46 &      9.03 $\pm$ 3.31 &      8.14 $\pm$ 3.57\\
 &  & 4 & 10.83 $\pm$ 5.61 &      12.01 $\pm$ 6.42 &      8.92 $\pm$ 2.21 &      10.00 $\pm$ 7.30\\
 &  & 5 & 9.56 $\pm$ 4.52 &      11.72 $\pm$ 5.59 &      8.72 $\pm$ 2.89 &      11.01 $\pm$ 3.83\\
 &  & 6 & 6.90 $\pm$ 3.54 &      9.67 $\pm$ 6.59 &      7.16 $\pm$ 3.98 &      9.69 $\pm$ 3.11\\
 \midrule
 SMoE ($k=2$) &  2& 1 & 8.93 $\pm$ 0.92 &      9.34 $\pm$ 1.23 &      9.36 $\pm$ 1.33 &      8.84 $\pm$ 1.40\\
 &  &  2& 7.55 $\pm$ 1.39 &      6.62 $\pm$ 1.35 &      9.10 $\pm$ 2.88 &      7.29 $\pm$ 2.78\\

  \cmidrule{2-7}
   & 4 & 1 & 8.96 $\pm$ 1.54 &      8.83 $\pm$ 1.64 &      8.66 $\pm$ 2.07 &      8.21 $\pm$ 1.91\\
 &  & 2 & 9.62 $\pm$ 2.68 &      10.07 $\pm$ 3.50 &      9.41 $\pm$ 3.04 &      9.67 $\pm$ 3.52\\
 &  & 3 & 13.31 $\pm$ 1.49 &      12.44 $\pm$ 2.06 &      11.37 $\pm$ 3.01 &      13.63 $\pm$ 3.17\\
 &  & 4 & 9.17 $\pm$ 1.40 &      9.36 $\pm$ 1.48 &      10.97 $\pm$ 2.77 &      12.52 $\pm$ 2.57\\

    \cmidrule{2-7}
 & 6 & 1 & 2.37 $\pm$ 1.92 &      2.65 $\pm$ 1.34 &      2.25 $\pm$ 2.71 &      4.67 $\pm$ 2.38\\
 &  &  2&10.09 $\pm$ 3.11 &      9.26 $\pm$ 3.19 &      8.35 $\pm$ 3.37 &      8.36 $\pm$ 2.38\\
 &  & 3 & 11.22 $\pm$ 6.25 &      10.17 $\pm$ 8.52 &      12.10 $\pm$ 7.04 &      12.80 $\pm$ 10.77\\
 &  & 4 & 8.76 $\pm$ 2.06 &      12.16 $\pm$ 2.22 &      13.09 $\pm$ 5.49 &      10.85 $\pm$ 7.21\\
 &  & 5 & 14.39 $\pm$ 7.82 &      14.66 $\pm$ 9.10 &      11.82 $\pm$ 9.19 &      12.45 $\pm$ 7.85\\
 &  & 6 &  6.38 $\pm$ 5.01 &      9.56 $\pm$ 3.79 &      11.96 $\pm$ 11.93 &      10.93 $\pm$ 15.06\\
\midrule
 DeepSeekMoE& 2 & 1 & 8.94 $\pm$ 1.95 &      9.12 $\pm$ 2.10 &      7.96 $\pm$ 1.59 &      7.88 $\pm$ 1.90\\
 ($k=2, r=1$)&  &  2& 9.25 $\pm$ 0.37 &      8.27 $\pm$ 0.44 &      12.00 $\pm$ 1.35 &      11.14 $\pm$ 1.73\\
    \cmidrule{2-7}

 & 4 & 1 & 8.65 $\pm$ 1.01 &      8.56 $\pm$ 1.19 &      7.17 $\pm$ 0.99 &      6.46 $\pm$ 0.88\\
 &  & 2 & 10.80 $\pm$ 3.77 &      10.88 $\pm$ 3.69 &      10.45 $\pm$ 4.91 &      9.92 $\pm$ 4.83\\
 &  & 3 & 7.90 $\pm$ 1.79 &      8.82 $\pm$ 1.87 &      10.21 $\pm$ 2.44 &      9.02 $\pm$ 2.52\\
 &  & 4 & 8.35 $\pm$ 1.20 &      8.29 $\pm$ 1.39 &      9.29 $\pm$ 2.75 &      7.04 $\pm$ 3.14\\
    \cmidrule{2-7}

 & 6 & 1 & 6.29 $\pm$ 5.00 &      6.17 $\pm$ 3.48 &      8.07 $\pm$ 5.34 &      8.35 $\pm$ 5.65\\
 &  &  2& 8.03 $\pm$ 1.79 &      9.16 $\pm$ 2.10 &      7.88 $\pm$ 3.66 &      8.54 $\pm$ 3.93\\
 &  & 3 & 9.27 $\pm$ 4.25 &      9.29 $\pm$ 3.19 &      12.93 $\pm$ 2.18 &      11.56 $\pm$ 5.58\\
 &  & 4 & 10.80 $\pm$ 2.18 &      12.72 $\pm$ 1.52 &      7.34 $\pm$ 3.64 &      8.99 $\pm$ 5.10\\
 &  & 5 & 13.77 $\pm$ 2.27 &      13.03 $\pm$ 7.01 &      11.14 $\pm$ 5.47 &      13.28 $\pm$ 3.32\\
 &  & 6 & 9.82 $\pm$ 2.99 &      8.38 $\pm$ 4.81 &      8.92 $\pm$ 4.22 &      8.14 $\pm$ 3.48\\
\bottomrule
\end{tabular}
\end{adjustbox}
\end{table}

\begin{table}[H] 
\centering
\renewcommand*{\arraystretch}{1.3}
\caption{Ablation study results on CIFAR-10 for LMC in ViT with layer replacements of all MoE variants, reporting test set \textbf{loss}. We evaluate the loss metrics for linear interpolation using our proposed Algorithm \ref{alg:moe_weight_matching} against naive linear interpolation. Metrics include the loss barrier and loss Area Under the Curve (AUC), with AUC computed relative to the straight line connecting the two model endpoints. Each experiment is repeated \textit{five} times, and LMC is performed on \textit{ten} model pairs. Additionally, we report loss values across five models to provide enhanced context. To improve readability, all metrics are scaled up by a factor of $10^2$.}
\label{tab:ablation_loss_full}
\vspace{3pt}
\begin{adjustbox}{width=1\textwidth}
\begin{tabular}{lccrrrrr}
\toprule
\multirow{2}{*}{Method} & \multirow{2}{*}{\makecell[l]{Number\\of layers}}
  & \multirow{2}{*}{\makecell[l]{Layer\\replaced}} & \multirow{2}{*}{\makecell[l]{WM loss\\barrier $\downarrow$}}& \multirow{2}{*}{\makecell[l]{Naive loss\\barrier $\downarrow$}}   & \multirow{2}{*}{\makecell[l]{WM loss\\AUC $\downarrow$}} & \multirow{2}{*}{\makecell[l]{Naive loss\\AUC $\downarrow$}} & \multirow{2}{*}{\makecell[l]{Loss value $\downarrow$}} \\
 &  &  &  &    &  &   &  \\
\midrule
MoE & 2 & 1 & 3.14 $\pm$ 0.55 & 34.22 $\pm$ 1.25 & 1.45 $\pm$ 0.28 & 16.11 $\pm$ 0.74 & 99.04 $\pm$ 0.92 \\
 &  & 2 & 2.84 $\pm$ 0.44 & 33.54 $\pm$ 1.12 & 1.38 $\pm$ 0.15 & 17.53 $\pm$ 0.57 & 104.52 $\pm$ 0.81 \\
  \cmidrule{2-8}

 & 4 & 1 & 9.48 $\pm$ 1.36 & 115.56 $\pm$ 10.19 & 4.25 $\pm$ 0.68 & 53.07 $\pm$ 4.54 & 97.00 $\pm$ 0.59 \\
 &  & 2 & 6.59 $\pm$ 2.12 & 64.73 $\pm$ 2.05 & 3.71 $\pm$ 1.11 & 36.39 $\pm$ 0.96 & 91.74 $\pm$ 0.87   \\
 &  & 3 & 4.20 $\pm$ 0.90 & 48.23 $\pm$ 0.92 & 2.21 $\pm$ 0.53 & 28.23 $\pm$ 0.50 & 88.06 $\pm$ 1.30  \\
 &  & 4 & 2.76 $\pm$ 0.38 & 53.92 $\pm$ 0.80 & 1.42 $\pm$ 0.22 & 31.01 $\pm$ 0.54 &  86.15 $\pm$ 0.33  \\
  \cmidrule{2-8}

 & 6 & 1 & 1.42 $\pm$ 0.44 &      37.47 $\pm$ 8.47 &      0.74 $\pm$ 0.30 &      18.73 $\pm$ 3.30 &      90.35 $\pm$ 1.15\\
 &  & 2 & 0.47 $\pm$ 0.32 &      7.09 $\pm$ 0.60 &      0.27 $\pm$ 0.10 &      3.53 $\pm$ 0.30 &      88.73 $\pm$ 0.39\\
 &  & 3 & 0.33 $\pm$ 0.17 &      3.42 $\pm$ 0.22 &      0.13 $\pm$ 0.08 &      1.60 $\pm$ 0.14 &      88.26 $\pm$ 0.50\\
 &  & 4 & 0.14 $\pm$ 0.08 &      1.36 $\pm$ 0.33 &      0.07 $\pm$ 0.04 &      0.64 $\pm$ 0.16 &      86.48 $\pm$ 0.17\\
 &  & 5 & 0.14 $\pm$ 0.06 &      1.68 $\pm$ 0.44 &      0.08 $\pm$ 0.03 &      0.83 $\pm$ 0.21 &      85.92 $\pm$ 0.46\\
 &  & 6 & 0.23 $\pm$ 0.11 &      2.46 $\pm$ 0.29 &      0.12 $\pm$ 0.04 &      1.26 $\pm$ 0.21 &      88.61 $\pm$ 0.33\\
 \midrule
SMoE & 2 & 1 & 2.95 $\pm$ 0.24 &      33.17 $\pm$ 1.01 &      1.39 $\pm$ 0.16 &      14.95 $\pm$ 0.51 &      98.88 $\pm$ 0.86\\
$(k=2)$ &  & 2 & 2.52 $\pm$ 0.46 &      33.46 $\pm$ 0.67 &      1.20 $\pm$ 0.25 &      18.12 $\pm$ 0.48 &      105.59 $\pm$ 0.37\\
 \cmidrule{2-8}

 & 4 & 1 & 10.45 $\pm$ 1.99 &      116.54 $\pm$ 8.34 &      4.75 $\pm$ 1.01 &      53.65 $\pm$ 3.97 &      96.73 $\pm$ 0.85\\
  &  & 2 & 8.31 $\pm$ 0.83 &      86.88 $\pm$ 3.25 &      4.73 $\pm$ 0.57 &      47.40 $\pm$ 1.91 &      90.80 $\pm$ 1.02\\
 &  & 3 & 4.73 $\pm$ 0.62 &      35.49 $\pm$ 1.34 &      2.59 $\pm$ 0.49 &      20.78 $\pm$ 0.80 &      86.29 $\pm$ 0.24\\
= &  & 4 & 4.48 $\pm$ 0.70 &      48.84 $\pm$ 0.68 &      2.68 $\pm$ 0.44 &      28.62 $\pm$ 0.44 &      82.33 $\pm$ 0.37\\
 \cmidrule{2-8}

 & 6 & 1 & 2.12 $\pm$ 0.67 &      34.23 $\pm$ 7.22 &      1.28 $\pm$ 0.35 &      12.59 $\pm$ 2.67 &      90.25 $\pm$ 1.00\\
 &  & 2 & 0.75 $\pm$ 0.29 &      7.28 $\pm$ 0.83 &      0.34 $\pm$ 0.15 &      3.59 $\pm$ 0.43 &      88.99 $\pm$ 0.65\\
 &  & 3 & 0.34 $\pm$ 0.19 &      3.28 $\pm$ 0.30 &      0.15 $\pm$ 0.06 &      1.76 $\pm$ 0.13 &      87.69 $\pm$ 0.52\\
 &  & 4 &  0.26 $\pm$ 0.14 &      2.78 $\pm$ 0.38 &      0.14 $\pm$ 0.07 &      1.35 $\pm$ 0.19 &      86.08 $\pm$ 0.40\\
 &  & 5 & 0.22 $\pm$ 0.11 &      2.50 $\pm$ 0.04 &      0.11 $\pm$ 0.05 &      1.25 $\pm$ 0.02 &      86.08 $\pm$ 0.34\\
 &  & 6 & 0.23 $\pm$ 0.18 &      3.63 $\pm$ 0.17 &      0.13 $\pm$ 0.08 &      1.93 $\pm$ 0.10 &      88.38 $\pm$ 0.36\\
 \midrule
DeepSeekMoE & 2 & 1   & 3.00 $\pm$ 0.61 &      33.63 $\pm$ 1.30 &      1.37 $\pm$ 0.27 &      15.14 $\pm$ 0.73 &      98.41 $\pm$ 1.06\\
$(k=2, s=1)$ &  & 2   & 3.07 $\pm$ 0.21 &      33.16 $\pm$ 1.26 &      1.50 $\pm$ 0.11 &      18.18 $\pm$ 0.72 &      105.92 $\pm$ 0.53\\
  \cmidrule{2-8}

 & 4 & 1 & 10.57 $\pm$ 1.57 &      122.13 $\pm$ 10.50 &      4.91 $\pm$ 0.84 &      57.32 $\pm$ 4.89 &      97.71 $\pm$ 1.26\\
 &  & 2 & 5.85 $\pm$ 2.14 &      54.13 $\pm$ 2.27 &      2.90 $\pm$ 1.26 &      30.17 $\pm$ 1.35 &      90.89 $\pm$ 0.52\\
 &  & 3 & 2.67 $\pm$ 0.65 &      33.36 $\pm$ 1.47 &      1.35 $\pm$ 0.40 &      19.48 $\pm$ 0.89 &      86.37 $\pm$ 0.43\\
 &  & 4 & 4.11 $\pm$ 0.61 &      49.18 $\pm$ 0.95 &      2.37 $\pm$ 0.41 &      28.49 $\pm$ 0.55 &      82.26 $\pm$ 0.39\\
   \cmidrule{2-8}

 & 6 & 1 & 2.25 $\pm$ 0.82 &      39.42 $\pm$ 6.23 &      1.14 $\pm$ 0.38 &      18.88 $\pm$ 3.32 &      91.32 $\pm$ 0.25\\
 &  & 2 & 2.88 $\pm$ 0.57 &      28.53 $\pm$ 1.26 &      1.56 $\pm$ 0.32 &      15.97 $\pm$ 0.69 &      89.50 $\pm$ 1.05\\
 &  & 3 & 0.56 $\pm$ 0.12 &      7.90 $\pm$ 0.18 &      0.26 $\pm$ 0.06 &      3.90 $\pm$ 0.12 &      88.22 $\pm$ 0.57\\
 &  & 4 & 0.24 $\pm$ 0.16 &      3.07 $\pm$ 0.34 &      0.12 $\pm$ 0.03 &      1.53 $\pm$ 0.14 &      86.09 $\pm$ 0.40\\
 &  & 5 & 0.32 $\pm$ 0.17 &      3.39 $\pm$ 0.31 &      0.16 $\pm$ 0.08 &      1.63 $\pm$ 0.13 &      86.10 $\pm$ 0.51\\
 &  & 6 &0.23 $\pm$ 0.08 &      2.61 $\pm$ 0.53 &      0.14 $\pm$ 0.04 &      1.43 $\pm$ 0.32 &      89.27 $\pm$ 0.57\\
\bottomrule
\end{tabular}
\end{adjustbox}
\end{table}

\begin{table}[H]
\centering
\renewcommand*{\arraystretch}{1.3}
\caption{Ablation study results on CIFAR-10 for LMC in ViT with layer replacements of all MoE variants, reporting test set \textbf{accuracy}. We evaluate the accuracy metrics for linear interpolation using our proposed Algorithm \ref{alg:moe_weight_matching} against naive linear interpolation. Metrics include the accuracy barrier and accuracy Area Under the Curve (AUC), with AUC computed relative to the straight line connecting the two model endpoints. Each experiment is repeated \textit{five} times, and LMC is performed on \textit{ten} model pairs. Additionally, we report accuracy values across five models to provide enhanced context. To improve readability, all metrics are scaled up by a factor of $10^2$.}
\label{tab:ablation_acc_full}\begin{adjustbox}{width=1\textwidth}
\vspace{3pt}
\begin{tabular}{lccrrrrr}
\toprule
\multirow{2}{*}{Method} & \multirow{2}{*}{\makecell[l]{Number\\of layers}}
  & \multirow{2}{*}{\makecell[l]{Layer\\replaced}} & \multirow{2}{*}{\makecell[l]{WM acc.\\barrier $\downarrow$}}& \multirow{2}{*}{\makecell[l]{Naive acc.\\barrier $\downarrow$}}   & \multirow{2}{*}{\makecell[l]{WM acc.\\AUC $\downarrow$}} & \multirow{2}{*}{\makecell[l]{Naive acc.\\barrier $\downarrow$}} & \multirow{2}{*}{\makecell[l]{Acc. value $\uparrow$}} \\
 &  &  &  &    &  &   &  \\
\midrule
MoE & 2 & 1   & 0.91 $\pm$ 0.28 &      12.33 $\pm$ 1.28 &      0.44 $\pm$ 0.15 &      5.89 $\pm$ 0.54 &      66.53 $\pm$ 3.30\\
&  & 2   & 0.76 $\pm$ 0.21 &      7.85 $\pm$ 2.31 &      0.33 $\pm$ 0.08 &      3.77 $\pm$ 1.11 &      63.65 $\pm$ 0.47\\
 \cmidrule{2-8}
 & 4 & 1 & 2.10 $\pm$ 0.42 & 28.36 $\pm$ 1.85 & 0.86 $\pm$ 0.16 & 13.43 $\pm$ 0.99 &  72.48 $\pm$ 0.68  \\
 &  & 2 & 1.41 $\pm$ 0.71 &13.83 $\pm$ 0.53 & 0.75 $\pm$ 0.41 & 7.88 $\pm$ 0.29 &   73.43 $\pm$ 0.61 \\
 &  & 3 & 0.67 $\pm$ 0.17 & 6.84 $\pm$ 0.24 & 0.35 $\pm$ 0.09 & 4.00 $\pm$ 0.12 & 73.64 $\pm$ 0.20 \\
 &  & 4 & 0.23 $\pm$ 0.09 &      5.48 $\pm$ 0.19 &      0.14 $\pm$ 0.03 &      3.05 $\pm$ 0.14 &      73.58 $\pm$ 0.19\\
  \cmidrule{2-8}
 & 6 & 1 & 0.50 $\pm$ 0.13 &      13.95 $\pm$ 2.20 &      0.24 $\pm$ 0.07 &      5.71 $\pm$ 0.93 &      74.29 $\pm$ 0.35\\
 &  & 2 & 0.35 $\pm$ 0.10 &      2.49 $\pm$ 0.36 &      0.12 $\pm$ 0.04 &      1.23 $\pm$ 0.21 &      74.65 $\pm$ 0.44\\
 &  & 3 & 0.14 $\pm$ 0.05 &      1.48 $\pm$ 0.28 &      0.05 $\pm$ 0.02 &      0.72 $\pm$ 0.14 &      74.72 $\pm$ 0.46\\
 &  & 4 & 0.05 $\pm$ 0.08 &      1.33 $\pm$ 0.43 &      0.03 $\pm$ 0.05 &      0.68 $\pm$ 0.24 &      74.46 $\pm$ 0.12\\
 &  & 5 & 0.05 $\pm$ 0.03 &      0.59 $\pm$ 0.05 &      0.02 $\pm$ 0.01 &      0.28 $\pm$ 0.02 &      74.62 $\pm$ 0.08\\
 &  & 6 & 0.06 $\pm$ 0.04 &      0.61 $\pm$ 0.12 &      0.03 $\pm$ 0.02 &      0.30 $\pm$ 0.05 &      74.25 $\pm$ 0.21\\
 \midrule
SMoE & 2 & 1 & 1.07 $\pm$ 0.15 &      11.42 $\pm$ 0.22 &      0.46 $\pm$ 0.07 &      5.20 $\pm$ 0.09 &      65.27 $\pm$ 0.69\\
$(k=2)$ &  & 2 & 0.58 $\pm$ 0.17 &      6.41 $\pm$ 0.24 &      0.26 $\pm$ 0.10 &      3.63 $\pm$ 0.13 &      62.58 $\pm$ 0.16\\
 \cmidrule{2-8}

 & 4 & 1 & 2.48 $\pm$ 0.66 &      28.55 $\pm$ 1.55 &      1.12 $\pm$ 0.29 &      13.52 $\pm$ 0.81 &      72.26 $\pm$ 0.60\\
  &  & 2 & 1.78 $\pm$ 0.23 &      19.00 $\pm$ 0.71 &      0.97 $\pm$ 0.15 &      10.16 $\pm$ 0.36 &      73.12 $\pm$ 0.40\\
 &  & 3 & 0.87 $\pm$ 0.17 &      4.98 $\pm$ 0.32 &      0.38 $\pm$ 0.10 &      2.78 $\pm$ 0.22 &      73.58 $\pm$ 0.33\\
 &  & 4 &0.79 $\pm$ 0.14 &      5.29 $\pm$ 0.31 &      0.40 $\pm$ 0.09 &      3.15 $\pm$ 0.15 &      73.60 $\pm$ 0.44\\
  \cmidrule{2-8}

 & 6 & 1 & 0.30 $\pm$ 0.38 &      12.61 $\pm$ 2.27 &      0.28 $\pm$ 0.11 &      5.11 $\pm$ 0.86 &      74.09 $\pm$ 0.28\\
 &  & 2 & 0.29 $\pm$ 0.16 &      2.85 $\pm$ 0.37 &      0.13 $\pm$ 0.08 &      1.48 $\pm$ 0.22 &      74.81 $\pm$ 0.47\\
 &  & 3 & 0.18 $\pm$ 0.14 &      1.36 $\pm$ 0.18 &      0.10 $\pm$ 0.06 &      0.95 $\pm$ 0.10 &      74.97 $\pm$ 0.30\\
 &  & 4 & 0.18 $\pm$ 0.07 &      1.35 $\pm$ 0.05 &      0.08 $\pm$ 0.02 &      0.76 $\pm$ 0.03 &      74.60 $\pm$ 0.24\\
 &  & 5 & 0.19 $\pm$ 0.09 &      1.45 $\pm$ 0.08 &      0.09 $\pm$ 0.04 &      0.77 $\pm$ 0.05 &      74.72 $\pm$ 0.24\\
 &  & 6 & 0.19 $\pm$ 0.08 &      1.60 $\pm$ 0.14 &      0.08 $\pm$ 0.02 &      0.89 $\pm$ 0.09 &      74.15 $\pm$ 0.28\\
 \midrule
DeepSeekMoE & 2 & 1   & 0.92 $\pm$ 0.18 &      11.64 $\pm$ 0.48 &      0.42 $\pm$ 0.10 &      5.29 $\pm$ 0.25 &      65.43 $\pm$ 0.30\\
$(k=2, s=1)$ &  & 2   & 0.78 $\pm$ 0.11 &      6.45 $\pm$ 0.31 &      0.41 $\pm$ 0.08 &      3.67 $\pm$ 0.21 &      62.55 $\pm$ 0.27\\
 \cmidrule{2-8}

 & 4 & 1 & 2.07 $\pm$ 0.31 &      28.85 $\pm$ 1.64 &      0.90 $\pm$ 0.13 &      13.93 $\pm$ 0.91 &      71.57 $\pm$ 0.84\\
 &  & 2 & 1.77 $\pm$ 0.56 &      12.14 $\pm$ 0.50 &      0.67 $\pm$ 0.32 &      6.74 $\pm$ 0.24 &      73.70 $\pm$ 0.14\\
 &  & 3 &  1.01 $\pm$ 0.14 &      9.80 $\pm$ 0.28 &      0.55 $\pm$ 0.08 &      4.91 $\pm$ 0.17 &      73.72 $\pm$ 0.20\\
 &  & 4 & 0.52 $\pm$ 0.15 &      5.60 $\pm$ 0.14 &      0.23 $\pm$ 0.10 &      3.22 $\pm$ 0.11 &      73.89 $\pm$ 0.30\\
  \cmidrule{2-8}

 & 6 & 1 & 2.43 $\pm$ 0.73 &      30.80 $\pm$ 2.43 &      0.99 $\pm$ 0.31 &      14.06 $\pm$ 0.90 &      74.26 $\pm$ 0.19\\
 &  & 2 & 0.95 $\pm$ 0.19 &      12.63 $\pm$ 0.43 &      0.47 $\pm$ 0.10 &      7.37 $\pm$ 0.18 &      74.51 $\pm$ 0.14\\
 &  & 3 & 0.39 $\pm$ 0.13 &      4.85 $\pm$ 0.14 &      0.20 $\pm$ 0.06 &      2.44 $\pm$ 0.09 &      74.98 $\pm$ 0.19\\
 &  & 4 & 0.14 $\pm$ 0.07 &      2.29 $\pm$ 0.08 &      0.10 $\pm$ 0.04 &      1.13 $\pm$ 0.03 &      74.70 $\pm$ 0.16\\
 &  & 5 & 0.12 $\pm$ 0.03 &      1.39 $\pm$ 0.08 &      0.08 $\pm$ 0.02 &      0.66 $\pm$ 0.04 &      74.62 $\pm$ 0.14\\
 &  & 6 & 0.16 $\pm$ 0.07 &      1.74 $\pm$ 0.12 &      0.06 $\pm$ 0.04 &      0.88 $\pm$ 0.05 &      74.31 $\pm$ 0.49\\
\bottomrule
\end{tabular}
\end{adjustbox}
\end{table}

\newpage

\section{Broader Impact}
\label{appendix:impact}
The investigation into Linear Mode Connectivity (LMC) within Mixture-of-Experts (MoE) architectures presents both potential positive and negative societal implications. On the positive side, this work could lead to more efficient training methods for MoE models, enhancing their scalability and computational efficiency. Such advancements may democratize access to advanced AI technologies, enabling researchers and developers with limited resources to leverage powerful models for innovative applications. Furthermore, the insights gained into the optimization dynamics and functional landscape of neural networks could contribute to the development of models that generalize more effectively, thereby improving the reliability and robustness of AI systems in critical domains such as healthcare, finance, and autonomous systems. Additionally, this research enriches the broader AI community's understanding of neural network loss landscapes, fostering further theoretical and practical advancements.

It is important to note that while these potential impacts are significant, the primary contribution of this work lies in its theoretical and foundational insights. The actual societal ramifications will largely depend on how these insights are applied and governed in practical settings. As such, responsible development and deployment practices, coupled with ongoing research into bias mitigation and ethical AI, are essential to maximizing the benefits and minimizing the risks associated with advancements in MoE architectures.


\clearpage

\section*{NeurIPS Paper Checklist}

\begin{enumerate}

\item {\bf Claims}
    \item[] Question: Do the main claims made in the abstract and introduction accurately reflect the paper's contributions and scope?
    \item[] Answer: \answerYes{} 
    \item[] Justification: The claims made in the abstract and introduction are clearly stated in the {\bf Contribution} in the Introduction.
    We provide mathematical contexts in Section~\ref{main:section{Preliminaries}}, which provides background on LMC and MoE architectures. We introduce the concept of the weight space of MoE architectures and define a group action on this space that preserves the functional behavior of MoE models, aligning with the claim of defining such a space and action in Section~\ref{main:section{Group Action on Weight Space of Mixture-of-Experts}}. We present two core results concerning functional equivalence in MoE models, demonstrating that the proposed group action characterizes all inherent symmetries of the MoE gating mechanism in Section~\ref{section{Functional Equivalence in Mixture-of-Experts}}. We develop a Weight Matching algorithm that enables alignment between independently trained MoEs inSection~\ref{section{Algorithms for Expert Matching}}. We provide empirical evidence of LMC across a wide range of MoE configurations and additional experiments to support our work in Section~\ref{main:section{Experiments}}. 
    \item[] Guidelines:
    \begin{itemize}
        \item The answer NA means that the abstract and introduction do not include the claims made in the paper.
        \item The abstract and/or introduction should clearly state the claims made, including the contributions made in the paper and important assumptions and limitations. A No or NA answer to this question will not be perceived well by the reviewers. 
        \item The claims made should match theoretical and experimental results, and reflect how much the results can be expected to generalize to other settings. 
        \item It is fine to include aspirational goals as motivation as long as it is clear that these goals are not attained by the paper. 
    \end{itemize}

\item {\bf Limitations}
    \item[] Question: Does the paper discuss the limitations of the work performed by the authors?
    \item[] Answer: \answerYes{} 
    \item[] Justification: The limitations are discussed in the the Conclusion.
    \item[] Guidelines:
    \begin{itemize}
        \item The answer NA means that the paper has no limitation while the answer No means that the paper has limitations, but those are not discussed in the paper. 
        \item The authors are encouraged to create a separate "Limitations" section in their paper.
        \item The paper should point out any strong assumptions and how robust the results are to violations of these assumptions (e.g., independence assumptions, noiseless settings, model well-specification, asymptotic approximations only holding locally). The authors should reflect on how these assumptions might be violated in practice and what the implications would be.
        \item The authors should reflect on the scope of the claims made, e.g., if the approach was only tested on a few datasets or with a few runs. In general, empirical results often depend on implicit assumptions, which should be articulated.
        \item The authors should reflect on the factors that influence the performance of the approach. For example, a facial recognition algorithm may perform poorly when image resolution is low or images are taken in low lighting. Or a speech-to-text system might not be used reliably to provide closed captions for online lectures because it fails to handle technical jargon.
        \item The authors should discuss the computational efficiency of the proposed algorithms and how they scale with dataset size.
        \item If applicable, the authors should discuss possible limitations of their approach to address problems of privacy and fairness.
        \item While the authors might fear that complete honesty about limitations might be used by reviewers as grounds for rejection, a worse outcome might be that reviewers discover limitations that aren't acknowledged in the paper. The authors should use their best judgment and recognize that individual actions in favor of transparency play an important role in developing norms that preserve the integrity of the community. Reviewers will be specifically instructed to not penalize honesty concerning limitations.
    \end{itemize}

\item {\bf Theory assumptions and proofs}
    \item[] Question: For each theoretical result, does the paper provide the full set of assumptions and a complete (and correct) proof?
    \item[] Answer: \answerYes{} 
    \item[] Justification: All theoretical results in the paper are given together with the full set of assumptions and complete/correct proofs (See Sections~\ref{main:section{Group Action on Weight Space of Mixture-of-Experts}}, \ref{section{Functional Equivalence in Mixture-of-Experts}}, \ref{section{Algorithms for Expert Matching}}  and Appendices~\ref{appendix:section{Weight Spaces of Mixture-of-Experts, Sparse Mixture-of-Experts, and their Group Actions}}, \ref{appendix:section{Functional Equivalence of Mixture-of-Experts}},  \ref{appendix:section-Functional Equivalence of Sparse Mixture-of-Experts}, \ref{appendix:technical details for section algorithms} in our manuscript).
    \item[] Guidelines:
    \begin{itemize}
        \item The answer NA means that the paper does not include theoretical results. 
        \item All the theorems, formulas, and proofs in the paper should be numbered and cross-referenced.
        \item All assumptions should be clearly stated or referenced in the statement of any theorems.
        \item The proofs can either appear in the main paper or the supplemental material, but if they appear in the supplemental material, the authors are encouraged to provide a short proof sketch to provide intuition. 
        \item Inversely, any informal proof provided in the core of the paper should be complemented by formal proofs provided in appendix or supplemental material.
        \item Theorems and Lemmas that the proof relies upon should be properly referenced. 
    \end{itemize}

    \item {\bf Experimental result reproducibility}
    \item[] Question: Does the paper fully disclose all the information needed to reproduce the main experimental results of the paper to the extent that it affects the main claims and/or conclusions of the paper (regardless of whether the code and data are provided or not)?
    \item[] Answer: \answerYes{} 
    \item[] Justification: We provide the experiment details in the Experiment Details Section (Appendix~\ref{appendix:hyperparams}) in the Appendix of our manuscript. We also provide the source code so that the results in the paper can be easily reproduced.
    \item[] Guidelines:
    \begin{itemize}
        \item The answer NA means that the paper does not include experiments.
        \item If the paper includes experiments, a No answer to this question will not be perceived well by the reviewers: Making the paper reproducible is important, regardless of whether the code and data are provided or not.
        \item If the contribution is a dataset and/or model, the authors should describe the steps taken to make their results reproducible or verifiable. 
        \item Depending on the contribution, reproducibility can be accomplished in various ways. For example, if the contribution is a novel architecture, describing the architecture fully might suffice, or if the contribution is a specific model and empirical evaluation, it may be necessary to either make it possible for others to replicate the model with the same dataset, or provide access to the model. In general. releasing code and data is often one good way to accomplish this, but reproducibility can also be provided via detailed instructions for how to replicate the results, access to a hosted model (e.g., in the case of a large language model), releasing of a model checkpoint, or other means that are appropriate to the research performed.
        \item While NeurIPS does not require releasing code, the conference does require all submissions to provide some reasonable avenue for reproducibility, which may depend on the nature of the contribution. For example
        \begin{enumerate}
            \item If the contribution is primarily a new algorithm, the paper should make it clear how to reproduce that algorithm.
            \item If the contribution is primarily a new model architecture, the paper should describe the architecture clearly and fully.
            \item If the contribution is a new model (e.g., a large language model), then there should either be a way to access this model for reproducing the results or a way to reproduce the model (e.g., with an open-source dataset or instructions for how to construct the dataset).
            \item We recognize that reproducibility may be tricky in some cases, in which case authors are welcome to describe the particular way they provide for reproducibility. In the case of closed-source models, it may be that access to the model is limited in some way (e.g., to registered users), but it should be possible for other researchers to have some path to reproducing or verifying the results.
        \end{enumerate}
    \end{itemize}

\item {\bf Open access to data and code}
    \item[] Question: Does the paper provide open access to the data and code, with sufficient instructions to faithfully reproduce the main experimental results, as described in supplemental material?
    \item[] Answer: \answerYes{} 
    \item[] Justification: We provide the source code in the Supplemental Materials so that the results in the paper can be easily reproduced. We verify our proposed methods using public benchmarks (See the Experimental Results Section, i.e., Section~\ref{main:section{Experiments}}, in our manuscript)
    \item[] Guidelines:
    \begin{itemize}
        \item The answer NA means that paper does not include experiments requiring code.
        \item Please see the NeurIPS code and data submission guidelines (\url{https://nips.cc/public/guides/CodeSubmissionPolicy}) for more details.
        \item While we encourage the release of code and data, we understand that this might not be possible, so “No” is an acceptable answer. Papers cannot be rejected simply for not including code, unless this is central to the contribution (e.g., for a new open-source benchmark).
        \item The instructions should contain the exact command and environment needed to run to reproduce the results. See the NeurIPS code and data submission guidelines (\url{https://nips.cc/public/guides/CodeSubmissionPolicy}) for more details.
        \item The authors should provide instructions on data access and preparation, including how to access the raw data, preprocessed data, intermediate data, and generated data, etc.
        \item The authors should provide scripts to reproduce all experimental results for the new proposed method and baselines. If only a subset of experiments are reproducible, they should state which ones are omitted from the script and why.
        \item At submission time, to preserve anonymity, the authors should release anonymized versions (if applicable).
        \item Providing as much information as possible in supplemental material (appended to the paper) is recommended, but including URLs to data and code is permitted.
    \end{itemize}

\item {\bf Experimental setting/details}
    \item[] Question: Does the paper specify all the training and test details (e.g., data splits, hyperparameters, how they were chosen, type of optimizer, etc.) necessary to understand the results?
    \item[] Answer: \answerYes{} 
    \item[] Justification: We specify all the training and test details necessary to understand the results in the Experimental Results Section (Section~\ref{main:section{Experiments}}) in the maintext and the Experiment Details Section (Appendix~\ref{appendix:hyperparams}) in the Appendix of our manuscript.
    \item[] Guidelines:
    \begin{itemize}
        \item The answer NA means that the paper does not include experiments.
        \item The experimental setting should be presented in the core of the paper to a level of detail that is necessary to appreciate the results and make sense of them.
        \item The full details can be provided either with the code, in appendix, or as supplemental material.
    \end{itemize}

\item {\bf Experiment statistical significance}
    \item[] Question: Does the paper report error bars suitably and correctly defined or other appropriate information about the statistical significance of the experiments?
    \item[] Answer: \answerYes{} 
    \item[] Justification: We report error bars suitably and correctly defined of the experiments.
    \item[] Guidelines:
    \begin{itemize}
        \item The answer NA means that the paper does not include experiments.
        \item The authors should answer "Yes" if the results are accompanied by error bars, confidence intervals, or statistical significance tests, at least for the experiments that support the main claims of the paper.
        \item The factors of variability that the error bars are capturing should be clearly stated (for example, train/test split, initialization, random drawing of some parameter, or overall run with given experimental conditions).
        \item The method for calculating the error bars should be explained (closed form formula, call to a library function, bootstrap, etc.)
        \item The assumptions made should be given (e.g., Normally distributed errors).
        \item It should be clear whether the error bar is the standard deviation or the standard error of the mean.
        \item It is OK to report 1-sigma error bars, but one should state it. The authors should preferably report a 2-sigma error bar than state that they have a 96\% CI, if the hypothesis of Normality of errors is not verified.
        \item For asymmetric distributions, the authors should be careful not to show in tables or figures symmetric error bars that would yield results that are out of range (e.g. negative error rates).
        \item If error bars are reported in tables or plots, The authors should explain in the text how they were calculated and reference the corresponding figures or tables in the text.
    \end{itemize}

\item {\bf Experiments compute resources}
    \item[] Question: For each experiment, does the paper provide sufficient information on the computer resources (type of compute workers, memory, time of execution) needed to reproduce the experiments?
    \item[] Answer: \answerYes{} 
    \item[] Justification: We provide sufficient information on the computer resources for all experiments in our Experimental Results Section (Section~\ref{main:section{Experiments}}) and Appendix~\ref{appendix:hyperparams}.
    \item[] Guidelines:
    \begin{itemize}
        \item The answer NA means that the paper does not include experiments.
        \item The paper should indicate the type of compute workers CPU or GPU, internal cluster, or cloud provider, including relevant memory and storage.
        \item The paper should provide the amount of compute required for each of the individual experimental runs as well as estimate the total compute. 
        \item The paper should disclose whether the full research project required more compute than the experiments reported in the paper (e.g., preliminary or failed experiments that didn't make it into the paper). 
    \end{itemize}
    
\item {\bf Code of ethics}
    \item[] Question: Does the research conducted in the paper conform, in every respect, with the NeurIPS Code of Ethics \url{https://neurips.cc/public/EthicsGuidelines}?
    \item[] Answer: \answerYes{} 
    \item[] Justification: The research conducted in the paper conforms, in every respect, with the NeurIPS Code of Ethics.
    \item[] Guidelines:
    \begin{itemize}
        \item The answer NA means that the authors have not reviewed the NeurIPS Code of Ethics.
        \item If the authors answer No, they should explain the special circumstances that require a deviation from the Code of Ethics.
        \item The authors should make sure to preserve anonymity (e.g., if there is a special consideration due to laws or regulations in their jurisdiction).
    \end{itemize}

\item {\bf Broader impacts}
    \item[] Question: Does the paper discuss both potential positive societal impacts and negative societal impacts of the work performed?
    \item[] Answer: \answerYes{} 
    \item[] Justification: We discuss broader impacts in Appendix~\ref{appendix:impact}.
    \item[] Guidelines:
    \begin{itemize}
        \item The answer NA means that there is no societal impact of the work performed.
        \item If the authors answer NA or No, they should explain why their work has no societal impact or why the paper does not address societal impact.
        \item Examples of negative societal impacts include potential malicious or unintended uses (e.g., disinformation, generating fake profiles, surveillance), fairness considerations (e.g., deployment of technologies that could make decisions that unfairly impact specific groups), privacy considerations, and security considerations.
        \item The conference expects that many papers will be foundational research and not tied to particular applications, let alone deployments. However, if there is a direct path to any negative applications, the authors should point it out. For example, it is legitimate to point out that an improvement in the quality of generative models could be used to generate deepfakes for disinformation. On the other hand, it is not needed to point out that a generic algorithm for optimizing neural networks could enable people to train models that generate Deepfakes faster.
        \item The authors should consider possible harms that could arise when the technology is being used as intended and functioning correctly, harms that could arise when the technology is being used as intended but gives incorrect results, and harms following from (intentional or unintentional) misuse of the technology.
        \item If there are negative societal impacts, the authors could also discuss possible mitigation strategies (e.g., gated release of models, providing defenses in addition to attacks, mechanisms for monitoring misuse, mechanisms to monitor how a system learns from feedback over time, improving the efficiency and accessibility of ML).
    \end{itemize}
    
\item {\bf Safeguards}
    \item[] Question: Does the paper describe safeguards that have been put in place for responsible release of data or models that have a high risk for misuse (e.g., pretrained language models, image generators, or scraped datasets)?
    \item[] Answer: \answerNA{} 
    \item[] Justification: The paper poses no such risks.
    \item[] Guidelines:
    \begin{itemize}
        \item The answer NA means that the paper poses no such risks.
        \item Released models that have a high risk for misuse or dual-use should be released with necessary safeguards to allow for controlled use of the model, for example by requiring that users adhere to usage guidelines or restrictions to access the model or implementing safety filters. 
        \item Datasets that have been scraped from the Internet could pose safety risks. The authors should describe how they avoided releasing unsafe images.
        \item We recognize that providing effective safeguards is challenging, and many papers do not require this, but we encourage authors to take this into account and make a best faith effort.
    \end{itemize}

\item {\bf Licenses for existing assets}
    \item[] Question: Are the creators or original owners of assets (e.g., code, data, models), used in the paper, properly credited and are the license and terms of use explicitly mentioned and properly respected?
    \item[] Answer: \answerYes{} 
    \item[] Justification: We cite the githubs we use and the baselines we compare with in our manuscript. All the assets used in the paper are properly cited.
    \item[] Guidelines:
    \begin{itemize}
        \item The answer NA means that the paper does not use existing assets.
        \item The authors should cite the original paper that produced the code package or dataset.
        \item The authors should state which version of the asset is used and, if possible, include a URL.
        \item The name of the license (e.g., CC-BY 4.0) should be included for each asset.
        \item For scraped data from a particular source (e.g., website), the copyright and terms of service of that source should be provided.
        \item If assets are released, the license, copyright information, and terms of use in the package should be provided. For popular datasets, \url{paperswithcode.com/datasets} has curated licenses for some datasets. Their licensing guide can help determine the license of a dataset.
        \item For existing datasets that are re-packaged, both the original license and the license of the derived asset (if it has changed) should be provided.
        \item If this information is not available online, the authors are encouraged to reach out to the asset's creators.
    \end{itemize}

\item {\bf New assets}
    \item[] Question: Are new assets introduced in the paper well documented and is the documentation provided alongside the assets?
    \item[] Answer: \answerYes{} 
    \item[] Justification: We include details about training and implementation in Appendix~\ref{appendix:hyperparams}, and code in supplementary materials. The new assets provided in the paper are well documented, and the documentation is provided alongside the assets.
    \item[] Guidelines:
    \begin{itemize}
        \item The answer NA means that the paper does not release new assets.
        \item Researchers should communicate the details of the dataset/code/model as part of their submissions via structured templates. This includes details about training, license, limitations, etc. 
        \item The paper should discuss whether and how consent was obtained from people whose asset is used.
        \item At submission time, remember to anonymize your assets (if applicable). You can either create an anonymized URL or include an anonymized zip file.
    \end{itemize}

\item {\bf Crowdsourcing and research with human subjects}
    \item[] Question: For crowdsourcing experiments and research with human subjects, does the paper include the full text of instructions given to participants and screenshots, if applicable, as well as details about compensation (if any)? 
    \item[] Answer: \answerNA{} 
    \item[] Justification: The paper does not involve crowdsourcing or research with human subjects.
    \item[] Guidelines:
    \begin{itemize}
        \item The answer NA means that the paper does not involve crowdsourcing nor research with human subjects.
        \item Including this information in the supplemental material is fine, but if the main contribution of the paper involves human subjects, then as much detail as possible should be included in the main paper. 
        \item According to the NeurIPS Code of Ethics, workers involved in data collection, curation, or other labor should be paid at least the minimum wage in the country of the data collector. 
    \end{itemize}

\item {\bf Institutional review board (IRB) approvals or equivalent for research with human subjects}
    \item[] Question: Does the paper describe potential risks incurred by study participants, whether such risks were disclosed to the subjects, and whether Institutional Review Board (IRB) approvals (or an equivalent approval/review based on the requirements of your country or institution) were obtained?
    \item[] Answer: \answerNA{} 
    \item[] Justification: The paper does not involve research with human subjects.
    \item[] Guidelines:
    \begin{itemize}
        \item The answer NA means that the paper does not involve crowdsourcing nor research with human subjects.
        \item Depending on the country in which research is conducted, IRB approval (or equivalent) may be required for any human subjects research. If you obtained IRB approval, you should clearly state this in the paper. 
        \item We recognize that the procedures for this may vary significantly between institutions and locations, and we expect authors to adhere to the NeurIPS Code of Ethics and the guidelines for their institution. 
        \item For initial submissions, do not include any information that would break anonymity (if applicable), such as the institution conducting the review.
    \end{itemize}

\item {\bf Declaration of LLM usage}
    \item[] Question: Does the paper describe the usage of LLMs if it is an important, original, or non-standard component of the core methods in this research? Note that if the LLM is used only for writing, editing, or formatting purposes and does not impact the core methodology, scientific rigorousness, or originality of the research, declaration is not required.
    \item[] Answer: \answerNA{} 
    \item[] Justification: The core methodological contributions of this research does NOT rely on LLMs in any any important, original, or non-standard way.
    \item[] Guidelines:
    \begin{itemize}
        \item The answer NA means that the core method development in this research does not involve LLMs as any important, original, or non-standard components.
        \item Please refer to our LLM policy (\url{https://neurips.cc/Conferences/2025/LLM}) for what should or should not be described.
    \end{itemize}

\end{enumerate}

\end{document}